\documentclass{article}

\usepackage[utf8]{inputenc}
\usepackage[T1]{fontenc}
\usepackage{url}
\usepackage{booktabs}
\usepackage{amsfonts}
\usepackage{nicefrac}
\usepackage{microtype}
\usepackage{xcolor}
\usepackage{graphicx}
\usepackage{amsmath,amssymb,amsthm}
\usepackage[shortlabels]{enumitem}
\usepackage{textcomp}
\usepackage{amssymb}
\usepackage{mathrsfs}
\usepackage{algorithm}
\usepackage[noend]{algpseudocode}
\usepackage{xcolor}
\usepackage{graphicx}
\usepackage{tabularx}
\usepackage{tikz}
\usepackage{natbib}
\usepackage{geometry}
\AtBeginDocument{
  \newgeometry{
    textheight=9in,
    textwidth=5.5in,
    top=1in,
    headheight=12pt,
    headsep=25pt,
    footskip=30pt
  }
}

\usepackage{hyperref}

\title{Operationalizing Individual Fairness via Gradient Descent and Bradley–Terry Models}

\author{
  Conlan Olson \\
  Columbia University \\
  \texttt{co2583@columbia.edu}
  \and
  Linjun Zhang \\
  Rutgers University \\
  \texttt{linjun.zhang@rutgers.edu}
  \and
  Zhun Deng \\
  UNC Chapel Hill \\
  \texttt{zhundeng@cs.unc.edu}
  \and
  Pragya Sur \\
  Harvard University \\
  \texttt{pragya@fas.harvard.edu}
}

\date{}

\newtheorem{thm}{Theorem}
\newtheorem{lem}{Lemma}
\newtheorem{assumption}{Assumption}

\theoremstyle{definition}
\newtheorem{definition}{Definition}

\theoremstyle{remark}

\newcommand{\R}{\mathbb R}
\newcommand{\bX}{\boldsymbol{X}}
\newcommand{\bx}{\boldsymbol{x}}
\newcommand{\bK}{\boldsymbol{K}}
\newcommand{\bM}{\boldsymbol{M}}
\newcommand{\bI}{\boldsymbol{I}}
\newcommand{\bL}{\boldsymbol{L}}
\newcommand{\bA}{\boldsymbol{A}}
\newcommand{\bZ}{\boldsymbol{Z}}
\newcommand{\bz}{\boldsymbol{z}}
\newcommand{\bU}{\boldsymbol{U}}
\newcommand{\bu}{\boldsymbol{u}}
\newcommand{\bV}{\boldsymbol{V}}
\newcommand{\bv}{\boldsymbol{v}}
\newcommand{\bSigma}{\boldsymbol{\Sigma}}
\newcommand{\bD}{\boldsymbol{D}}
\newcommand{\bB}{\boldsymbol{B}}

\newcommand{\bH}{\boldsymbol{H}}

\newcommand{\bW}{\boldsymbol{W}}
\newcommand{\bO}{\boldsymbol{O}}
\newcommand{\bLambda}{\boldsymbol{\Lambda}}
\newcommand{\bGamma}{\boldsymbol{\Gamma}}
\newcommand{\bPsi}{\boldsymbol{\Psi}}

\newcommand{\dd}{\textnormal{d}}
\newcommand{\oS}{\overline{S}}
\DeclareMathOperator{\Var}{Var}
\DeclareMathOperator{\Cov}{Cov}
\DeclareMathOperator{\vc}{vec}
\DeclareMathOperator{\Tr}{Tr}
\DeclareMathOperator*{\EE}{\mathbb{E}}
\DeclareMathOperator*{\argmin}{arg\,min}

\begin{document}

\maketitle

\begin{abstract}
Individual fairness, the notion that ``similar individuals should be treated similarly,'' provides a strong and flexible fairness guarantee for algorithmic decision makers. However, a barrier to implementing individual fairness in practice is the difficulty of learning the similarity metric over individuals. In this work, we present an algorithm for learning a Mahalanobis similarity metric from triplet queries of the form ``is individual $i$ more similar to individual $j$ or $k$?'' We work in the standard Bradley-Terry model for pairwise comparisons. Our algorithm consists of a spectral initialization step followed by gradient descent. We provide extensive theoretical guarantees on our algorithm, showing that it converges quickly to the ground truth metric despite the non-convexity of the loss in our model. We present experimental results that demonstrate the convergence of our algorithm and show that individual fairness with respect to the estimated metric is sufficient to achieve similar fairness with respect to the true metric. We also discuss potential applications of our work to AI model tuning. 
\end{abstract}

\section{Introduction}
The growing use of algorithms in decision-making processes in fields such as medicine \citep{Chen2023}, hiring \citep{Gaebler2024AuditingTU}, criminal justice \citep{propublica}, and credit \citep{hardt_equalityofopportunity} underscores the importance of ensuring that algorithmic decision-makers are fair. A variety of algorithmic fairness notions have been proposed. On one end of the spectrum, group-based fairness notions protect predetermined groups of people, such as those based on race or gender. Examples of group-fairness notions include equalized odds \citep{hardt_equalityofopportunity} and calibration within groups \citep{kleinberg_tradeoffs}.
Motivated by the need to protect subgroups and intersectional groups, definitions such as multicalibration and multiaccuracy \citep{multicalibration} (collectively, ``multi-X'' notions) generalize group-based fairness to cover subgroups. 
In stark contrast from group-based fairness is the notion of 
 individual fairness \citep{fta}, which asserts that ``similar individuals should be treated similarly.'' This notion provides a very strong and individualized fairness guarantee, and is the subject of this work. 
 
 Our interest in individual fairness is motivated by areas such as personalized medicine \citep{Bertsimas2022}, which seeks to provide individuals highly individualized predictions of disease risks and other health parameters. Fairness is an important concern in these systems \citep{Chen2023, genevieve2020structural, Paulus2020}. %
 For example, \citet{Seyyed-Kalantari2021} show that AI algorithms used to diagnose pathologies from chest X-rays severely underdiagnose patients from under-served groups. Relying on group-based or multi-X fairness notions in these areas would be highly unsatisfactory because group-level adjustments would undermine the personalization at the heart of personalized medicine. In our example, it would not make sense to simply increase the diagnosis probability for, say, pneumonia just because the patient comes from an under-served group. Instead, we seek to ensure patients at their individual levels receive reliable predictions, irrespective of group membership. With this goal in mind, we turn to individual fairness.

\citet{fta} defined an algorithm to be individually fair if individuals that are inherently similar to each other in the context of the task at hand also receive similar predictions or classifications from the algorithm.
At the core of this notion lies the challenge of identifying which individuals are similar to which others in the context of a given task. \citet{fta} formalized ``similarity'' as a distance metric $d$ on individuals that quantifies how similar they are with respect to the task at hand; we call this the fairness metric. In our example of diagnosing pneumonia from chest X-rays, the fairness metric should measure how similar two X-rays actually are in terms of their pneumonia-related features. 

\citet{fta} describes how to achieve individual fairness, assuming full access to the fairness metric. However, the problem of obtaining the fairness metric remains. In theory, oracle access to human domain experts could provide a good proxy for the fairness metric. However, such access for all members of a population is hardly ever realistic in practice. Additionally, learning such a fairness metric is notoriously hard from observed data, because translating notions of similarity between individuals into a formal distance metric is, without further context, an ill-posed mathematical question. This bottleneck has prevented a widespread adoption of individual fairness, even in areas such as personalized medicine where individualized fairness notions are critical. To mitigate this, prior work \citep{ilvento, pmlr-v119-mukherjee20a, gillen} has proposed approaches to learning the fairness metric, under various models and from various types of data.

Our work provides a data-driven algorithm that provably learns this fairness metric in a realistic setting from input data that is reasonable to collect---especially a setting that has not been addressed in prior work. Importantly, we bring in concepts from the literature on Bradley-Terry models and ranking in statistics to learn the fairness metric needed for individual fairness, a problem that has been considered rather challenging in the fairness literature beyond the few settings considered in the aforementioned papers. %
Specifically, we work in the following setting. We observe $n$ individuals, each represented by a $p$-dimensional vector in Euclidean space, denoted by $\bx_i\in\R^p$. As training data, we observe responses  to triplet queries, provided by human experts, of the form ``is $\bx_i$ more similar to $\bx_j$ or $\bx_k$?'' In our example of diagnosing chest X-rays, an example of a query might be ``is X-ray $i$ more similar to X-ray $j$ or X-ray $k$ in terms of the evidence for pneumonia?''  As suggested in \citet{ilvento}, queries of this type are often psychologically easier for human experts to answer than queries such as ``how similar are $\bx_i$ and $\bx_j$?'' Triplet queries are also common in the broader machine learning  literature \citep{kleindessner2017kernel, nowak_jamieson, tamuz_adaptive, jain2016finite}. 
In this setting, we seek to learn the metric underlying the query responses. We assume that the responses to the triplet queries follow a Bradley-Terry model (see \eqref{eqn:bt_model}). 
We develop a novel algorithm based on spectral initialization and gradient descent that provably learns this metric (Theorem \ref{thm:alg_performance}). We describe our contributions in further details below.

In addition to the fairness considerations above, our work also has applications to post-training AI models in areas such as robotics or computer vision. An important step in AI model development is aligning models with human preferences. Many alignment techniques are based on human feedback in the form of pairwise preferences (``$\bx_i$ is better than $\bx_j$''). However, in certain areas of robotics and computer vision \citep{mattson_representation_2024, tian2024mattersyouvisualrepresentation, sucholutsky2023alignment}, human feedback may come in the form of \emph{relative} distance queries, such as the triplet queries we study in this work. For feedback in this form, our algorithm can help learn the metric underlying the feedback. The alignment literature calls this step \emph{reward modeling}, which can be a goal in itself or a first step in reinforcement learning from human feedback (RLHF) pipelines.

\paragraph*{Summary of Main Contributions}
We assume that the responses to the triplet queries ``is $\bx_i$ more similar to $\bx_j$ or $\bx_k$?'' follow the Bradley-Terry model \citep{bradley_terry}: for a triplet $t=(i,j,k)$, we observe $y_t\in\{-1,1\}$ distributed according to
\begin{align}\label{eqn:bt_model}
    \Pr[y_t=1]&=\frac{\exp(d^2(\bx_i,\bx_j))}{\exp(d^2(\bx_i,\bx_j))+\exp(d^2(\bx_i,\bx_k))},
\end{align}
where a response $y_t=-1$ indicates that the expert we queried finds $\bx_i$ closer to $\bx_j$ and $y_t=1$ indicates that they find $\bx_i$ closer to $\bx_k$. 
We also assume that the metric $d$ takes the form of a Mahalanobis distance $d_{\bK_\star}(\bx_i,\bx_j):=\sqrt{(\bx_i-\bx_j)^\top\bK_\star(\bx_i-\bx_j)}$, where $\bK_\star\in\R^{p\times p}$ is positive semi-definite. We assume that $\bK_\star$ has a small rank $r<p$. Our contributions are as follows.

\begin{enumerate}
    \item 
    Algorithm \ref{alg:init}, provides a spectral algorithm for estimating a Mahalanobis distance matrix from triplet queries where responses follow the Bradley-Terry model. Theorem \ref{thm:init} shows that the Mahalanobis distance matrix estimated by this algorithm approaches the ground truth matrix in Euclidean norm as the sample size grows. 
    \item Theorem \ref{thm:gd_induction} shows that gradient descent on a suitable loss function theoretically approaches the true Mahalanobis distance matrix in Euclidean norm as the number of iterations grows, provided that gradient descent is initialized from a point sufficiently close to the true matrix. Combined with our initialization algorithm, this allows us to show Theorem \ref{thm:alg_performance}, which states that our algorithm produces an estimate of the Mahalanobis distance matrix that converges to the true matrix.
    \item We present experimental results showing that our algorithm produces estimates approaching the ground truth Mahalanobis distance matrix. Our experiments also demonstrate that we can achieve individual fairness by first estimating the metric and then training an algorithm to be fair with respect to the estimate. 
\end{enumerate}

The remainder of the paper is structured as follows. 
Sections \ref{sec:related_work} and \ref{sec:set_up} review prior work and describe our problem set-up, respectively. 
Section \ref{sec:alg} presents our algorithm while Section \ref{sec:theoretical_results} presents corresponding theoretical results. Section \ref{sec:experiments} shows our experimental results while Section \ref{sec:discussion} concludes with a discussion. Proof details are deferred to the appendix.

\section{Related Work}\label{sec:related_work}
\paragraph*{Individual Fairness}
Individual fairness was proposed by \citet{fta}. The algorithms in \citet{fta} require access to all pairwise distances under the fairness metric. \citet{ftcba} gives an algorithm that only requires a small number of queries to the fairness metric. 

\citet{ilvento} gives an algorithm for learning a metric for individual fairness from triplet queries and a small number of real-valued queries of the form ``how far is individual $\bx_i$ from individual $\bx_j$?'' Answers to questions of the latter form may be difficult to obtain in practice---additionaly, \citet{ilvento} requires perfect answers to the triplet queries. In contrast, our approach only requires triplet queries and works in the setting where arbiters may make mistakes in their answers. \citet{pmlr-v119-mukherjee20a} provides algorithms for learning Mahalanobis distances for individual fairness, however, they do not consider triplet queries. \citet{gillen} assumes the metric is a Mahalanobis distance and gives an online algorithm that uses an arbiter that can determine which decisions are unfair. Other approaches to individual fairness avoid directly learning the metric itself. For example, \citet{jung_eliciting} learns from arbiter judgements of whether one individual should be treated better than another.

\citet{zemel} proposes mapping each individual to a representation that removes information about protected attributes while preserving relevant information to the task. Learning a Mahalanobis distance with a matrix $\bK=\bA\bA^\top$ can be viewed as learning an embedding $\bA:\R^p\to\R^r$ such that the similarity metric is the Euclidean distance in the embedding space. Via this connection, learning a Mahalanobis distance for individual fairness is similar to learning fair representations.

\paragraph*{Metric Learning}
The literature on metric learning is extensive.  \citet{bellen_survey} gives a useful survey of the field. \citet{jain2018learning} studies the problem of learning a Mahalanobis distance under a generalized version of the Bradley-Terry model. In contrast to our work, they analyze properties of the empirical risk minimizer rather than the performance an estimation algorithm. Our use of triplet queries also follows \citet{kleindessner2017kernel, nowak_jamieson, tamuz_adaptive, jain2016finite} (among others) in the metric learning literature.

Our initialization uses an algorithm from \citet{rank_centrality_chen} that recovers the underlying scores from pairwise comparisons under the Bradley-Terry model. By treating pairs of individuals as elements and treating pairwise distances as scores, we use their algorithm can be used to learn pairwise distances.

\paragraph*{Nonconvex Optimization}
Our algorithmic approach and proofs are based on \citet{ma2019implicit}, which shows that gradient descent, even on nonconvex loss functions, sometimes converges rapidly when started from particular initial conditions. \citet{ma2019implicit} shows this property of gradient descent for three learning tasks. Our work shows a similar property for the task of learning a Mahalanobis distance from triplet queries.

\paragraph*{AI Model Tuning}
The standard literature on reinforcement learning from human feedback shares our use of the Bradley-Terry model \citep{ouyang_training_nodate}. Several works suggest specialized reinforcement learning setups that depend on relative distance queries \citep{mattson_representation_2024,tian2024mattersyouvisualrepresentation, sucholutsky2023alignment}. 

\section{Problem Set-Up}\label{sec:set_up}
First, we recall our setup from the introduction. We observe $\bX=\{\bx_1,\ldots,\bx_n\}\in\R^{n\times p}$, a set of $n$ individuals each represented by a $p$-dimensional feature vector. We observe triplet queries ``is $\bx_i$ closer to $\bx_j$ or $\bx_k$?'', represented by $t=(i,j,k)$, and responses $y_t\in\{-1,1\}$. We observe queries for a set $S$ of triplets that is sampled  from the set of all possible triplets $\overline S:=\{(i,j,k)\mid i\neq j\neq k\neq i\}$ by including each triplet independently with constant probability $s$.

We model the problem as follows. We assume the responses $y_t$ are distributed according to the Bradley-Terry model \eqref{eqn:bt_model}. We assume the true distance metric is a Mahalanobis distance $d_{\bK_\star}(\bx_i,\bx_j):=\sqrt{(\bx_i-\bx_j)^\top\bK_\star(\bx_i-\bx_j)}$, where $\bK_\star$ is a positive semi-definite $p\times p$ matrix with rank $r$, norm $\|\bK_\star\|$, and condition number $\kappa$.
\footnote{Our results for gradient descent extend to the setting where $p$ grows, as long as $p^4/n=o(1)$. For simplicity, we treat $p$ as fixed. Our asymptotic bounds in Theorems \ref{thm:hessian} and \ref{thm:gd_induction} are unchanged under the weaker assumption that $p^4/n=o(1)$.} 
We can write the model concisely using the following notation, borrowed from \citet{jain2018learning}. We define $\bM_t:=\bx_i\bx_k^\top+\bx_k\bx_i^\top-\bx_i\bx_j^\top-\bx_j\bx_i^\top+\bx_j\bx_j^\top-\bx_k\bx_k^\top.$ Then, $d_{\bK}^2(\bx_i,\bx_j)-d_{\bK}^2(\bx_i,\bx_k)=\Tr(\bM_t\bK).$
This allows us to rewrite \eqref{eqn:bt_model} as
\begin{equation}
    \Pr[y_t=y]=\frac{1}{1+\exp(-y\Tr(\bM_t\bK_\star))}\label{eqn:bt_model_concise}
\end{equation}
for $y\in\{-1,1\}$. From the data $\{(t,y_t)\}_{t\in S}$, we produce an algorithm to accurately estimate the matrix $\bK_\star$. Since $\bK_\star$ has rank $r$, we will equivalently estimate a full-rank matrix $\bA_\star\in\R^{p\times r}$ such that $\bA_\star\bA_\star^\top=\bK_\star$. 

Notationally, we use $\|\cdot\|$ to mean the $L^2$ norm and $\lesssim$ to indicate inequality up to constants, i.e., $a\lesssim b$ if there is a constant $C$ such that $a<Cb$.

\section{Algorithm}\label{sec:alg}
Our algorithm is divided into two steps, an initialization step and a gradient descent step. At an intuitive level, we would like to maximize the log likelihood of a matrix $\bA$, given the observed data. In particular, the loss function
\begin{equation}\label{eqn:risk}
    L(\bA):=\frac{1}{|S|}\sum_{t\in S}\log(1+\exp(-y_t \Tr(\bM_t\bA\bA^\top)))
\end{equation}
is a natural target for minimization, since the negative log likelihood of a matrix $\bA\bA^\top$ for a single triplet $t$ is $\log(1+\exp(-y_t \Tr(\bM_t\bA\bA^\top)))$, as can be seen from the model \eqref{eqn:bt_model_concise} with $\bA\bA^\top$ as the Mahalanobis distance matrix. 

A typical algorithm for minimizing a loss function is gradient descent. However, the loss function \eqref{eqn:risk} is \emph{not} strongly convex and smooth, so there is no guarantee that na\"ive gradient descent will converge quickly. However, the loss function \emph{is} strongly convex (in a modified sense) and smooth within a certain radius around any $\bA_\star$ that satisfies $\bA_\star\bA_\star^\top=\bK_\star$. Furthermore, when we run a step of gradient descent from within this region, the next step will remain within the region by properties of non-convex optimization as uncovered in \citep{chi2019nonconvex}.

Therefore, gradient descent will work well if we can ensure that we start within a region where the loss function is well-behaved. This motivates our two-step approach. First, we run an initialization algorithm to find a point within the region where we have guarantees on the convexity and smoothness of our loss function. Then, starting from that point, we run gradient descent. Our algorithm is presented at a high level in pseudocode as Algorithm \ref{alg:summary} (lower level descriptions are given in Appendix \ref{sec:algs_appendix}). The initialization step is described in Section \ref{subsec:init} and the gradient descent step is described in Section \ref{subsec:gd}.

\begin{algorithm} [!hbt]
\caption{Summary of our algorithm}
\label{alg:summary}
\begin{algorithmic}[1]
\Statex{\hspace{-4mm}\textbf{Inputs:} individuals $\bX=\bx_1,\ldots,\bx_n$, triplets $S$, responses $\{y_t\mid t\in S\}$, step size $\eta$, and time $T$.}
\vspace{2mm}
\State{Use \textsc{RankCentrality} to obtain an estimate of the pairwise distance matrix $(d_{\bK_\star}(\bx_i,\bx_j))_{i,j}$.}
\State{Solve a generalized eigenvector problem to recover an estimate $\bA_0$ of $\bA_\star$.}
\State{Define $L(\bA)$ as in \eqref{eqn:risk}.}
\For{$t\in[T]$}
    \State{$\bA_t\gets \bA_{t-1}-\eta\nabla L(\bA_{t-1})$}
\EndFor
\State{Return $\bA_T\bA_T^\top$.}
\end{algorithmic}
\end{algorithm}
\subsection{Initialization Step}\label{subsec:init}
The major goal of the initialization step is to identify a region within which gradient descent remains well-behaved. To this end, we will utilize  a spectral approach. First, we use existing methods to estimate the full matrix of pairwise distances, that is,  $(d_{\bK_\star}(\bx_i,\bx_j))_{i,j}$. %
Then we solve a generalized eigenvector problem to recover the underlying matrix $\bK_\star$.

To estimate the pairwise distances $d_{\bK_\star}(\bx_i,\bx_j)$, %
 we use the Rank Centrality algorithm \citep{rank_centrality_chen}. We treat pairs of individuals as nodes in a graph, and we create a matrix representing the input data $\{y_i\}$ as a random walk on that graph. This random walk has the property that pairs with large pairwise distance are more likely to be reached than pairs with small pairwise distance. Then, we find the stationary distribution of this walk. Intuitively, the stationary distribution should allocate higher mass to pairs with larger pairwise distance. Indeed, \citet{rank_centrality_chen} shows that the stationary distribution approximately recovers the distance associated with each pair of individuals (up to an additive constant). After applying a centering matrix $\boldsymbol J$ to remove the additive constant, we obtain a matrix $\widehat{\boldsymbol H}$ that is approximately equal to ${\bX}^\top\bA_\star\bA_\star^\top\bX$. 

To finish the initialization step, we solve a generalized eigenvector problem to approximately recover $\bA_\star$ (up to orthogonal transformation) from ${\bX}^\top\bA_\star\bA_\star^\top\bX$. The full initialization step is given in Appendix \ref{sec:algs_appendix} as Algorithm \ref{alg:init}.

\subsection{Gradient Descent Step}\label{subsec:gd}
Starting from the initial point identified by the previous subsection, we run gradient descent on the loss function \eqref{eqn:risk}. We use a small constant as the step size and run until convergence. We tune the step size and total number of iterations in practice, and specify the settings in our experiments. Using gradient descent to minimize the loss corresponds intuitively to finding an $\widehat{\bA}$ that is likely, given our training data. We show below that the $\hat{A}$ thus obtained has desirable statistical properties and allows to recover the Mahalanobis distance $K_\star$ of interest to us. Downstream, this also allows to recover the fairness metric--we provide these mathematical details in the next section. The gradient descent step is described in Appendix \ref{sec:algs_appendix} as Algorithm \ref{alg:gd}.

\section{Theoretical Results}\label{sec:theoretical_results}

Our central theoretical result shows that our algorithm can be used to estimate a Mahalanobis distance from triplet queries. We state the following theorem, which guarantees the performance of Algorithm \ref{alg:summary}:
\begin{thm}\label{thm:alg_performance}
  Under Assumption \ref{assumption:distribution}, let $\bA_T$ be the output of Algorithm \ref{alg:summary} with time $T$ and sufficiently small step size $\eta$. Define $\widehat{\bK}:=\bA_T\bA_T^\top$ and let $\bK_\star$ be the true Mahalanobis distance matrix. Then, $\|\widehat{\bK}-\bK_\star\|\lesssim e^{-T}+1/n$
\end{thm}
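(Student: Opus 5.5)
The proof combines the two main ingredients announced in the introduction: the initialization guarantee (Theorem \ref{thm:init}) and the local convergence guarantee for gradient descent (Theorem \ref{thm:gd_induction}), which rests on the restricted strong convexity and smoothness of Theorem \ref{thm:hessian}.

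\emph{Step 1: the initialization lands in the basin.} Invoke Theorem \ref{thm:init}. Under Assumption \ref{assumption:distribution}, the spectral initialization $\bA_0$ of Algorithm \ref{alg:init} satisfies, with high probability,
\[
\min_{\bO\in\mathcal O_r}\|\bA_0\bO-\bA_\star\|=o(1)
\]
as $n\to\infty$. This follows because Rank Centrality recovers the centered distance matrix $\widehat{\bH}$ up to vanishing error, and the generalized eigenproblem is stable, with perturbation constant depending on $\kappa$ and on the conditioning of $\bX^\top\bX/n$. Hence for $n$ large enough, $\bA_0$ lies inside the radius $\delta\sqrt{\|\bK_\star\|}$ (up to rotation) required by Theorem \ref{thm:gd_induction}.

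\emph{Step 2: linear convergence to a statistical floor.} Apply Theorem \ref{thm:gd_induction} with a step size $\eta\lesssim 1/\beta$, where $\beta$ is the smoothness constant from Theorem \ref{thm:hessian}. Its induction shows that the iterates stay in the basin and contract:
\[
\operatorname{dist}(\bA_t,\bA_\star)\le (1-c\eta\alpha)^t\operatorname{dist}(\bA_0,\bA_\star)+C\|\nabla L(\bA_\star)\|/\alpha .
\]
Here $\operatorname{dist}(\bA,\bA_\star)=\min_{\bO}\|\bA\bO-\bA_\star\|$, and $\alpha$ is the restricted strong convexity constant. The gradient at the truth is an average of independent mean-zero terms $(\sigma(y_t\Tr(\bM_t\bK_\star))-1)y_t\bM_t\bA_\star$ over $|S|\asymp sn^3$ triplets. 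A matrix concentration bound, handling the $U$-statistic dependence through shared individuals, bounds it by $\lesssim 1/n$ with high probability; I take this from the proof of Theorem \ref{thm:gd_induction}. With $\eta$ a constant, $(1-c\eta\alpha)^T\lesssim e^{-c'T}$, which the theorem's $\lesssim$ notation absorbs as $e^{-T}$ after rescaling constants.

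\emph{Step 3: transfer to $\widehat{\bK}$.} Let $\bO_T$ be the optimal rotation. Then
\[
\widehat{\bK}-\bK_\star=(\bA_T\bO_T-\bA_\star)(\bA_T\bO_T)^\top+\bA_\star(\bA_T\bO_T-\bA_\star)^\top .
\]
Since both iterates are bounded by $2\sqrt{\|\bK_\star\|}$ inside the basin, $\|\widehat{\bK}-\bK_\star\|\le 4\sqrt{\|\bK_\star\|}\operatorname{dist}(\bA_T,\bA_\star)$. Combining with Step 2 gives the claim.

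\emph{Main obstacle.} The delicate step is ensuring that the constants match. The $o(1)$ initialization error must fall below the fixed basin radius of Theorem \ref{thm:hessian}, and the rate of $\|\nabla L(\bA_\star)\|$ must actually be $1/n$ rather than $1/\sqrt{n^3}$ or $\sqrt{\log n/n}$. If Theorem \ref{thm:gd_induction} only yields a weaker floor, the stated $1/n$ would require tightening the concentration argument. I would first verify that the variance of the gradient sum scales like $1/|S|$ times a covariate factor, and treat any logarithmic factors as absorbed into $\lesssim$.
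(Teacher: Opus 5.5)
Your architecture is the same as the paper's: Theorem~\ref{thm:init} to reach the basin, the one-step contraction of Theorem~\ref{thm:gd_induction} unrolled over $T$ steps, then the factorization of $\widehat{\bK}-\bK_\star$ through $\bA_T\bO_T-\bA_\star$. The gap is in Step~1. You treat Theorems~\ref{thm:hessian} and~\ref{thm:gd_induction} as having a fixed basin of radius $\delta\sqrt{\|\bK_\star\|}$, and conclude that an $o(1)$ initialization error suffices for large $n$. In this paper the basin shrinks with $n$. Both theorems assume Frobenius proximity of order $\sqrt{1/(n\log n)}$; for example, Theorem~\ref{thm:gd_induction} requires $\|\bA_w\bO_w-\bA_\star\|_{\textnormal{F}}\le C_1\sqrt{1/(n\log n)}$. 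This radius is used essentially: Lemma~\ref{lem:trAstar2_close_to_trA2} needs $\|\bA-\bA_\star\|_{\textnormal{F}}$ to beat the factor $\max_t\|\bM_t\|_{\textnormal{F}}^2\lesssim p^2\log^2 n$. An $o(1)$ bound therefore does not verify the hypothesis. You must use the quantitative rate of Theorem~\ref{thm:init}, $\|\bA_0\bO-\bA_\star\|\lesssim\sqrt{1/(n\log n)}$. After paying $\sqrt{r}=O(1)$ to pass to the Frobenius norm, and noting that $\bO_0$ is optimal, this is exactly the required radius, which is what the paper does.

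Because the radius shrinks, ``the iterates stay in the basin'' is not automatic either, and this is where the $1/n$ floor matters most. From $\|\bA_{w+1}\bO_{w+1}-\bA_\star\|_{\textnormal{F}}\le\rho\|\bA_w\bO_w-\bA_\star\|_{\textnormal{F}}+C_3/n$, the induction closes only because $C_3/n\le(1-\rho)C_1\sqrt{1/(n\log n)}$ for large $n$. A floor of order $\sqrt{\log n/n}$, which you contemplate at the end, would not merely weaken the final rate. It would push the iterate out of the region where Theorem~\ref{thm:hessian} applies after a single step. For the floor itself, note that Lemma~\ref{lem:empirical_gradient} conditions on $\{\bM_t\}_{t\in S}$, so the summands are independent and mean-zero through the labels $y_t$, and no $U$-statistic argument is needed. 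Finally, $(1-c\eta\alpha)^T$ cannot be absorbed into $e^{-T}$ by rescaling constants, since a multiplicative constant does not change an exponential rate. The paper takes the same shortcut when it writes $\rho^T\sqrt{1/(n\log n)}+1/((1-\rho)n)\lesssim e^{-T}+1/n$. What the argument actually gives is $\|\widehat{\bK}-\bK_\star\|\lesssim\rho^T\sqrt{1/(n\log n)}+1/n$.
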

The proof of Theorem \ref{thm:alg_performance} appears in Appendix \ref{pf:alg_performance}. It requires intermediate Theorems \ref{thm:init}, \ref{thm:hessian}, and \ref{thm:gd_induction}. Theorems \ref{thm:init}-\ref{thm:gd_induction} collectively show that our algorithm produces $\hat{\bK}$ that is close in norm to $\bK_\star$. To this end, we will follow a two-step procedure---we will first analyze the initialization step and then study properties of the gradient descent step. 
Our next result shows that the output of our initialization step from Section \ref{subsec:init} is close to the ground truth (up to rotations) for large sample sizes.

\begin{thm}\label{thm:init}
    Let $\bA_0$ be the output of Algorithm \ref{alg:init}. Under Assumption \ref{assumption:distribution}, there exists an orthogonal matrix $\boldsymbol O$ such that
    \(\|\bA_0\boldsymbol O-\bA_\star\|\lesssim \sqrt{1/n\log n}.\)
\end{thm}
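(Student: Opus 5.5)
The proof chains three perturbation bounds: from the pairwise-distance estimates produced by \textsc{RankCentrality}, to the centered Gram-type matrix $\widehat{\bH}$, to the generalized eigenvector output $\bA_0$.

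\textbf{Step 1 (pairwise distances).} In the Rank Centrality instance, nodes are the $\binom{n}{2}$ pairs $(i,j)$, with score $\theta_{ij}=d^2_{\bK_\star}(\bx_i,\bx_j)$. A triplet $(i,j,k)\in S$ is a comparison between pairs $(i,j)$ and $(i,k)$. Because triplets are kept independently with constant probability $s$, the comparison graph is a random subgraph of the graph in which two pairs are adjacent when they share an endpoint. I will verify the hypotheses of the entrywise result of \citet{rank_centrality_chen}:
\begin{itemize}
\item the scores have bounded dynamic range, which follows from Assumption \ref{assumption:distribution} (bounded features, so $\theta_{ij}\lesssim\|\bK_\star\|$);
\item the comparison graph has spectral gap of order one, since it is a line-graph-like structure on $K_n$ with degree $\asymp n$.
\end{itemize}
Invoking their $\ell_\infty$ guarantee, adapted to this graph, I expect an entrywise error of roughly $\sqrt{\log n/(\text{degree})}\asymp\sqrt{\log n/n}$ for the recovered scores up to a common additive constant. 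This step is the main obstacle. The graph is not Erd\H{o}s--R\'enyi, and each pair participates in only $O(n)$ comparisons out of $O(n^2)$ nodes. Their leave-one-out analysis therefore has to be rechecked, with the degree $\asymp n$ and the spectral gap of the line graph of $K_n$ bounded directly.

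\textbf{Step 2 (centering).} Write $\bD=(\theta_{ij})$. Then $\bD=\boldsymbol{g}\mathbf 1^\top+\mathbf 1\boldsymbol{g}^\top-2\bX\bK_\star\bX^\top$, with $g_i=\bx_i^\top\bK_\star\bx_i$. Double centering gives $-\tfrac12\boldsymbol J\bD\boldsymbol J=\boldsymbol J\bX\bK_\star\bX^\top\boldsymbol J$; this kills both the unknown additive constant and the diagonal terms. So $\widehat{\bH}-\bH$ is $-\tfrac12\boldsymbol J\boldsymbol E\boldsymbol J$ with $\|\boldsymbol E\|_\infty\lesssim\sqrt{\log n/n}$. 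It is best to work with the $p\times p$ compressed quantities $\tfrac1{n^2}\bX^\top\widehat{\bH}\bX$ and $\tfrac1n\bX^\top\boldsymbol J\bX$ (the empirical covariance $\widehat\bSigma$). Each entry of the first is an average of $n^2$ error entries times bounded features, so the entrywise bound carries over to $\|\tfrac1{n^2}\bX^\top\boldsymbol J\boldsymbol E\boldsymbol J\bX\|\lesssim\sqrt{\log n/n}$, using fixed $p$.

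\textbf{Step 3 (generalized eigenproblem).} The population relation is $\tfrac1{n^2}\bX^\top\bH\bX=\widehat\bSigma\bK_\star\widehat\bSigma$. The algorithm's generalized eigenproblem with respect to $\widehat\bSigma$ recovers the top-$r$ part $\bK_\star$. By Assumption \ref{assumption:distribution}, $\widehat\bSigma$ is well conditioned with high probability, so inverting it is Lipschitz. Then:
\begin{itemize}
\item $\|\widehat\bK_0-\bK_\star\|\lesssim\sqrt{\log n/n}$, where $\widehat\bK_0$ is the rank-$r$ truncation; truncation at most doubles the error by Weyl/Eckart--Young.
\item To pass from $\bK$ to the factor $\bA$, I use the standard bound (e.g.\ Lemma 5.4 of Tu et al.\ as used in \citet{ma2019implicit}): $\min_{\bO}\|\bA_0\bO-\bA_\star\|\lesssim\|\bA_0\bA_0^\top-\bK_\star\|/\sqrt{\sigma_r(\bK_\star)}$, with constants depending on $\kappa$ and $\|\bK_\star\|$.
\end{itemize}
This yields the claimed $\sqrt{\log n/n}$ rate.
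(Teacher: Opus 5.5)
\paragraph{Main gap: Step 1.} Your Step~1 is a genuine gap, and you flag it yourself. With a single \textsc{RankCentrality} instance on the $\binom n2$ pairs, the comparison graph is a random subgraph of the line graph of $K_n$. You then only \emph{expect} an entrywise bound, but the $\ell_\infty$ guarantee of \citet{rank_centrality_chen} is proved only for Erd\H{o}s--R\'enyi comparison graphs.

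The missing idea is what Algorithm~\ref{alg:init} actually implements, which also means your global instance is not the estimator $\bA_0$ the theorem is about. Run \textsc{RankCentrality} separately for each anchor $i$, on the items $j\neq i$ with comparisons $\{(j,k):(i,j,k)\in S\}$.
\begin{itemize}
\item For fixed $i$ this is essentially an Erd\H{o}s--R\'enyi comparison graph with constant edge probability, with scores $\exp(d^2_{\bK_\star}(\bx_i,\bx_j))$.
\item The theorems of \citet{rank_centrality_chen} therefore apply off the shelf. Given bounded dynamic range (next paragraph), they yield $|(\widehat{\boldsymbol p}^i)_j-(\boldsymbol p^i_\star)_j|\lesssim\sqrt{\log n/n}$ after taking logs.
\item Each row is then known only up to its own constant $\log\|\boldsymbol w^i_\star\|_1$. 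These shifts form a matrix $\boldsymbol c\boldsymbol 1^\top$, and $\boldsymbol J\boldsymbol c\boldsymbol 1^\top\boldsymbol J=0$ because $\boldsymbol 1^\top\boldsymbol J=0$.
\end{itemize}
So double centering absorbs $n$ row-wise constants as easily as one global constant, and no leave-one-out analysis on the line graph is needed. This is how the paper's proof proceeds.

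\paragraph{Second gap: dynamic range.} Your first bullet is false as stated. Assumption~\ref{assumption:distribution} gives subgaussian, not bounded, features. Hence $\max_i\|\bx_i\|^2$ is of order $p+\log n$, and $\max_{i,j}d^2_{\bK_\star}(\bx_i,\bx_j)$ can be of order $\|\bK_\star\|\log n$. The dynamic range of the Bradley--Terry scores is then polynomial in $n$, which causes two failures. First, the Rank Centrality error bounds degrade polynomially in the dynamic range. Second, the linearization $|\log(1+x)|\le|x|$ breaks: it needs $|\widehat\pi_j-\pi_j|/\pi_j\le 1/2$, i.e.\ the relative $\ell_\infty$ bound multiplied by the dynamic range. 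The paper handles this with the first line of Algorithm~\ref{alg:init}, discarding the $10\%$ of points with largest norm. On the retained set this gives $\|\bx\|=O(1)$ and hence $O(1)$ dynamic range. Your proposal never uses that step.

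\paragraph{Steps 2--3 and the rate.} Your Steps~2--3 are sound. Whitening by $\widehat{\bSigma}$, truncating to rank $r$, and then passing to factors via Lemma~5.4 of Tu et al.\ is cleaner than the paper's route. The paper goes through generalized Davis--Kahan and eigenvalue-by-eigenvalue matching (Lemmas~\ref{lem:u_tilde_to_u_hat}--\ref{lem:approx_aat}), whereas your route never has to align $\widehat{\bLambda}^{1/2}$ with the subspace rotation.

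On the rate, you conclude $\sqrt{\log n/n}$. The paper instead reads this theorem as $\sqrt{1/(n\log n)}$ when it feeds it into the hypothesis of Theorem~\ref{thm:gd_induction}. However, the paper's own proof starts from the same entrywise rate $\sqrt{\log n'/(n's')}$ and then asserts $\sqrt{1/(n'\log n')}$ after centering, with no justification. So $\sqrt{\log n/n}$ is what this line of argument actually supports. Just state explicitly that this is the reading of $\sqrt{1/n\log n}$ you are proving.
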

For the above theorems and the ones below, we require standard distributional assumptions---such as moment bounds and Orlicz norm bounds---and an anti-concentration assumption on the observed features.
We collect our assumptions  in Appendix \ref{sec:assumptions}. 
To prove Theorem \ref{thm:init}, we start by using results from \citet{rank_centrality_chen} to show that RankCentrality approximately recovers the pairwise distance matrix $\{d_{\bK_\star}(\bx_i,\bx_j)\}_{i,k}$. The second and third steps to the initialization algorithm, after running RankCentrality, are centering and solving a generalized eigenvalue problem. The challenge in proving Theorem \ref{thm:init} lies in ensuring that those steps keep the error small (see proof details  in \ref{pf:init}).

Having shown that our initialization reaches a point close to the ground truth, we turn to the gradient descent phase of our algorithm. The loss function $L$, defined in \eqref{eqn:risk}, is nonconvex. However, the following theorem shows that, within a region of small radius around $\bA_\star$, $L$ is strongly convex (in a modified sense) and smooth. The strong convexity condition is modified since $L$ is only strongly convex in certain directions due to the orthogonal invariance of the loss function.

\begin{thm}\label{thm:hessian}
    Let $\bA_\star$ be any matrix such that $\bA_\star\bA_\star^\top=\bK_\star$. Assume Assumption \ref{assumption:distribution}. Also, suppose that there exists a constant $C$ such that $\|\bA-\bA_\star\|_\textnormal{F}\le C\sqrt{\frac{1}{n\log n}}$. %
    
    Then, for $n$ sufficiently large, with probability at least $1-O(\exp(-n))$, we have
    \begin{align}
        \|\nabla^2 L(\bA)\|^2&\lesssim 1\label{eqn:hessian_smoothness}
    \end{align}
    \begin{align}\label{eqn:hessian_strong_convexity}
        \vc(\widetilde{\bZ})^\top\nabla^2L(\bA)\vc(\widetilde{\bZ})&\gtrsim\|\widetilde{\bZ}\|_\textnormal{F}^2
    \end{align}
    for $\widetilde{\bZ}=\bZ\bH_{\bZ}-\bA_\star$, where $\bZ\in\R^{p\times r}$ is any matrix and $\bH_{\bZ}=\argmin_{\boldsymbol{O}\in\mathcal O_{r\times r}}\|\bZ\boldsymbol{O}-\bA_\star\|_\textnormal{F}.$
\end{thm}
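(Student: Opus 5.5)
We first compute the Hessian of $L$ explicitly. Write $\ell(u)=\log(1+e^{-u})$ and $u_t(\bA)=y_t\Tr(\bM_t\bA\bA^\top)$. For a direction $\bV\in\R^{p\times r}$, $\dd u_t[\bV]=2y_t\Tr(\bM_t\bA\bV^\top)$ and $\dd^2u_t[\bV,\bV]=2y_t\Tr(\bM_t\bV\bV^\top)$, using that $\bM_t$ is symmetric. Hence
\begin{align*}
\vc(\bV)^\top\nabla^2L(\bA)\vc(\bV)=\frac{1}{|S|}\sum_{t\in S}\Big[4\ell''(u_t)\Tr(\bM_t\bA\bV^\top)^2+2y_t\ell'(u_t)\Tr(\bM_t\bV\bV^\top)\Big].
\end{align*}
Since $0<\ell''\le 1/4$ and $|\ell'|\le 1$, the smoothness bound \eqref{eqn:hessian_smoothness} follows from
\[
\frac{1}{|S|}\sum_t\|\bM_t\|^2\big(\|\bA\|^2+1\big)\lesssim 1.
\]
We control $\|\bM_t\|\lesssim\|\bx_i\|^2+\|\bx_j\|^2+\|\bx_k\|^2$ with the moment and Orlicz-norm bounds of Assumption \ref{assumption:distribution}. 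A concentration inequality for the U-statistic-like average over the random triplet set $S$ gives this with probability $1-O(e^{-n})$. Finally, $\|\bA\|\le\|\bA_\star\|+o(1)$ by the hypothesis on $\|\bA-\bA_\star\|_\textnormal{F}$.

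For the lower bound \eqref{eqn:hessian_strong_convexity}, we first evaluate at $\bA_\star$ and in expectation over $y_t$. Conditional on the $\bx$'s, $\EE[y_t\ell'(u_t(\bA_\star))]=0$: at the true parameter $\Pr[y_t=y]=\sigma(y m_t)$ with $m_t=\Tr(\bM_t\bK_\star)$, and $y\ell'(ym_t)=-y\sigma(-ym_t)$, so the expectation is $-\sigma(m_t)\sigma(-m_t)+\sigma(-m_t)\sigma(m_t)=0$. Thus the expected Hessian at $\bA_\star$ is the Fisher-information form
\[
\frac{1}{|S|}\sum_t 4\ell''(m_t)\Tr(\bM_t\bA_\star\bV^\top)^2.
\]
To make $\ell''(m_t)$ bounded below, we restrict to the event $|m_t|\le c$, which has constant probability under the distributional assumptions. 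The remaining task is to lower bound
\[
\EE\big[\Tr(\bM_t\bA_\star\bV^\top)^2\,\mathbf 1\{|m_t|\le c\}\big]\gtrsim\|\bV\|_\textnormal{F}^2 \quad\text{for } \bV=\widetilde\bZ.
\]
This is where the anti-concentration assumption and the choice $\widetilde\bZ=\bZ\bH_{\bZ}-\bA_\star$ enter. The optimality of $\bH_{\bZ}$ (orthogonal Procrustes) makes $\bA_\star^\top\widetilde\bZ$ symmetric, which removes the null directions $\bA_\star\boldsymbol\Omega$ with $\boldsymbol\Omega$ skew-symmetric. For such $\bV$ the symmetric matrix $\bA_\star\bV^\top+\bV\bA_\star^\top$ has Frobenius norm $\gtrsim\sigma_r(\bA_\star)\|\bV\|_\textnormal{F}$; this is the standard bound from \citet{ma2019implicit}, and it introduces the dependence on $\kappa$. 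Since $\Tr(\bM_t\bA_\star\bV^\top)=\tfrac12\Tr(\bM_t(\bA_\star\bV^\top+\bV\bA_\star^\top))$, anti-concentration of the quadratic form $\bx\mapsto\Tr(\bM_t\boldsymbol B)$ for symmetric $\boldsymbol B$ yields the lower bound. We then pass from expectation to the empirical average uniformly over $\bV$ in the unit sphere via an $\epsilon$-net, which has size $e^{O(pr)}$, together with Bernstein-type concentration. This holds with probability $1-O(e^{-n})$ once $n$ is large relative to $p$, and it also controls the term $y_t\ell'(u_t(\bA_\star))$, which is mean zero and hence contributes only $o(1)\|\bV\|_\textnormal{F}^2$.

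Finally, we perturb from $\bA_\star$ to $\bA$. The map $\bA\mapsto\nabla^2L(\bA)$ is Lipschitz with constant $\lesssim\frac{1}{|S|}\sum_t\|\bM_t\|^2(1+\|\bA\|)^2\lesssim1$, since $\ell',\ell''$ are Lipschitz. Because $\|\bA-\bA_\star\|_\textnormal{F}\lesssim(n\log n)^{-1/2}$, the perturbation is $o(1)\|\bV\|_\textnormal{F}^2$ and is absorbed for $n$ large. I expect the main obstacle to be the uniform restricted lower bound: securing anti-concentration of $\Tr(\bM_t\boldsymbol B)$ on the event $|m_t|\le c$, and getting concentration with $e^{-n}$ failure probability despite heavy-tailed quartic terms. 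I would handle the latter by truncating $\|\bx_i\|$ at a level justified by the Orlicz bounds.
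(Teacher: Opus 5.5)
Your outline works, modulo the points in the second paragraph, but it controls each piece of the Hessian differently from the paper.

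\textbf{The curvature term.} For $\alpha_1$, the paper's Lemma~\ref{lem:alpha_1} uses the antisymmetry $\bM_{(i,k,j)}=-\bM_{(i,j,k)}$ to make the unweighted sum $\sum_{t\in\oS}\Tr(\bV^\top\bM_t\bV)$ vanish, and then controls the subsampling to $S$. You instead use the score identity $\EE[y_t\ell'(u_t(\bA_\star))\mid\bx]=0$ under the Bradley--Terry model, concentration over the conditionally independent $y_t$, and a Lipschitz move from $\bA_\star$ to $\bA$. Your mechanism is the one that actually controls this term. The summands of $\alpha_1$ carry a $t$- and $y_t$-dependent weight, and the vanishing of the unweighted sum does not transfer to the weighted one.

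\textbf{The Fisher term.} For $\alpha_2$, both arguments rest on the same two facts. Procrustes optimality makes $\bA_\star^\top\widetilde\bZ$ symmetric, so $\|\bA_\star\widetilde\bZ^\top+\widetilde\bZ\bA_\star^\top\|_\textnormal{F}\gtrsim\sigma_r(\bA_\star)\|\widetilde\bZ\|_\textnormal{F}$. Assumption~\ref{assumption:distribution}(5) then gives $\EE|X|\gtrsim\kappa^{-1}\|\bA_\star\|\|\widetilde\bZ\|_\textnormal{F}$ for $X:=\Tr(\bM_t\widetilde\bZ\bA_\star^\top)$. The paper then counts triplets: Paley--Zygmund gives a $\gtrsim\kappa^{-2}$ fraction with large $|X|$, Lemma~\ref{lem:lower_bound_b} gives a $1-\tfrac12 C\kappa^{-2}$ fraction with bounded $|\Tr(\bM_t\bA\bA^\top)|$, and pigeonhole finishes. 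You use a truncated expectation and an $\epsilon$-net instead.

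\textbf{Smoothness.} The paper gets a bound whose leading term is dimension-free, from the fourth-moment assumption and the 36-term U-statistic concentration of Lemmas~\ref{lem:concentration_tr2_astar} and~\ref{lem:sbar_to_s_Astar}. Your bound via $\frac{1}{|S|}\sum_t\|\bM_t\|^2$ is far shorter. Like your net, it is automatically uniform in the direction; the paper's fixed-direction lemmas leave that uniformity implicit, although Theorem~\ref{thm:gd_induction} applies them with data-dependent $\widetilde\bZ$. However, your bound carries factors of $p^2$, so it does not survive the footnote's growing-$p$ regime $p^4/n=o(1)$.

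Three points need tightening.

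\emph{Truncation level.} It is not enough that $\{|m_t|\le c\}$ has constant probability, because the large-$|X|$ event can sit where $|m_t|$ is large. For $\widetilde\bZ=\bA_\star$ (obtained from $\bZ=2\bA_\star$), $X=m_t$ exactly. The obstacle you flag is closed as in the paper: choose $c$ large, depending on $\kappa$, so that $\Pr(|m_t|>c)$ is below half the Paley--Zygmund mass. Equivalently, bound $\EE[X^2\mathbf 1\{|m_t|>c\}]\le(\EE X^4\cdot\Pr(|m_t|>c))^{1/2}$ using the subexponential tail of Lemma~\ref{lem:trace_bound_nonsymmetric}, and compare with $\EE X^2\ge(\EE|X|)^2$. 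Then $\ell''(c)$ is a $\kappa$-dependent constant.

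\emph{Lipschitz constant.} The Lipschitz constant of $\bA\mapsto\nabla^2L(\bA)$ also contains an $\ell'''$ term that is cubic in $\|\bM_t\|$, not quadratic. This is still $O(1)$ for fixed $p$.

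\emph{Probability level.} The bound $\frac{1}{|S|}\sum_t\|\bM_t\|^2\lesssim1$ cannot hold with probability $1-O(e^{-n})$ for, e.g., Gaussian features. A single index with $\|\bx_a\|^4\gtrsim n$ already pushes the average above any fixed constant. That event has probability about $e^{-c\sqrt n}$, so no truncation argument can reach $e^{-n}$. Truncating at $\|\bx_a\|\lesssim\sqrt{\log n}$ gives $1-O(n^{-c})$. This matches what the paper's lemmas actually deliver ($1-O(n^{-10})$, despite the $e^{-n}$ in the statement), and it is all that Theorem~\ref{thm:gd_induction} uses.
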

The proof requires bounding all parts of the Hessian of the loss function $\nabla^2 L(\bA)$, which is
{\begin{equation*}
    \frac{1}{|S|}\sum_{t\in S}\Big(\underbrace{-\frac{2y_t\bI_r\otimes\bM_t}{{(\exp(y_t\Tr(\bM_t \bA\bA^\top))+1)^2}}}_{\alpha_1}+\underbrace{\frac{4\exp(y_t\Tr(\bM_t \bA\bA^\top))\vc(\bM_t\bA)\vc(\bM_t\bA)^\top}{(\exp(y_t\Tr(\bM_t \bA\bA^\top))+1)^2}}_{\alpha_2}\Big).
\end{equation*}}
At a very high level, the proof proceeds by showing that $\alpha_1$ is small, and then showing that $\alpha_2$ is both upper and lower bounded. Both steps are highly non-trivial; details can be found in Appendix \ref{pf:hessian}. Theorem \ref{thm:hessian} gives us control over the smoothness and convexity of the Hessian. The following theorem turns this into a guarantee that gradient descent converges quickly.

\begin{thm}\label{thm:gd_induction}
    Assume Assumption \ref{assumption:distribution}. Let $\mathcal O_{r\times r}$ be the set of $r\times r$ orthogonal matrices. Let $\bO_w=\argmin_{\bO\in\mathcal O_{r\times r}}\|\bA_w\bO-\bA_\star\|_\textnormal{F}.$ Suppose there exists a constant $C_1$ such that $\|\bA_w\bO-\bA_\star\|_\textnormal{F}\le C_1\sqrt{1/(n\log n)}.$ Then, there exist constants $C_2,C_3$ such that, if the step size $\eta<C_2$, \(\|\bA_{w+1}\bO_{w+1}-\bA_\star\|\|\bA_w\bO-\bA_\star\|_\textnormal{F}\le\rho\|\bA_w\bO-\bA_\star\|_\textnormal{F}+\frac{C_3}{n}\) with probability $1-O(n^{-10})$, for some $0<\rho<1$.
\end{thm}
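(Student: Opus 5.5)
The plan is the standard one-step contraction argument of \citet{ma2019implicit}, with Theorem \ref{thm:hessian} supplying the curvature. (The displayed inequality seems to carry a typographical artifact; I read the intended claim as $\|\bA_{w+1}\bO_{w+1}-\bA_\star\|_\textnormal{F}\le\rho\|\bA_w\bO_w-\bA_\star\|_\textnormal{F}+C_3/n$.)

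The first step uses the minimality of $\bO_{w+1}$ and the orthogonal invariance of the loss. Since $\bO_{w+1}$ minimizes over $\mathcal O_{r\times r}$,
\[
\|\bA_{w+1}\bO_{w+1}-\bA_\star\|_\textnormal{F}\le\|\bA_{w+1}\bO_w-\bA_\star\|_\textnormal{F}.
\]
Because $L(\bA\bO)=L(\bA)$, we have $\nabla L(\bA\bO)=\nabla L(\bA)\bO$. Hence
\[
\bA_{w+1}\bO_w=\bA_w\bO_w-\eta\nabla L(\bA_w\bO_w).
\]
Write $\bA:=\bA_w\bO_w$ and $\bDelta:=\bA-\bA_\star$. (Here $\bDelta$ is only a placeholder name; in the source I would write it out explicitly, since no such macro is defined.) The fundamental theorem of calculus applied to $\nabla L$ along the segment $\bA(\tau)=\bA_\star+\tau\bDelta$ gives
\[
\vc(\bA_{w+1}\bO_w-\bA_\star)=\Big(\bI-\eta\int_0^1\nabla^2L(\bA(\tau))\,\dd\tau\Big)\vc(\bDelta)-\eta\,\vc(\nabla L(\bA_\star)).
\]
Every point $\bA(\tau)$ lies in the Frobenius ball of radius $C_1\sqrt{1/(n\log n)}$ around $\bA_\star$, so Theorem \ref{thm:hessian} applies along the whole segment.

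The second step bounds the quadratic part. Take $\bZ=\bA$ in Theorem \ref{thm:hessian}. Since $\bO_w$ is already optimal, $\bH_{\bZ}=\bI$ and $\widetilde\bZ=\bDelta$. The strong convexity bound \eqref{eqn:hessian_strong_convexity} then gives $\vc(\bDelta)^\top\bar\nabla^2\vc(\bDelta)\ge\mu\|\bDelta\|_\textnormal{F}^2$ for the averaged Hessian $\bar\nabla^2$, and the smoothness bound \eqref{eqn:hessian_smoothness} gives $\|\bar\nabla^2\|\le \beta$. Expanding the square,
\[
\|(\bI-\eta\bar\nabla^2)\vc(\bDelta)\|^2\le(1-2\eta\mu+\eta^2\beta^2)\|\bDelta\|_\textnormal{F}^2.
\]
Choosing $\eta<C_2:=\mu/\beta^2$ makes this at most $(1-\eta\mu)\|\bDelta\|_\textnormal{F}^2$, so the contraction factor is $\rho=\sqrt{1-\eta\mu}<1$. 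One subtlety needs checking: the restricted convexity in Theorem \ref{thm:hessian} is stated at a point $\bA$ for the direction $\widetilde\bZ$, and I am applying it at each $\bA(\tau)$ with the fixed direction $\bDelta$. I expect this to go through because the theorem's hypothesis only requires proximity of the evaluation point, and the direction is of the form $\bZ\bH_\bZ-\bA_\star$ with $\bZ=\bA$.

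The third step, which I expect to be the main obstacle, is the noise term: I need $\eta\|\nabla L(\bA_\star)\|_\textnormal{F}\le C_3/n$ with probability $1-O(n^{-10})$. We have
\[
\nabla L(\bA_\star)=\frac{1}{|S|}\sum_{t\in S}\frac{-2y_t\bM_t\bA_\star}{1+\exp(y_t\Tr(\bM_t\bK_\star))},
\]
and each summand has conditional mean zero given $\bX$ and $S$ under \eqref{eqn:bt_model_concise}. Since $|S|\asymp sn^3$, a matrix Bernstein or Hoeffding bound conditional on the features, with $\|\bM_t\|$ controlled by the Orlicz-norm bounds in Assumption \ref{assumption:distribution}, gives $\|\nabla L(\bA_\star)\|\lesssim\sqrt{\log n/n^3}$. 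That is $O(1/n)$ with room to spare. The remaining difficulty is that the heavy-tailed $\bM_t$ need truncation: I would intersect with the event $\max_i\|\bx_i\|\lesssim\sqrt{\log n}$, which has probability $1-O(n^{-10})$. Combining the three steps and the triangle inequality then yields the stated recursion.
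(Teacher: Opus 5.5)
Your proposal is correct and follows essentially the same argument as the paper: minimality of $\bO_{w+1}$, the fundamental theorem of calculus along the segment to the (rotated) ground truth, Theorem \ref{thm:hessian} with $\bH_{\bZ}=\bI_r$ for the contraction, and a conditional matrix-Bernstein bound on $\nabla L(\bA_\star)$ (the paper's Lemma \ref{lem:empirical_gradient}) for the $O(1/n)$ term. The differences are cosmetic. You rotate the iterate to $\bA_w\bO_w$ via $\nabla L(\bA\bO)=\nabla L(\bA)\bO$, whereas the paper rotates the target to $\bA_\star\bO_w^\top$. You also take $\rho=\sqrt{1-\eta\mu}$, which tracks the dependence on the step size more carefully than the paper's stated constant.
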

To prove this theorem, we first show inductively that the proximity assumption of Theorem \ref{thm:hessian} holds at all time steps, so the Hessian is always smooth and (modified) strongly convex. Then, we follow a standard argument to show that gradient descent converges rapidly. The proof appears in Appendix \ref{pf:gd_induction}. Since we have an initialization step that ensures that we meet the inductive hypotheses at time 0, %
Theorem \ref{thm:gd_induction} shows that Algorithm \ref{alg:gd} converges quickly to the ground truth Mahalanobis matrix.
We now have all the pieces to prove Theorem \ref{thm:alg_performance}. The proof, presented in Appendix \ref{pf:alg_performance}, uses Theorem \ref{thm:init} and Theorem \ref{thm:gd_induction} to show that Algorithm \ref{alg:summary} produces an estimate that is close in norm to $\bK_\star$.

\section{Experiments}\label{sec:experiments}
\subsection{Data Sources}\label{sec:data_sources}
First, we use synthetically generated data. We use $n=120$ data points, each with $p=20$ features. We generate the feature vectors using distributions that include both isotropic and non-isotropic distributions and distributions with independent and correlated coordinates. We present two settings: first, $\bx_i\sim\mathcal N(0,\Sigma_1)$, where $\Sigma_1$ is a diagonal matrix in which a random subset of half of the entries are $5$ and the other entries are $1/5$; and second, $\bx_i\sim\mathcal N(0,\Sigma_2)$, where $\Sigma_2$ is an autoregressive matrix with entry $(i,j)$ given by $0.8^{|i-j|}$. We present more synthetic data settings in Appendix \ref{sec:more_synth_data}.

We also train a downstream fair classifier, so we require a test set and a label. To obtain a test set, we generate another $n$ data points. We generate labels synthetically: for each synthetic dataset, we draw a synthetic signal vector $\bv_\star\sim\mathcal N(0,\mathbf I_p)$. For each data point $\bx_i$, we form the probability $p_i:=1/(1+\exp(\langle \bx_i,\bv_\star\rangle))$. Then, we draw a synthetic label $y_i\sim\text{Bern}(p_i)$.

We also use real datasets. We present experiments on ACS Employment, Credit Card Default, and CDC Diabetes, three real-world settings where fair machine learning is important. In ACS Employment, the features are demographic data about a person and the label is whether the person is employed. In Credit Card Default, the features are a person's demographics and credit card payment history and the label is whether they defaulted on their credit card. In CDC Diabetes, the features are medically-relevant data about a person, and the label is whether the person has diabetes. Full description of these datasets, as well as description of additional real datasets, are in Appendix \ref{sec:more_real_data}.

\subsection{Experimental Procedure}\label{sec:procedure}

For each dataset, we generate a random ground-truth matrix with rank $r=3$ to induce a Mahalanobis distance on the data. Then, we generate training data according to the Bradley-Terry model, including each possible triplet in $S$ with probability $1/2$, and run our algorithm on this data for $T=200$ gradient descent iterations and with step size $\eta=0.1$. Figure \ref{fig:experiments} shows that gradient descent starting from our initialization converges quickly to the ground truth in all settings.

\begin{figure}
\begin{tikzpicture}
\path node[left, scale=0.95]
{
    \includegraphics[scale=0.4]{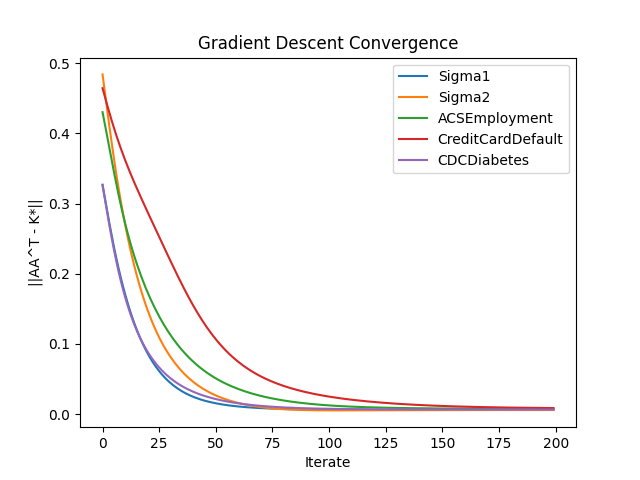}
};
\path node[right, scale=0.95]{
 \begin{tabular}{lccc}\toprule
Dataset & $\zeta_{d_{\widehat{\bK}}}$ & $\zeta_{d_{\bK_\star}}$ & Difference \\\midrule
Synthetic data, $\Sigma_1$ & 1.0066 & 1.0032 & 0.34\% \\
Synthetic data, $\Sigma_2$ & 1.0022 & 0.9962 & 0.60\% \\
ACSEmployment & 0.9999 & 0.998 & 0.19\% \\
CreditCardDefault & 0.9982 & 1.0028 & 0.46\% \\
CDCDiabetes & 1.0009 & 0.9995 & 0.14\% \\\midrule
\end{tabular}
};
\end{tikzpicture}

    \label{fig:experiments}
    \caption{\textbf{Left:} distance between our estimate and the true metric as gradient descent continues. \textbf{Right:} fairness performance of a downstream classifier, measured with respect to the true and the estimated fairness metric.}
\end{figure}

Now, we study the performance of downsteam fair classifiers. We show that classifiers trained to be fair with respect to the estimated fairness metric are also fair with respect to the true metric. We initialize a neural network $\mathcal A$ with three hidden layers, each with latent dimension 20, and sigmoid activations. We use the SenSeI algorithm \citep{yurochkin2021sensei} in the inFairness Python library \cite{infairness} to train the neural network to be both accurate and individually fair. However, we do not assume that we have access to the true input metric $d_{\bK_\star}$. Therefore, we train using the \emph{estimated} metric $d_{\widehat{\bK}}$ from our algorithm. We train for 200 epochs and use SenSeI hyperparameters $\rho=5$, $\epsilon=0.1$, 100 auditor steps, and an auditor learning rate of $0.01$.

Having trained the algorithm $\mathcal A$ to be fair with respect to the learned metric, we check whether it is similarly fair with respect to the true metric. Individual fairness, introduced by \citet{fta}, is defined as follows:
\begin{definition}\label{def:indiv_fairness}
    Let $\bX$ be a universe of individuals and let $\boldsymbol Y$ be a universe of outcomes. An algorithm $\mathcal A:\bX\to\boldsymbol Y$ is $l$-individually fair on $\bX$ with respect to metrics $D:\boldsymbol Y\times\boldsymbol Y\to\R$ and $d:\bX\times\bX\to\R$ if, for all $\bx_i,\bx_j\in\bX$,
    \(
        D(\mathcal A(\bx_i),\mathcal A(\bx_j))\le l\cdot d(\bx_i,\bx_j).
    \)
\end{definition}
That is, if the distance between $\bx_i,\bx_j$ is small per the fairness metric $d$, i.e., $\bx_i$ and $\bx_j$ are similar, then an individually fair algorithm assigns similar outputs to these individuals.

In experimental settings, the individual fairness parameter $l$ can be quite variable since it is a maximum over all pairs of individuals. Instead, we use a more practical fairness measurement proposed by \cite{maity2021statistical}. We fix a loss function $\ell$, in our case the $\ell_1$ loss. Then, defining a function $\Phi_d$ that maps a pair $(\bx,y)$ to a new point $\bx'$ such that $d(\bx,\bx')$ is small but the algorithm $\mathcal A$ %
performs much worse on $\bx'$ (measured with respect to $\ell$), we define $\zeta:=\EE[\ell(\mathcal A(\Phi_d(\bx,y)),y)/\ell(\mathcal A(x),y)]$. $\zeta$ is an interpretable and robust measurement of fairness \citep{maity2021statistical}: if $\mathcal A$ is individually fair, $\zeta$ should be not much larger than 1. In experiments, we use the inFairness Python library \citep{infairness} to measure $\zeta$. See Appendix \ref{sec:fairness_measurements} for details on fairness measurements. 

We define $\widehat\zeta$, obtained using $d=d_{\widehat{\bK}}$ and $\zeta^\star$, obtained using $d=d_{\bK_\star}$. If the fairness of the model with respect to the estimated fairness metric is similar to the fairness of the model with respect to the true fairness metric, then $\widehat{\zeta}$ should be close to $\zeta^\star$. In Figure \ref{fig:experiments}, we see that this holds in all settings: the relative difference between the two quantities never exceeds one percent. Additionally, the values for $\zeta$ are close to 1, indicating that the models are fair with respect to both the estimated and the true metric. In Appendix $\ref{sec:more_results}$, we show fairness experiments for additional datasets, as well as measurements of fairness according to the original individual fairness Definition \ref{def:indiv_fairness}.
 
\section{Discussion}\label{sec:discussion}
Obtaining a suitable similarity metric on individuals has long been a barrier to implementing individual fairness in real settings. In our work, we take steps to overcome this barrier by developing an algorithm that learns a similarity metric for individual fairness from triplet queries under the Bradley-Terry model. We emphasize the feasibility of our approach: triplet queries are reasonable to implement in the real world and the Bradley-Terry model is a realistic model of human responses. In proving theoretical guarantees on the performance of our algorithm, we use ideas from nonconvex optimization and tools from high-dimensional statistics. Our paper is the first in the fairness literature to leverage these statistical tools. Such connections between the algorithmic fairness literature and other areas such as optimization and statistics are an important piece of making fairness work in new and challenging settings. An adjacent and emerging area of research that requires the aggregation of human preferences is AI alignment. As mentioned in the Introduction, our algorithm can be used to learn metrics under human feedback as part of reinforcement learning from human feedback (RLHF) pipelines. In this paper, we focus on algorithmic fairness, which is one aspect of the broader issue of ethical AI. Exploring the applications of our algorithm to relevant RLHF approaches is important future work. 

\section*{Acknowledgments}
P.S.~would like to thank Cynthia Dwork for introducing her to the problem and importance of metric learning in the world of algorithmic fairness. P.S.~gratefully acknowledges support from the National Science Foundation (DMS CAREER 2440824) and the Office of Naval Research (N00014-26-1-2144). L.Z.'s research is partly supported by National Science Foundation (DMS CAREER 2340241).

\bibliographystyle{plainnat}
\bibliography{main}

\pagebreak

\appendix
\section{Organization of the appendicies}
Appendix \ref{sec:algs_appendix} describes our algorithm in pseudocode. Appendix \ref{sec:proofs} gives proofs of the main theorems. Key lemmas and their proofs appear in Appendix \ref{sec:key_lemmas} and technical lemmas and their proofs appear in Appendix \ref{sec:technical_lemmas}. Appendix \ref{sec:more_experiments} gives more details on the experiments and presents experimental results in new settings.
\section{Algorithms}\label{sec:algs_appendix}
This appendix contains pseudocode descriptions of algorithm, separated into Algorithm \ref{alg:init} for the initialization step and Algorithm \ref{alg:gd} for the gradient descent step.
\begin{algorithm} [!hbt]
\caption{Initialization Step}
\label{alg:init}
\begin{algorithmic}[1]
\Statex{\hspace{-4mm}\textbf{Inputs:} individuals $\bX=\bx_1,\ldots,\bx_n$, triplets $S$, and triplet query responses $\{y_t\mid t\in S\}$.}
\vspace{3mm}
\Statex{\hspace{-4mm}\textsc{RankCentrality} is a subroutine that takes as input a set of comparisons $\{(j,k)\}$ and a set of responses $\{y_{(j,k)}\}$ and returns an  estimate of the normalized underlying scores $\{\pi_j\}$}
\vspace{3mm}
\State{Form $\bX'$ by discarding the 0.1 fraction of the individuals in $\bX$ with the largest norm.}
\State{$\bX'\gets \bX'-\frac{1}{n}\boldsymbol{1}\boldsymbol{1}^\top\bX'$ to center the columns of $\bX'$.}
\State{Initialize $\widehat{\boldsymbol P}\in\R^{n\times n}$}
\For{$i\in[n]$}
    \State{$\boldsymbol \pi^i\gets\textsc{RankCentrality}(\{(j,k)\mid (i,j,k)\in S\},\{y_{(i,j,k)}\})$}
    \State{$\boldsymbol p^i\gets(\log\pi^i_1,\ldots,\log\pi^i_{i-1},0,\log\pi^i_{i+1},\ldots,\log\pi^i_{n})$}
    \State{$\widehat{\boldsymbol P}_i\gets\boldsymbol p^i$}
\EndFor
\State{$\boldsymbol J\gets \boldsymbol I_{n}-(\boldsymbol 1\boldsymbol 1^\top/{n})$}
\State{$\widehat{\boldsymbol H}\gets -\frac{1}{2}\boldsymbol J\widetilde{\boldsymbol P}\boldsymbol J$}
\State{$\bPsi\gets{\bX'}^\top\bX'$}
\State{$\widehat{\bU},\widehat{\bLambda}\gets\text{generalized eigenvectors and eigenvalues of the pair }(\frac{1}{{n}^2}{\bX'}^\top\widehat{\boldsymbol H}\bX',\frac{1}{n}\bPsi)$}
\State{$\bA_0\gets\widehat{\bU}\widehat{\bLambda}^{1/2}$}
\end{algorithmic}
\end{algorithm}

\begin{algorithm}[!hbt]
\caption{Gradient Descent Step}
\label{alg:gd}
\begin{algorithmic}[1]
\Statex{\hspace{-4mm}\textbf{Inputs:} individuals $\bX=\bx_1,\ldots,\bx_n$, triplets $S$, triplet query responses $\{y_t\mid t\in S\}$, step size $\eta$, time $T$, and initialization $\bA_0$}
\vspace{3mm}
\State{Define $L(\bA)$ as in \eqref{eqn:risk}}
\For{$t\in[T]$}
    \State{$\bA_t\gets \bA_{t-1}-\eta\nabla L(\bA_{t-1})$.}
\EndFor
\vspace{3mm}
\State{Return $\bA_T\bA_T^\top$}
\end{algorithmic}
\end{algorithm}

\section{Proofs of Main Theorems}\label{sec:proofs}
Before we begin to present the proofs, we include the following diagram as a helpful guide for how our results fit together. Each result is implied by its descendants.

\includegraphics[scale=0.53,angle=90]{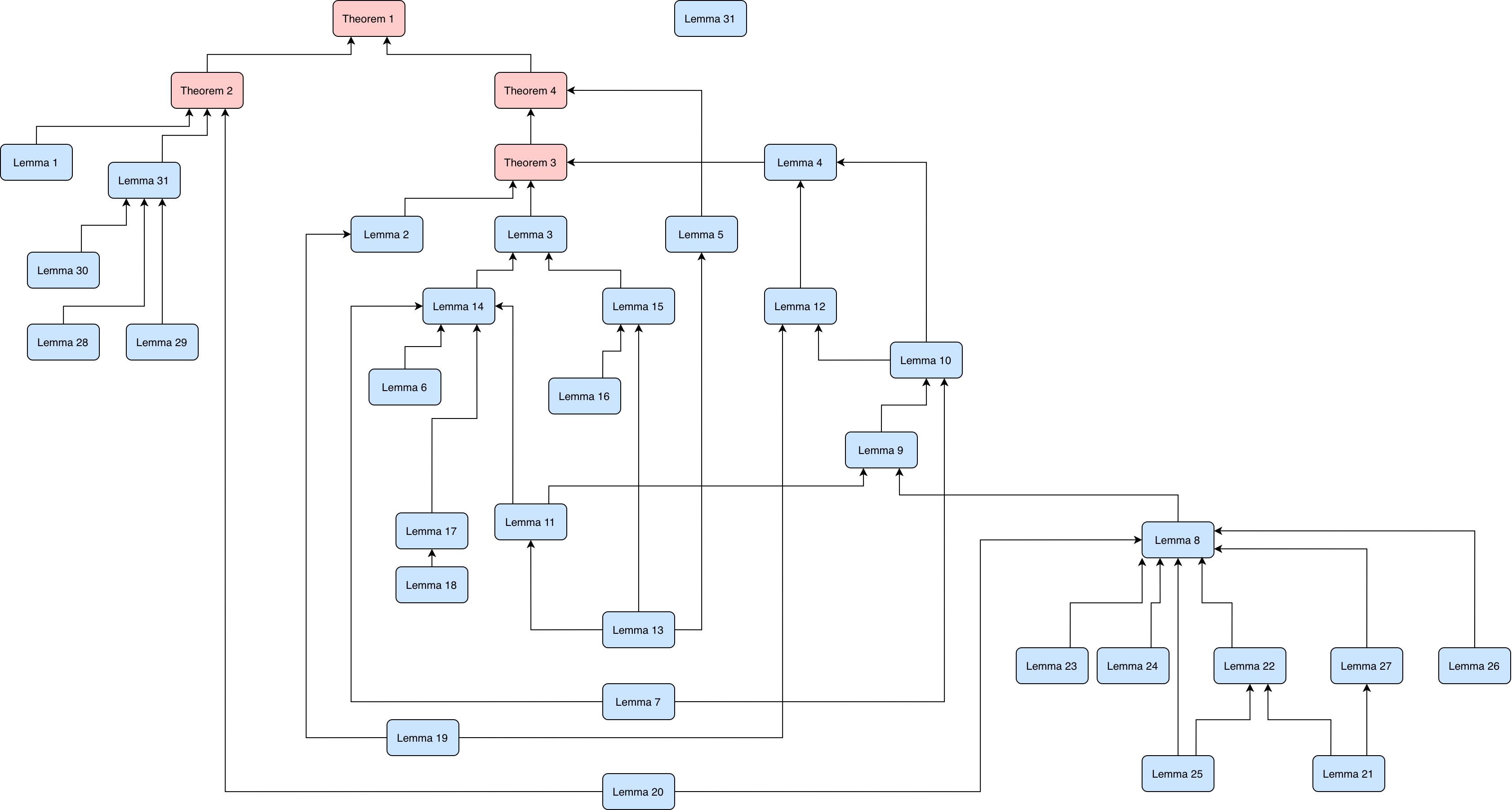}
\subsection{Distributional Assumptions}\label{sec:assumptions}
The following is the assumption that we require to obtain our theoretical results.
\begin{assumption}\label{assumption:distribution}
For all $i$, $\bx_i\sim_{\textnormal{i.i.d}}\mathcal D$, where $\mathcal D$ is a distribution over $\R^p$ with the following properties. For constants $C_1,C_2,C_3,C_4,C_5,C_6$,
\begin{enumerate}
    \item Moment bounds: We assume that the second and fourth moments of $\mathcal D$ are bounded, i.e., if $\bx_i\sim\mathcal D$, then $\|\bSigma\|\le C_1$ (where $\bSigma:=\EE[\bx_i\bx_i^\top]$) and $\|\EE[\vc(\bx_i\bx_i^\top)\vc(\bx_i\bx_i^\top)^\top]\|\le C_2$.
    \item Bounded subgaussian norm: Define the one-dimensional subgaussian norm by $\|x\|_{\psi_2}:=\inf\{K>0:\EE[\exp(x^2/K^2)\le 2\}$. Then, define the subgaussian norm of a random vector by $\|\bx\|_{\psi_2}:=\sup_{\|\bv\|=1}\|\langle \bx,\bv\rangle\|_{\psi_2}$. We assume $\|\bx_i\|_{\psi_2}\le C_3$.
    \item Concentration of norm: We assume $\left\|\|\bx_i\|-\EE[\|\bx_i\|]\right\|_{\psi_2}\le C_4$.
    \item Concentration of quadratic forms: We assume that, for any matrix $\boldsymbol B_1\in\R^{p\times p}$, if $\bx_i,\bx_j\sim_{\textnormal{i.i.d.}}\mathcal D$, then $\left\|\bx_i^\top\boldsymbol B_1\bx_j\right\|_{\psi_1}\le C_5 \|\boldsymbol B_1\|_{\textnormal{F}}$.
    \item We require a lower bound on the expected distance between quadratic forms in the two triplet distances being compared. Specifically, we assume that, for any rank $r$ symmetric matrix $\boldsymbol B_2\in\R^{p\times p}$, \[\EE_{\bx_i,\bx_j,\bx_k\sim_{\textnormal{i.i.d.}}\mathcal D}\left[\left|(\bx_i-\bx_j)^\top\boldsymbol{B}_2(\bx_i-\bx_j)-(\bx_i-\bx_k)^\top\boldsymbol{B}_2(\bx_i-\bx_k)\right|\right]\ge C_6\|\boldsymbol B_1\|_{\textnormal{F}}.\] Intuitively, this captures the requirement that the Mahalanobis distances between pairs of points cannot all be too similar.
\end{enumerate}
\end{assumption}
\subsection{Proof of Theorem \ref{thm:alg_performance}}\label{pf:alg_performance}
By Theorem \ref{thm:init}, the output $\bA_0$ of the initialization step satisfies $\|\bA_0\boldsymbol{O}-\bA_\star\|\lesssim\sqrt{\frac{1}{n\log n}}$ for some orthogonal matrix $\boldsymbol{O}$. Therefore, when we run gradient descent starting at $\bA_0$, the hypothesis of Theorem \ref{thm:gd_induction} is satisfied, so there exists $0<\rho<1$ such that $\|\bA_{w+1}\bO_{w+1}-\bA_\star\|\le \rho\|_\textnormal{F}\bA_{w}\bO_{w}-\bA_\star\|_\textnormal{F}+\frac{C}{n}$ for all $w$ with probability $1-O(T/n^{10})$, where $\bO_w=\argmin_{\bO\in\mathcal O_{r\times r}}\|\bA_w\bO-\bA_\star\|_\textnormal{F}$. This implies that, after running gradient descent for time $T$, we reach an iterate $\bA_T$ that satisfies $\|\bA_T\boldsymbol{O}_T-\bA_\star\|_\textnormal{F}\lesssim \rho^T\sqrt{\frac{1}{n\log n}}+\frac{1}{(1-\rho)n}\lesssim e^{-T}+\frac{1}{n}$. Defining $\widehat{\bK}:=\bA_T\bA_T^\top=\bA_T\boldsymbol{O}_T(\bA_T\boldsymbol{O}_T)^\top$, we have that $\|\bK-\bK_\star\|_\textnormal{F}\lesssim e^{-T}+\frac{1}{n}$. Since the rank is $r=O(1)$, the same bound is true in Euclidean norm.
\subsection{Proof of Theorem \ref{thm:init}}\label{pf:init}
Let $(\boldsymbol{w}_\star^i)_j:=\exp(d_{\bK_\star}(\bx_i,\bx_j))$. To apply the guarantees on Rank Centrality, we need a bound on the largest condition number $$\max_i\left(\frac{\max_j((\boldsymbol{w}_\star^i)_j)}{\min_j((\boldsymbol{w})_\star^i)_j)}\right).$$ 

We run the initialization step on only the subset of individuals with bounded norm to keep the condition number constant. In Algorithm \ref{alg:init}, we take $\bX'$ to be the subset of the individuals in $\bX$ whose norm is below the 90th percentile of all individuals in $\bX$. Observe that $n'=\Theta(n)$. Analogously, we define $S':=\{(i,j,k)\in S\mid i,j,k\in\bX'\}$, and we have that $|S'|=\Theta(n^3)$.

To bound the condition number, we first observe that, for all $\bx\in\bX'$, $\|\bx\|\le 20\EE_{\bx'\sim\mathcal D}[\|\bx'\|]=O(1)$ by Markov's inequality and Hoeffding's inequality. Then, we upper bound $\max_{i,j}((\boldsymbol{w}^i)_j)$ by observing that $d_{\bK_\star}(\bx_i,\bx_j)^2\le\|\bK_\star\|^2\|\bx_i-\bx_j\|$. Since $\|bx_i\|$ and $\|\bx_j\|$ are $O(1)$, we have $d_{\bK}(\bx_i,\bx_j)^2\lesssim\|\bK_\star\|^2$. Therefore $\max_{i,j}((\boldsymbol{w}_\star^i)_j)\lesssim \exp(\|\bK_\star\|^2)$. We have that $\min_{i,j}((\boldsymbol{w}_\star^i)_j)\ge 1$. Therefore, with probability $1-O(e^{-n})$, we have $\max_i\left(\frac{\max_j((\boldsymbol{w}_\star^i)_j)}{\min_j((\boldsymbol{w}_\star^i)_j)}\right)=O(1)$.

Now, we make some definitions. Let $\bD_\star\in\R^{n'\times n'}$ be the true pairwise distance matrix with $D_{i,j}=d_{\bK_\star}(\bx_i,\bx_j)$. For $i\in[n']$, define $\boldsymbol D_\star^i$ to be the $i$-th row of $\boldsymbol D$. Then, define
\[\boldsymbol\pi_\star^i:=\frac{\boldsymbol w_\star^i}{\|\boldsymbol w_\star^i\|_1}.\]

We combine \citet[Theorem 2.6]{rank_centrality_chen} and \citet[Theorem 5.2]{rank_centrality_chen} to conclude that, with probability at least $1-O(n^{-5})$, \textsc{RankCentrality} finds an estimate $\widehat{\boldsymbol\pi}^i$ such that
\begin{equation}
    \frac{\|\widehat{\boldsymbol\pi}^i-\boldsymbol\pi_\star^i\|_\infty}{\|\boldsymbol\pi_\star^i\|_\infty}\lesssim\sqrt{\frac{\log n'}{n's'}}\label{eqn:rank_centrality_1}
\end{equation}
and
\begin{equation}
    \frac{\|\widehat{\boldsymbol\pi}^i-\boldsymbol\pi_\star^i\|}{\|\boldsymbol\pi^i_\star\|}\lesssim\sqrt{\frac{1}{n's'}}.\label{eqn:rank_centrality_2}
\end{equation}

From this, we show the following Lemma:
\begin{lem}\label{lem:l1_norm_ratio}
    For all $i\in[n']$,
    \[\frac{\|\widehat{\boldsymbol\pi}^i-\boldsymbol\pi_\star^i\|_1}{\|\boldsymbol\pi^i_\star\|_1}\lesssim\sqrt{\frac{1}{n's'}}.\]
\end{lem}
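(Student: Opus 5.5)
We will derive Lemma \ref{lem:l1_norm_ratio} from the $\ell_2$ guarantee \eqref{eqn:rank_centrality_2}, using the Cauchy--Schwarz inequality together with the fact that $\boldsymbol\pi_\star^i$ is ``flat'' (bounded condition number). Since $\boldsymbol\pi_\star^i$ has nonnegative entries summing to one, $\|\boldsymbol\pi_\star^i\|_1=1$. Cauchy--Schwarz on vectors of length $n'$ gives
\[
\|\widehat{\boldsymbol\pi}^i-\boldsymbol\pi_\star^i\|_1\le\sqrt{n'}\,\|\widehat{\boldsymbol\pi}^i-\boldsymbol\pi_\star^i\|.
\]
Substituting \eqref{eqn:rank_centrality_2} then yields
\[
\frac{\|\widehat{\boldsymbol\pi}^i-\boldsymbol\pi_\star^i\|_1}{\|\boldsymbol\pi_\star^i\|_1}\lesssim\sqrt{n'}\,\|\boldsymbol\pi_\star^i\|\sqrt{\frac{1}{n's'}}.
\]

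It therefore suffices to show $\|\boldsymbol\pi_\star^i\|\lesssim 1/\sqrt{n'}=\|\boldsymbol\pi_\star^i\|_1/\sqrt{n'}$. For this we use the condition number bound established just above: with probability $1-O(e^{-n})$, for every $i$,
\[
\frac{\max_j(\boldsymbol w_\star^i)_j}{\min_j(\boldsymbol w_\star^i)_j}=O(1),
\]
and normalization does not change this ratio. Every entry of $\boldsymbol\pi_\star^i$ is at most $\kappa_0$ times every other entry, where $\kappa_0=O(1)$. Since the entries sum to $1$, each entry is at most $\kappa_0/n'$. Hence
\[
\|\boldsymbol\pi_\star^i\|\le\sqrt{n'}\cdot\kappa_0/n'=\kappa_0/\sqrt{n'}.
\]
Plugging this in gives the claimed bound $\sqrt{1/(n's')}$ up to the constant $\kappa_0$, uniformly in $i$.

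Two bookkeeping points remain. First, the $i$-th coordinate is excluded, so each vector has $n'-1$ entries; this changes only constants. Second, uniformity over $i\in[n']$ follows from a union bound over the $n'$ applications of \eqref{eqn:rank_centrality_2}, each of which fails with probability $O(n^{-5})$, giving overall failure probability $O(n^{-4})$. The same union bound also covers \eqref{eqn:rank_centrality_1}, which is not needed here.

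The only step with real content is the flatness bound $\|\boldsymbol\pi_\star^i\|\lesssim\|\boldsymbol\pi_\star^i\|_1/\sqrt{n'}$. Everything there rests on the $O(1)$ condition number, which is why Algorithm \ref{alg:init} discards the largest-norm individuals. We take that bound as given from the preceding paragraph.
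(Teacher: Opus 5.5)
Your proof is correct and follows the same route as the paper. You use Cauchy--Schwarz to pass from the $\ell_1$ error to the $\ell_2$ error, and the bounded $\max/\min$ ratio of $\boldsymbol\pi_\star^i$ to get $\|\boldsymbol\pi_\star^i\|\lesssim\|\boldsymbol\pi_\star^i\|_1/\sqrt{n'}$, before invoking \eqref{eqn:rank_centrality_2}; your normalization $\|\boldsymbol\pi_\star^i\|_1=1$ and consistent use of $n'$ are just slightly cleaner bookkeeping of the same argument.
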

\begin{proof}[Proof of Lemma \ref{lem:l1_norm_ratio}]
    We know
    \[\frac{\max_j((\boldsymbol{\pi}_\star^i)_j)}{\min_j((\boldsymbol{\pi}_\star^i)_j)}\lesssim 1\] for all $i$ (because $\boldsymbol\pi_\star^i$ is a scaled version of $\boldsymbol w_\star^i$). Then
    \begin{align*}
        \|\boldsymbol\pi_\star^i\|_1&=\sum_{j\in[n']}(\boldsymbol\pi_\star^i)_j\gtrsim n\max_j((\boldsymbol\pi_\star^i)_j).
    \end{align*}
    Note that \[\|\boldsymbol\pi_\star^i\|_2\le\sqrt{n}\max_j((\boldsymbol\pi_\star^i)_j).\]
    Therefore, $\|\boldsymbol\pi_\star^i\|_1\gtrsim\sqrt{n}\|\boldsymbol\pi_\star^i\|_2$.
    So,
    \begin{align*}
        \frac{\|\widehat{\boldsymbol\pi}^i-\boldsymbol\pi_\star^i\|_1}{\|\boldsymbol\pi^i_\star\|_1}&\lesssim \frac{\|\widehat{\boldsymbol\pi}^i-\boldsymbol\pi_\star^i\|_1}{\sqrt{n}\|\boldsymbol\pi_\star^i\|_2}\le \frac{\sqrt{n}\|\widehat{\boldsymbol\pi}^i-\boldsymbol\pi_\star^i\|_2}{\sqrt{n}\|\boldsymbol\pi_\star^i\|_2}= \frac{\|\widehat{\boldsymbol\pi}^i-\boldsymbol\pi_\star^i\|_2}{\|\boldsymbol\pi_\star^i\|_2}\lesssim \sqrt{\frac{1}{n's'}}.
    \end{align*}
\end{proof}
Now, for each $i$, we form the vector $\widehat{\boldsymbol p}^i$ by setting $(\widehat{\boldsymbol p}^i)_j:=\log((\widehat{\boldsymbol\pi}^i)_j)$. Similarly, define $\boldsymbol p_\star^i$ by setting $(\boldsymbol p_\star^i)_j:=\log((\boldsymbol\pi_\star^i)_j)$. By the definitions above, $\boldsymbol p_\star^i$ is a shifted version of the $i$-th row of the distance matrix $\boldsymbol D_\star$:
\[\boldsymbol p_\star^i=\boldsymbol D_\star^i+\log(\|\boldsymbol w_\star^i\|_1)\boldsymbol 1.\] Taking the log preserves the closeness of our estimate, which we show in the following.
\begin{align*}
    (\widehat{\boldsymbol p}^i)_j&=\log((\widehat{\boldsymbol\pi}^i)_j)\\
    &=\log(({\boldsymbol\pi}_\star^i)_j+((\widehat{\boldsymbol\pi}^i)_j-({\boldsymbol\pi}_\star^i)_j))\\
    &=\log\left(({\boldsymbol\pi}_\star^i)_j\left(1+\frac{(\widehat{\boldsymbol\pi}^i)_j-({\boldsymbol\pi}_\star^i)_j}{({\boldsymbol\pi}_\star^i)_j}\right)\right)\\
    &=\log({\boldsymbol\pi}_\star^i)_j+\log\left(1+\frac{(\widehat{\boldsymbol\pi}^i)_j-({\boldsymbol\pi}_\star^i)_j}{({\boldsymbol\pi}_\star^i)_j}\right)\\
    &=(\boldsymbol p_\star^i)_j+\log\left(1+\frac{(\widehat{\boldsymbol\pi}^i)_j-({\boldsymbol\pi}_\star^i)_j}{({\boldsymbol\pi}_\star^i)_j}\right).
\end{align*}
Therefore, 
\begin{equation}\label{eqn:log_bound}
    \left|(\widehat{\boldsymbol p}^i)_j-(\boldsymbol p_\star^i)_j\right|\le \left|\log\left(1+\frac{(\widehat{\boldsymbol\pi}^i)_j-({\boldsymbol\pi}_\star^i)_j}{({\boldsymbol\pi}_\star^i)_j}\right)\right|.
\end{equation}

Equation \eqref{eqn:rank_centrality_1} implies that 
\begin{equation}
    \frac{|(\widehat{\boldsymbol\pi}^i)_j-({\boldsymbol\pi}_\star^i)_j|}{\|{\boldsymbol\pi}_\star^i\|_\infty}\lesssim\sqrt{\frac{\log n'}{n's'}}.\label{eqn:pihat_to_pistar_1}
\end{equation} 
Furthermore we have
\begin{equation}
    \frac{\|{\boldsymbol\pi}_\star^i\|_\infty}{\min_j(({\boldsymbol\pi}_\star^i)_j)}=\frac{\max_j(({\boldsymbol\pi}_\star^i)_j)}{\min_j(({\boldsymbol\pi}_\star^i)_j)}\le \exp(4\|\bK_\star\|)=\Theta(1).\label{eqn:pihat_to_pistar_2}
\end{equation}
Combining \eqref{eqn:pihat_to_pistar_1} and \eqref{eqn:pihat_to_pistar_2}, we have that 
\[\frac{|(\widehat{\boldsymbol\pi}^i)_j-({\boldsymbol\pi}_\star^i)_j|}{({\boldsymbol\pi}_\star^i)_j}\lesssim\sqrt{\frac{\log n'}{n's'}}.\] Recalling \eqref{eqn:log_bound}, we have
\begin{align*}
    \left|(\widehat{\boldsymbol{p}}^i)_j-(\boldsymbol p_\star^i)_j\right|\le\left|\log\left(1+\frac{(\widehat{\boldsymbol\pi}^i)_j-({\boldsymbol\pi}_\star^i)_j}{({\boldsymbol\pi}_\star^i)_j}\right)\right|&\le \frac{|(\widehat{\boldsymbol\pi}^i)_j-({\boldsymbol\pi}_\star^i)_j|}{({\boldsymbol\pi}_\star^i)_j}\lesssim\sqrt{\frac{\log n'}{n's'}},
\end{align*}
where the first inequality holds because $|\log(1+x)|\le|x|$ when $|x|\le0.5$

Define $\widehat{\boldsymbol P}$ to be the matrix whose $(i,j)$-th entry is $(\widehat{\boldsymbol{p}}^i)_j$ and define $\boldsymbol P_\star$ to be the matrix whose $(i,j)$-th entry is $({\boldsymbol{p}}_\star^i)_j$. Then, 
\[\frac{1}{n'}\|\widehat{\boldsymbol P}-\boldsymbol P_\star\|\le\left|(\widehat{\boldsymbol{p}}^i)_j-(\boldsymbol p_\star^i)_j\right|\lesssim\sqrt{\frac{\log n'}{n's'}}.\]

Algorithm \ref{alg:init} then applies a centering matrix $\boldsymbol J:=\boldsymbol I_n-(\boldsymbol 1\boldsymbol 1^\top/n)$ to remove the translational ambiguity. Let $\boldsymbol\Delta$ be the matrix with $i$-th row given by $\log(\|\boldsymbol w^i_\star\|_1)\boldsymbol 1$. We have
\begin{align*}
    \frac{1}{n'}\boldsymbol J\widehat{\boldsymbol P}\boldsymbol J&=\frac{1}{n'}\boldsymbol J\boldsymbol P_\star\boldsymbol J+\frac{1}{n'}\boldsymbol J(\widehat{\boldsymbol P}-\boldsymbol P_\star)\boldsymbol J\\
    &=\frac{1}{n'}\boldsymbol J\boldsymbol D_\star\boldsymbol J-\frac{1}{n'}\boldsymbol J\boldsymbol \Delta\boldsymbol J+\frac{1}{n'}\boldsymbol J(\widehat{\boldsymbol P}-\boldsymbol P_\star)\boldsymbol J.
\end{align*}

Since $\boldsymbol\Delta$'s rows are uniform, we have $\boldsymbol J\boldsymbol\Delta\boldsymbol J=0$. 

Observing that $\|\boldsymbol J\|=1$, we have
\begin{align}\nonumber
    \left\|\frac{1}{n'}\boldsymbol J(\widehat{\boldsymbol P}-\boldsymbol P_\star)\boldsymbol J\right\|
    &\le \left\|\frac{1}{n'}(\widehat{\boldsymbol P}-\boldsymbol P_\star)\right\|\lesssim\sqrt{\frac{1}{n'\log n'}}
\end{align}

Therefore
\[\frac{1}{n'}\|\boldsymbol J\widehat{\boldsymbol P}\boldsymbol J-\boldsymbol J\boldsymbol D_\star\boldsymbol J\|\lesssim\sqrt{\frac{1}{{n'}\log n'}}.\]

The reason that $\boldsymbol J\boldsymbol D_\star\boldsymbol J$ is a useful quantity is given by \citet[Theorem 1]{zhang_wahba_yuan}. This theorem states that, if $\bA_\star^\top\bX^\top$ is centered (i.e. $\bA_\star^\top\bx_1+\cdots+\bA_\star^\top\bx_n=\boldsymbol 0$), then \[\bX\bA_\star\bA_\star^\top\bX^\top=-\frac{\boldsymbol J\bD_\star\boldsymbol J}{2},\] where $\boldsymbol J:=\boldsymbol I_n-(\boldsymbol 1\boldsymbol 1^\top/n)$. Since we center $\bX$ before running the initialization, $\bA_\star^\top\bX^\top$ is also centered, so the theorem applies.

Therefore,
\begin{align*}
    \frac{1}{n'}\left\|-\frac{1}{2}\boldsymbol J\widehat{\bD}\boldsymbol J-\bX\bA_\star\bA_\star^\top\bX^\top\right\|&\lesssim\sqrt{\frac{1}{{n'}\log n'}}.
\end{align*}

To simplify notation, write $\boldsymbol H_\star:=-\frac{1}{2}\boldsymbol J\bD_\star\boldsymbol J$ and $\widehat{\boldsymbol H}:=-\frac{1}{2}\boldsymbol J\widehat{\boldsymbol P}\boldsymbol J$. Using this notation, we have shown that 
\begin{equation}\label{eqn:hhat_to_h}
    \frac{1}{n'}\|\widehat{\boldsymbol H}-\boldsymbol H_\star\|\lesssim\sqrt{\frac{1}{{n'}\log n'}}.
\end{equation}

Now we show how Algorithm \ref{alg:init} recovers $\bA_\star\bA_\star^\top$ from $\boldsymbol H_\star$. Let $\bU_\star\bLambda_\star\bU_\star^\top$ be the SVD of $\bA_\star\bA_\star^\top$. We can make the following manipulations:
\begin{align*}
    \boldsymbol H_\star&=\bX\bA_\star\bA_\star^\top\bX^\top\\
    \bX^\top\boldsymbol H_\star\bX&=\bX^\top\bX\bA_\star\bA_\star^\top\bX^\top\bX\\
    \bX^\top\boldsymbol H_\star\bX&=\bPsi\bU_\star\bLambda_\star\bU_\star^\top\bPsi,
\end{align*}
where $\bPsi:=\bX^\top\bX$.

Let $\widetilde\bU\in\R^{p\times r}$ be the $r$ largest generalized eigenvectors of the pair $(\frac{1}{{n'}^2}\bX^\top\boldsymbol H_\star\bX,\frac{1}{{n'}}\bPsi)$, i.e. $\widetilde\bU$ satisfies
\[\frac{1}{{n'}^2}\bX^\top\boldsymbol H_\star\bX\widetilde\bU=\frac{1}{{n'}}\bPsi\widetilde\bU\widetilde{\bLambda},\] where $\widetilde{\bLambda}$ is a diagonal matrix sorted in nonincreasing order.

Finding $\widetilde\bU$ and $\widetilde{\bLambda}$ is sufficient to recover $\bA_\star\bA_\star^\top$ because, by Lemma \ref{lem:utilde_lambdatilde_utilde}, we have
$\bA_\star\bA_\star^\top=\widetilde\bU\widetilde{\bLambda}\widetilde\bU^\top$. Algorithm \ref{alg:init} finds approximate versions of $\widetilde\bU$ and $\widetilde{\bLambda}$. In particular, it finds $\widehat{\bU}$ and $\widehat{\bLambda}$, the generalized eigenvalues of $(\frac{1}{{n'}^2}\bX^\top\widehat{\boldsymbol{H}}\bX,\frac{1}{n'}\bPsi)$. By Lemma \ref{lem:approx_aat}, there exists an orthogonal matrix $\boldsymbol O$ such that
\[\|\widehat{\bU}\widehat{\bLambda}^{1/2}-\bA_\star\boldsymbol O\|\lesssim\sqrt{\frac{1}{n'\log n'}}.\] This implies
\[\|\widehat{\bU}\widehat{\bLambda}^{1/2}\boldsymbol O^\top-\bA_\star\|\lesssim\sqrt{\frac{1}{n'\log n'}}.\] Finally, we use the fact that $n=O(n')$ to obtain the desired bound of $\sqrt{\frac{1}{n\log n}}$

\subsection{Proof of Theorem \ref{thm:hessian}}\label{pf:hessian}
First we can find epressions for the gradient and Hessian of the loss function $L$. We can write $L$ as
\[L(\bA)=\frac{1}{|S|}\sum_{t\in S}\log(1+\exp(-y_t\Tr(\bM_t\bA\bA^\top))).\]
Then we can compute the gradient:
\begin{align}\nonumber
    \nabla L(\bA)&=-\frac{1}{|S|}\sum_{t\in S}\vc\left(\frac{\exp(-y_t\Tr(\bM_t\bA\bA^\top))\cdot y_t(\bM_t\bA+\bM_t^\top\bA)}{1+\exp(-y_t\Tr(\bM_t\bA\bA^\top))}\right)\\
    &=-\frac{1}{|S|}\sum_{t\in S}\vc\left(\frac{2y_t\bM_t\bA}{\exp(y_t\Tr(\bM_t\bA\bA^\top))+1}\right).\label{eqn:gradient}
\end{align}
For the Hessian:
\begin{align}\nonumber
    &\nabla^2 L(\bA)\\\nonumber
    &=-\frac{1}{|S|}\sum_{t\in S}\nabla\left(\frac{2y_t\vc(\bM_t\bA)}{\exp(y_t\Tr(\bM_t\bA\bA^\top))+1}\right)\\\nonumber
    &=-\frac{1}{|S|}\sum_{t\in S}\left(\frac{2y_t}{{(\exp(y_t\Tr(\bM_t \bA\bA^\top))+1)^2}}\nabla\vc(\bM_t\bA)\right.\\\nonumber
    &\hspace{15mm} \left.-\frac{2y_t}{{(\exp(y_t\Tr(\bM_t \bA\bA^\top))+1)^2}}\nabla(\exp(y_t\Tr(\bM_t \bA\bA^\top))+1)\otimes\vc(\bM_t\bA))\right)\\\nonumber
    &=\frac{1}{|S|}\sum_{t\in S}\left(\underbrace{-\frac{2y_t}{{(\exp(y_t\Tr(\bM_t \bA\bA^\top))+1)^2}}\>\bI_r\otimes\bM_t}_{\alpha_1}\right.\\
    &\hspace{25mm} \left.+\underbrace{\frac{4\exp(y_t\Tr(\bM_t \bA\bA^\top))}{(\exp(y_t\Tr(\bM_t \bA\bA^\top))+1)^2}\>(\vc(\bM_t\bA)^\top\otimes\vc(\bM_t\bA))}_{\alpha_2}\right). \label{eqn:hessian}
\end{align}

\paragraph{Intuition} Because of the symmetry in $\bM_t$, $\alpha_1$ is 0 in expectation. We make a concentration bound to show that $\sum_{t\in S}\alpha_1$ is small with high probability. The concentration bound works by, first, observing that the sum over all possible triplets $\oS$ is exactly 0, then by showing that the sum over the sampled triplets $S$ is close. This is made precise in Lemma \ref{lem:alpha_1}.

Since $\alpha_1$ is small, the properties of the Hessian are effectively determined by the properties of $\alpha_2$. To prove our strong convexity property, we have to show a lower bound on $\vc(\widetilde{\bZ})^\top\nabla^2 L(\bA)\vc(\widetilde{\bZ})$, where $\widetilde{\bZ}$ is in the form specified in Theorem \ref{thm:hessian}. This form of matrix is designed to remove the rotational ambiguity in the problem of learning $\bA_\star$.

We show that the first subterm of $\alpha_2$, $\frac{4\exp(y_t\Tr(\bM_t \bA\bA^\top))}{(\exp(y_t\Tr(\bM_t \bA\bA^\top))+1)^2}$, is lower bounded for many $t$ using standard random matrix bounds. Then, we show that the second subterm of $\alpha_2$, $(\vc(\bM_t\bA)^\top\otimes\vc(\bM_t\bA))$, is lower bounded for many $t$ using the Paley-Zygmund inquality and a lower bound on the expectation of this term. The latter is provided by Lemma \ref{lem:expected_abs_value_trace_astar}, which crucially uses the properties of $\widetilde{\bZ}$, which is a solution to an orthogonal procrustes rotation problem \citep{procrustes}. Then, we invoke the pigeonhole principle to conclude that, for many $t$, both subterms of $\alpha_2$ are lower bounded, so $\sum_{t\in S}\alpha_2$ is lower bounded.

Finally, we have to show an upper bound on $\alpha_2$. The first subterm is upper bounded by a constant, so we restrict attention to $(\vc(\bM_t\bA)^\top\otimes\vc(\bM_t\bA))$. We first upper bound the expectation using the moment bounds we assume in the distributional Assumption \ref{assumption:distribution}. Then, we show a concentration bound in Lemma \ref{lem:concentration_tr2_astar}. Since $\bM_t$ has six terms, there are 36 cross terms in $(\vc(\bM_t\bA)^\top\otimes\vc(\bM_t\bA))$. The concentration bound works by bounding each of these cross terms using high-dimensional concentration bounds.

\paragraph{Proof} First, we will show that the contribution from term $\alpha_1$ is small. By Lemma \ref{lem:alpha_1}, for any matrix $\bV\in\R^{p\times r}$, we have 
\begin{align*}
    \left|\frac{1}{|S|}\sum_{t\in S}\vc(\bV)^\top\alpha_1\vc(\bV)\right|\lesssim\frac{1}{n^{1/2}}\|\bV\|_\textnormal{F}^2.
\end{align*}
with probability at least $1-O(\exp(-n^{1/4}))$.

Now we turn to $\alpha_2$, the properties of which will determine the properties of the Hessian.

Towards showing strong convexity, we seek to lower bound
$\vc(\widetilde{\bZ})^T\alpha_2\vc(\widetilde{\bZ})$. By Lemma \ref{lem:alpha_2_lower}, for any matrix $\widetilde{\bZ}\in\R^{p\times r}$ of the form $\widetilde{\bZ}=\bZ\bH_{\bZ}-\bA_\star$,
    \begin{align*}
        \frac{1}{|S|}\sum_{t\in S}\vc(\widetilde{\bZ})^\top\alpha_2\vc(\widetilde{\bZ})&\gtrsim \|\widetilde\bZ\|_\textnormal{F}^2
    \end{align*} with probability at least $1-O(n^{-10})$.

Towards showing smoothness, we want to upper bound
$\vc(\bV)^\top\alpha_2\vc(\bV)$. By Lemma \ref{lem:alpha_2_upper}, for for any matrix $\bV\in\R^{p\times r}$,
\begin{align*}
    \frac{1}{|S|}\sum_{t\in S}(\Tr(\bA^\top\bM_t\bV))^2\lesssim\|\bV\|_\textnormal{F}^2
\end{align*}
with probability at least $1-O(n^{-10})$.

Combining our upper bound on $\alpha_1$ with our lower bound on $\alpha_2$, our lower bound on the Hessian is 
\[\vc(\widetilde{\bZ})^\top\nabla^2L(\bA)\vc(\widetilde{\bZ})\ge C_2\|\widetilde{\bZ}\|_\textnormal{F}^2,\] proving (\ref{eqn:hessian_strong_convexity}).

Similarly, combining our upper bound on $\alpha_1$ with our upper bound on $\alpha_2$ leads to the upper bound on the Hessian
\begin{align*}
    \vc(\bV)^\top\nabla^2 L(\bA)\vc(\bV)\le C_1\|\bV\|_\textnormal{F}^2,
\end{align*} proving (\ref{eqn:hessian_smoothness}).

This completes our proof of Theorem \ref{thm:hessian}.

\subsection{Proof of Theorem \ref{thm:gd_induction}}\label{pf:gd_induction}

We can write
\begin{align}\nonumber
    &\hspace{-5mm}\|\bA_{w+1}\bO_{w+1}-\bA_\star\|_\textnormal{F}\\\nonumber
    &\le\|\bA_{w+1}\bO_w-\bA_\star\|_\textnormal{F}&\text{definition of $\bO_w$}\\\nonumber
    &=\|(\bA_w-\eta\nabla L(\bA_w))\bO_w-\bA_\star\|_\textnormal{F}&\text{definition of $\bA_{w+1}$}\\\nonumber
    &=\|(\bA_w-\eta\nabla L(\bA_w))-\bA_\star\bO_w^\top\|_\textnormal{F}&\text{orthogonal invariance of norm}\\\nonumber
    &\le\|(\bA_w-\eta\nabla L(\bA_w))-(\bA_\star\bO_w^\top-\eta\nabla L(\bA_\star\bO_w^\top))\|_\textnormal{F}+\frac{\eta C_1}{n}&\text{Lemma \ref{lem:empirical_gradient}}\\
    &=\left\|\left(\bI_{pr}-\eta\int_0^1\nabla^2 L(\bA(\tau))\,\dd\tau\right)\vc(\bA_w-\bA_\star\bO_w^\top)\right\|+\frac{\eta C_1}{n}\label{eqn:gd_bound_breakdown}
\end{align}
where the last step is due to the fundamental theorem of calculus and we have defined \[\bA(\tau):=\bA_\star\bO_w^\top+\tau(\bA_w-\bA_\star\bO_w^\top).\]

Define $\bL:=\int_0^1\nabla^2 L(\bA(\tau))\,\dd\tau$. We begin with the first term of \eqref{eqn:gd_bound_breakdown}. Starting by squaring,
\begin{align*}
    &\hspace{-5mm}\left\|\left(\bI_{p}-\eta\bL\right)\vc(\bA_w-\bA_\star\bO_w^\top)\right\|^2\\
    &=\vc(\bA_w-\bA_\star\bO_w^\top)^\top(\bI_{p}-\eta\bL)^2\vc(\bA_w-\bA_\star\bO_w^\top)\\
    &=\vc(\bA_w-\bA_\star\bO_w^\top)^\top(\bI_{p}-2\eta\bL+\eta^2\bL^2)\vc(\bA_w-\bA_\star\bO_w^\top)\\
    &\le\|\bA_w-\bA_\star\bO_w^\top\|^2_\textnormal{F}-2\eta\vc(\bA_w-\bA_\star\bO_w^\top)^\top\bL\vc(\bA_w-\bA_\star\bO_w^\top)+\eta^2\|\bL\|^2\cdot\|\bA_w-\bA_\star\bO_w^\top\|^2_\textnormal{F}
\end{align*}

Since $\|\bA_w-\bA_\star\bO_{w}^\top\|_\textnormal{F}\le C_2\sqrt{\frac{1}{n\log n}}$, we know that $\|\bA(\tau)-\bA_\star\bO_w^\top\|_\textnormal{F}\le C_2\sqrt{\frac{1}{n\log n}}$ for all $\tau\in[0,1]$. The same is true in Euclidean norm (up to a factor of $r$, which is a constant), so we can apply Theorem \ref{thm:hessian} with $\bA$ set to $\bA(\tau)$, $\bZ$ set to $\bA_w$, and $\bA_\star$ set to $\bA_\star\bO_w^\top$. Note that the quantity $\bH_{\bZ}:=\argmin_{\bO\in\mathcal O_{r\times r}}\|\bA_w\bO-\bA_\star\bO_w^\top\|_\textnormal{F}$ in the conclusion of Theorem \ref{thm:hessian} is simply $\boldsymbol{I}_r$ by definition of $\bO_w^\top$.

By Theorem \ref{thm:hessian},
\begin{align*}
    \vc(\bA_w-\bA_\star\bO_w^\top)^\top\bL\vc(\bA_w-\bA_\star\bO_w^\top)\ge C_4\|\bA_w-\bA_\star\bO_w^\top\|^2_\textnormal{F}= C_3\|\bA_w\bO_w-\bA_\star\|^2_\textnormal{F}.
\end{align*}
Also by Theorem \ref{thm:hessian}, we have that $\|\bL\|^2\le C_4.$
Therefore we have
\begin{align*}
    \left\|\left(\bI_{p}-\eta\bL\right)\vc(\bA_w-\bA_\star\bO_w^\top)\right\|^2
    &\le\left(1-2\eta C_3+\eta^2 C_4\right)\|\bA_w\bO_w-\bA_\star\|^2_\textnormal{F}
\end{align*}
As long as $\eta\le C_3/C_4$, we have
\begin{align*}
    \left\|\left(\bI_{p}-\eta\bL\right)\vc(\bA_w-\bA_\star\bO_w^\top)\right\|^2\le\left(1-\frac{C_3^2}{C_4}\right)\|\bA_w\bO_w-\bA_\star\|^2_\textnormal{F}.
\end{align*}
This concludes our analysis of the first term in \eqref{eqn:gd_bound_breakdown}. Considering both terms, we have
\begin{align*}
    \|\bA_{w+1}\bO_{w+1}-\bA_\star\|&\le \sqrt{1-\frac{C_4^2}{C_5}}\|_\textnormal{F}\bA_w\bO_w-\bA_\star\|_\textnormal{F}+\frac{\eta C_1}{n}
\end{align*}

Taking $\rho:=\sqrt{1-C_4^2/C_5}$, we obtain
\begin{align*}
    \|\bA_{w+1}\bO_{w+1}-\bA_\star\|_\textnormal{F}&\le\rho\|\bA_w\bO_w-\bA_\star\|_\textnormal{F}+\frac{\eta C_1}{n},
\end{align*}
concluding the proof of Theorem \ref{thm:gd_induction}.

\section{Key Lemmas}\label{sec:key_lemmas}
\begin{lem}\label{lem:alpha_1}
    Under the conditions of Theorem \ref{thm:hessian}, for any matrix $\bV\in\R^{p\times r}$,
    \begin{align*}
        \left|\frac{1}{|S|}\sum_{t\in S}\vc(\bV)^\top\alpha_1\vc(\bV)\right|\lesssim\frac{1}{n^{1/2}}\|\bV\|_\textnormal{F}^2
    \end{align*}
    with probability at least $1-O(\exp(-n^{1/4}))$.
\end{lem}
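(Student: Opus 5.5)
We first rewrite the quadratic form. Since $\vc(\bV)^\top(\bI_r\otimes\bM_t)\vc(\bV)=\Tr(\bV^\top\bM_t\bV)$, we have
\[
\frac{1}{|S|}\sum_{t\in S}\vc(\bV)^\top\alpha_1\vc(\bV)=-\frac{2}{|S|}\sum_{t\in S}g_t\,\Tr(\bM_t\bV\bV^\top),\qquad g_t:=\frac{y_t}{(\exp(y_t\Tr(\bM_t\bA\bA^\top))+1)^2}.
\]
Note that $|g_t|\le 1$. The plan is to use the antisymmetry of $\bM_t$ under swapping $j$ and $k$, namely $\bM_{(i,k,j)}=-\bM_{(i,j,k)}$, and to separate three sources of randomness: the responses $y_t$, the sampling of $S$, and the features $\bx_i$.

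\textbf{Step 1 (condition on $\bX$, average over $y_t$).} Set $\bar g_t:=\EE[g_t\mid\bX]$, computed at $\bA$. It is convenient to pass from $\bA$ to $\bA_\star$ first. The map $u\mapsto\sigma$-type weight is Lipschitz, and $|\Tr(\bM_t(\bA\bA^\top-\bK_\star))|\lesssim\|\bM_t\|_\textnormal{F}\|\bA-\bA_\star\|_\textnormal{F}$. By the proximity hypothesis of Theorem \ref{thm:hessian}, replacing $\bA$ by $\bA_\star$ costs at most $\sqrt{1/(n\log n)}$ times $\frac{1}{|S|}\sum_t\|\bM_t\|_\textnormal{F}|\Tr(\bM_t\bV\bV^\top)|$. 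The latter average is $O(\|\bV\|_\textnormal{F}^2)$ by Cauchy--Schwarz and the fourth-moment bound in Assumption \ref{assumption:distribution}, using the same concentration as Lemma \ref{lem:alpha_2_upper}. This error is therefore $\lesssim n^{-1/2}\|\bV\|_\textnormal{F}^2$.

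\textbf{Step 2 (zero-mean structure at $\bA_\star$).} Write $\Tr(\bM_t\bK_\star)=u$ and $p=\Pr[y_t=1]=1/(1+e^{-u})$. A direct computation gives
\[
\EE[g_t]=\frac{p}{(e^{u}+1)^2}-\frac{1-p}{(e^{-u}+1)^2},
\]
which is an odd function $h(u)$ of $u$. Swapping $j$ and $k$ flips both the sign of $u$ and the sign of $\Tr(\bM_t\bV\bV^\top)$. Hence the summand $h(\Tr(\bM_t\bK_\star))\Tr(\bM_t\bV\bV^\top)$ is \emph{even} under the swap, so the antisymmetry alone does not make the sum vanish. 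One therefore has to argue that it is small in expectation over $\bX$. If the paper's intended claim is that the full sum over $\oS$ is exactly $0$, this is where I would scrutinize it. My fallback is the following decomposition:
\[
\sum_{t\in S}g_t T_t=\sum_{t\in S}(g_t-\bar g_t)T_t+\sum_{t\in S}\bar g_t T_t,\qquad T_t:=\Tr(\bM_t\bV\bV^\top).
\]
The first term is a sum of conditionally independent, bounded, mean-zero terms. Since $|T_t|\lesssim\|\bV\|_\textnormal{F}^2$ on the event that all $\|\bx_i\|\lesssim\sqrt{\log n}$, Hoeffding's inequality gives fluctuations of order $|S|^{-1/2}\,\mathrm{polylog}(n)\,\|\bV\|_\textnormal{F}^2=n^{-3/2+o(1)}\|\bV\|_\textnormal{F}^2$. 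This holds with failure probability $\exp(-n^{1/4})$ after a net argument over unit $\bV\bV^\top$ in fixed dimension $p$.

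\textbf{Step 3 (main obstacle).} The remaining task is the deterministic-given-$\bX$ term $\frac{1}{|S|}\sum_t\bar g_tT_t$. Over $S$ it concentrates around its average over $\oS$, at rate $n^{-3/2}$ by Hoeffding's inequality over the Bernoulli$(s)$ inclusions. Over $\bX$ it is a U-statistic of order three with subexponential kernel, by item 4 of Assumption \ref{assumption:distribution}, and it concentrates at rate $n^{-1/2}$ around its population mean. Bernstein's inequality for U-statistics produces exactly the stated probability $1-O(\exp(-n^{1/4}))$ once the kernel is truncated at level $n^{1/4}$.

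The crux is showing that the population mean $\EE[h(\Tr(\bM_t\bK_\star))\Tr(\bM_t\bV\bV^\top)]$ is $O(n^{-1/2})$, or else absorbing it. I would first try to exploit the symmetry $(\bx_j,\bx_k)\mapsto(\bx_k,\bx_j)$ together with oddness of $h$. If this mean is genuinely nonzero, the honest fix is to show it is dominated by the $\alpha_2$ lower bound of Lemma \ref{lem:alpha_2_lower}. This works because, at $\bA_\star$, the population Hessian equals the Fisher information, and the $\alpha_1$ term has zero mean under the model when $g_t$ is weighted correctly, i.e.\ $\EE[y_t(1-\sigma(y_tu))]=0$. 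Verifying that identity with the exact weight in \eqref{eqn:hessian} is the step I expect to require the most care.
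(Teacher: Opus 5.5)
Your Step 2 computation is correct, and it shows that the crux you isolate in Step 3 cannot be resolved. With $\alpha_1$ as written in \eqref{eqn:hessian}, the statement is false, so your proposal does not (and cannot) prove it as stated. Write $\sigma(u)=1/(1+e^{-u})$ and $u_t=\Tr(\bM_t\bK_\star)$, and take $\bA=\bA_\star$ (allowed by the hypothesis) and $\bV=\bA_\star$. Then $\Tr(\bV^\top\bM_t\bV)=u_t$. Conditionally on $\bX$ and $S$, the summands $\vc(\bV)^\top\alpha_1\vc(\bV)$ are independent with means $2\sigma(u_t)(1-\sigma(u_t))(2\sigma(u_t)-1)u_t$, which are nonnegative and vanish only when $u_t=0$. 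Since $u_t$ is not a.s.\ zero (item 5 of Assumption \ref{assumption:distribution} with $\boldsymbol B_2=\bK_\star$), the average concentrates around the positive, $n$-independent constant $\EE[2\sigma(u)(1-\sigma(u))(2\sigma(u)-1)u]$. The claimed bound $n^{-1/2}\|\bA_\star\|_\textnormal{F}^2$, by contrast, tends to $0$.

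The paper's proof reaches the bound only through an invalid step. It bounds each summand by $2|\Tr(\bV^\top\bM_t\bV)|$ and then controls the \emph{signed} average $\frac{1}{|S|}\sum_{t\in S}\Tr(\bV^\top\bM_t\bV)$ using $\sum_{t\in\oS}\Tr(\bV^\top\bM_t\bV)=0$. The termwise bound only yields $\frac{2}{|S|}\sum_{t\in S}|\Tr(\bV^\top\bM_t\bV)|$, which is of order $\|\bV\|_\textnormal{F}^2$ by the same anti-concentration assumption. The $t$-dependent weight cannot be pulled out of the sum because, as you observed, its conditional mean flips sign together with $\bM_t$ under $j\leftrightarrow k$. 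So there is no point trying to show the population mean is $O(n^{-1/2})$, and ``absorbing it into $\alpha_2$'' would prove a different statement.

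Your closing suspicion is the right diagnosis. Differentiating the gradient \eqref{eqn:gradient} by the quotient rule puts the squared denominator only on the $\vc(\bM_t\bA)\vc(\bM_t\bA)^\top$ term. The correct coefficient of $\bI_r\otimes\bM_t$ is $-2y_t/(\exp(y_t\Tr(\bM_t\bA\bA^\top))+1)$, and at $\bA_\star$
\[
\EE\Bigl[\tfrac{y_t}{\exp(y_tu_t)+1}\Bigm|\bX\Bigr]=\sigma(u_t)(1-\sigma(u_t))-(1-\sigma(u_t))\sigma(u_t)=0,
\]
which is the same identity verified in the proof of Lemma \ref{lem:empirical_gradient}. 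With this corrected $\alpha_1$, the weight is $\tfrac14$-Lipschitz in $u$, so your Step 1 transfer from $\bA$ to $\bA_\star$ costs $\lesssim\sqrt{1/(n\log n)}\,\|\bV\|_\textnormal{F}^2$. That, plus the first term of your Step 2 decomposition, proves the bound. The second term vanishes identically, and Step 3 is unnecessary.

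You can also avoid the net over $\bV$. At $\bA_\star$ the corrected form equals $-2\Tr(\bV^\top\boldsymbol G\bV)$ with $\boldsymbol G:=\frac{1}{|S|}\sum_{t\in S}\frac{y_t\bM_t}{\exp(y_tu_t)+1}$, so a single conditional matrix Bernstein bound on $\|\boldsymbol G\|$ covers all $\bV$ at once. If you want failure probability $O(\exp(-cn^{1/4}))$, truncate at $\max_a\|\bx_a\|\lesssim n^{1/8}$ rather than $\sqrt{\log n}$. This argument rests on the zero conditional mean of the weight given $\bX$, not on the cancellation of the unweighted sum over $\oS$ that the paper invokes.
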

\begin{proof}

Recall that
\[\alpha_1=-\frac{2y_t}{{(\exp(y_t\Tr(\bM_t \bA\bA^\top))+1)^2}}\>\bI_r\otimes\bM_t.\] Note that $(\exp(y_t\Tr(\bM_t\bA\bA^\top)^2\ge 0$ and $|y_t|=1$, so \[|\vc(\bV)^\top \alpha_1\vc(\bV)|\le 2|\vc(\bV)^\top(\bI_r\otimes \bM_t)\vc(\bV)|=2\Tr(\bV^\top\bM_t\bV).\]

If we sum over all possible triplets $\oS=\{(i,j,k)\mid i\neq j\neq k\neq i\in[n]\}$, we get
\begin{align*}
    &\frac{1}{|\oS|}\sum_{t\in\oS}\Tr(\bV^\top\bM_t\bV)\\
    &=\frac{1}{|\oS|}\sum_{t\in\oS}(\bx_i^\top\bV\bV^\top\bx_k+\bx_k^\top\bV\bV^\top\bx_i-\bx_i^\top\bV\bV^\top\bx_j-\bx_j^\top\bV\bV^\top\bx_i+\bx_j^\top\bV\bV^\top\bx_j-\bx_k^\top\bV\bV^\top\bx_k).
\end{align*}
By the symmetry of $j$ and $k$, this is 0. 

For a triplet $t$, let $b_t$ be an indicator random variable that is 1 if and only if $t\in S$. By assumption, $b_t\sim\text{Bernoulli}(s)$, independent of all other $b_{t'}$. Then we can write
\begin{align*}
    &\frac{1}{|S|}\sum_{t\in S}\Tr(\bV^\top\bM_t\bV)\\
    &=\frac{1}{|S|}\sum_{t\in\oS}b_t\Tr(\bV^\top\bM_t\bV)\\
    &=\left(\frac{1}{|S|}\sum_{t\in\oS}(b_t-s)\Tr(\bV^\top\bM_t\bV)\right)+\left(\frac{s}{|S|}\sum_{t\in\oS}\Tr(\bV^\top\bM_t\bV)-\frac{1}{|\oS|}\sum_{t\in\oS}\Tr(\bV^\top\bM_t\bV)\right)\\
    &\hspace{10mm}+\frac{1}{|\oS|}\sum_{t\in\oS}\Tr(\bV^\top\bM_t\bV)\\
    &=\underbrace{\left(\frac{1}{|S|}\sum_{t\in\oS}(b_t-s)\Tr(\bV^\top\bM_t\bV)\right)}_{\eta_1}+\underbrace{\left(\frac{s}{|S|}\sum_{t\in\oS}\Tr(\bV^\top\bM_t\bV)-\frac{1}{|\oS|}\sum_{t\in\oS}\Tr(\bV^\top\bM_t\bV)\right).}_{\eta_2}
\end{align*}
Therefore,
\[\left|\frac{1}{|S|}\sum_{t\in S}\Tr(\bV^\top\bM_t\bV)\right|\le|\eta_1|+|\eta_2|.\]
To bound $\eta_1$, we can write
\begin{align*}
    |\eta_1|&\le\frac{1}{|S|}\left(\max_{t\in\oS}\Tr(\bV^\top\bM_t\bV)\right)\left|s|\oS|-\sum_{t\in\oS}b_t\right|\\
    &=\frac{1}{|S|}\left(\max_{t\in\oS}\Tr(\bV^\top\bM_t\bV)\right)\left|s|\oS|-|S|\right|\\
\end{align*}
Since $|S|$ follows a binomial distribution with mean $s|\oS|=s|\oS|$, we have
\begin{equation}
    \Pr\left[\Big||S|-s|\oS|\Big|\ge q_1s|\oS|\right]\lesssim \exp(-q_1^2|\oS|)\lesssim \exp(-q_1^2n^3).\label{eqn:trvmtv_binomial}
\end{equation}
This also means that, with probability at least $1-O(\exp(-q_1^2n^3))$, we have
\begin{align}
    \left|\frac{1}{|S|}-\frac{1}{s|\oS|}\right|&=\left|\frac{s|\oS|-|S|}{|S|s|\oS|}\right|\le\frac{q_1}{|S|}.\label{eqn:trvmtv_sizeS}
\end{align}

By Lemma \ref{lem:trace_bound_nonsymmetric}, we know
\[\Pr\left[\Tr(\bV^\top\bM_t\bV)\ge q_2\|\bV\|_\textnormal{F}^2\right]\lesssim\exp(-q_2).\] Applying a union bound, this means
\begin{equation}\label{eqn:trvmtv_max}
    \Pr\left[\max_{t\in{\oS}}\Tr(\bV^\top\bM_t\bV)\ge q_2\|\bV\|_\textnormal{F}^2\right]\lesssim n^3\exp(-q_2).
\end{equation}

Using the union bound on the bounds (\ref{eqn:trvmtv_binomial}),  (\ref{eqn:trvmtv_sizeS}), and \eqref{eqn:trvmtv_max}, we have with probability at least $1-O(\exp(-q_1^2n^3)+n^3\exp(-q_2))$, we have
\begin{align*}
    |\eta_1|&\le \frac{1}{|S|}\left(\max_{t\in\oS}\Tr(\bV^\top\bM_t\bV)\right)\left|s|\oS|-|S|\right|\\
    &\le \frac{1}{s|\oS|}\left(\max_{t\in\oS}\Tr(\bV^\top\bM_t\bV)\right)\left|s|\oS|-|S|\right|\\
    &\hspace{20mm}+\left(\frac{1}{|S|}-\frac{1}{s|\oS|}\right)\left(\max_{t\in\oS}\Tr(\bV^\top\bM_t\bV)\right)\left|s|\oS|-|S|\right|\\
    &\le q_1q_2\|\bV\|^2_\textnormal{F}+q_1|\eta_1|\\
    (1-q_1)|\eta_1|&\le q_1q_2\|\bV\|^2_\textnormal{F}\\
    |\eta_1|&\le\frac{q_1q_2\|\bV\|^2_\textnormal{F}}{1-q_1}.
\end{align*}
To bound $|\eta_2|$, we have
\begin{align*}
    \eta_2&=\left(\frac{s}{|S|}-\frac{1}{|\oS|}\right)\sum_{t\in\oS}\Tr(\bV^\top\bM_t\bV)=0,
\end{align*}
because $\sum_{t\in\oS}\Tr(\bV^\top\bM_t\bV)=0$.

Combining the bounds on $|\eta_1|$ and $|\eta_2|$, we see that, with probability at least $1-O(\exp(-q_1^2n^3)+n^3\exp(-q_2))$
we have
\begin{align}\nonumber
    \left|\frac{1}{|S|}\sum_{t\in S}\Tr(\bV^\top\bM_t\bV)\right|&\le \frac{q_1q_2\|\bV\|^2_\textnormal{F}}{1-q_1}
\end{align}

Setting $q_1=1/n$, $q_2=n^{1/2}$, we have that with probability at least $1-O(\exp(-n^{1/4}))$,
\begin{align}\nonumber
    \left|\frac{1}{|S|}\sum_{t\in S}\Tr(\bV^\top\bM_t\bV)\right|&\le Cn^{-1/2}\|\bV\|_\textnormal{F}^2
    \lesssim n^{-1/2}\|\bV\|^2_\textnormal{F}.
\end{align}

\end{proof}

\begin{lem}\label{lem:alpha_2_lower}
    Under the conditions of Theorem \ref{thm:hessian}, for any matrix $\widetilde{\bZ}\in\R^{p\times r}$ of the form $\widetilde{\bZ}=\bZ\bH_{\bZ}-\bA_\star$,
    \begin{align*}
        \frac{1}{|S|}\sum_{t\in S}\vc(\widetilde{\bZ})^\top\alpha_2\vc(\widetilde{\bZ})&\gtrsim \|\widetilde\bZ\|_\textnormal{F}^2
    \end{align*} with probability at least $1-O(n^{-10})$.
\end{lem}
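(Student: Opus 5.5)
The proof follows the outline in the intuition paragraph preceding Theorem \ref{thm:hessian}. First note that $\vc(\widetilde{\bZ})^\top(\vc(\bM_t\bA)\vc(\bM_t\bA)^\top)\vc(\widetilde{\bZ})=(\Tr(\bA^\top\bM_t\widetilde{\bZ}))^2$. Hence the quantity to lower bound is
\[
\frac{1}{|S|}\sum_{t\in S}\frac{4e^{y_t u_t}}{(e^{y_tu_t}+1)^2}\,\big(\Tr(\bA^\top\bM_t\widetilde{\bZ})\big)^2, \qquad u_t:=\Tr(\bM_t\bA\bA^\top).
\]
Every summand is nonnegative. It therefore suffices to exhibit a constant fraction of triplets $t\in S$ on which both factors are bounded below by constants.

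For the weight factor, observe that $\frac{4e^{x}}{(e^x+1)^2}$ is even in $x$ and decreasing in $|x|$. So it is bounded below by a constant whenever $|u_t|\le C$. Since $\|\bA-\bA_\star\|_\textnormal{F}\lesssim (n\log n)^{-1/2}$, we have $|u_t|\le \|\bM_t\|_\textnormal{F}\|\bA\bA^\top\|_\textnormal{F}\lesssim \|\bx_i\|^2+\|\bx_j\|^2+\|\bx_k\|^2$. By the norm-concentration item of Assumption \ref{assumption:distribution} together with Markov's inequality, a $(1-\delta)$ fraction of individuals have norm $O(1)$. Chernoff bounds then show that at least a $(1-3\delta)$ fraction of triplets in $S$, up to lower order terms, have $|u_t|\le C$, with probability $1-O(n^{-10})$.

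For the trace factor, write $\Tr(\bA^\top\bM_t\widetilde{\bZ})=\frac12\Tr(\bM_t(\bA\widetilde{\bZ}^\top+\widetilde{\bZ}\bA^\top))$. Replacing $\bA$ by $\bA_\star$ costs $O((n\log n)^{-1/2}\|\widetilde{\bZ}\|_\textnormal{F})$ times $\|\bM_t\|$, which is negligible on the good triplets. The symmetric matrix $\bB:=\bA_\star\widetilde{\bZ}^\top+\widetilde{\bZ}\bA_\star^\top$ has rank at most $2r$. The key point is $\|\bB\|_\textnormal{F}\gtrsim\|\widetilde{\bZ}\|_\textnormal{F}$. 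This uses the Procrustes structure: $\bH_{\bZ}$ optimal implies that $\bA_\star^\top\bZ\bH_{\bZ}$ is symmetric PSD, so $\bA_\star^\top\widetilde{\bZ}$ is symmetric. The standard computation (as in \citet{ma2019implicit}) then gives $\|\bB\|_\textnormal{F}^2\ge 2\sigma_r(\bK_\star)\|\widetilde{\bZ}\|_\textnormal{F}^2$. This is the content of Lemma \ref{lem:expected_abs_value_trace_astar}, which together with the anti-concentration item 5 of Assumption \ref{assumption:distribution} yields
\[
\EE|\Tr(\bM_t\bB)|\gtrsim \|\widetilde{\bZ}\|_\textnormal{F}.
\]
The fourth-moment bound gives $\EE\Tr(\bM_t\bB)^2\lesssim\|\bB\|_\textnormal{F}^2\lesssim\|\widetilde{\bZ}\|_\textnormal{F}^2$. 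Paley--Zygmund then shows that $\Pr[|\Tr(\bM_t\bB)|\ge c\|\widetilde{\bZ}\|_\textnormal{F}]\ge c'$ for a constant $c'>0$.

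The main obstacle is converting this per-triplet probability into a statement about a constant fraction of $S$, uniformly over all $\widetilde{\bZ}$. The triplet indicators are not independent, since triplets share individuals. I would handle this with a U-statistic argument: the empirical fraction over $\oS$ is a degree-3 U-statistic in the i.i.d. $\bx_i$, so a Hoeffding decomposition or bounded-differences inequality gives concentration at rate $e^{-cn}$. The Bernoulli subsampling to $S$ is controlled by Chernoff, as in Lemma \ref{lem:alpha_1}. Uniformity over $\widetilde{\bZ}$ comes by homogeneity (normalize $\|\widetilde{\bZ}\|_\textnormal{F}=1$) plus an $\epsilon$-net over the unit sphere of the $2r$-rank symmetric matrices in fixed dimension $p$, using a slightly smaller threshold to absorb net error via the Lipschitz bound $|\Tr(\bM_t\bB)-\Tr(\bM_t\bB')|\le\|\bM_t\|_\textnormal{F}\|\bB-\bB'\|_\textnormal{F}$ on good triplets. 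Choosing $\delta$ with $3\delta<c'/2$, the pigeonhole principle gives a $c'/2$ fraction of $S$ where both factors are bounded below. This yields the sum $\gtrsim\|\widetilde{\bZ}\|_\textnormal{F}^2$ with probability $1-O(n^{-10})$.
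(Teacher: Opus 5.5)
Your proposal is correct and follows the paper's proof essentially step for step. Like you, the paper rewrites the quadratic form as the weight times $(\Tr(\bA^\top\bM_t\widetilde{\bZ}))^2$ and obtains a constant fraction of triplets with large trace via Paley--Zygmund (Lemma \ref{lem:lower_bound_a}). That step uses the Procrustes-based bound of Lemma \ref{lem:expected_abs_value_trace_astar}, the second-moment bound of Lemma \ref{lem:expectation_tr2_upper_astar}, Hoeffding's U-statistic inequality, and Lemma \ref{lem:sbar_to_s_indicator} for the subsampling. The paper then gets a $1-O(\kappa^{-2})$ fraction of bounded weights from bounded projected norms (Lemmas \ref{lem:lower_bound_b} and \ref{lem:tr_mtaa_indicator}) and concludes by pigeonhole.

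Your deviations are refinements rather than a different route. You replace $\bA$ by $\bA_\star$ only on good triplets, instead of using the global bound $\max_t\|\bM_t\|_\textnormal{F}\lesssim p\log n$. You also add an $\epsilon$-net over $\bB$, which supplies uniformity over the data-dependent $\widetilde{\bZ}$. Theorem \ref{thm:gd_induction} needs that uniformity, and the paper's fixed-$\widetilde{\bZ}$ U-statistic argument leaves it implicit.
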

\begin{proof}
We can rewrite $\vc(\widetilde{\bZ})^\top\alpha_2\vc(\widetilde{\bZ})$ as
\begin{align}\nonumber
    &\sum_{t\in S}\vc(\widetilde{\bZ})^\top \frac{4\exp(y_t\Tr(\bM_t \bA\bA^\top))}{(\exp(y_t\Tr(\bM_t \bA\bA^\top))+1)^2}\>(\vc(\bM_t\bA)^\top\otimes\vc(\bM_t\bA))\vc(\widetilde{\bZ})\\\nonumber
    &=\sum_{t\in S}\frac{4\exp(y_t\Tr(\bM_t \bA\bA^\top))}{(\exp(y_t\Tr(\bM_t \bA\bA^\top))+1)^2}\vc(\widetilde{\bZ})^\top\vc(\bM_t\bA)\vc(\bM_t\bA)^\top\vc(\widetilde{\bZ})\\
    &=\sum_{t\in S}\frac{4\exp(y_t\Tr(\bM_t \bA\bA^\top))}{(\exp(y_t\Tr(\bM_t \bA\bA^\top))+1)^2}(\Tr(\bA^\top\bM_t\widetilde{\bZ}))^2\label{eqn:hessian_ii_lower}
\end{align}

Now, we show that there exist constants $C_2,C_3,C_4,C_5$ such that, with probability at least $1-O(n^{-10})$,
\begin{enumerate}[(a)]
    \item At least a $C_2\kappa^{-2}$ fraction of the triplets satisfy
    \[(\Tr(\bM_t\widetilde\bZ\bA^\top))^2\ge C_3\|\bA_\star\|^2\|\widetilde\bZ\|^2_\textnormal{F}.\]
    \item At least a $1-\frac{C_2}{2}\kappa^{-2}$ of the triplets satisfy
        \begin{align}\nonumber
            \frac{4\exp(y_t\Tr(\bM_t \bA\bA^\top))}{(\exp(y_t\Tr(\bM_t \bA\bA^\top))+1)^2}\ge \frac{1}{\exp\left(C_4\|\bA_\star\|^2_\textnormal{F}\right)}.
        \end{align}
\end{enumerate}

Condition (a) follows from Lemma \ref{lem:lower_bound_a}. Condition (b) follows from Lemma \ref{lem:lower_bound_b}. %

Using the pigeonhole principle, we conclude that, with probability at least $1-O(n^{-10})$, there are at least a $\frac{C_2}{8}\kappa^{-2}$ fraction of $t\in S$ where (a) and (b) hold.

For such $t$, we have that
\begin{align*}
    \frac{4\exp(y_t\Tr(\bM_t \bA\bA^\top))}{(\exp(y_t\Tr(\bM_t \bA\bA^\top))+1)^2}(\Tr(\bM_t\widetilde\bZ\bA^\top))^2&\ge \frac{C_3\|\bA_\star\|^2\|\widetilde\bZ\|^2_\textnormal{F}}{\exp(C_4\|\bA_\star\|^2_\textnormal{F})}.
\end{align*}
Since $\|\bA_\star\|$ is $\Theta(1)$,
\begin{align*}
    \frac{4\exp(y_t\Tr(\bM_t \bA\bA^\top))}{(\exp(y_t\Tr(\bM_t \bA\bA^\top))+1)^2}(\Tr(\bM_t\widetilde\bZ\bA^\top))^2\ge C_8\|\widetilde\bZ\|_\textnormal{F}^2.
\end{align*}

Since such triplets make up at least a $\frac{C_2}{8}\kappa^{-2}r^{-1}$ fraction of all triplets, we have
\[\frac{1}{|S|}\sum_{t\in S}\frac{4\exp(y_t\Tr(\bM_t \bA\bA^\top))}{(\exp(y_t\Tr(\bM_t \bA\bA^\top))+1)^2}(\Tr(\bM_t\widetilde\bZ\bA^\top))^2\ge \frac{C_2C_8}{8}\kappa^{-2}\|\widetilde{\bZ}\|_\textnormal{F}^2,\] which satisfies the lemma.

\end{proof}

\begin{lem}\label{lem:alpha_2_upper}
    Under the conditions of Theorem \ref{thm:hessian}, for any matrix $\bV\in\R^{p\times r}$,
    \begin{align*}
        \vc(\bV)^\top\alpha_2\vc(\bV)\lesssim\|\bV\|_\textnormal{F}^2.
    \end{align*}
    with probability at least $1-O(n^{-10})$.
\end{lem}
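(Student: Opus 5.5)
We need to bound $\frac{1}{|S|}\sum_{t\in S}\vc(\bV)^\top\alpha_2\vc(\bV)$, which by the same rewriting as in \eqref{eqn:hessian_ii_lower} equals
\[
\frac{1}{|S|}\sum_{t\in S}\frac{4\exp(y_t\Tr(\bM_t\bA\bA^\top))}{(\exp(y_t\Tr(\bM_t\bA\bA^\top))+1)^2}\,(\Tr(\bA^\top\bM_t\bV))^2 .
\]
The logistic weight $4e^u/(e^u+1)^2$ is at most $1$ for every real $u$. So the first step discards it and reduces the claim to the bound stated in the proof of Theorem~\ref{thm:hessian}:
\[
\frac{1}{|S|}\sum_{t\in S}(\Tr(\bA^\top\bM_t\bV))^2\lesssim\|\bV\|_\textnormal{F}^2 .
\]
Next we split $\bA=\bA_\star+(\bA-\bA_\star)$ and use $(a+b)^2\le 2a^2+2b^2$. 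The perturbation term should be negligible, since $\|\bA-\bA_\star\|_\textnormal{F}\lesssim(n\log n)^{-1/2}$. Hence the main work concerns the quantity
\[
Q(\bV):=\frac{1}{|S|}\sum_{t\in S}(\Tr(\bA_\star^\top\bM_t\bV))^2 .
\]

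\textbf{Expectation.} For a fixed triplet, $\Tr(\bA_\star^\top\bM_t\bV)$ is a sum of six bilinear terms of the form $\pm\bx_a^\top\bV\bA_\star^\top\bx_b$. Expanding the square gives the 36 cross terms. Each one has expectation bounded by $\|\bA_\star\|^2\|\bV\|_\textnormal{F}^2$ times a constant, by the second and fourth moment bounds in Assumption~\ref{assumption:distribution}. Concretely, $\EE[(\bx_a^\top\bB\bx_b)^2]$ equals $\Tr(\bB^\top\bSigma\bB\bSigma)$ for $a\ne b$, and is controlled by $\|\EE[\vc(\bx\bx^\top)\vc(\bx\bx^\top)^\top]\|\,\|\bB\|_\textnormal{F}^2$ for $a=b$, where $\bB=\bV\bA_\star^\top$. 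This gives $\EE[(\Tr(\bA_\star^\top\bM_t\bV))^2]\lesssim\|\bV\|_\textnormal{F}^2$.

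\textbf{Concentration.} This is the step I expect to be the obstacle, presumably what Lemma~\ref{lem:concentration_tr2_astar} addresses. The summands are dependent across triplets, since they share the $\bx_i$, and they are squares of $\psi_1$ variables (item 4 of Assumption~\ref{assumption:distribution}).

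I would avoid a direct scalar U-statistic argument by writing $Q(\bV)=\vc(\bV)^\top\widehat{\bGamma}\vc(\bV)$, where
\[
\widehat{\bGamma}:=\frac{1}{|S|}\sum_{t\in S}\vc(\bM_t\bA_\star)\vc(\bM_t\bA_\star)^\top
\]
is a fixed $pr\times pr$ matrix, independent of $\bV$. This makes the bound uniform over all $\bV$ at once, with no net argument. It then suffices to show $\|\widehat{\bGamma}\|\lesssim 1$ with probability $1-O(n^{-10})$.

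Since $p,r$ are constants, it is enough to bound each of the $O(1)$ entries. Each entry is an average over triplets of degree-4 polynomials in the $\bx$'s. Expanding $\bM_t$ expresses every entry through the empirical moment tensors $\frac1n\sum_i\bx_i$, $\frac1n\sum_i\bx_i\bx_i^\top$, $\frac1n\sum_i\bx_i^{\otimes 3}$ and $\frac1n\sum_i\bx_i^{\otimes4}$, up to $O(1/n)$ diagonal corrections, plus a sampling fluctuation from the Bernoulli$(s)$ selection of $S$. I would handle that fluctuation exactly as in Lemma~\ref{lem:alpha_1}: condition on $\bX$, then use bounded differences over the $b_t$ together with a union-bound cap on $\max_t\|\vc(\bM_t\bA_\star)\|^2\lesssim\log^2 n$ from subgaussianity.

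Each empirical moment has a bounded expectation by Assumption~\ref{assumption:distribution}. It concentrates at rate $\mathrm{polylog}(n)/\sqrt n$ with probability $1-O(n^{-10})$, by Bernstein-type bounds for sub-Weibull variables, since degree-$4$ polynomials of subgaussian vectors have $\psi_{1/2}$ norm $O(1)$. So $\|\widehat{\bGamma}\|\le\|\EE\widehat{\bGamma}\|+o(1)\lesssim1$.

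\textbf{Perturbation term.} The same matrix argument applies to
\[
\frac{1}{|S|}\sum_{t\in S}(\Tr((\bA-\bA_\star)^\top\bM_t\bV))^2\le\|\bA-\bA_\star\|^2\,\Big\|\frac{1}{|S|}\sum_{t\in S}(\bI_r\otimes\bM_t^2)\Big\|\,\|\bV\|_\textnormal{F}^2 .
\]
The middle norm is $O(1)$ by the same moment concentration, so this term is $o(\|\bV\|_\textnormal{F}^2)$. Combining the three pieces gives the lemma.
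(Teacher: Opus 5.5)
Your proposal is correct, but its central step differs from the paper's. The paper also drops the logistic weight, then argues through a chain of scalar lemmas, each for a fixed $\bV$:
\begin{itemize}
\item Lemma~\ref{lem:trAstar2_close_to_trA2} replaces $\bA$ by $\bA_\star$, bounding the difference of squares uniformly in $t$ through $\max_t\|\bM_t\|_\textnormal{F}^2$.
\item Lemma~\ref{lem:expectation_tr2_upper_astar} bounds the mean.
\item Lemma~\ref{lem:concentration_tr2_astar} concentrates $\frac{1}{|\oS|}\sum_{t\in\oS}(\Tr(\bA_\star^\top\bM_t\bV))^2$ by a ten-case analysis of the 36 cross terms, built on the SVD of $\bV\bA_\star^\top$.
\item Lemma~\ref{lem:sbar_to_s_Astar} passes from $\oS$ to $S$.
\end{itemize}
You instead control the operator norm of one data-dependent $pr\times pr$ matrix $\widehat{\bGamma}$. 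You reduce its entries to empirical moments of order at most four via the inclusion--exclusion identity between sums over distinct and over all indices.

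What your route buys is uniformity. The bound holds for all $\bV$ at once. It is also unchanged if $\bA_\star$ is replaced by $\bA_\star\bO$, since $\vc(\bM_t\bA_\star\bO)=(\bO^\top\otimes\bI_p)\vc(\bM_t\bA_\star)$. This is exactly what is needed to get $\|\nabla^2L(\bA)\|\lesssim 1$, and to apply Theorem~\ref{thm:hessian} at the data-dependent directions and rotations used in Theorem~\ref{thm:gd_induction}. The paper's concentration lemma, proved for a fixed $\bV$, would need an extra net argument for this. What the paper's route buys, at least in intent, is explicit tracking of the $p$ and $r$ dependence (cf.\ the footnote allowing $p^4/n=o(1)$). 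Your entrywise bound on $\widehat{\bGamma}$ relies on $p,r=O(1)$ and would lose polynomial factors in $p$ otherwise.

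Two small remarks.
\begin{itemize}
\item Your conditional Hoeffding/bounded-differences argument over the $b_t$ is the right way to handle the subsampling, but it is not what Lemma~\ref{lem:alpha_1} does. That lemma, like Lemma~\ref{lem:sbar_to_s_Astar}, bounds $|\sum_t(b_t-s)a_t|$ by $\max_t a_t\cdot|\sum_t(b_t-s)|$, which is false in general. So state your argument on its own rather than citing that lemma.
\item In the perturbation term, Cauchy--Schwarz gives $(\Tr((\bA-\bA_\star)^\top\bM_t\bV))^2\le\|\bA-\bA_\star\|_\textnormal{F}^2\|\bM_t\bV\|_\textnormal{F}^2$. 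The prefactor should therefore be the squared Frobenius norm (or $r$ times the squared operator norm). This matches the hypothesis of Theorem~\ref{thm:hessian} and changes nothing in the conclusion.
\end{itemize}
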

\begin{proof}
We can write
\begin{align}\nonumber
    &\hspace{-10mm}\frac{1}{|S|}\sum_{t\in S}\vc(\bV)^\top\alpha_2\vc(\bV)\\\nonumber
    &=\frac{1}{|S|}\sum_{t\in S}\vc(\bV)^\top \frac{2\exp(y_t\Tr(\bM_t \bA\bA^\top))}{(\exp(y_t\Tr(\bM_t \bA\bA^\top))+1)^2}\>(\vc(\bM_t\bA)^\top\otimes\vc(\bM_t\bA))\vc(\bV)\\\nonumber
    &=\frac{1}{|S|}\sum_{t\in S}\frac{2\exp(y_t\Tr(\bM_t \bA\bA^\top))}{(\exp(y_t\Tr(\bM_t \bA\bA^\top))+1)^2}\vc(\bV)^\top\vc(\bM_t\bA)\vc(\bM_t\bA)^\top\vc(\bV)\\
    &=\frac{1}{|S|}\sum_{t\in S}\frac{2\exp(y_t\Tr(\bM_t \bA\bA^\top))}{(\exp(y_t\Tr(\bM_t \bA\bA^\top))+1)^2}(\Tr(\bA^\top\bM_t\bV))^2. \label{eqn:hessian_ii}
\end{align}
Note that
\[\frac{2\exp(y_t\Tr(\bM_t \bA\bA^\top))}{(\exp(y_t\Tr(\bM_t \bA\bA^\top))+1)^2}\le 1.\]

Therefore, it suffices to upper bound $\frac{1}{|S|}\sum_{t\in S}(\Tr(\bA^\top\bM_t\bV))^2$. By Lemma \ref{lem:sum_sbar_tr_a_upper}, we have, with probability at least $1-O(\exp(-n))$,
\[\frac{1}{|\oS|}\sum_{t\in\oS}(\Tr(\bA\bM_t\bV))^2\le C\|\bA_\star\|^2\|\bV\|_\textnormal{F}^2.\]

By Lemma \ref{lem:sbar_to_s_Astar}, with probability at least $1-O(n^{-10})$,
\[\left|\frac{1}{|S|}\sum_{t\in S}(\Tr(\bM_t\bV\bA^\top))^2-\frac{1}{|\oS|}\sum_{t\in \oS}(\Tr(\bM_t\bV\bA^\top))^2\right|\le \frac{C}{n}\|\bA_\star\|^2\|\bV\|_\textnormal{F}^2.\]
Therefore
\begin{align*}
    \frac{1}{|S|}\sum_{t\in S}(\Tr(\bA^\top\bM_t\bV))^2
    &\le \left(C+\frac{1}{n}\right)\|\bA_\star\|^2\|\bV\|_\textnormal{F}^2\lesssim\|\bV\|_\textnormal{F}^2.
\end{align*} 
with probability at least $1-O(n^{-10})$.
\end{proof}

\begin{lem}\label{lem:empirical_gradient}
    Let $\nabla L(\bA)$ be the empirical gradient of the loss function \eqref{eqn:risk}. Then, $\left\|\nabla L(\bA_\star)\right\|\le1/n$ with probability at least $1-O(n^{-10})$.
\end{lem}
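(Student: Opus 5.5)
We need to bound the empirical gradient at the truth. By \eqref{eqn:gradient},
\[
\nabla L(\bA_\star)=-\frac{1}{|S|}\sum_{t\in S}\vc\bigl(\xi_t\,\bM_t\bA_\star\bigr),\qquad \xi_t:=\frac{2y_t}{\exp(y_t\Tr(\bM_t\bK_\star))+1}.
\]
The plan is to condition on the features $\bX$ and the sampled set $S$, and exploit the fact that, under the Bradley--Terry model \eqref{eqn:bt_model_concise}, each $\xi_t$ has conditional mean zero. Write $\sigma(u)=1/(1+e^{-u})$ and $m_t=\Tr(\bM_t\bK_\star)$. Then
\[
\EE[\xi_t\mid\bX,S]=\sigma(m_t)\frac{2}{e^{m_t}+1}-\sigma(-m_t)\frac{2}{e^{-m_t}+1}=2\sigma(m_t)\sigma(-m_t)-2\sigma(-m_t)\sigma(m_t)=0.
\]
Moreover $|\xi_t|\le 2$, and the responses $\{y_t\}_{t\in S}$ are conditionally independent. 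So $\nabla L(\bA_\star)$ is an average of $|S|$ independent, mean-zero, bounded-coefficient random vectors, with deterministic (given $\bX$) directions $\vc(\bM_t\bA_\star)$.

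Because $p,r$ are fixed, it suffices to control each of the $pr$ coordinates (or, equivalently, each $\langle \bV,\cdot\rangle$ over a constant-size net of unit $\bV$) and then union bound. For a fixed unit $\bV$, Hoeffding's inequality conditional on $\bX,S$ gives
\[
\Bigl|\tfrac{1}{|S|}\textstyle\sum_{t\in S}\xi_t\Tr(\bV^\top\bM_t\bA_\star)\Bigr|\lesssim \sqrt{\log n}\,\Bigl(\tfrac{1}{|S|^2}\textstyle\sum_{t\in S}\Tr(\bV^\top\bM_t\bA_\star)^2\Bigr)^{1/2}
\]
with probability $1-O(n^{-11})$. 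By Lemma \ref{lem:alpha_2_upper}, or more directly the bound from Lemmas \ref{lem:sum_sbar_tr_a_upper} and \ref{lem:sbar_to_s_Astar} used in its proof, $\frac{1}{|S|}\sum_{t\in S}\Tr(\bA_\star^\top\bM_t\bV)^2\lesssim 1$ with probability $1-O(n^{-10})$. Combined with $|S|\asymp s n^3$ from the binomial bound \eqref{eqn:trvmtv_binomial}, the deviation is $\lesssim\sqrt{\log n/n^3}$, which is far below $1/n$ for large $n$. A union bound over the $pr$ coordinates and the good events for $\bX$ and $|S|$ gives the claimed probability $1-O(n^{-10})$.

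The main subtlety is that Hoeffding's inequality requires the coefficients $\Tr(\bV^\top\bM_t\bA_\star)$ to be bounded. This is handled by the order of conditioning: the $y_t$ are independent given $\bX$, so the variance proxy is exactly the empirical second moment above, and no individual-entry bound is needed. If one instead wanted a bound uniform in $\bV$ without netting, one could use a matrix/vector Bernstein inequality, with the max-term bounded via \eqref{eqn:trvmtv_max} (each $\|\bM_t\|\lesssim\log n$ with high probability by the subgaussian assumptions). The fixed-dimension net argument should suffice, however. The remaining steps are bookkeeping of constants, and the slack between $n^{-3/2}\sqrt{\log n}$ and $1/n$ absorbs the $\kappa$ and $\|\bA_\star\|$ factors.
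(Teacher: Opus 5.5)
Your proof is correct and follows essentially the same route as the paper. You condition on the features so the $y_t$ are independent, use the Bradley--Terry likelihood to show each coefficient $2y_t/(\exp(y_t\Tr(\bM_t\bK_\star))+1)$ has conditional mean zero, and apply a concentration inequality. The only difference is the tool: the paper applies matrix Bernstein to $\frac{1}{|S|}\sum_{t\in S}y_t\bM_t/(\exp(y_t\Tr(\bM_t\bK_\star))+1)$, using the worst-case bound on $\|\bM_t\|_\textnormal{F}$ from Lemma \ref{lem:single_norm_mt}, whereas you use coordinatewise Hoeffding with the empirical second moment as the variance proxy. Your max-norm fallback is exactly the paper's ingredient, and either version gives a deviation far below $1/n$.
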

\begin{proof}
    Using the expression \eqref{eqn:gradient}, we can write
    \begin{align*}
        \|\nabla L(\bA_\star)\|&\le2\|\bA_\star\|\left\|\frac{1}{|S|}\sum_{t\in S}\frac{y_t\bM_t}{\exp(y_t\Tr(\bM_t\bA_\star\bA_\star^\top))+1}\right\|_\textnormal{F}.
    \end{align*}
    We will bound $\left\|\frac{1}{|S|}\sum_{t\in S}\frac{y_t\bM_t}{\exp(y_t\Tr(\bM_t\bA_\star\bA_\star^\top))+1}\right\|_\textnormal{F}$ by first conditioning on $\{\bM_t\}_{t\in S}$, allowing us to view the expression as a sum of independent random matrices. The expectation of each one is:
    \begin{align*}
        \EE\left[\frac{y_t\bM_t}{\exp(y_t\Tr(\bM_t\bA_\star\bA_\star^\top))+1}\mid\bM_t\right]&=\Pr[y_t=1\mid\bM_t]\cdot \frac{\bM_t}{\exp(\Tr(\bM_t\bA_\star\bA_\star^\top))+1}\\
        &\hspace{5mm}+\Pr[y_t=-1\mid\bM_t]\cdot \frac{-\bM_t}{\exp(-\Tr(\bM_t\bA_\star\bA_\star^\top))+1}\\
        &=\frac{1}{\exp(-\Tr(\bM_t\bA_\star\bA_\star^\top))+1}\cdot \frac{\bM_t}{\exp(\Tr(\bM_t\bA_\star\bA_\star^\top))+1}\\
        &\hspace{5mm}+\frac{1}{\exp(\Tr(\bM_t\bA_\star\bA_\star^\top))+1}\cdot \frac{-\bM_t}{\exp(-\Tr(\bM_t\bA_\star\bA_\star^\top))+1}\\
        &=0.
    \end{align*}

    To apply the matrix Bernstein inequality, we make two bounds. First, using Lemma \ref{lem:single_norm_mt}, we have \begin{align*}
        \left\|\frac{y_t\bM_t}{\exp(y_t\Tr(\bM_t\bA_\star\bA_\star^\top))+1}\right\|&\le \|\bM_t\|_\textnormal{F}\le C_1p\log^2n
    \end{align*} with probability at least $1-O(n^{-10})$.

    Second, again using Lemma \ref{lem:single_norm_mt}, 
    \begin{align*}
        \left\|\sum_{t\in S}\EE\left(\frac{y_t\bM_t}{\exp(y_t\Tr(\bM_t\bA_\star\bA_\star^\top))+1}\right)^2\right\|&\le \sum_{t\in S}\|\bM_t\|_\textnormal{F}^2\le |S|\cdot C_1p^2\log^4n.
    \end{align*}
    Then, applying the matrix Bernstein inequality \citep[Theorem 5.4.1]{vershynin2025highNEW}, we conclude
    \begin{align*}
        \Pr\left[\frac{1}{|S|}\sum_{t\in S}\frac{y_t\bM_t}{\exp(y_t\Tr(\bM_t\bA_\star\bA_\star^\top))+1}\ge \frac{1}{n}\mid\{\bM_t\}_{t\in S}\right]&\le 2p\exp\left(-\frac{C_2|S|}{n^2p^2\log^4n}\right)\lesssim \exp(-n^{1/4}).
    \end{align*}
    Since this holds uniformly over $t$, the conditional bound suffices to show the lemma.
\end{proof}

\section{Technical Lemmas}\label{sec:technical_lemmas}
\begin{lem}\label{lem:expected_abs_value_trace_astar}
    Suppose $\widetilde{\bZ}$ is of the form given in Theorem \ref{thm:hessian}. Then, \[\EE[|\Tr(\bM_t\widetilde\bZ\bA_\star^\top)|]\ge C\sigma_{\min}(\bA_\star)\|\widetilde{\bZ}\|_\textnormal{F}\] for some constant $C$.
\end{lem}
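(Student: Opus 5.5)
The plan is to reduce the claim to item 5 of Assumption \ref{assumption:distribution}, applied to a symmetrized matrix. Write $\bB:=\widetilde\bZ\bA_\star^\top+\bA_\star\widetilde\bZ^\top$. Since $\bM_t$ is symmetric,
\[\Tr(\bM_t\widetilde\bZ\bA_\star^\top)=\tfrac12\Tr(\bM_t\bB).\]
By the identity $d_{\bB}^2(\bx_i,\bx_j)-d_{\bB}^2(\bx_i,\bx_k)=\Tr(\bM_t\bB)$ from Section \ref{sec:set_up}, this trace is exactly the difference of quadratic forms appearing in item 5. The matrix $\bB$ is symmetric with rank at most $2r$.

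I would read item 5 as holding for symmetric matrices of rank at most $2r$, since $r$ is a constant and the constants can absorb this. That gives
\[\EE\bigl[|\Tr(\bM_t\widetilde\bZ\bA_\star^\top)|\bigr]\ge \tfrac{C_6}{2}\|\bB\|_\textnormal{F}.\]
If item 5 is only available for rank exactly $r$, I would instead split $\bB$ into its positive and negative parts. That route needs more care, so I would flag it as a gap in the assumption rather than the proof.

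The remaining step is the key one: showing $\|\widetilde\bZ\bA_\star^\top+\bA_\star\widetilde\bZ^\top\|_\textnormal{F}\gtrsim\sigma_{\min}(\bA_\star)\|\widetilde\bZ\|_\textnormal{F}$. This is where the Procrustes form of $\widetilde\bZ$ enters. By the first-order optimality condition for $\bH_{\bZ}$, the matrix $\bA_\star^\top\bZ\bH_{\bZ}$ is symmetric (indeed positive semidefinite). Since $\bA_\star^\top\bA_\star$ is also symmetric, it follows that $\bA_\star^\top\widetilde\bZ$ is symmetric.

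Now expand
\[\|\bB\|_\textnormal{F}^2=2\|\widetilde\bZ\bA_\star^\top\|_\textnormal{F}^2+2\Tr(\widetilde\bZ\bA_\star^\top\widetilde\bZ\bA_\star^\top).\]
The cross term equals $\Tr\bigl((\bA_\star^\top\widetilde\bZ)^2\bigr)$. Because $\bA_\star^\top\widetilde\bZ$ is symmetric, its square has nonnegative trace, so the cross term is at least $0$. Therefore
\[\|\bB\|_\textnormal{F}^2\ge 2\|\widetilde\bZ\bA_\star^\top\|_\textnormal{F}^2\ge 2\sigma_{\min}^2(\bA_\star)\|\widetilde\bZ\|_\textnormal{F}^2.\]
The last inequality uses that $\bA_\star\in\R^{p\times r}$ has full column rank, so right-multiplication by $\bA_\star^\top$ contracts Frobenius norm by at most $\sigma_{\min}(\bA_\star)$.

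Combining the two displays gives the lemma with $C=C_6/\sqrt2$. The main obstacle is the symmetry step: one has to justify carefully that the Procrustes minimizer makes $\bA_\star^\top\bZ\bH_{\bZ}$ symmetric. This follows from the SVD form of the minimizer: if $\bZ^\top\bA_\star=\bU\bSigma\bV^\top$, then $\bH_{\bZ}=\bU\bV^\top$ and $\bA_\star^\top\bZ\bH_{\bZ}=\bV\bSigma\bV^\top$. The other point needing care is the rank mismatch in item 5 noted above.
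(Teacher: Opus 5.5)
Your proposal is correct and is essentially the paper's proof. Both symmetrize to $\bB=\widetilde\bZ\bA_\star^\top+\bA_\star\widetilde\bZ^\top$, apply item 5 of Assumption~\ref{assumption:distribution}, and use the Procrustes form $\bH_{\bZ}=\bU\bW^\top$ to show the cross term $\Tr(\widetilde\bZ\bA_\star^\top\widetilde\bZ\bA_\star^\top)$ is nonnegative. The paper rewrites it as $\Tr(\widetilde\bZ\widetilde\bZ^\top\bA_\star\bA_\star^\top)$, which is the same quantity as your $\Tr((\bA_\star^\top\widetilde\bZ)^2)=\|\bA_\star^\top\widetilde\bZ\|_\textnormal{F}^2$.

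The rank mismatch you flag is genuine: the paper applies item 5 to the rank-$\le 2r$ matrix $\bB$ without comment, so reading the assumption as covering rank at most $2r$ is the right fix. Your positive/negative-part fallback would not work as stated, since the reverse triangle inequality gives no useful lower bound on $\EE[|\Tr(\bM_t\bB)|]$ from lower bounds on the two parts.
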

\begin{proof}
We can write
\begin{align*}
    &\EE[|\Tr(\bM_t\widetilde{\bZ}\bA_\star^\top)|]=\EE[|\langle \bx_i\bx_k^\top+\bx_k\bx_i^\top-\bx_i\bx_j^\top-\bx_j\bx_i^\top+\bx_j\bx_j^\top-\bx_k\bx_k^\top, \widetilde{\bZ}\bA_\star^\top\rangle|].
\end{align*}
Defining $\boldsymbol B:=\widetilde{\bZ}\bA_\star^\top+\bA_\star\widetilde{\bZ}^\top$, we have
\begin{align}\nonumber
    &\hspace{-10mm}\EE\left[\left|\langle \bx_i\bx_k^\top+\bx_k\bx_i^\top-\bx_i\bx_j^\top-\bx_j\bx_i^\top+\bx_j\bx_j^\top-\bx_k\bx_k^\top, \widetilde{\bZ}\bA_\star^\top\rangle\right|\right]\\\nonumber
    &=\EE\left[\left|\bx_i^\top\boldsymbol B\bx_k-\bx_i^\top\boldsymbol B\bx_j+\frac{1}{2}\left(\bx_j^\top\boldsymbol B\bx_j-\bx_k^\top\boldsymbol B\bx_k\right)\right|\right]\\\label{eqn:abs_val_trace_conditioning}
    &=\frac{1}{2}\EE\left[\left|(\bx_i-\bx_j)^\top\boldsymbol{B}(\bx_i-\bx_j)-(\bx_i-\bx_k)^\top\boldsymbol{B}(\bx_i-\bx_k)\right|\right]
\end{align}

By Assumption \ref{assumption:distribution}, this is $\gtrsim\|\boldsymbol B\|_{\textnormal{F}}$.

Now we will compute $\|\bB\|_\textnormal{F}=\|\widetilde{\bZ}\bA_\star^\top+\bA_\star\widetilde{\bZ}^\top\|_\textnormal{F}$.

We can write
\begin{align}\nonumber
    \|\widetilde{\bZ}\bA_\star^\top+\bA_\star\widetilde{\bZ}^\top\|_\textnormal{F}^2&=\Tr((\widetilde{\bZ}\bA_\star^\top+\bA_\star\widetilde{\bZ}^\top)^\top(\widetilde{\bZ}\bA_\star^\top+\bA_\star\widetilde{\bZ}^\top))\\
    &=\|\widetilde{\bZ}\bA_\star^\top\|_{\textnormal{F}}^2+\|\bA_\star\widetilde{\bZ}^\top\|_{\textnormal{F}}^2+2\Tr(\widetilde{\bZ}\bA_\star^\top\widetilde{\bZ}\bA_\star^\top).\label{eqn:f_norm_b}
\end{align}

We will decompose the last term using the definition of $\widetilde\bZ$:
\begin{align}\nonumber
    &\Tr(\widetilde\bZ\bA_\star^\top\widetilde\bZ\bA_\star^\top)\\\nonumber
    &=\Tr((\bZ\bH_{\bZ}-\bA_\star)\bA_\star^\top(\bZ\bH_{\bZ}-\bA_\star)\bA_\star^\top)\\
    &=\Tr(\bZ\bH_{\bZ}\bA_\star^\top\bZ\bH_{\bZ}\bA_\star^\top)+\Tr(\bA_\star\bA_\star^\top\bA_\star\bA_\star^\top)-\Tr(\bZ\bH_{\bZ}\bA_\star^\top\bA_\star\bA_\star^\top)-\Tr(\bA_\star\bA_\star^\top\bZ\bH_{\bZ}\bA_\star^\top).\label{eqn:trace_crossterm}
\end{align}
By \citet[Theorem 2]{procrustes}, if the singular value decomposition of $\bZ^\top\bA_\star$ is $\bU\bLambda\bW^\top$, then $\bH_{\bZ}=\bU\bW^\top$. Therefore
\[\Tr(\bZ\bH_{\bZ}\bA_\star^\top\bZ\bH_{\bZ}\bA_\star^\top)=\Tr(\bZ\bZ^\top\bA_\star\bA_\star^\top).\]
This means that we can write (\ref{eqn:trace_crossterm}) as
\begin{align*}
    &\hspace{-10mm}\Tr(\bZ\bZ^\top\bA_\star\bA_\star^\top)+\Tr(\bA_\star\bA_\star^\top\bA_\star\bA_\star^\top)-\Tr(\bZ\bH_{\bZ}\bA_\star^\top\bA_\star\bA_\star^\top)-\Tr(\bA_\star\bA_\star^\top\bZ\bH_{\bZ}\bA_\star^\top)\\
    &=\Tr(\bZ\bZ^\top\bA_\star\bA_\star^\top)+\Tr(\bA_\star\bA_\star^\top\bA_\star\bA_\star^\top)-\Tr(\bZ\bH_{\bZ}\bA_\star^\top\bA_\star\bA_\star^\top)-\Tr(\bA_\star\bZ^\top\bH_{\bZ}^\top\bA_\star\bA_\star^\top)\\
    &=\Tr(\bZ\bZ^\top\bA_\star\bA_\star^\top+\bA_\star\bA_\star^\top\bA_\star\bA_\star^\top-\bZ\bH_{\bZ}\bA_\star^\top\bA_\star\bA_\star^\top-\bA_\star\bH_{\bZ}^\top\bZ^\top\bA_\star\bA_\star^\top)\\
    &=\Tr((\bZ\bZ^\top+\bA_\star\bA_\star^\top-\bZ\bH_{\bZ}\bA_\star^\top-\bA_\star\bH_{\bZ}^\top\bZ^\top)\bA_\star\bA_\star^\top)\\
    &=\Tr((\bZ\bZ^\top+\bA_\star\bA_\star^\top-\bZ\bU\bW^\top\bA_\star^\top-\bA_\star\bW\bU^\top\bZ^\top)\bA_\star\bA_\star^\top)\\
    &=\Tr((\bZ\bU\bW^\top-\bA_\star)(\bZ\bU\bW^\top-\bA_\star)^\top\bA_\star\bA_\star^\top).
\end{align*}
Since this is a trace of a product of two positive semidefinite matrices, it is non-negative. Therefore we can lower bound (\ref{eqn:trace_crossterm}) by 0.

Recalling (\ref{eqn:f_norm_b}), we conclude that \begin{align*}
    \|\widetilde{\bZ}\bA_\star^\top+\bA_\star\widetilde{\bZ}^\top\|_\textnormal{F}^2&\ge \|\widetilde\bZ\bA_\star^\top\|_\textnormal{F}^2+\|\bA_\star\widetilde\bZ^\top\|_\textnormal{F}^2\\
    \|\widetilde{\bZ}\bA_\star^\top+\bA_\star\widetilde{\bZ}^\top\|_\textnormal{F}&\ge\sqrt{2}\|\bA_\star\widetilde\bZ^\top\|\\
    &\ge\sqrt{2}
    \,\sigma_{\min}(\bA_\star)\|\widetilde\bZ\|_\textnormal{F}.
\end{align*}

Recalling (\ref{eqn:abs_val_trace_conditioning}), we conclude that $\EE[|\Tr(\bM_t\widetilde\bZ\bA^\top)|]\ge C\sigma_{\min}(\bA_\star)\|\widetilde{\bZ}\|_\textnormal{F}$.
\end{proof}

\begin{lem}\label{lem:expectation_tr2_upper_astar}
    For matrices $\bV,\bA_\star\in\R^{p\times r}$, we have
    \[\EE[(\Tr(\bA_\star^\top\bM_t\bV))^2]\le C\|\bA_\star\|^2\|\bV\|_\textnormal{F}^2\] for some constant $C$.
\end{lem}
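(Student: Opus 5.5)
We need to bound $\EE[(\Tr(\bA_\star^\top\bM_t\bV))^2]$ where the expectation is over i.i.d.\ $\bx_i,\bx_j,\bx_k\sim\mathcal D$. The plan is to rewrite the trace as a difference of quadratic forms in a symmetric matrix, expand the square, and bound each term by fourth moments of $\mathcal D$ via Assumption \ref{assumption:distribution}.

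First, exactly as in the proof of Lemma \ref{lem:expected_abs_value_trace_astar}, set $\bB:=\bV\bA_\star^\top+\bA_\star\bV^\top$. This matrix is symmetric and satisfies
\[\|\bB\|_\textnormal{F}\le 2\|\bV\bA_\star^\top\|_\textnormal{F}\le 2\|\bA_\star\|\|\bV\|_\textnormal{F}.\]
By the same computation as in \eqref{eqn:abs_val_trace_conditioning},
\[\Tr(\bA_\star^\top\bM_t\bV)=\Tr(\bM_t\bV\bA_\star^\top)=\bx_i^\top\bB\bx_k-\bx_i^\top\bB\bx_j+\tfrac12\left(\bx_j^\top\bB\bx_j-\bx_k^\top\bB\bx_k\right).\]
Applying $(a_1+\dots+a_4)^2\le 4\sum_\ell a_\ell^2$, it suffices to bound two kinds of terms by $C\|\bB\|_\textnormal{F}^2$: the cross terms $\EE[(\bx_i^\top\bB\bx_k)^2]$ for independent $\bx_i,\bx_k$, and the diagonal terms $\EE[(\bx_j^\top\bB\bx_j)^2]$.

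For the cross terms, independence gives
\[\EE[(\bx_i^\top\bB\bx_k)^2]=\EE[\bx_i^\top\bB\bx_k\bx_k^\top\bB\bx_i]=\Tr(\bB\bSigma\bB\bSigma)\le\|\bSigma\|^2\|\bB\|_\textnormal{F}^2\le C_1^2\|\bB\|_\textnormal{F}^2.\]
This uses the second-moment bound. Alternatively, item 4 of the assumption gives a $\psi_1$ bound, which immediately controls the second moment.

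For the diagonal terms, write $\bx_j^\top\bB\bx_j=\langle\vc(\bx_j\bx_j^\top),\vc(\bB)\rangle$. Then
\[\EE[(\bx_j^\top\bB\bx_j)^2]=\vc(\bB)^\top\EE[\vc(\bx_j\bx_j^\top)\vc(\bx_j\bx_j^\top)^\top]\vc(\bB)\le C_2\|\bB\|_\textnormal{F}^2,\]
by the fourth-moment bound in Assumption \ref{assumption:distribution}.

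Combining the two estimates yields
\[\EE[(\Tr(\bA_\star^\top\bM_t\bV))^2]\lesssim\|\bB\|_\textnormal{F}^2\le 4\|\bA_\star\|^2\|\bV\|_\textnormal{F}^2.\]
This is the claim, with $C$ depending only on $C_1$ and $C_2$.

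No step is a real obstacle. The only point needing care is the identity for the trace, specifically that $\bM_t$ symmetric lets $\bV\bA_\star^\top$ be replaced by its symmetrization $\tfrac12\bB$ with the correct factors. That can be checked term by term against the definition of $\bM_t$, and any constant-factor slip there only changes $C$.
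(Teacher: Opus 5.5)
Your argument is correct, and it takes a more elementary route than the paper. The paper computes the full second-moment matrix $\EE[\vc(\bM_t)\vc(\bM_t)^\top]=\Var(\vc(\bM_t))$ as a sum of structured pieces: $4\bSigma\otimes\bSigma$, twice the fourth-moment matrix, $-2\vc(\bSigma)\vc(\bSigma)^\top$, and Khatri--Rao cross-covariance terms. It then bounds the quadratic form in $\vc(\bV\bA_\star^\top)$ piece by piece, using a norm inequality for Khatri--Rao products.

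You instead symmetrize to $\bB$ and use $(\sum_{\ell=1}^4 a_\ell)^2\le 4\sum_\ell a_\ell^2$, so you never compute any covariance between the six rank-one pieces of $\bM_t$. You only need two facts, both directly from item 1 of Assumption \ref{assumption:distribution}: $\Tr(\bB\bSigma\bB\bSigma)\le\|\bSigma\|^2\|\bB\|_\textnormal{F}^2$ for the independent pairs, and the fourth-moment operator bound for the diagonal pieces.

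This makes your route more robust. The paper's expansion silently drops covariances such as $\Cov(\vc(\bx_i\bx_k^\top),\vc(\bx_i\bx_j^\top))$. These vanish when $\EE[\bx_i]=\boldsymbol 0$ but not in general, and a zero mean is not part of Assumption \ref{assumption:distribution}. Your bound holds with $\bSigma$ the uncentered second moment and needs no mean-zero hypothesis.

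The paper's route does give an exact formula for the second moment, including the nonpositive rank-one correction, which could yield sharper constants. Only an upper bound up to constants is needed here, so your factor-of-4 loss is immaterial.
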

\begin{proof}
We have
\begin{align*}
    \EE\left[\Tr(\bA_\star^\top\bM_t\bV)^2\right]&=\vc(\bV\bA_\star^\top)^\top\Var(\vc(\bM_t))\vc(\bV\bA_\star^\top),
\end{align*}
so we begin with by computing $\EE[\vc(\bM_t)\vc(\bM_t)^\top]=\Var(\vc(\bM_t))$:
\begin{align*}
    \Var(\vc(\bM_t))=&\Var(\vc(\bx_i\bx_k^\top+\bx_k\bx_i^\top-\bx_i\bx_j^\top-\bx_j\bx_i^\top+\bx_j\bx_j^\top-\bx_k\bx_k^\top))\\
    =&\Var(\vc(\bx_i\bx_k^\top))+\Var(\vc(\bx_k\bx_i^\top))    +\Var(\vc(\bx_i\bx_j^\top))\\ &+\Var(\vc(\bx_j\bx_i^\top))+\Var(\vc(\bx_j\bx_j^\top))+\Var(\vc(\bx_i\bx_i^\top))\\
    &+2\Cov(\vc(\bx_i\bx_k^\top),\vc(\bx_k\bx_i^\top))
    +2\Cov(\vc(\bx_i\bx_j^\top),\vc(\bx_j\bx_i^\top)).
\end{align*}
For the terms of the form $\Var(\vc(\bx_a\bx_b^\top))$ for $a\neq b\in\{i,j,k\}$, we have
\begin{align*}
    \Var(\vc(\bx_a\bx_b^\top))&=\EE[\vc(\bx_a\bx_b^\top)\vc(\bx_a\bx_b^\top)^\top]\\
    &=\bSigma\otimes\bSigma.
\end{align*}

For the terms of the form $\Var(\vc(\bx_a\bx_a^\top))$ for $a\in\{i,j,k\}$, we have
\begin{align*}
    \Var(\vc(\bx_a\bx_a^\top))&=\EE[\vc(\bx_a\bx_a^\top)\vc(\bx_a\bx_a^\top)^\top]-\EE[\vc(\bx_a\bx_a^\top)]\EE[\vc(\bx_a\bx_a^\top)]^\top\\
    &=\EE[\vc(\bx_a\bx_a^\top)\vc(\bx_a\bx_a^\top)^\top]-\vc(\bSigma)\vc(\bSigma)^\top.
\end{align*}

For the terms of the form $\Cov(\vc(\bx_a\bx_b^\top),\vc(\bx_b\bx_a^\top))$ for $a\neq b\in\{i,j,k\}$, we have
\begin{align*}
    \Cov(\vc(\bx_a\bx_b^\top),\vc(\bx_b\bx_a^\top))
    &=\EE[\vc(\bx_a\bx_b^\top)\vc(\bx_b\bx_a^\top)^\top]\\
    &=\bSigma_1\star\bSigma_2
\end{align*}
where $\star$ is the Khatri-Rao product, $\bSigma_1$ is $\bSigma$ partitioned into columns, and $\bSigma_2$ is $\bSigma$ partitioned into rows. 

Therefore
\begin{align*}
    &\hspace{-0mm}\EE\left[\Tr(\bA_\star^\top\bM_t\bV)^2\right]\\
    &=\vc(\bV\bA_\star^\top)^\top\Var(\vc(\bM_t))\vc(\bV\bA_\star^\top)\\
    &=\vc(\bV\bA_\star^\top)^\top(4\bSigma\otimes\bSigma+2\EE[\vc(\bx_a\bx_a^\top)\vc(\bx_a\bx_a^\top)^\top]-2\vc(\bSigma)\vc(\bSigma)^\top+4\bSigma_1\star\bSigma_2)\vc(\bV\bA_\star^\top).
\end{align*}
We will look at the terms one by one. For the $4\bSigma\otimes\bSigma$ term, we get \[4\vc(\bV\bA_\star^\top)^\top\bSigma\otimes\bSigma\vc(\bV\bA_\star^\top)=4\Tr(\bA_\star\bV^\top\bSigma\bV\bA_\star^\top\bSigma)\lesssim\|\bV\bA_\star^\top\|^2_\textnormal{F}.\]
For the $2\EE[\vc(\bx_a\bx_a^\top)\vc(\bx_a\bx_a^\top)^\top]$ term, we rely on Assumption \ref{assumption:distribution} to say that $\|\EE[\vc(\bx_a\bx_a^\top)\vc(\bx_a\bx_a^\top)^\top]\|\lesssim 1$, so \[2\vc(\bV\bA_\star^\top)^\top\EE[\vc(\bx_a\bx_a^\top)\vc(\bx_a\bx_a^\top)^\top]\vc(\bV\bA_\star^\top)\lesssim\|\bV\bA_\star^\top\|^2_\textnormal{F}.\]
For $-2\vc(\bSigma)\vc(\bSigma)^\top,$ we have \[-2\vc(\bV\bA_\star^\top)^\top\vc(\bSigma)\vc(\bSigma)^\top\vc(\bV\bA_\star^\top)=-2\Tr(\bA^\star\bV^\top\bSigma)\Tr(\bSigma\bV\bA_\star^\top)\le 0.\]
For the $4\bSigma_1\star\bSigma_2$ term, we use \citet[Theorem 2.7]{khatrirao} to say that $\|\bSigma_1\star\bSigma_2\|\lesssim\|\bSigma_1\|\|\bSigma_2\|\lesssim 1$, so \[4\vc(\bV\bA_\star^\top)^\top\bSigma_1\star\bSigma_2\vc(\bV\bA_\star^\top)\lesssim \|\bV\bA_\star^\top\|^2_\textnormal{F}.\]

Combining all of the bounds, we get the result.
\end{proof}

\begin{lem}\label{lem:concentration_tr2_astar}
    For any matrices $\bV,\bA_\star\in\R^{p\times r}$, with probability at least $1-O(\exp(-n^{1/8}))$, we have
    \begin{align*}
        \left|\frac{1}{|\oS|}\sum_{t\in\oS}\Tr(\bA_\star^\top\bM_t\bV)^2-\EE\left[\Tr(\bA_\star^\top\bM_t\bV)^2\right]\right|\le\frac{C_1 r\log^2 r}{n^{1/4}}\|\bV\bA_\star^\top\|^2_\textnormal{F}
    \end{align*}
\end{lem}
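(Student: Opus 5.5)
We would prove Lemma \ref{lem:concentration_tr2_astar} by treating $\frac{1}{|\oS|}\sum_{t\in\oS}\Tr(\bA_\star^\top\bM_t\bV)^2$ as a U-statistic of order three in the i.i.d.\ points $\bx_1,\ldots,\bx_n$, with kernel $h(\bx_i,\bx_j,\bx_k):=\Tr(\bM_t\bB)^2$, where $\bB:=\bV\bA_\star^\top$. The first step is to expand $\Tr(\bM_t\bB)$ into its six bilinear pieces $\bx_i^\top\bB\bx_k$, $\bx_k^\top\bB\bx_i$, $-\bx_i^\top\bB\bx_j$, $-\bx_j^\top\bB\bx_i$, $\bx_j^\top\bB\bx_j$, $-\bx_k^\top\bB\bx_k$. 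Squaring gives the $36$ cross terms mentioned in the intuition paragraph of Appendix \ref{pf:hessian}. Each cross term is a product of two quadratic or bilinear forms in at most three of the points. We then handle each cross term separately, because each one is itself a U-statistic of order at most three.

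The second step is to control the tails of each summand. By item 4 of Assumption \ref{assumption:distribution}, every bilinear form $\bx_a^\top\bB\bx_b$ has $\psi_1$-norm at most $C_5\|\bB\|_\textnormal{F}$. The diagonal forms $\bx_a^\top\bB\bx_a$ are handled by the subgaussian bound (item 2) together with Hanson--Wright type reasoning, or directly by the moment bounds. Each product of two such forms is then sub-Weibull with parameter $1/2$ and scale $\lesssim\|\bB\|_\textnormal{F}^2$. Next we truncate. A union bound over the $O(n^3)$ triplets shows that, with probability $1-O(\exp(-n^{1/8}))$, every summand is at most $\tau:=n^{1/4}\|\bB\|_\textnormal{F}^2$ in absolute value (any power below $n^{1/4}$ would also do). The bias of the expectation introduced by truncation is exponentially small in $n^{1/8}$, so it is absorbed into the bound.

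The third step is concentration of the truncated U-statistic. We use Hoeffding's decomposition of a U-statistic into averages of $\lfloor n/3\rfloor$ independent blocks. Applying Hoeffding's inequality with range $\tau$ gives a deviation of order $\tau\sqrt{q/n}$ with failure probability $e^{-q}$. Taking $q=n^{1/8}$ yields a deviation of $n^{1/4}\cdot n^{-7/16}\|\bB\|_\textnormal{F}^2\lesssim n^{-1/4}\|\bB\|_\textnormal{F}^2$ after choosing the truncation exponent appropriately; for example, truncating at $n^{1/8}$ gives $n^{1/8}n^{-7/16}\le n^{-1/4}$. A final union bound over the $36$ cross terms gives the stated rate, with $\|\bB\|_\textnormal{F}=\|\bV\bA_\star^\top\|_\textnormal{F}$.

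The $r\log^2 r$ factor is a dimension-dependent constant. It comes from bounding the $\psi_1$-norms of the diagonal forms via $\|\bB\|_\textnormal{F}$ and $\Tr\bB$ for a matrix of rank at most $r$, together with the $\log^2$ incurred by the product of two $\psi_1$ variables.

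The main obstacle is balancing the truncation level against the Hoeffding rate. The summands are only sub-Weibull, not bounded, so they must be truncated at about $n^{1/8}$ to get the $n^{-1/4}$ rate with failure probability $\exp(-n^{1/8})$. This also requires verifying that the tail at that truncation level, $\exp(-c\,n^{1/16})$ per summand for a $\psi_{1/2}$ variable, survives the union bound over $n^3$ triplets. If it does not, we would instead use a Bernstein inequality for sub-Weibull U-statistics (a generalized Bernstein--Orlicz bound), which directly gives the $\exp(-n^{1/8})$ probability at deviation $n^{-1/4}$.
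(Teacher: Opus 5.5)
Your route of treating the average over $\oS$ as an order-three U-statistic in $\bx_1,\ldots,\bx_n$ and using Hoeffding's block representation differs from the paper's. However, the step you lead with cannot work, so the fallback in your last paragraph is forced rather than optional.

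\textbf{Why the Hoeffding step fails.} Write the truncation level as $\tau=n^{a}\,r\|\bB\|_\textnormal{F}^2$.
\begin{itemize}
\item The summands $\Tr(\bM_t\bB)^2$ are only $\psi_{1/2}$. So the event that some summand exceeds $\tau$ has probability $\exp(-c\,n^{a/2})$, up to polynomial factors. This is not an artifact of the union bound: for Gaussian data a single point with large $\bx_a^\top\bB\bx_a$ already produces it. Reaching $\exp(-n^{1/8})$ therefore requires $a\ge 1/4$.
\item Hoeffding's inequality sees only the range. At $q=n^{1/8}$ it gives deviation $\tau\sqrt{q/n}\asymp n^{a-7/16}$, so the $n^{-1/4}$ rate requires $a\le 3/16$.
\end{itemize}
No $a$ satisfies both. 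Your display $n^{1/4}\cdot n^{-7/16}\lesssim n^{-1/4}$ is false, since the left side is $n^{-3/16}$. Truncating at $n^{1/8}$ instead leaves failure probability of order $n^3e^{-cn^{1/16}}$, as you suspected.

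\textbf{The fix: use the variance rather than the range.} Truncate at a large constant times $n^{1/4}r\|\bB\|_\textnormal{F}^2$. Then truncated and untruncated averages agree off an event of probability $O(\exp(-n^{1/8}))$, and the truncation bias is negligible. Now apply Hoeffding's Bernstein-type inequality for U-statistics with bounded kernels. Its exponent is of order $\lfloor n/3\rfloor t^2/(\sigma^2+\tau t)$ with $\sigma^2\le\EE[\Tr(\bM_t\bB)^4]\lesssim r^2\|\bB\|_\textnormal{F}^4$. At $t=n^{-1/4}\|\bB\|_\textnormal{F}^2$ this exponent is of order $n^{1/2}$.

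Equivalently, and this is the rigorous form of your sub-Weibull fallback, apply Hoeffding's representation with the convex function $x\mapsto|x|^p$. Let $W$ be a single block average of $m=\lfloor n/3\rfloor$ i.i.d.\ kernel values. The representation transfers the moment bound $\|W-\EE W\|_{L^p}\lesssim r\|\bB\|_\textnormal{F}^2(\sqrt{p/m}+p^2/m)$ to the U-statistic. Then use Markov's inequality with $p=n^{1/8}$.

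Either way you need not split into the $36$ cross terms. $\Tr(\bM_t\bB)$ is $\psi_1$ with norm $\lesssim\sqrt r\|\bB\|_\textnormal{F}$, because each form $\bx_a^\top\bB\bx_b$ (including $a=b$) is a sum of at most $r$ products of subgaussians. Hanson--Wright is not available here, since independent coordinates are not assumed. This nuclear-norm factor is where the $r$-dependence enters, not a logarithm from multiplying $\psi_1$ variables.

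\textbf{Comparison with the paper.} The paper never uses U-statistic machinery in this proof. It sorts the $36$ terms into ten index patterns. For each pattern it rewrites the triple average as nested averages and replaces an inner sample covariance by $\bSigma$, which it handles with Bernstein for quadratic forms. The remaining fluctuation is handled by a covariance-estimation bound union-bounded over leave-out index sets, or by bounds on $\|\frac1n\sum_a\bU_r^\top\bx_a\|$. The generalized Bernstein--Orlicz inequality is reserved for genuinely i.i.d.\ sums such as $\frac1n\sum_a(\bx_a^\top\bB\bx_a)^2$ and $\frac1n\sum_a\bx_a\bx_a^\top\bB\bx_a$.

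The repaired version of your argument is much shorter and avoids the case analysis and the leave-out union bounds. At the same probability it gives deviation of order $n^{-7/16}$. The paper's decomposition yields term-by-term rates, with several patterns concentrating at $n^{-1/2}$ or faster, but the lemma does not need them.
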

\begin{proof}

For the remainder of this proof, let $\bU_r\bLambda\bW_r^\top$ be the compact singular value decomposition of $\bV\bA_\star^\top$, where $\bLambda=\text{diag}(\lambda_1,\ldots,\lambda_r)$.

We will write
\begin{align*}
    &\frac{1}{|\oS|}\sum_{t\in\oS}(\Tr(\bM_t\bV\bA_\star^\top))^2\\
    &=\frac{1}{|\oS|}\sum_{t\in\oS}(\bx_i^\top\bV\bA_\star^\top\bx_k-\bx_i^\top\bV\bA_\star^\top\bx_j+\bx_k^\top\bV\bA_\star^\top\bx_i-\bx_j^\top\bV\bA_\star^\top\bx_i+\bx_j^\top\bV\bA_\star^\top\bx_j-\bx_k^\top\bV\bA_\star^\top\bx_k)^2.
\end{align*}
This expansion has 36 terms, each of the form
\[\pm\frac{1}{|\oS|}\sum_{t\in\oS}(\bx_{a_t}^\top\bV\bA_\star^\top\bx_{b_t})(\bx_{c_t}^\top\bV\bA_\star^\top\bx_{d_t})\] where $a_t,b_t,c_t,d_t\in\{i,j,k\}$ (not necessarily distinct). Since $i$, $j$, and $k$ have symmetric roles in $\oS$, we can refer to a term using a pattern $abcd$ (where some letters may be the same). By expanding the square, we observe that there are the following terms:
\begin{center}
\begin{tabular}{|c|r|c|}\hline
    Case  & Coefficient & Form \\\hline
    (i)   & $ 4$ & $abba$ \\
    (ii)  & $ 4$ & $abab$ \\
    (iii) & $-2$ & $abac$ \\
    (iv)  & $-4$ & $abbc$ \\
    (v)   & $ 8$ & $abcc$ \\
    (vi)  & $-4$ & $abbb$ \\
    (vii) & $-2$ & $abcb$ \\
    (iix) & $-4$ & $aaab$ \\
    (ix)  & $ 2$ & $aaaa$ \\
    (x)   & $ 2$ & $aabb$ \\\hline
\end{tabular}
\end{center}

We will show that each term concentrates around its mean. In particular, we will show that each term $\alpha$ satisfies \begin{equation}
    \Pr\left[|\alpha-\EE[\alpha]|\ge\frac{C_1r^2\log^2 r}{n^{1/4}}\|\bV\bA_\star^\top\|_\textnormal{F}^2\right]\lesssim \exp(-n^{1/8})\label{eqn:concentration_condition}
\end{equation} for some constant $C_1$. Applying the union bound, this will imply the lemma.

Throughout the remainder of this proof, without loss of generality, let $a=i$, $b=j$, $c=k$.

\underline{Case (i):} We have
\begin{align*}
    &\frac{1}{|\oS|}\sum_{t\in\oS}(\bx_i^\top\bV\bA_\star^\top\bx_j)(\bx_j^\top\bV\bA_\star^\top\bx_i)\\
    &=\frac{1}{n}\sum_{k}\left(\frac{1}{(n-1)(n-2)}\sum_{i\neq k}\sum_{j\neq k,i}(\bx_i^\top\bV\bA_\star^\top\bx_j)(\bx_j^\top\bV\bA_\star^\top\bx_i)\right)\\
    &=\frac{1}{n}\sum_{k}\left(\frac{1}{n-1}\sum_{i\neq k}\bx_i^\top\bU_r\bLambda\bW_r^\top\left(\frac{1}{n-2}\sum_{j\neq k,i}\bx_j\bx_j^\top\right)\bU_r\bLambda\bW_r^\top\bx_i\right)\\
    &=\frac{1}{n}\sum_k\left(\underbrace{\frac{1}{n-1}\sum_{i\neq k}\bx_i^\top(\bU_r\bLambda\bW_r^\top\bSigma\bU_r\bLambda\bW_r^\top)\bx_i}_{\beta_1}\right.\\
    &\hspace{10mm}\left.+\underbrace{\frac{1}{n-1}\sum_{i\neq k}\bx_i^\top\bU_r\bLambda\bW_r^\top\left(\frac{1}{n-2}\sum_{j\neq k,i}\bx_j\bx_j^\top-\bSigma\right)\bU_r\bLambda\bW_r^\top\bx_i}_{\beta_2}\right).
\end{align*}
The remaining analysis for this case holds for all $k$; therefore, it holds for the average over $k$. 

By Lemma \ref{lem:bernstein} (and using that $\|\bSigma\|\lesssim 1$),
\[\Pr\left[|\beta_1-\EE[\beta_1]|\ge q_1r\|\bV\bA_\star^\top\|_\textnormal{F}^2\right]\lesssim \exp(-n\min\{q_1^2,q_1\}).\]
To bound $\beta_2$, we use Lemma \ref{lem:covariance_estimation} to say that
\[\left\|\bW_r^\top\left(\frac{1}{n-2}\sum_{j\neq i,k}\bx_b\bx_b^\top-\bSigma\right)\bU_r\right\|\lesssim n^{-1/2}\] with probability at least $1-O(\exp(-n^{1/2}))$. Using the union bound, this holds for all $t\in\oS$ with probability $1-O(\exp(-n^{1/2}))$.

We can apply the union bound to this bound and to the conclusion of another application of Lemma \ref{lem:bernstein}, so with probability $1-O(\exp(-n^{1/2})+\exp(-n\min\{q_2^2,q_2\}))$, 
\begin{align*}
    |\beta_2-\EE[\beta_2]|&\le q_2\left(\max_{t\in\oS}\left\|\bU_r\bLambda\bW_r^\top\left(\frac{1}{n-2}\sum_{j\neq k,i}\bx_j\bx_j^\top-\bSigma\right)\bU_r\bLambda\bW_r^\top\right\|\right)\\
    &\le \frac{q_2}{n^{1/2}}\|\bV\bA_\star^\top\|_\textnormal{F}^2.
\end{align*}
Combining our bounds on $\beta_1$ and $\beta_2$, taking $q_1=q_2=n^{-1/2}$, we have that 
\begin{align*}
    &\Pr\left[\left|\frac{1}{|\oS|}\sum_{t\in\oS}(\bx_i^\top\bV\bA_\star^\top\bx_j)(\bx_j^\top\bV\bA_\star^\top\bx_i)-\EE[(\bx_i^\top\bV\bA_\star^\top\bx_j)(\bx_j^\top\bV\bA_\star^\top\bx_i)]\right|\ge \frac{r}{n^{1/2}}\|\bV\bA_\star^\top\|_\textnormal{F}^2\right]\\
    &\hspace{20mm}\lesssim \exp(-n^{1/2})
\end{align*}
which satisfies (\ref{eqn:concentration_condition}).

\underline{Case (ii):} We have
\begin{align*}
    &\frac{1}{|\oS|}\sum_{t\in\oS}(\bx_i^\top\bV\bA_\star^\top\bx_j)^2\\
    &=\frac{1}{n}\sum_{k}\left(\frac{1}{(n-1)(n-2)}\sum_{i\neq k}\sum_{j\neq k,i}(\bx_i^\top\bV\bA_\star^\top\bx_j)(\bx_j^\top\bA_\star\bV^\top\bx_i)\right)\\
    &=\frac{1}{n}\sum_{k}\left(\frac{1}{n-1}\sum_{i\neq k}\bx_i^\top\bU_r\bLambda\bW_r^\top\left(\frac{1}{n-2}\sum_{j\neq k,i}\bx_j\bx_j^\top\right)\bW_r\bLambda\bU_r^\top\bx_i\right)\\
    &=\frac{1}{n}\sum_k\left(\underbrace{\frac{1}{n-1}\sum_{i\neq k}\bx_i^\top(\bU_r\bLambda\bW_r^\top)\bSigma(\bW_r\bLambda\bU_r^\top)\bx_i}_{\beta_1}\right.\\
    &\hspace{10mm}\left.+\underbrace{\frac{1}{n-1}\sum_{i\neq k}\bx_i^\top\bU_r\bLambda\bW_r^\top\left(\frac{1}{n-2}\sum_{j\neq k,i}\bx_j\bx_j^\top-\bSigma\right)\bW_r\bLambda\bU_r^\top\bx_i}_{\beta_2}\right).
\end{align*}
The remaining analysis for this case holds uniformly for all $k$; therefore, it holds for the average over $k$. 

By Lemma \ref{lem:bernstein} (and using that $\|\bSigma\|\lesssim 1$),
\[\Pr\left[|\beta_1-\EE[\beta_1]|\ge q_1r\|\bA_\star\|^2\|\bV\|_\textnormal{F}^2\right]\lesssim \exp(-n\min\{q_1^2,q_1\}).\]
To bound $\beta_2$, we use Lemma \ref{lem:covariance_estimation} to say that
\[\left\|\bW_r^\top\left(\frac{1}{n-2}\sum_{j\neq i,k}\bx_b\bx_b^\top-\bSigma\right)\bW_r\right\|\lesssim n^{-1/2}\] with probability at least $1-O(\exp(-n^{1/2}))$. Using the union bound, this holds for all $t\in\oS$ with probability $1-O(\exp(-n^{1/4}))$.

We can apply the union bound to this bound and to the conclusion of another application of Lemma \ref{lem:bernstein}, so with probability $1-O(\exp(-n^{1/4})+\exp(-n\min\{q_2^2,q_2\}))$, 
\begin{align*}
    |\beta_2-\EE[\beta_2]|&\le q_2\left(\max_{t\in\oS}\left\|\bU_r\bLambda\bW_r^\top\left(\frac{1}{n-2}\sum_{j\neq k,i}\bx_j\bx_j^\top-\bI_p\right)\bW_r\bLambda\bU_r^\top\right\|\right)\le \frac{q_2}{n^{1/2}}\|\bA_\star\|^2\|\bV\|_\textnormal{F}^2.
\end{align*}
Combining our bounds on $\beta_1$ and $\beta_2$ and taking $q_1=q_2=q_3=n^{-1/2}$, we have that 
\begin{align*}
    &\Pr\left[\left|\frac{1}{|\oS|}\sum_{t\in\oS}(\bx_i^\top\bV\bA_\star^\top\bx_j)^2-\EE[(\bx_i^\top\bV\bA_\star^\top\bx_j)^2]\right|\ge \frac{r}{n^{1/2}}\|\bV\bA_\star^\top\|_\textnormal{F}^2\right]\lesssim \exp(-n^{1/4})
\end{align*}
which satisfies (\ref{eqn:concentration_condition}).

\underline{Case (iii):} We can write
\begin{align}\nonumber
    &\left|\frac{1}{|\oS|}\sum_{t\in\oS}(\bx_i^\top\bU_r\bLambda\bW_r^\top\bx_j)(\bx_i^\top\bU_r\bLambda\bW_r^\top\bx_k)\right|\\\nonumber
    &=\left|\frac{1}{|\oS|}\sum_{t\in\oS}(\bx_j^\top\bW_r\bLambda\bU_r^\top\bx_i)(\bx_i^\top\bU_r\bLambda\bW_r^\top\bx_k)\right|\\\nonumber
    &=\left|\frac{1}{|\oS|}\sum_{t\in\oS}(\bW_r^\top\bx_j)^\top(\bLambda\bU_r^\top\bx_i\bx_i^\top\bU_r\bLambda)(\bW_r^\top\bx_k)\right|\\\nonumber
    &=\left|\frac{1}{n}\sum_{j}\left((\bW_r^\top\bx_j)^\top\left(\frac{1}{n-1}\sum_{k\neq j}\bLambda\bU_r^\top\left(\frac{1}{n-2}\sum_{i\neq j,k}\bx_i\bx_i^\top\right)\bU_r\bLambda\right)\left(\bW_r^\top\bx_k\right)\right)\right|\\\nonumber
    &=\underbrace{\left|\frac{1}{n}\sum_{j}\left((\bW_r^\top\bx_j)^\top{\bLambda}\bU_r^\top\bSigma\bU_r\bLambda\left(\frac{1}{n-1}\sum_{k\neq j}\bW_r^\top\bx_k\right)\right)\right|}_{\beta_1}\\\nonumber
    &\hspace{10mm}+\underbrace{\left|\frac{1}{n}\sum_{j}\left((\bW_r^\top\bx_j)^\top\left(\frac{1}{n-1}\sum_{k\neq j}\bLambda\bU_r^\top\left(\frac{1}{n-2}\sum_{i\neq j,k}\bx_i\bx_i^\top-\bSigma\right)\bU_r\bLambda\left(\bW_r^\top\bx_k\right)\right)\right)\right|}_{\beta_2}.
    \label{eqn:iii_expand}
\end{align}

First, we bound $\beta_1$. By Lemma \ref{lem:sum_of_random_vectors}, we have
\begin{equation*}
    \Pr\left[\left\|\frac{1}{n-1}\sum_{k\neq j}\bW_r^\top\bx_k\right\|\ge q\sqrt{\frac{r}{n}}\right]\lesssim \exp(-qn^2),
\end{equation*}
so \begin{equation*}
    \Pr\left[\max_j\left\|\frac{1}{n-1}\sum_{k\neq j}\bW_r^\top\bx_k\right\|\ge q\sqrt{\frac{r}{n}}\right]\lesssim n\exp(-qn^2).
\end{equation*}

Therefore,
\begin{align*}
    \beta_1&\le \left|\frac{1}{n}\sum_j\left(\|\bW_r^\top\bx_j\|\cdot\|{\bLambda}\bU_r^\top\bSigma\bU_r\bLambda\|\cdot\max_j\left\|\frac{1}{n-1}\sum_{k\neq j}\bW_r^\top\bx_k\right\|\right)\right|\\
    &\lesssim q\sqrt{\frac{r}{n}}\|\bLambda\|^2\left(\frac{1}{n}\sum_j\|\bW_r^\top\bx_j\|\right)\lesssim q\sqrt{\frac{rp}{n}}\|\bLambda\|^2
\end{align*}
with probability $1-O(n\exp(-qn^2)+\exp(-n))$, where the last step is by Lemma $\ref{lem:norm}$.

Now we bound $\beta_2$. By Lemma \ref{lem:covariance_estimation}, we have \[\Pr\left[\left\|\bU_r^\top\left(\frac{1}{n-2}\sum_{i\neq j,k}\bx_i\bx_i^\top-\bSigma\right)\bU_r\right\|\ge C_1 n^{-1/2}\right]\lesssim\exp(-n^{1/2}),\] which implies
\begin{equation}\label{eqn:caseiii_beta2}
    \Pr\left[\left\|\bLambda\bU_r^\top\left(\frac{1}{n-2}\sum_{i\neq j,k}\bx_i\bx_i^\top-\bSigma\right)\bU_r\bLambda\right\|\ge C_1\sqrt{\frac{1}{n}}\|\bLambda\|^2\right]\lesssim\exp(-n^{1/2}).
\end{equation}
Using the union bound, the bound is true for all $j$ and all $k$ with probability $1-O(\exp(-n^{1/4}))$.

Therefore we have
\begin{align*}
    \beta_2&\le\frac{1}{n}\sum_j\|\bW_r^\top\bx_j\|\left(\max_{j}\left\|\frac{1}{n-1}\sum_{k\neq j}\bLambda\bU_r^\top\left(\frac{1}{n-2}\sum_{i\neq j,k}\bx_i\bx_i^\top-\bSigma\right)\bU_r\bLambda\left(\bW_r^\top\bx_k\right)\right\|\right)\\
    &\le\frac{1}{n}\|\bW_r^\top\bx_j\|\left(\max_j\frac{1}{n-1}\sum_{k\neq j}\left(\max_{k,j}\left\|\bLambda\bU_r^\top\left(\frac{1}{n-2}\sum_{i\neq j,k}\bx_i\bx_i^\top-\bSigma\right)\bU_r\bLambda\right\|\cdot\|\bW_r^\top\bx_k\|\right)\right)\\
    &\lesssim \sqrt{\frac{p^2}{n}}\|\bLambda\|^2
\end{align*}
with probability at least $1-O(\exp(-n^{1/4}))$, where we have applied Lemma \ref{lem:norm} twice and \ref{eqn:caseiii_beta2} once to get the last line.

Combining the bounds for $\beta_1$ and $\beta_2$, we get that
\begin{align*}
    &\Pr\left[\left|\frac{1}{|\oS|}\sum_{t\in \oS}(\bx_a^\top\bU_r\bLambda\bW_r^\top\bx_b)(\bx_a^\top\bU_r\bLambda\bW_r^\top\bx_c)\right|\ge C_2\left(q\sqrt{\frac{rp}{n}}+\sqrt{\frac{p^2}{n}}\right)\|\bA_\star\|^2\|\bV\|_\textnormal{F}^2\right]\\
    &\hspace{10mm}\lesssim \exp(-n^{1/4})+\exp(-qn^2).
\end{align*} 
Taking $q=1/n$, this satisfies (\ref{eqn:concentration_condition}).

\underline{Case (iv):} We will write
\begin{align*}
    &\frac{1}{n(n-1)(n-2)}\sum_{i}\sum_{j\neq i}\sum_{k\neq i,j}(\bx_i^\top\bV\bA_\star^\top\bx_j)(\bx_j^\top\bV\bA_\star^\top\bx_k)\\
    &=\frac{1}{n}\sum_{k}\left(\frac{1}{n-1}\sum_{i\neq k}\bx_i^\top\bU_r\bLambda\bW_r^\top\left(\frac{1}{n-2}\sum_{j\neq i,k}\bx_j\bx_j^\top\right)\bU_r\bLambda\bW_r^\top\bx_k\right)\\
    &=\underbrace{\frac{1}{n}\sum_{k}\left(\frac{1}{n-1}\sum_{i\neq k}\bx_i^\top(\bU_r\bLambda\bW_r^\top)\bSigma(\bU_r\bLambda\bW_r^\top)\bx_k\right)}_{\beta_1}\\
    &\hspace{10mm}+\underbrace{\frac{1}{n}\sum_{k}\left(\frac{1}{n-1}\sum_{i\neq k}\bx_i^\top\bU_r\bLambda\bW_r^\top\left(\frac{1}{n-2}\sum_{j\neq i,k}\bx_j\bx_j^\top-\bSigma\right)\bU_r\bLambda\bW_r^\top\bx_k\right)}_{\beta_2}.
\end{align*}
By Lemma \ref{lem:bernstein_diff},
\[\Pr\left[\left|\frac{1}{n(n-1)}\sum_k\sum_{i\neq k}\bx_i^\top(\bU_r\bLambda\bW_r^\top)\bSigma(\bU_r\bLambda\bW_r^\top)\bx_k\right|\ge rn^{-1}\|\bA_\star\|^2\|\bV\|_\textnormal{F}^2\right]\lesssim \exp(-n).\]
Therefore, $\beta_1\le rn^{-1}\|\bA_\star\|^2\|\bV\|_\textnormal{F}^2$ with probability $1-O(\exp(-n)$.

To bound $\beta_2$, we use Lemma \ref{lem:covariance_estimation} to say that
\[\left\|\bW_r^\top\left(\frac{1}{n}\sum_{j\neq i,k}\bx_j\bx_j^\top-\bSigma\right)\bU_r\right\|\le n^{-1/2}\] with probability at least $1-O(\exp(-n^{1/2}))$. Using the union bound, this is true for all $i,k$ with probability $1-O(\exp(-n^{1/4}))$.
Then, we can apply Lemma \ref{lem:bernstein_diff} again to say that
\[\Pr[|\beta_2-\EE[\beta_2]|\ge 2rn^{-3/2}\cdot \|\bA_\star\|^2\|\bV\|_\textnormal{F}^2]\lesssim \exp(-n^{1/4}).\]
Combining our bounds on $\beta_1$ and $\beta_2$, we have that
\begin{align*}
    &\Pr\left[\left|\frac{1}{n(n-1)(n-2)}\sum_{t\in\oS}(\bx_i^\top\bV\bA_\star^\top\bx_j)(\bx_j^\top\bV\bA_\star^\top\bx_k)-\EE[(\bx_i^\top\bV\bA_\star^\top\bx_j)(\bx_j^\top\bV\bA_\star^\top\bx_k)]\right|\ge 3rn^{-1}\cdot \|\bA_\star\|\|\bV\|_\textnormal{F}^2\right]\\
    &\hspace{20mm}\lesssim\exp(-n^{1/4})
\end{align*}
which satisfies (\ref{eqn:concentration_condition}).

\underline{Case (v):} First, note that terms of this type are zero-mean. Observe that 
\begin{align*}
    \frac{1}{|\oS|}\sum_{t\in\oS}(\bx_{i}^\top\bV\bA_\star^\top\bx_{j})(\bx_{k}^\top\bV\bA_\star^\top\bx_{k})&=\frac{1}{n(n-1)}\sum_{i}\sum_{j\neq i}\bx_i^\top\bV\bA_\star^\top\bx_j\left(\frac{1}{n-2}\sum_{k\neq i,j}\bx_k^\top\bV\bA_\star^\top\bx_k\right).
\end{align*}

By Lemma \ref{lem:bernstein}, \[\left|\frac{1}{n-2}\sum_{k\neq i,j}\bx_k^\top\bV\bA_\star^\top\bx_k-\EE[\bx_k^\top\bV\bA_\star^\top\bx_k]\right|\le q_1\|\bA_\star\|^2\|\bV\|_\textnormal{F}^2\] with probability $1-O(\exp(-n\min\{q_1^2,q_1\}))$. Take $q_1:=n^{-1/4}$. Applying a union bound, we get
\[\Pr\left[\max_{i,j}\left|\frac{1}{n-2}\sum_{k\neq i,j}\bx_k^\top\bV\bA_\star^\top\bx_k-\EE[\bx_k^\top\bV\bA_\star^\top\bx_k]\right|\ge\frac{1}{n^{1/4}}\|\bA_\star\|^2\|\bV\|_\textnormal{F}^2\right]\lesssim\exp(-n^{1/4})\] %

Applying Lemma \ref{lem:bernstein_diff},
\[\left|\frac{1}{n(n-1)}\sum_{i\neq j}\bx_i^\top\bV\bA_\star^\top\bx_j\right|\le rn^{-1}\|\bA_\star\|^2\|\|\bV\|_\textnormal{F}^2\] with probability $1-O(\exp(-n))$.

Putting these bounds together, we have that 
\[\left|\frac{1}{|\oS|}\sum_{t\in\oS}(\bx_{i}^\top\bV\bA_\star^\top\bx_{j})(\bx_{k}^\top\bV\bA_\star^\top\bx_{k})\right|\le rn^{-1/4}\|\bA_\star\|^2\|\|\bV\|_\textnormal{F}^2\] with probability $1-O(\exp(-n^{-1/4}))$, satisfying \eqref{eqn:concentration_condition}.

\underline{Case (vi):} We have
\begin{align}
    \frac{1}{n(n-1)}\sum_{a\neq b\in[n]}(\bx_a^\top\bV\bA_\star^\top\bx_b)(\bx_b^\top\bV\bA_\star^\top\bx_b)&=\frac{1}{n}\sum_{a\in[n]}\bx_a^\top\bV\bA_\star^\top\left(\frac{1}{n-1}\sum_{b\neq a}\bx_b\bx_b^\top\bV\bA_\star^\top\bx_b\right).\label{eqn:case_vi_expand}
\end{align}
By Lemma \ref{lem:normal_third_power}, we have
\[\Pr\left[\left\|\frac{1}{n-1}\sum_{b\neq a}\bx_a\bx_a^\top\bV\bA_\star^\top\bx_a-\EE\left[\bx_a\bx_a^\top\bV\bA_\star^\top\bx_a\right]\right\|\ge C_3n^{-1/4}\|\bV\bA_\star^\top\|_\textnormal{F}\right]\lesssim \exp(-n^{1/4}).\]
Applying a union bound, we have
\[\Pr\left[\max_{a}\left\|\frac{1}{n-1}\sum_{b\neq a}\bx_a\bx_a^\top\bV\bA_\star^\top\bx_a-\EE\left[\bx_a\bx_a^\top\bV\bA_\star^\top\bx_a\right]\right\|\ge C_3n^{-1/4}\|\bV\bA_\star^\top\|_\textnormal{F}\right]\lesssim \exp(-n^{1/8}).\]

By Lemma \ref{lem:sum_of_random_vectors}, we have that 
\[\Pr\left[\left\|\frac{1}{n}\sum_{a\in[n]}\bV\bA_\star^\top\bx_a\right\|\ge q\sqrt{\frac{r}{n}}\|\bV\bA_\star^\top\|\right]\lesssim\exp(-nq^2).\]

Also, we have
\begin{align*}
    \EE\left[\left(\bx_a\bx_a^\top\bV\bA_\star^\top\bx_a\right)_\ell\right]&=\sum_{\ell_1,\ell_2\in[p]}(\bV\bA_\star^\top)_{\ell_1,\ell_2}\EE[x_{a,\ell}x_{a,\ell_1}x_{a,\ell_2}]\\
    &=\sqrt{p}\|\bV\bA_\star^\top\|_\textnormal{F}
\end{align*}
(where the last line is because each of $x_{a,\ell},x_{a,\ell_1},x_{a,\ell_2}$ are subgaussian).

Therefore we can write
\begin{align*}
    &\hspace{-0mm}\frac{1}{n}\sum_{a\in[n]}\bx_a^\top\bV\bA_\star^\top\left(\frac{1}{n-1}\sum_{b\neq a}\bx_b\bx_b^\top\bV\bA_\star^\top\bx_b\right)\\
    &=\underbrace{\frac{1}{n}\sum_{a\in[n]}\bx_a^\top\bV\bA_\star^\top\EE\left[\bx_b\bx_b^\top\bV\bA_\star^\top\bx_b\right]}_{\beta_1}+\underbrace{\frac{1}{n}\sum_{a\in[n]}\bx_a^\top\bV\bA_\star^\top\left(\frac{1}{n-1}\sum_{b\neq a}\bx_b\bx_b^\top\bV\bA_\star^\top\bx_b-\EE\left[\bx_b\bx_b^\top\bV\bA_\star^\top\bx_b\right]\right)}_{\beta_2}
\end{align*}

$|\beta_1|\le q\sqrt{\frac{pr}{n}}\|\bV\bA_\star^\top\|_\textnormal{F}^2$ with probability at least $1-O(\exp(-nq^2))$. $|\beta_2|\le C_3qr^{1/2}n^{-3/4}\|\bV\bA_\star^\top\|_\textnormal{F}^2$ with probability at least $1-O(\exp(-nq^2)+\exp(-n^{1/8}))$. Setting $q=n^{-1/4}$, we obtain the bound 
\[\frac{1}{n}\sum_{a\in[n]}\bx_a^\top\bV\bA_\star^\top\left(\frac{1}{n-1}\sum_{b\neq a}\bx_b\bx_b^\top\bV\bA_\star^\top\bx_b\right)\le C_4\sqrt{\frac{r}{n}}\|\bV\bA_\star^\top\|_\textnormal{F}^2\] with probability $1-O(\exp(-n^{1/8}))$,
which satisfies (\ref{eqn:concentration_condition}).

\underline{Case (vii):} We can write
\begin{align*}
    &\left|\frac{1}{|\oS|}\sum_{t\in\oS}(\bx_i^\top\bU_r\bLambda\bW_r^\top\bx_j)(\bx_k^\top\bU_r\bLambda\bW_r^\top\bx_j)\right|=\left|\frac{1}{|\oS|}\sum_{t\in\oS}(\bx_i^\top\bU_r\bLambda\bW_r^\top\bx_j)(\bx_j^\top\bW_r\bLambda\bU_r^\top\bx_k)\right|.
\end{align*}
Then, by the same arguments as Case (iii) with $\bU_r$ and $\bW_r$ swapping roles, we have
\begin{align*}
    \Pr\left[\left|\frac{1}{|\oS|}\sum_{t\in\oS}(\bx_i^\top\bU_r\bLambda\bW_r^\top\bx_j)(\bx_k^\top\bU_r\bLambda\bW_r^\top\bx_j)\right|\ge\frac{r}{n^{1/2}}\|\bV\bA_\star^\top\|_\textnormal{F}^2\right]\lesssim\exp(-n^{1/4}),
\end{align*}
which satisfies (\ref{eqn:concentration_condition}).

\underline{Case (iix):} We have
\begin{align*}
    \frac{1}{n(n-1)}\sum_{a\neq b}(\bx_a^\top\bV\bA_\star^\top\bx_a)(\bx_a^\top\bV\bA_\star^\top\bx_b)&=\frac{1}{n(n-1)}\sum_{a\neq b}(\bx_b^\top\bA_\star\bV^\top\bx_a)(\bx_a^\top\bA_\star\bV^\top\bx_a).
\end{align*}
Then, we can repeat the arguments from Case (vi), replacing $\bV\bA_\star^\top$ with $\bA_\star\bV^\top$, to see that \[\Pr\left[\left|\frac{1}{n(n-1)}\sum_{a\neq b}(\bx_a^\top\bV\bA_\star^\top\bx_a)(\bx_a^\top\bV\bA_\star^\top\bx_b)\right|\ge C_4\sqrt{\frac{r}{n}}\|\bV\bA_\star^\top\|_\textnormal{F}^2\right]\lesssim\exp(-n^{1/8}),\]
which satisfies (\ref{eqn:concentration_condition}).

\underline{Case (ix):} By Lemma \ref{lem:sub_weibull},
\begin{align*}
    \Pr\left[\left|\frac{1}{n}\sum_{a=1}^n (\bx_a^\top\bV\bA_\star^\top\bx_a)^2-\EE[(\bx_a^\top\bV\bA_\star^\top\bx_a)^2]\right|\ge C_5n^{-3/8}\|\bV\bA_\star^\top\|_\textnormal{F}^2\right]\le\exp(-n^{1/4}),
\end{align*}
which satisfies (\ref{eqn:concentration_condition}).

\underline{Case (x):} We have
\begin{align*}
    \frac{1}{n(n-1)}\sum_{a\neq b}(\bx_a^\top\bV\bA_\star^\top\bx_a)(\bx_b^\top\bV\bA_\star^\top\bx_b)&=\frac{1}{n}\sum_{a\in[n]}(\bx_a^\top\bV\bA_\star^\top\bx_a)\frac{1}{n-1}\sum_{b\neq a}(\bx_b^\top\bV\bA_\star^\top\bx_b)\\
    &=\underbrace{\frac{1}{n}\sum_{a\in[n]}(\bx_a^\top\bV\bA_\star^\top\bx_a)\EE[\bx_b^\top\bV\bA_\star^\top\bx_b]}_{\beta_1}\\
    &\hspace{0mm}+\underbrace{\frac{1}{n}\sum_{a\in[n]}(\bx_a^\top\bV\bA_\star^\top\bx_a)\left(\frac{1}{n-1}\sum_{b\neq a}\bx_b^\top\bV\bA_\star^\top\bx_b-\EE[\bx_b^\top\bV\bA_\star^\top\bx_b]\right).}_{\beta_2}
\end{align*}

By Lemma \ref{lem:bernstein}, \[\Pr[\frac{1}{n-1}\sum_{b\neq a}\bx_b^\top\bV\bA_\star^\top\bx_b-\EE[\bx_b^\top\bV\bA_\star^\top\bx_b]\ge n^{-1/4}\|\bV\bA_\star^\top\|_\textnormal{F}]\lesssim\exp(-n^{1/2}).\] Using a union bound, 
\[\Pr[\max_a\left(\frac{1}{n-1}\sum_{b\neq a}\bx_b^\top\bV\bA_\star^\top\bx_b-\EE[\bx_b^\top\bV\bA_\star^\top\bx_b]\right)\ge n^{-1/4}\|\bV\bA_\star^\top\|_\textnormal{F}]\lesssim\exp(-n^{1/4}).\]

Again applying Lemma \ref{lem:bernstein}, we have
\[\Pr[\frac{1}{n}\sum_{a\in[n]}\bx_a^\top\bV\bA_\star^\top\bx_a-\EE[\bx_a^\top\bV\bA_\star^\top\bx_a]\ge n^{-1/4}\|\bV\bA_\star^\top\|_\textnormal{F}]\lesssim\exp(-n^{1/2})\]

We also have $\EE[\bx_a^\top\bV\bA_\star^\top\bx_a]\lesssim\|\bV\bA_\star^\top\|$ by Lemma \ref{lem:exp_xVAx}. Therefore,
\begin{align*}
    |\beta_1-\EE[\beta_1]|\le n^{-1/4}\|\bV\bA_\star^\top\|_\textnormal{F}^2
\end{align*}
with probability at least $1-O(\exp(-n^{1/2}))$, and
\begin{align*}
    |\beta_2-\EE[\beta_2]|\le n^{-1/2}\|\bV\bA_\star^\top\|_\textnormal{F}^2
\end{align*} with probability at least $1-O(\exp(-n^{1/4}))$. The combination of these bounds satisfies (\ref{eqn:concentration_condition}). 

\end{proof}

\begin{lem}\label{lem:trA_close_to_exp_trAstar}
    Suppose $\|\bA-\bA_\star\|\le C\sqrt{\frac{1}{n\log n}}$. Then,
    \[\left|\frac{1}{|\oS|}\sum_{t\in\oS}(\Tr(\bA^\top\bM_t\bV))^2-\EE\left[(\Tr(\bA_\star^\top\bM_t\bV))^2\right]\right|\le f(n)\cdot\|\bV\|_\textnormal{F}^2\] with probability at least $1-O(\exp(-n))$, where $f(n)$ is $o(1)$.
\end{lem}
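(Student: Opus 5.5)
We will split the difference into a perturbation part and a concentration part, using the triangle inequality:
\[
\Bigl|\tfrac{1}{|\oS|}\textstyle\sum_{t\in\oS}\Tr(\bA^\top\bM_t\bV)^2-\EE[\Tr(\bA_\star^\top\bM_t\bV)^2]\Bigr|
\le \underbrace{\tfrac{1}{|\oS|}\textstyle\sum_{t\in\oS}\bigl|\Tr(\bA^\top\bM_t\bV)^2-\Tr(\bA_\star^\top\bM_t\bV)^2\bigr|}_{(\mathrm I)}
+\underbrace{\Bigl|\tfrac{1}{|\oS|}\textstyle\sum_{t\in\oS}\Tr(\bA_\star^\top\bM_t\bV)^2-\EE[\Tr(\bA_\star^\top\bM_t\bV)^2]\Bigr|}_{(\mathrm{II})}.
\]
Term (II) is controlled directly by Lemma \ref{lem:concentration_tr2_astar}. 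Since $\|\bV\bA_\star^\top\|_\textnormal{F}\le\|\bA_\star\|\|\bV\|_\textnormal{F}$ with $\|\bA_\star\|=\Theta(1)$ and $r=O(1)$, this gives $(\mathrm{II})\lesssim n^{-1/4}\|\bV\|_\textnormal{F}^2$ with probability $1-O(\exp(-n^{1/8}))$.

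For term (I), write $\bE:=\bA-\bA_\star$ and use $a^2-b^2=(a-b)(a+b)$ with $a-b=\Tr(\bE^\top\bM_t\bV)$. Cauchy--Schwarz over $t$ then gives
\[
(\mathrm I)\le\Bigl(\tfrac{1}{|\oS|}\textstyle\sum_t\Tr(\bE^\top\bM_t\bV)^2\Bigr)^{1/2}\Bigl(\tfrac{1}{|\oS|}\textstyle\sum_t\Tr((\bA+\bA_\star)^\top\bM_t\bV)^2\Bigr)^{1/2}.
\]
Each factor has the form $\frac{1}{|\oS|}\sum_t\Tr(\bB^\top\bM_t\bV)^2$ for a fixed matrix $\bB$. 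Lemma \ref{lem:expectation_tr2_upper_astar} together with Lemma \ref{lem:concentration_tr2_astar}, both stated for arbitrary $p\times r$ matrices in place of $\bA_\star$, bound such a sum by $C\|\bB\|^2\|\bV\|_\textnormal{F}^2$ (the concentration error $n^{-1/4}\|\bV\bB^\top\|_\textnormal{F}^2$ is of lower order). Taking $\bB=\bE$ gives the first factor $\lesssim\|\bE\|\,\|\bV\|_\textnormal{F}\lesssim (n\log n)^{-1/2}\|\bV\|_\textnormal{F}$. Taking $\bB=\bA+\bA_\star$ gives the second factor $\lesssim\|\bV\|_\textnormal{F}$, since $\|\bA+\bA_\star\|\le 2\|\bA_\star\|+o(1)$. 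Hence $(\mathrm I)\lesssim (n\log n)^{-1/2}\|\bV\|_\textnormal{F}^2$, and we can take $f(n)\asymp n^{-1/4}$.

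The main obstacle is uniformity. $\bA$ is not a fixed matrix; it is an arbitrary point of a ball (later a gradient-descent iterate that depends on the data), and $\bV$ is arbitrary. The concentration lemmas, however, are stated for fixed matrices. To handle this, we will avoid concentration over $\bA$ entirely by bounding operators instead. Define the quadratic-form operator
\[
\mathcal Q(\bW):=\tfrac{1}{|\oS|}\textstyle\sum_t\Tr(\bM_t\bW)^2
\]
on $\bW\in\R^{p\times p}$; it is a $p^2\times p^2$ PSD matrix $\frac{1}{|\oS|}\sum_t\vc(\bM_t)\vc(\bM_t)^\top$. If we show $\|\mathcal Q\|\lesssim 1$ uniformly, then every $\bB$ and $\bV$ are covered at once, because $\Tr(\bB^\top\bM_t\bV)=\Tr(\bM_t\bV\bB^\top)$ and $\|\bV\bB^\top\|_\textnormal{F}\le\|\bB\|\|\bV\|_\textnormal{F}$.

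Since $p$ is fixed, we will obtain $\|\mathcal Q\|\lesssim1$ from Lemma \ref{lem:concentration_tr2_astar} plus an $\epsilon$-net argument over the unit sphere of $\R^{p\times p}$. The net has $e^{O(p^2)}=O(1)$ points, so the union bound costs nothing against the $\exp(-n^{1/8})$ failure probability. The one subtlety is that Lemma \ref{lem:concentration_tr2_astar} is phrased for products $\bV\bA_\star^\top$ of rank at most $r$. A general $\bW$ is a sum of at most $p/r$ such products, so we will split it that way and apply $(x+y)^2\le 2x^2+2y^2$, losing only a constant. This yields the bound with probability $1-O(\exp(-n^{1/8}))$. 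The statement claims $1-O(\exp(-n))$; if that exact rate is needed, we would instead use the deterministic bound $\|\mathcal Q\|\le\max_t\|\bM_t\|_\textnormal{F}^2$ only on the complementary event.
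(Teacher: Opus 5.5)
Your argument is correct and has the same skeleton as the paper's proof. Both split off a perturbation term, handle the $\bA_\star$-concentration term with Lemma \ref{lem:concentration_tr2_astar}, and factor the perturbation via $a^2-b^2=(a-b)(a+b)$. The difference is how the perturbation is bounded.

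The paper bounds it pointwise in $t$. It uses $|\Tr((\bA-\bA_\star)^\top\bM_t\bV)\,\Tr((\bA+\bA_\star)^\top\bM_t\bV)|\le\|\bA-\bA_\star\|_\textnormal{F}\|\bA+\bA_\star\|_\textnormal{F}\|\bM_t\|_\textnormal{F}^2\|\bV\|_\textnormal{F}^2$. It then takes $\max_t\|\bM_t\|_\textnormal{F}^2\lesssim p^2\log^2 n$ from Lemma \ref{lem:single_norm_mt} with a union bound over triplets (this is Lemma \ref{lem:trAstar2_close_to_trA2}), giving an error of order $\sqrt{p^4\log^3 n/n}$. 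That route needs no net, is automatically uniform over $\bA$ in the ball and over $\bV$, and tracks $p$ explicitly. Its costs are polylogarithmic factors and a success probability capped at $1-O(n^{-10})$ by the union bound.

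Your route uses Cauchy--Schwarz over $t$ plus a bound on $\|\mathcal Q\|$, where $\mathcal Q=\frac{1}{|\oS|}\sum_t\vc(\bM_t)\vc(\bM_t)^\top$. It gives the sharper rate $(n\log n)^{-1/2}$, is equally uniform over $\bA$ and $\bV$, and fails only with probability $O(\exp(-n^{1/8}))$. The price is the $e^{O(p^2)}$-point net and the $\lceil p/r\rceil$ rank splitting, which genuinely use $p=O(1)$. If you also want term (II) uniform in $\bV$, which the downstream Hessian bound needs and neither write-up provides, run the same net on $\mathcal Q-\EE\mathcal Q$ restricted to the $pr$-dimensional subspace $\{\vc(\bV\bA_\star^\top)\}$; its elements all have rank at most $r$.

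Drop your final sentence, though: that patch does not work. On the event where Lemma \ref{lem:concentration_tr2_astar} fails, term (II) is simply not controlled. Moreover, $\max_t\|\bM_t\|_\textnormal{F}^2$ is only $O(p^2\log^2 n)$ with probability $1-O(n^{-10})$. A bound holding with probability $1-O(e^{-n})$ must allow $\|\bx_a\|^2$ of order $n$, hence $\|\bM_t\|_\textnormal{F}^2$ of order $n^2$, so nothing $o(1)$ comes out.

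In fact the rate $1-O(\exp(-n))$ in the statement is unattainable. Take Gaussian features and $\bA=\bV=\bA_\star$, and suppose a single point has $\|\bA_\star^\top\bx_a\|^4\gtrsim nf(n)$, an event of probability $\exp(-O(\sqrt{nf(n)}))\gg e^{-n}$. The roughly $2/n$ fraction of triplets with $a$ in position $j$ or $k$ each contribute about $\|\bA_\star^\top\bx_a\|^4$, so the average over $\oS$ rises by $\gtrsim f(n)$.

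The paper's proof does not reach $\exp(-n)$ either: it combines a $1-O(n^{-10})$ event with a $1-O(\exp(-n^{1/8}))$ event and then asserts $1-O(\exp(-n))$. So state your conclusion with probability $1-O(\exp(-n^{1/8}))$, which is better than what the paper's argument actually delivers.
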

\begin{proof}
    First, by Lemma \ref{lem:trAstar2_close_to_trA2}, we have
    \begin{align}\label{eqn:tr2_a_to_astar}
        \max_{t\in S}\left|(\Tr(\bA^\top\bM_t\widetilde{\bZ}))^2-(\Tr(\bA_\star^\top\bM_t\widetilde{\bZ}))^2\right|\lesssim \sqrt{\frac{p^4\log n}{n}}\|\widetilde{\bZ}\|_\textnormal{F}^2.
    \end{align}

    By Lemma \ref{lem:concentration_tr2_astar}, we know that 
    \begin{align}\nonumber
        \left|\frac{1}{|\oS|}\sum_{t\in\oS}\Tr((\bA_\star^\top\bM_t\bV))^2-\EE\left[(\Tr(\bA_\star^\top\bM_t\bV))^2\right]\right|&\le\frac{C r\log^2 r}{n^{1/4}}\|\bV\bA_\star^\top\|^2_\textnormal{F}\\
        &\le \frac{Cr\log^2 r}{n^{1/4}}\|\bA_\star\|^2\|\bV\|_\textnormal{F}^2\label{eqn:concentration_tr2_astar}
    \end{align} with probability at least $1-O(\exp(-n^{1/8}))$.

    Combining \eqref{eqn:tr2_a_to_astar} and \eqref{eqn:concentration_tr2_astar}, we conclude that, with probability at least $1-O(\exp(-n))$, 
    \[\left|\frac{1}{|\oS|}\sum_{t\in\oS}(\Tr(\bA^\top\bM_t\bV))^2-\EE\left[(\Tr(\bA_\star^\top\bM_t\bV))^2\right]\right|\le C\left(\frac{r\log^2 r}{n^{1/4}}+\sqrt{\frac{p^4\log^3 n}{n}}\right)\|\bA_\star\|^2\|\bV\|_\textnormal{F}^2.\] This is $o(1)\cdot\|\bV\|_\textnormal{F}$.
\end{proof}

\begin{lem}\label{lem:sum_sbar_tr_a_upper}
    Suppose $\|\bA-\bA_\star\|\le C_1\sqrt{\frac{1}{n\log n}}$. Then with probability at least $1-O(\exp(-n))$,
    \[\frac{1}{|\oS|}\sum_{t\in\oS}(\Tr(\bA\bM_t\bV))^2\le C_2\|\bA_\star\|^2\|\bV\|_\textnormal{F}^2\] for some constant $C_2$.
\end{lem}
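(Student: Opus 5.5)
The plan is to reduce the claim to a statement about $\bA_\star$ and then control the perturbation from replacing $\bA_\star$ by $\bA$; this is essentially already done by Lemma \ref{lem:trA_close_to_exp_trAstar}. (I read $\Tr(\bA\bM_t\bV)$ in the statement as $\Tr(\bA^\top\bM_t\bV)$, matching the earlier lemmas.) By the triangle inequality,
\[
\frac{1}{|\oS|}\sum_{t\in\oS}(\Tr(\bA^\top\bM_t\bV))^2\le \EE\left[(\Tr(\bA_\star^\top\bM_t\bV))^2\right]+\left|\frac{1}{|\oS|}\sum_{t\in\oS}(\Tr(\bA^\top\bM_t\bV))^2-\EE\left[(\Tr(\bA_\star^\top\bM_t\bV))^2\right]\right|.
\]

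I will bound the first term by Lemma \ref{lem:expectation_tr2_upper_astar}. That lemma gives $\EE[(\Tr(\bA_\star^\top\bM_t\bV))^2]\le C\|\bA_\star\|^2\|\bV\|_\textnormal{F}^2$, and its constant depends only on the moment constants of Assumption \ref{assumption:distribution}.

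For the second term, the proximity hypothesis $\|\bA-\bA_\star\|\le C_1\sqrt{1/(n\log n)}$ is exactly the hypothesis of Lemma \ref{lem:trA_close_to_exp_trAstar}. That lemma gives, with probability $1-O(\exp(-n))$, a bound $f(n)\|\bV\|_\textnormal{F}^2$ with $f(n)=o(1)$. Its proof in fact gives the explicit form
\[
f(n)\lesssim \left(\frac{r\log^2 r}{n^{1/4}}+\sqrt{\frac{p^4\log^3 n}{n}}\right)\|\bA_\star\|^2 .
\]
So for $n$ large enough that $f(n)\le\|\bA_\star\|^2$, adding the two bounds gives the claim with $C_2=C+1$.

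The main obstacle is making sure the perturbation term really scales as $\|\bA_\star\|^2\|\bV\|_\textnormal{F}^2$ and not merely as $o(1)\|\bV\|_\textnormal{F}^2$. Since $\|\bA_\star\|=\Theta(1)$ (the rank $r$ and the condition number are constants), the two are equivalent, so this is fine. Underneath it sits Lemma \ref{lem:trAstar2_close_to_trA2}. If I had to redo that step, I would write $\Tr(\bA^\top\bM_t\bV)=\Tr(\bA_\star^\top\bM_t\bV)+\Tr((\bA-\bA_\star)^\top\bM_t\bV)$ and use $(a+b)^2\le 2a^2+2b^2$. I would then control the cross term uniformly in $t$ by $\max_t\|\bM_t\|\lesssim p\log^2 n$ from Lemma \ref{lem:single_norm_mt}. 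Multiplying by $\|\bA-\bA_\star\|^2\lesssim 1/(n\log n)$ makes it $o(1)\|\bV\|_\textnormal{F}^2$, and that gives the bound directly without needing any cancellation.
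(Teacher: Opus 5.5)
Your argument matches the paper's proof: bound $\EE[(\Tr(\bA_\star^\top\bM_t\bV))^2]$ with Lemma \ref{lem:expectation_tr2_upper_astar}, control the deviation of the $\oS$-average with Lemma \ref{lem:trA_close_to_exp_trAstar}, and absorb the $o(1)$ term into the constant using $\|\bA_\star\|=\Theta(1)$. One caveat, which the paper shares: the $1-O(\exp(-n))$ level is taken from the statement of Lemma \ref{lem:trA_close_to_exp_trAstar}, but that lemma's proof goes through the uniform bound on $\|\bM_t\|_\textnormal{F}$ from Lemma \ref{lem:single_norm_mt}, as does your alternative sketch, so it only delivers polynomial probability such as $1-O(n^{-10})$ (which is all Lemma \ref{lem:alpha_2_upper} needs).
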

\begin{proof}
    By Lemma \ref{lem:expectation_tr2_upper_astar},
    \begin{equation}
        \EE[(\Tr(\bA_\star^\top\bM_t\bV))^2]\le C\|\bA_\star\|^2\|\bV\|_\textnormal{F}^2.\label{eqn:exp_upper}
    \end{equation}

    Combining \eqref{eqn:exp_upper} with Lemma \ref{lem:trA_close_to_exp_trAstar}, we obtain the upper bound
    \begin{align*}
        \frac{1}{|\oS|}\sum_{t\in\oS}(\Tr(\bA\bM_t\bV))^2&\le C'\|\bA_\star\|^2\|\bV\|_\textnormal{F}^2
    \end{align*}
    with probability at least $1-O(\exp(-n))$.
\end{proof}
\begin{lem}\label{lem:trAstar2_close_to_trA2}
    If $\|\bA-\bA_\star\|\le C\sqrt{\frac{1}{n\log n}}$, then
    \begin{align}\nonumber
        \max_{t\in S}\left|(\Tr(\bA^\top\bM_t\widetilde{\bZ}))^2-(\Tr(\bA_\star^\top\bM_t\widetilde{\bZ}))^2\right|\lesssim \sqrt{\frac{p^4\log^3 n}{n}}\|\widetilde{\bZ}\|_\textnormal{F}^2.
    \end{align}
\end{lem}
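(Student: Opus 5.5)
We use the difference-of-squares factorization. Write $\bA=\bA_\star+\boldsymbol\Delta$ with $\|\boldsymbol\Delta\|\le C\sqrt{1/(n\log n)}$, and set $a_t:=\Tr(\bA^\top\bM_t\widetilde{\bZ})$, $b_t:=\Tr(\bA_\star^\top\bM_t\widetilde{\bZ})$. Then
\[|a_t^2-b_t^2|=|a_t-b_t|\,|a_t+b_t|\le |\Tr(\boldsymbol\Delta^\top\bM_t\widetilde{\bZ})|\bigl(2|b_t|+|\Tr(\boldsymbol\Delta^\top\bM_t\widetilde{\bZ})|\bigr).\]
Every factor is controlled by a uniform-in-$t$ bound on $\|\bM_t\|$. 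By Cauchy--Schwarz on the trace inner product,
\[|\Tr(\bB^\top\bM_t\widetilde{\bZ})|\le\|\bM_t\|_\textnormal{F}\|\bB\|_\textnormal{F}\|\widetilde{\bZ}\|_\textnormal{F},\]
or the sharper $\|\bM_t\|\,\|\bB\|_\textnormal{F}\|\widetilde{\bZ}\|_\textnormal{F}$. So it suffices to bound $\max_{t\in S}\|\bM_t\|_\textnormal{F}$.

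\textbf{Step 1: uniform control of $\bM_t$.} By the definition of $\bM_t$, $\|\bM_t\|_\textnormal{F}\lesssim \max_{a\in\{i,j,k\}}\|\bx_a\|^2$. Under Assumption \ref{assumption:distribution} (concentration of norm, part 3), $\|\bx_a\|\le \EE\|\bx_a\|+C\sqrt{\log n}$ with probability $1-O(n^{-11})$. Since $\EE\|\bx_a\|\lesssim\sqrt{p}$, a union bound over the $n$ individuals gives $\max_a\|\bx_a\|^2\lesssim p+\log n$, hence
\[\max_{t}\|\bM_t\|_\textnormal{F}\lesssim p\log n\]
with probability $1-O(n^{-10})$. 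This is the same ingredient as Lemma \ref{lem:single_norm_mt}, which we invoke directly in the form $\|\bM_t\|_\textnormal{F}\lesssim p\log^2 n$ if the weaker bound is what is available.

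\textbf{Step 2: plug in.} We have $\|\boldsymbol\Delta\|_\textnormal{F}\le\sqrt r\|\boldsymbol\Delta\|\lesssim\sqrt{1/(n\log n)}$ and $\|\bA_\star\|_\textnormal{F}=O(1)$. Therefore
\[|a_t-b_t|\lesssim p\log n\cdot (n\log n)^{-1/2}\|\widetilde{\bZ}\|_\textnormal{F}\quad\text{and}\quad |b_t|\lesssim p\log n\,\|\widetilde{\bZ}\|_\textnormal{F}.\]
Multiplying gives
\[\max_t|a_t^2-b_t^2|\lesssim p^2\log^2 n\,(n\log n)^{-1/2}\|\widetilde{\bZ}\|_\textnormal{F}^2=\sqrt{p^4\log^3 n/n}\,\|\widetilde{\bZ}\|_\textnormal{F}^2,\]
which is exactly the claimed rate. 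The quadratic term in $\boldsymbol\Delta$ is of lower order.

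\textbf{Main obstacle.} The only delicate point is the bookkeeping of logarithms. If one only has $\|\bM_t\|_\textnormal{F}\lesssim p\log^2 n$, the naive product gives $\sqrt{p^4\log^7 n/n}$. To reach $\log^3 n$ I would use the tighter subgaussian-norm union bound of Step 1, which yields $\|\bx_a\|^2\lesssim p\log n$ and hence one $\log n$ per factor of $\bM_t$. Alternatively, I would bound $b_t$ more carefully: $\Tr(\bA_\star^\top\bM_t\widetilde{\bZ})$ is a sum of bilinear forms $\bx_a^\top\bB\bx_b$, each sub-exponential with $\psi_1$-norm $\lesssim\|\bB\|_\textnormal{F}$ by part 4 of Assumption \ref{assumption:distribution}. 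A union bound over the $O(n^3)$ triplets then gives $\max_t|b_t|\lesssim\log n\,\|\widetilde{\bZ}\|_\textnormal{F}$, which leaves ample room to absorb the extra logarithmic factors and recover the stated bound. The statement holds with probability $1-O(n^{-10})$, consistent with its use in Lemma \ref{lem:trA_close_to_exp_trAstar}.
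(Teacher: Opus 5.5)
Your proof is correct and is essentially the paper's argument. You factor the difference of squares, bound each trace by Cauchy--Schwarz to reach $\|\bA-\bA_\star\|_\textnormal{F}\|\bA+\bA_\star\|_\textnormal{F}\|\bM_t\|_\textnormal{F}^2\|\widetilde{\bZ}\|_\textnormal{F}^2$, and plug in the uniform bound $\max_t\|\bM_t\|_\textnormal{F}\lesssim p\log n$; the paper takes this bound from Lemma \ref{lem:single_norm_mt} with a union bound over triplets, while you rederive it from the norm-concentration assumption with a union bound over individuals.

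The logarithmic worry in your last paragraph is moot, since Lemma \ref{lem:single_norm_mt} already gives $p\log n$ rather than $p\log^2 n$. You can drop the sub-exponential alternative, which in any case would not be uniform over a data-dependent $\widetilde{\bZ}$ and would not absorb a $\log^2 n$ loss in the $\bA-\bA_\star$ factor.
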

\begin{proof}
\begin{align}\nonumber
    \max_{t\in S}\left|(\Tr(\bA^\top\bM_t\widetilde{\bZ}))^2-(\Tr(\bA_\star^\top\bM_t\widetilde{\bZ}))^2\right|
    &=\max_{t\in S}\left|(\Tr((\bA-\bA_\star)^\top\bM_t\widetilde{\bZ}))(\Tr((\bA+\bA_\star)^\top\bM_t\widetilde{\bZ}))\right|\\\nonumber
    &\le \max_{t\in S}\|\bA-\bA_\star\|_\textnormal{F}\|\bA+\bA_\star\|_\textnormal{F}\|\bM_t\|_\textnormal{F}^2\|\widetilde{\bZ}\|_\textnormal{F}^2\\\nonumber
    &\lesssim\sqrt{\frac{1}{n\log n}}(\max_{t\in S}\|\bM_t\|_\textnormal{F}^2)\|\widetilde{\bZ}\|_\textnormal{F}^2.
\end{align}
Applying Lemma \ref{lem:single_norm_mt} to all $t$ uniformly, we conclude that, with probability at least $1-O(n^{-10})$, $\max_{t\in S}\|\bM_t\|_\textnormal{F}^2\lesssim p^2(\log n)^2$. This means 
\begin{align}\nonumber
    \max_{t\in S}\left|(\Tr(\bA^\top\bM_t\widetilde{\bZ}))^2-(\Tr(\bA_\star^\top\bM_t\widetilde{\bZ}))^2\right|
    &\lesssim \sqrt{\frac{p^4\log^3 n}{n}}\|\widetilde{\bZ}\|_\textnormal{F}^2.
\end{align}
\end{proof}

\begin{lem}\label{lem:sbar_to_s_Astar}
    For some constant $C$,
    \begin{align*}
        &\Pr\left[\left|\frac{1}{|S|}\sum_{t\in S}(\Tr(\bM_t\bV\bA_\star^\top))^2-\frac{1}{|\oS|}\sum_{t\in \oS}(\Tr(\bM_t\bV\bA_\star^\top))^2\right|\ge \frac{C}{n}\|\bV\|_\textnormal{F}^2\right]\lesssim\exp(-n^{1/8}).
    \end{align*}
\end{lem}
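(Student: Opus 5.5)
We condition on the features $\bx_1,\ldots,\bx_n$ and exploit only the randomness of the Bernoulli inclusion indicators $b_t\sim\text{Bernoulli}(s)$, exactly as in the proof of Lemma \ref{lem:alpha_1}. Write $g_t:=(\Tr(\bM_t\bV\bA_\star^\top))^2\ge 0$. Then
\[
\frac{1}{|S|}\sum_{t\in S}g_t-\frac{1}{|\oS|}\sum_{t\in\oS}g_t
=\underbrace{\frac{1}{|S|}\sum_{t\in\oS}(b_t-s)g_t}_{\eta_1}
+\underbrace{\Big(\frac{s}{|S|}-\frac{1}{|\oS|}\Big)\sum_{t\in\oS}g_t}_{\eta_2}.
\]
Unlike in Lemma \ref{lem:alpha_1}, $\sum_{t\in\oS}g_t$ is not zero, so $\eta_2$ must be bounded genuinely.

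\textbf{Bounding $\eta_2$.} Note that $|\eta_2|=\frac{\big||S|-s|\oS|\big|}{|S|}\cdot\frac{1}{|\oS|}\sum_{t\in\oS}g_t$. The binomial bound \eqref{eqn:trvmtv_binomial} with $q_1=n^{-1}$ gives $\big||S|-s|\oS|\big|\le s|\oS|/n$ with probability $1-O(\exp(-n))$. The average $\frac{1}{|\oS|}\sum_{t\in\oS}g_t$ is controlled by combining Lemma \ref{lem:expectation_tr2_upper_astar} with Lemma \ref{lem:concentration_tr2_astar}: it is at most $C\|\bA_\star\|^2\|\bV\|_\textnormal{F}^2$ with probability $1-O(\exp(-n^{1/8}))$. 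Together these give $|\eta_2|\lesssim n^{-1}\|\bV\|_\textnormal{F}^2$.

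\textbf{Bounding $\eta_1$.} Conditionally on the $\bx$'s, $\eta_1\cdot|S|$ is a sum of $|\oS|\asymp n^3$ independent, mean-zero, bounded terms. Lemma \ref{lem:single_norm_mt}, applied uniformly over $t$, gives $\max_t g_t\le\|\bM_t\|_\textnormal{F}^2\|\bV\bA_\star^\top\|_\textnormal{F}^2\lesssim p^2\log^4 n\,\|\bV\|_\textnormal{F}^2$ with probability $1-O(n^{-10})$; alternatively one can use the tail bound of Lemma \ref{lem:trace_bound_nonsymmetric} with a union bound to get a polylog cap. The variance proxy is $\sum_t s(1-s)g_t^2\le \max_t g_t\cdot\sum_t g_t\lesssim n^3\,\mathrm{polylog}(n)\,\|\bV\|_\textnormal{F}^4$. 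Bernstein's inequality then yields, with deviation $\tau|\oS|$ and $\tau=n^{-1}\|\bV\|_\textnormal{F}^2$, a failure probability of at most
\[
\exp\!\Big(-c\,\frac{n^6\tau^2}{n^3\mathrm{polylog}(n)\|\bV\|_\textnormal{F}^4+n^3\tau\,\mathrm{polylog}(n)\|\bV\|_\textnormal{F}^2}\Big)=\exp\big(-c\,n/\mathrm{polylog}(n)\big),
\]
which is $\lesssim\exp(-n^{1/8})$. Dividing by $|S|\asymp s|\oS|$ (valid on the binomial event) gives $|\eta_1|\lesssim n^{-1}\|\bV\|_\textnormal{F}^2$.

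\textbf{Combining.} A union bound over the binomial event, the concentration event of Lemma \ref{lem:concentration_tr2_astar}, the $\max_t g_t$ event, and the Bernstein event gives the claim. The failure probability is dominated by $\exp(-n^{1/8})$ and the polynomial term from Lemma \ref{lem:single_norm_mt}.

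\textbf{Main obstacle.} The hardest step is the uniform control of $\max_t g_t$ with failure probability as small as $\exp(-n^{1/8})$, since Lemma \ref{lem:single_norm_mt} only provides $1-O(n^{-10})$. To fix this, I would first try raising the cap to $g_t\le n^{1/4}\|\bV\|_\textnormal{F}^2$. Lemma \ref{lem:trace_bound_nonsymmetric} (a $\psi_1$ tail for each of $\bx_a^\top\bV\bA_\star^\top\bx_b$, so $g_t$ is $\psi_{1/2}$) together with a union bound over $n^3$ triplets then gives failure probability $\exp(-cn^{1/8})$. This larger cap still leaves the Bernstein exponent at order $n^{3/4}$, so the $C/n$ deviation is preserved.
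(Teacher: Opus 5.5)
Your proof is correct, and for $\eta_1$ it takes a genuinely different route from the paper's. The paper bounds $\eta_1$ deterministically by $|\eta_1|\le \frac{1}{|S|}\bigl(\max_{t}g_t\bigr)\bigl|s|\oS|-|S|\bigr|$. This reduces everything to the binomial fluctuation of $|S|$, combined with the cap $\max_t|\Tr(\bM_t\bV\bA_\star^\top)|\le n^{1/4}\|\bA_\star\|\|\bV\|_\textnormal{F}$ obtained from Lemma \ref{lem:trace_bound_nonsymmetric} and a union bound.

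That inequality fails when $g_t$ is non-constant, because the signs of $b_t-s$ can line up with the large $g_t$. For example, two triplets with $g=(1,0)$, $b=(1,0)$, $s=1/2$ give left side $1/2$ and right side $0$. Even granting the inequality, the paper's choices $q_1=1/n$, $q_2=n^{1/4}$ only give $q_1q_2^2=n^{-1/2}$, not $n^{-1}$.

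Your conditional Bernstein argument over the independent inclusion indicators actually captures the cancellation in $\sum_t(b_t-s)g_t$. It delivers the $C/n$ rate with room to spare. For $\eta_2$ the two arguments coincide: binomial concentration of $|S|$ times an $O(1)$ bound on the $\oS$-average. You cite Lemmas \ref{lem:expectation_tr2_upper_astar} and \ref{lem:concentration_tr2_astar} directly, which is cleaner than the paper's appeal to Lemma \ref{lem:sum_sbar_tr_a_upper}. That lemma detours through $\bA$ and Lemma \ref{lem:trAstar2_close_to_trA2}, and so carries only polynomial failure probability.

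Two small repairs for the write-up. First, make the fix from your last paragraph the actual argument. Your ``Combining'' step as written still carries the $O(n^{-10})$ term from Lemma \ref{lem:single_norm_mt}, which is incompatible with the claimed $\exp(-n^{1/8})$. Second, to land on $\exp(-n^{1/8})$ rather than $\exp(-cn^{1/8})$, use the paper's larger cap $|\Tr(\bM_t\bV\bA_\star^\top)|\le n^{1/4}\|\bA_\star\|\|\bV\|_\textnormal{F}$. The union bound then costs $n^3e^{-n^{1/4}}\lesssim e^{-n^{1/8}}$. With $g_t\le n^{1/2}\|\bA_\star\|^2\|\bV\|_\textnormal{F}^2$, your Bernstein exponent is still of order $n^{1/2}$.
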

\begin{proof}
For a triplet $t$, let $b_t$ be an indicator random variable that is 1 if and only if $t\in S$. By assumption, $b_t\sim\text{Bernoulli}(s)$, independent of all other $b_{t'}$. Then we can write
\begin{align*}
    &\frac{1}{|S|}\sum_{t\in S}(\Tr(\bM_t\bV\bA_\star^\top))^2\\
    &=\frac{1}{|S|}\sum_{t\in\oS}b_t(\Tr(\bM_t\bV\bA_\star^\top))^2\\
    &=\underbrace{\left(\frac{1}{|S|}\sum_{t\in\oS}(b_t-s)(\Tr(\bM_t\bV\bA_\star^\top))^2\right)}_{\eta_1}+\underbrace{\left(\frac{s}{|S|}\sum_{t\in\oS}(\Tr(\bM_t\bV\bA_\star^\top))^2-\frac{1}{|\oS|}\sum_{t\in\oS}(\Tr(\bM_t\bV\bA_\star^\top))^2\right)}_{\eta_2}\\
    &\hspace{10mm}+\frac{1}{|\oS|}\sum_{t\in\oS}(\Tr(\bM_t\bV\bA_\star^\top))^2.
\end{align*}
Therefore,
\[\left|\frac{1}{|S|}\sum_{t\in S}(\Tr(\bM_t\bV\bA_\star^\top))^2-\frac{1}{|\oS|}\sum_{t\in \oS}(\Tr(\bM_t\bV\bA_\star^\top))^2\right|\le|\eta_1|+|\eta_2|.\]
To bound $\eta_1$, we can write
\begin{align*}
    |\eta_1|&\le\frac{1}{|S|}\left(\max_{t\in\oS}(\Tr(\bM_t\bV\bA_\star^\top))^2\right)\left|s|\oS|-\sum_{t\in\oS}b_t\right|=\frac{1}{|S|}\left(\max_{t\in\oS}(\Tr(\bM_t\bV\bA_\star^\top))^2\right)\left|s|\oS|-|S|\right|\\
\end{align*}
Since $|S|$ follows a binomial distribution with mean $s|\oS|$, we have
\begin{equation}
    \Pr\left[\Big||S|-s|\oS|\Big|\ge q_1s|\oS|\right]\lesssim \exp(-q_1^2|\oS|)= \exp(-q_1^2n^3).\label{eqn:lemma_s_to_sbar_trmtva_binomial}
\end{equation}
This also means that, with probability at least $1-O(\exp(-q_1^2n^3))$, we have
\begin{align}
    \left|\frac{1}{|S|}-\frac{1}{s|\oS|}\right|&=\left|\frac{s|\oS|-|S|}{|S|s|\oS|}\right|\le\frac{q_1}{|S|}.\label{eqn:lemma_s_to_sbar_trmtva_sizeS}
\end{align}

By Lemma \ref{lem:trace_bound_nonsymmetric}, we know
\[\Pr\left[\Tr(\bM_t\bV\bA_\star^\top)\ge q_2\|\bV\|_\textnormal{F}\|\bA_\star\|\right]\lesssim\exp(-q_2).\] Applying a union bound, this means
\begin{equation}\label{eqn:lemma_s_to_sbar_trmtva_trmtva}
    \Pr\left[\max_{t\in{\oS}}\Tr(\bM_t\bV\bA_\star^\top)\ge q_2\|\bV\|_\textnormal{F}\|\bA_\star\|\right]\lesssim n^3\exp(-q_2).
\end{equation}

Using the union bound on the bounds (\ref{eqn:lemma_s_to_sbar_trmtva_binomial}),  (\ref{eqn:lemma_s_to_sbar_trmtva_sizeS}), and \eqref{eqn:lemma_s_to_sbar_trmtva_trmtva}, we have with probability at least $1-O(\exp(-q_1^2n^3)+n^3\exp(-q_2))$, we have
\begin{align*}
    |\eta_1|&\le \frac{1}{|S|}\left(\max_{t\in\oS}(\Tr(\bM_t\bV\bA_\star^\top))^2\right)\left|s|\oS|-|S|\right|\\
    &\le \frac{1}{s|\oS|}\left(\max_{t\in\oS}(\Tr(\bM_t\bV\bA_\star^\top))^2\right)\left|s|\oS|-|S|\right|\\
    &\hspace{20mm}+\left(\frac{1}{|S|}-\frac{1}{s|\oS|}\right)\left(\max_{t\in\oS}(\Tr(\bM_t\bV\bA_\star^\top))^2\right)\left|s|\oS|-|S|\right|\\
    &\le q_1q_2^2\|\bA_\star\|^2\|\bV\|^2_\textnormal{F}+q_1|\eta_1|\\
    (1-q_1)|\eta_1|&\le q_1q_2^2\|\bA_\star\|^2\|\bV\|^2_\textnormal{F}\\
    |\eta_1|&\le\frac{q_1q_2^2\|\bA_\star\|^2\|\bV\|^2_\textnormal{F}}{1-q_1}.
\end{align*}
To bound $|\eta_2|$, we have
\begin{align*}
    \eta_2&=\left(\frac{s}{|S|}-\frac{1}{|\oS|}\right)\sum_{t\in\oS}(\Tr(\bM_t\bV\bA_\star^\top))^2=\left(\frac{s|\oS|}{|S|}-1\right)\frac{1}{|\oS|}\sum_{t\in\oS}(\Tr(\bM_t\bV\bA_\star^\top))^2.
\end{align*}
First, we will bound $\left|\frac{s|\oS|}{|S|}-1\right|$. Using (\ref{eqn:lemma_s_to_sbar_trmtva_binomial}) with $q_1=1/2$, we know that
\begin{align*}
    \Pr\left[\left||S|-s|\oS|\right|\ge \frac{1}{2}s|\oS|\right]\lesssim \exp(-n^3).
\end{align*} 
This means that, with probability at least $1-O(\exp(-n^3))$, we have
$\frac{|S|}{s|\oS|}\in\left[\frac{1}{2},\frac{3}{2}\right]$, implying $\frac{s|\oS|}{|S|}\in\left[\frac{2}{3},2\right]$.
If this holds,
\[\left|\frac{s|\oS|}{|S|}-1\right|\le2\left|\frac{|S|}{s|\oS|}-1\right|=2\left|\frac{|S|-s|\oS|}{s|\oS|}\right|.\]
Again applying (\ref{eqn:lemma_s_to_sbar_trmtva_binomial}), we have that 
\[\Pr\left[\left|\frac{|S|-s|\oS|}{s|\oS|}\right|\ge \frac{q_1}{2}\right]\lesssim \exp(-q_1^2n^3).\]
Taking a union bound over the events $\frac{s|\oS|}{|S|}\in\left[\frac{2}{3},2\right]$ and $\left|\frac{|S|-s|\oS|}{s|\oS|}\right|\ge q_1/2$, we conclude that
\begin{equation}
    \Pr\left[\left|\frac{s|\oS|}{|S|}-1\right|\le q_1\right]\lesssim \exp(-n^3)+\exp(-q_1^2n^3).\label{eqn:lemma_s_to_sbar_trmtva_Ssize_2}
\end{equation}

Now we turn to the rest of $\eta_2$. By Lemma \ref{lem:sum_sbar_tr_a_upper},
\begin{equation}
    \frac{1}{|\oS|}\sum_{t\in\oS}(\Tr(\bM_t\bV\bA_\star^\top))^2\le C\|\bV\|_\textnormal{F}^2\label{eqn:lemma_6_sum_trace_bound}
\end{equation}
with probability at least $1-O(\exp(-n))$.

Combining (\ref{eqn:lemma_s_to_sbar_trmtva_Ssize_2}) and (\ref{eqn:lemma_6_sum_trace_bound}) using the union bound, we have
\begin{align}
    &\Pr\left[|\eta_2|\ge Cq_1\|\bV\|_\textnormal{F}^2\right]\lesssim \exp(-n)+\exp(-q_1^2n^3)+\exp(-n^3). %
\end{align}
Combining the bounds on $|\eta_1|$ and $|\eta_2|$, we see that, with probability at least
\[1-O\left(\exp(-n)+\exp(-q_1^2n^3)+n^3\exp(-q_2)+\exp(-n^3)\right),\]
we have
\begin{align}\nonumber
    &\left|\frac{1}{|S|}\sum_{t\in S}(\Tr(\bM_t\bV\bA_\star^\top))^2-\frac{1}{|\oS|}\sum_{t\in\oS}(\Tr(\bM_t\bV\bA_\star^\top))^2\right|\\\nonumber
    &\hspace{10mm}\le \frac{q_1q_2^2\|\bA_\star\|^2\|\bV\|^2_\textnormal{F}}{1-q_1}+Cq_1\|\bV\|_\textnormal{F}^2\label{eqn:lemma_s_to_sbar_trmtva_bound_mtv}
\end{align}

Setting $q_1=1/n$, $q_2=n^{1/4}$, we have that with probability at least $1-O(\exp(-n^{1/8}))$,
\begin{align}\nonumber
    &\left|\frac{1}{|S|}\sum_{t\in S}(\Tr(\bM_t\bV\bA_\star^\top))^2-\frac{1}{|\oS|}\sum_{t\in\oS}(\Tr(\bM_t\bV\bA_\star^\top))^2\right|\le \frac{C}{n}\|\bV\|_\textnormal{F}^2.
\end{align}
\end{proof}

\begin{lem}\label{lem:single_norm_mt}
    \[\Pr\left[\|\bM_{t}\|_\textnormal{F}\ge Cp\log n\right]\lesssim n^{-13}\]
    for some constant $C$.
\end{lem}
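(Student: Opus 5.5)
We need to bound $\|\bM_t\|_\textnormal{F}$ for a single triplet $t=(i,j,k)$, where
\[
\bM_t=\bx_i\bx_k^\top+\bx_k\bx_i^\top-\bx_i\bx_j^\top-\bx_j\bx_i^\top+\bx_j\bx_j^\top-\bx_k\bx_k^\top .
\]
The plan is to reduce everything to norms of the individual feature vectors.

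\textbf{Step 1: reduce to vector norms.} Each summand is a rank-one matrix $\bx_a\bx_b^\top$, whose Frobenius norm is $\|\bx_a\|\|\bx_b\|$. By the triangle inequality,
\[
\|\bM_t\|_\textnormal{F}\le 2\|\bx_i\|\|\bx_k\|+2\|\bx_i\|\|\bx_j\|+\|\bx_j\|^2+\|\bx_k\|^2\le 6\max_{a\in\{i,j,k\}}\|\bx_a\|^2 .
\]
So it suffices to show $\Pr[\|\bx_a\|^2\ge Cp\log n/6]\lesssim n^{-13}/3$ for each fixed $a$, and then union bound over the three indices.

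\textbf{Step 2: tail bound for $\|\bx_a\|$.} I would use the concentration-of-norm item of Assumption~\ref{assumption:distribution}: $\|\bx_a\|-\EE\|\bx_a\|$ has $\psi_2$-norm at most $C_4$. For the mean, combine Jensen with the second-moment bound:
\[
\EE\|\bx_a\|\le\sqrt{\Tr\bSigma}\le\sqrt{p\|\bSigma\|}\le\sqrt{pC_1}.
\]
The subgaussian tail then gives
\[
\Pr\bigl[\|\bx_a\|\ge\sqrt{pC_1}+u\bigr]\le 2\exp(-u^2/C_4^2).
\]
Take $u=C_4\sqrt{14\log n}$. This makes the tail at most $2n^{-14}\lesssim n^{-13}$, and on the complementary event
\[
\|\bx_a\|^2\le 2pC_1+28C_4^2\log n\lesssim p\log n .
\]
The last inequality uses $p\ge1$ and $n\ge 3$.

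\textbf{Alternative route.} If one prefers to avoid the norm-concentration item, the subgaussian-vector bound also works. Apply the $\psi_2$ tail to each coordinate $\langle\bx_a,\boldsymbol e_\ell\rangle$ and union bound over $\ell\in[p]$. This gives $\max_\ell|x_{a,\ell}|\lesssim\sqrt{\log(pn)}$ with probability $1-O(n^{-14})$, hence $\|\bx_a\|^2\lesssim p\log(pn)$. Since $p$ is fixed (or polynomial in $n$), this is $\lesssim p\log n$.

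\textbf{Step 3: combine.} By Step 1 and a union bound over $a\in\{i,j,k\}$,
\[
\|\bM_t\|_\textnormal{F}\le 6\max_a\|\bx_a\|^2\le Cp\log n
\]
with probability $1-O(n^{-13})$.

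There is no real obstacle here; the only care needed is bookkeeping. In particular, $\EE\|\bx_a\|$ scales like $\sqrt p$, so that term contributes $p$ while the tail contributes $\log n$, and the constants must be chosen so the failure probability beats $n^{-13}$. The stated bound $Cp\log n$ is in fact looser than what this gives, namely $O(p+\log n)$.
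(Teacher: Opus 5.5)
Your proof is correct and follows the paper's argument: the same triangle-inequality reduction to $\|\bx_i\|,\|\bx_j\|,\|\bx_k\|$, a high-probability bound $\|\bx_a\|\lesssim\sqrt{p\log n}$ at failure level about $n^{-14}$, and a union bound over the three indices. The one difference is where the norm tail comes from. The paper cites a subgaussian-vector norm bound of the form $\Pr[\|\bx_a\|\ge C(1+q)\sqrt{p}]\le e^{-q^2}$, whereas you use the norm-concentration item of Assumption~\ref{assumption:distribution} together with Jensen for the mean, which, as you note, even gives the sharper $O(p+\log n)$.
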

\begin{proof}
We have
\begin{align*}
    \|\bM_{t}\|_\textnormal{F}&\le 2\|\bx_{i}\bx_{k}^\top\|_\textnormal{F}+2\|\bx_{i}\bx_{j}^\top\|_\textnormal{F}+\|\bx_{j}\bx_{j}^\top\|_\textnormal{F}+\|\bx_{k}\bx_{k}^\top\|_\textnormal{F}\\
    &\le 2\|\bx_i\|\|\bx_k\|+2\|\bx_i\|\|\bx_j\|+\|\bx_j\|^2+\|\bx_k\|^2
\end{align*}
For $a\in[n]$, $\Pr\left[\|\bx_a\|\ge C_1(1+q)\sqrt{p}\right]\le\exp(-q^2)$ by \citet[Proposition 6.2.1]{vershynin2025highNEW}. Setting $q:=14\sqrt{\log n}$, we get $\Pr\left[\|\bx_a\|\ge C_2\sqrt{p\log n}\right]\le n^{-14}$. Taking the union bound over $a\in\{i,j,k\}$, we find that $\Pr\left[\max_{a\in\{i,j,k\}}\|\bx_a\|\ge C_2\sqrt{p\log n}\right]\le n^{-13}$.

As long as this holds, $\|\bM_t\|_\textnormal{F}\le 6C_2^2p\log n$. Taking $C:=6C_2^2$, we obtain the lemma.

\end{proof}

\begin{lem}\label{lem:lower_bound_a}
    There exists a constant $C_1$ and a constant $0<C_2<1$ such that
    \[\frac{1}{|S|}\sum_{t\in S}\mathbb I\left\{(\Tr(\bM_t\widetilde\bZ\bA^\top))^2\ge C_1\|\bA_\star\|^2\|\widetilde\bZ\|^2_\textnormal{F}\right\}\ge C_2\kappa^{-2}\] with probability at least $1-O(n^{-10})$.
\end{lem}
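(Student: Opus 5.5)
We will use a Paley--Zygmund argument on the population quantity, then transfer it to the empirical average over $S$, and finally pass from $\bA_\star$ to $\bA$.

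\emph{Population step.} Fix $\widetilde\bZ$ and write $X_t:=|\Tr(\bM_t\widetilde\bZ\bA_\star^\top)|$. Lemma \ref{lem:expected_abs_value_trace_astar} gives the first moment
\[
\EE[X_t]\ge C\sigma_{\min}(\bA_\star)\|\widetilde\bZ\|_\textnormal{F}.
\]
Lemma \ref{lem:expectation_tr2_upper_astar} gives the second moment
\[
\EE[X_t^2]\le C'\|\bA_\star\|^2\|\widetilde\bZ\|_\textnormal{F}^2.
\]
Apply Paley--Zygmund with threshold $\theta=1/2$:
\[
\Pr\left[X_t\ge \tfrac12\EE X_t\right]\ge \frac{(\EE X_t)^2}{4\EE X_t^2}\gtrsim \frac{\sigma_{\min}(\bA_\star)^2}{\|\bA_\star\|^2}=\kappa^{-1}.
\]
Here $\kappa$ is the condition number of $\bK_\star$; being generous gives $\gtrsim\kappa^{-2}$. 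On this event,
\[
X_t^2\ge \tfrac14 C^2\sigma_{\min}(\bA_\star)^2\|\widetilde\bZ\|_\textnormal{F}^2\ge C_1\kappa^{-1}\|\bA_\star\|^2\|\widetilde\bZ\|_\textnormal{F}^2.
\]
Since $\kappa$ is a constant, we absorb it into $C_1$. This shows that the population fraction of good triplets is at least $2C_2\kappa^{-2}$.

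\emph{Empirical step.} Next we show that the empirical fraction over $S$ concentrates around this population fraction. The indicators $\mathbb I\{X_t^2\ge C_1\|\bA_\star\|^2\|\widetilde\bZ\|_\textnormal{F}^2\}$ over $t\in\oS$ form a U-statistic of order $3$ in $\bx_1,\dots,\bx_n$. Hoeffding's decomposition for U-statistics (bounded kernel) gives deviation $O(\sqrt{\log n/n})$ with probability $1-O(n^{-10})$. Passing from $\oS$ to $S$ uses the Bernoulli sampling, exactly as in Lemma \ref{lem:sbar_to_s_Astar}: the indicators are bounded, so a Chernoff bound on $\sum_t b_t(\cdot)$ together with $|S|\approx s|\oS|$ suffices.

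\emph{Uniformity and replacing $\bA_\star$ by $\bA$.} Two issues remain. First, the statement must hold for all $\widetilde\bZ$ simultaneously, which we handle with an $\epsilon$-net argument. By homogeneity we may normalize $\|\widetilde\bZ\|_\textnormal{F}=1$, so we work on the unit sphere in $\R^{p\times r}$, which has fixed dimension. We lower the threshold slightly, from $C_1$ to $C_1/2$, to absorb net errors. This is legitimate because with probability $1-O(n^{-10})$ we have $\max_t\|\bM_t\|_\textnormal{F}\lesssim p\log n$ by Lemma \ref{lem:single_norm_mt}, so $\widetilde\bZ\mapsto \Tr(\bM_t\widetilde\bZ\bA_\star^\top)$ is $O(\log n)$-Lipschitz; a net of mesh $1/\log^2 n$ has polylogarithmic log-cardinality, which the union bound tolerates. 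Second, we replace $\bA_\star$ by $\bA$ inside the trace. Lemma \ref{lem:trAstar2_close_to_trA2} gives uniformly over $t$
\[
\big|(\Tr(\bA^\top\bM_t\widetilde\bZ))^2-(\Tr(\bA_\star^\top\bM_t\widetilde\bZ))^2\big|=o(1)\,\|\widetilde\bZ\|_\textnormal{F}^2.
\]
Because $\|\bA_\star\|=\Theta(1)$, this error is below $\tfrac{C_1}{2}\|\bA_\star\|^2\|\widetilde\bZ\|_\textnormal{F}^2$ for large $n$. So every triplet that is good at threshold $C_1$ for $\bA_\star$ remains good at threshold $C_1/2$ for $\bA$.

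The main obstacle is the empirical step combined with uniformity. The summands over $\oS$ are strongly dependent, since they share the points $\bx_i$, and the indicator is discontinuous in $\widetilde\bZ$, so a net argument requires a margin. My first attempt would be the threshold-margin net argument above together with Hoeffding's U-statistic inequality. If that proves too delicate, the fallback is to replace the indicator with a Lipschitz surrogate (a clipped ramp) and concentrate the surrogate instead.
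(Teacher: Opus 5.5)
Your proposal follows the paper's proof step for step. Both use Paley--Zygmund with $\theta=1/2$ fed by Lemmas \ref{lem:expected_abs_value_trace_astar} and \ref{lem:expectation_tr2_upper_astar}, then Hoeffding's inequality for the order-$3$ U-statistic over $\oS$, then Bernoulli subsampling from $\oS$ to $S$, and finally Lemma \ref{lem:trAstar2_close_to_trA2} to pass from $\bA_\star$ to $\bA$.

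The one real difference is your $\epsilon$-net with a threshold margin, which the paper omits. The paper applies Hoeffding's U-statistic bound to a kernel that depends on $\widetilde\bZ$, so it only proves the statement for a fixed, data-independent $\widetilde\bZ$. Yet in Theorem \ref{thm:gd_induction} the lemma is invoked with the data-dependent error $\bA_w-\bA_\star\bO_w^\top$. Your net is what actually delivers the uniform-in-$\bZ$ statement that Theorem \ref{thm:hessian} asserts. In fixed dimension $pr$ the net's cardinality is only polylogarithmic, not merely of polylogarithmic log-cardinality, so the union bound is easy. The final step via Lemma \ref{lem:trAstar2_close_to_trA2} is deterministic on the event $\max_t\|\bM_t\|_\textnormal{F}\lesssim p\log n$, hence also uniform in $\bA$ and $\widetilde\bZ$. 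You only need to lower the threshold twice, $C_1\to C_1/2\to C_1/4$, since you used the margin $C_1/2$ for both the net and the $\bA_\star\to\bA$ step.

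Two small fixes are needed.
\begin{enumerate}
\item Build the net on the unit sphere of the subspace $\{\bV:\bA_\star^\top\bV\ \text{symmetric}\}$, not on all of $\R^{p\times r}$. This subspace contains every Procrustes-form $\widetilde\bZ$, and it is exactly where the proof of Lemma \ref{lem:expected_abs_value_trace_astar} works. On the full sphere the population bound fails: for $\widetilde\bZ=\bA_\star\bW$ with $\bW$ skew-symmetric, $\Tr(\bM_t\widetilde\bZ\bA_\star^\top)\equiv 0$ because $\bM_t$ is symmetric.
\item For the $\oS\to S$ step, apply Hoeffding to $\sum_t(b_t-s)g(t)$ conditionally on $\bX$, as your parenthetical suggests. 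This is legitimate because $S$ is independent of $\bX$ and the net points are deterministic. Do not copy the inequality $|\sum_t(b_t-s)g(t)|\le\max_t g(t)\,|\sum_t(b_t-s)|$ used in Lemmas \ref{lem:sbar_to_s_Astar} and \ref{lem:sbar_to_s_indicator}, which is false in general.
\end{enumerate}
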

\begin{proof}
We will start by lower bounding $(\Tr(\bM_t\bV\bA_\star^\top))^2$ for a single $t$ using the Paley-Zygmund inequality. This says 
\[\Pr\left[|\Tr(\bA_\star^\top\bM_t\widetilde{\bZ})|>\theta \EE[|\Tr(\bA_\star^\top\bM_t\widetilde{\bZ})|]\right]\ge  (1-\theta)^2\frac{(\EE[|\Tr(\bA_\star^\top\bM_t\widetilde{\bZ})|])^2}{\EE[\Tr(\bA_\star^\top\bM_t\widetilde{\bZ})^2]}.\]

An upper bound on $\EE[(\Tr(\bA_\star^\top\bM_t\widetilde{\bZ}))^2]$ is provided by Lemma \ref{lem:expectation_tr2_upper_astar}, which says
\begin{align*}
    \EE[(\Tr(\bA_\star^\top\bM_t\widetilde{\bZ}))^2]&\le C\|\bA_\star\|^2\|\widetilde\bZ\|_\textnormal{F}^2.
\end{align*}

A lower bound on $\EE[|\Tr(\bA_\star^\top\bM_t\widetilde{\bZ})|]$ is provided by Lemma \ref{lem:expected_abs_value_trace_astar}, which says that
\begin{align*}
    \EE[|\Tr(\bA_\star^\top\bM_t\widetilde{\bZ})|]&\ge C'\sigma_{\min}(\bA_\star)\|\widetilde\bZ\|_\textnormal{F}= C'\kappa^{-1}\|\bA_\star\|\|\widetilde\bZ\|_{\textnormal{F}}.
\end{align*}

Plugging these bounds into the Paley-Zygmund inequality, we get
\begin{align*}
    \Pr\left[|\Tr(\bA_\star^\top\bM_t\widetilde{\bZ})|> C'\theta\kappa^{-1}\|\bA_\star\|\|\widetilde\bZ\|_{\textnormal{F}}\right]
    &\ge (1-\theta)^2\frac{\left(C'\kappa^{-1}\|\bA_\star\|\|\widetilde\bZ\|_{\textnormal{F}}\right)^2}{C\|\bA_\star\|^2\|\widetilde\bZ\|_\textnormal{F}^2}\\
    &\ge \frac{C'}{C}(1-\theta)^2\kappa^{-2}.
\end{align*}
Taking $\theta=1/2$, we conclude that 
\begin{align}\label{eqn:prob_of_indicator}
    \Pr\left[|\Tr(\bA_\star^\top\bM_t\widetilde{\bZ})|>C_1'\kappa^{-1}\|\bA_\star\|\|\widetilde\bZ\|_\textnormal{F}\right]\ge C_2'\kappa^{-2}.
\end{align}
for some constant $C_1'$ and some constant $0<C_2'<1$ (we need the the upper bound of one for technical reasons; it is without loss of generality by taking the minimum of $C_2'$ and 1).

Now, we will use this to show that
\begin{equation}
    \frac{1}{|S|}\sum_{t\in S}\mathbb I\left\{(\Tr(\bM_t\widetilde{\bZ}\bA_\star^\top))^2\ge {C_1'}^2\kappa^{-2}\|\bA_\star\|^2\|\widetilde\bZ\|_\textnormal{F}^2\right\}\ge C_2'\kappa^{-2}\label{eqn:sum_of_indicators}
\end{equation} with high probability.

Note that
\begin{equation}
    \EE\left[\mathbb I\left\{(\Tr(\bM_t\widetilde{\bZ}\bA_\star^\top))^2\ge {C_1'}^2\kappa^{-2}\|\bA_\star\|^2\|\widetilde\bZ\|_\textnormal{F}^2\right\}\right]\ge C'_2\kappa^{-2}\label{eqn:expectation_of_indicator}
\end{equation} by (\ref{eqn:prob_of_indicator}). 

We will show a concentration of the sum in (\ref{eqn:sum_of_indicators}) around its mean. We will do this in two steps. First, we will show that a sum over the set $\oS:=\{(i,j,k)\mid i\neq j\neq k\neq i\in[n]\}$ is close to the mean. 

Using the results in \citet[Section 5a]{hoeffding} with
\[g(\bx_i,\bx_j,\bx_k):=\mathbb I\{(\Tr(\bM_t\widetilde\bZ\bA_\star^\top))^2\ge {C_1'}^2\kappa^{-2}\|\bA_\star\|^2\|\widetilde\bZ\|_\textnormal{F}^2\},\] we conclude that
\begin{align*}
    \Pr\left[\frac{1}{|\oS|}\sum_{t\in\oS}g(\bx_i,\bx_j,\bx_k)-\EE[g(\bx_i,\bx_j,\bx_k)]\ge q\right]\lesssim\exp(-nq^2).
\end{align*}

We can obtain a matching lower bound using $g'=-g$. We can combine both bounds and take $q=n^{-1/4}$ to conclude that
\begin{align}
    \Pr\left[\left|\frac{1}{|\oS|}\sum_{t\in\oS}g(\bx_i,\bx_j,\bx_k)-\EE[g(\bx_i,\bx_j,\bx_k)]\right|\ge \frac{1}{n^{1/4}}\right]\lesssim \exp(-n^{1/2}).\label{eqn:u_stat_concentration}
\end{align}

Now we'll show that the sum over $\oS$ is close to the sum over $S$. Using Lemma \ref{lem:sbar_to_s_indicator}, we have
\begin{equation}
    \Pr\left[\left|\frac{1}{|S|}\sum_{t\in S}g(\bx_i,\bx_j,\bx_k)-\frac{1}{|\oS|}\sum_{t\in \oS}g(\bx_i,\bx_j,\bx_k)\right|\ge \frac{1}{n}\right]\lesssim \exp(-n).\label{eqn:sbar_to_s_indicator}
\end{equation}
Combining (\ref{eqn:expectation_of_indicator}), (\ref{eqn:u_stat_concentration}), and (\ref{eqn:sbar_to_s_indicator}) we get
\begin{align}\label{eqn:lower_bd_on_large_fraction_of_terms}
    &\frac{1}{|S|}\sum_{t\in S}\mathbb I\left\{(\Tr(\bM_t\widetilde{\bZ}\bA_\star^\top))^2\ge {C_1'}^2\kappa^{-2}\|\bA_\star\|^2\|\widetilde\bZ\|_\textnormal{F}^2\right\}\ge C_2'\kappa^{-2}-\frac{1}{n^{1/4}}-\frac{1}{n}\ge \frac{C_2'}{2}\kappa^{-2}
\end{align} with probability at least $1-O(\exp(-n))$, for sufficiently large $n$. 

Finally, to move from $(\Tr(\bA_\star^\top\bM_t\widetilde{\bZ}))^2$ to $(\Tr(\bA^\top\bM_t\widetilde{\bZ}))^2$, we use Lemma \ref{lem:trAstar2_close_to_trA2}, which says that
\begin{align}\nonumber
    \max_{t\in S}\left|(\Tr(\bA^\top\bM_t\widetilde{\bZ}))^2-(\Tr(\bA_\star^\top\bM_t\widetilde{\bZ}))^2\right|\lesssim \sqrt{\frac{p^4\log^3 n}{n}}\|\widetilde{\bZ}\|_\textnormal{F}^2.
\end{align}
with probability at least $1-O(n^{-10})$. The term $\sqrt{\frac{p^4\log^3 n}{n}}$ is $o(1)$, so by combining this with \eqref{eqn:lower_bd_on_large_fraction_of_terms}, we conclude that
\begin{align*}
    \frac{1}{|S|}\sum_{t\in S}\mathbb I\left\{(\Tr(\bM_t\widetilde{\bZ}\bA_\star^\top))^2\ge \frac{{C_1'}^2}{2}\kappa^{-2}\|\bA_\star\|^2\|\widetilde\bZ\|^2_\textnormal{F}\right\}\ge\frac{C_2'}{2}\kappa^{-2}
\end{align*}
with probability at least $1-O(n^{-10})$.

Taking $\frac{{C_1'}^2}{2}\kappa^{-2}$ to be a new constant $C_1$ and $\frac{C'_2}{2}$ to be a new constant $0<C_2<1$, we obtain the lemma.
\end{proof}

\begin{lem}\label{lem:lower_bound_b}
    For any constant $0<C_2<1$, there exists a constant $C_1$ such that
    \begin{align}\nonumber
        \Pr\left[\frac{1}{|S|}\sum_{t\in S}\mathbb I\left\{\frac{4\exp(y_t\Tr(\bM_t \bA\bA^\top))}{(\exp(y_t\Tr(\bM_t \bA\bA^\top))+1)^2}\ge \frac{1}{\exp\left(C_1\|\bA_\star\|^2_\textnormal{F}\right)}\right\}\le1-C_2\kappa^{-2}\right]\lesssim n^{-10}.\label{eqn:coeff_lower_bound_count}
    \end{align}
\end{lem}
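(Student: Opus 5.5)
The coefficient $g(u):=4e^{u}/(e^{u}+1)^2$ is even in $u$ and decreasing in $|u|$. Since $y_t\in\{-1,1\}$, it depends on the triplet only through $|\Tr(\bM_t\bA\bA^\top)|$. The plan is therefore to turn the event in the statement into an event about $|\Tr(\bM_t\bA\bA^\top)|$ being small, and then count triplets exactly as in the proof of Lemma \ref{lem:lower_bound_a}. Because $g(u)\ge e^{-|u|}$ for all $u$, it suffices to show that, for any fixed $0<C_2<1$, all but a $C_2\kappa^{-2}$ fraction of $t\in S$ satisfy $|\Tr(\bM_t\bA\bA^\top)|\le C_1\|\bA_\star\|_\textnormal{F}^2$ for a suitable $C_1$.

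\textbf{Step 1 (single triplet, with $\bA_\star$).} We have $|\Tr(\bM_t\bA_\star\bA_\star^\top)|=|\Tr(\bA_\star^\top\bM_t\bA_\star)|$. By Lemma \ref{lem:trace_bound_nonsymmetric} applied with $\bV=\bA_\star$ (or equivalently by Assumption \ref{assumption:distribution}(4), which gives each of the six bilinear pieces $\psi_1$-norm $\lesssim\|\bA_\star\bA_\star^\top\|_\textnormal{F}\le\|\bA_\star\|_\textnormal{F}^2$), the tail satisfies
\[\Pr\left[|\Tr(\bM_t\bA_\star\bA_\star^\top)|\ge q\|\bA_\star\|_\textnormal{F}^2\right]\lesssim e^{-cq}.\]
Choose $q=C_1'$ large enough that this probability is at most $C_2\kappa^{-2}/4$. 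Then $h(\bx_i,\bx_j,\bx_k):=\mathbb I\{|\Tr(\bM_t\bA_\star\bA_\star^\top)|\ge C_1'\|\bA_\star\|_\textnormal{F}^2\}$ has mean at most $C_2\kappa^{-2}/4$.

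\textbf{Step 2 (concentration over $\oS$ and then $S$).} The average of $h$ over $\oS$ is a bounded U-statistic of order three. Hoeffding's U-statistic bound \citep[Section 5a]{hoeffding} gives deviation at most $n^{-1/4}$ with probability $1-O(\exp(-n^{1/2}))$. Lemma \ref{lem:sbar_to_s_indicator} then moves the average from $\oS$ to $S$ at cost $1/n$, with probability $1-O(e^{-n})$. For large $n$, the fraction of $t\in S$ with $h=1$ is therefore at most $C_2\kappa^{-2}/2$.

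\textbf{Step 3 (replace $\bA_\star$ by $\bA$).} Write $\bA\bA^\top-\bA_\star\bA_\star^\top=(\bA-\bA_\star)\bA^\top+\bA_\star(\bA-\bA_\star)^\top$. This gives
\[\max_{t\in S}\left|\Tr(\bM_t\bA\bA^\top)-\Tr(\bM_t\bA_\star\bA_\star^\top)\right|\lesssim\|\bA-\bA_\star\|_\textnormal{F}\|\bA_\star\|\max_{t\in S}\|\bM_t\|_\textnormal{F}.\]
Lemma \ref{lem:single_norm_mt}, with a union bound over $t$ (probability $1-O(n^{-10})$), bounds $\max_{t\in S}\|\bM_t\|_\textnormal{F}\lesssim p\log n$. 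Combined with $\|\bA-\bA_\star\|_\textnormal{F}\lesssim(n\log n)^{-1/2}$, the whole right-hand side is $o(1)$. Hence every triplet with $h=0$ satisfies $|\Tr(\bM_t\bA\bA^\top)|\le 2C_1'\|\bA_\star\|_\textnormal{F}^2$. For such triplets $g\ge\exp(-2C_1'\|\bA_\star\|_\textnormal{F}^2)$, so setting $C_1=2C_1'$ makes at least a $1-C_2\kappa^{-2}/2>1-C_2\kappa^{-2}$ fraction of $t\in S$ satisfy the event. Taking a union bound over the failure events gives total failure probability $O(n^{-10})$.

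\textbf{Main obstacle.} The delicate step is Step 1. It needs a tail bound for $\Tr(\bM_t\bA_\star\bA_\star^\top)$ with the $\|\bA_\star\|_\textnormal{F}^2$ scaling, and a $\psi_1$ bound is what makes $C_1$ depend only on $C_2$ and $\kappa$. The diagonal terms $\bx_j^\top\bK_\star\bx_j-\bx_k^\top\bK_\star\bx_k$ are not directly covered by Assumption \ref{assumption:distribution}(4), which concerns independent $\bx_i,\bx_j$. My first attempt would handle them through the subgaussian norm bound (Assumption \ref{assumption:distribution}(2)): since $\bK_\star$ has rank $r$, $\bx^\top\bK_\star\bx$ is a sum of $r$ squared subgaussians and is therefore sub-exponential with norm $\lesssim\|\bK_\star\|_\textnormal{F}$ up to constants depending on $r$. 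The $\kappa^{-2}$ dependence is then absorbed into $C_1'$ through a logarithm.
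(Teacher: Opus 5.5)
Your argument is correct. Your Step 3 is essentially the paper's $\eta_1,\eta_2$ perturbation bound, applied after the counting rather than before; the counting itself follows a different route from the paper's. The paper delegates the count to Lemma \ref{lem:tr_mtaa_indicator}, which works at the level of individuals. A norm-concentration bound plus Hoeffding over $i\in[n]$ shows that most $\bx_i$ satisfy $\|\bLambda^{1/2}\bU^\top\bx_i\|\lesssim\|\bLambda^{1/2}\|_\textnormal{F}$, with a constant depending on $\kappa$ and $C_2$. Hence most triplets consist of three such individuals, and on those triplets each term of $\Tr(\bM_t\bA_\star\bA_\star^\top)$ is bounded deterministically by Cauchy--Schwarz. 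You instead prove one sub-exponential tail bound for the whole trace. You then push it through the same Hoeffding U-statistic bound and Lemma \ref{lem:sbar_to_s_indicator} that the paper uses in Lemma \ref{lem:lower_bound_a}. The paper's route needs tail control only of a single vector norm and of independent sums. Its cost is a somewhat loose combinatorial step from good individuals to good sampled triplets. Your route skips that step and matches the rest of the Hessian analysis, but it requires a $\psi_1$ bound for $\Tr(\bM_t\bK_\star)$ that includes the diagonal terms. Your fix for those is sound and needs no $r$-dependence. With $\boldsymbol a_\ell$ the columns of $\bA_\star$, you get $\|\bx^\top\bK_\star\bx\|_{\psi_1}\le\sum_\ell\|\boldsymbol a_\ell^\top\bx\|_{\psi_2}^2\lesssim\|\bA_\star\|_\textnormal{F}^2$, and the analogous product bound handles the cross terms. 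So the subgaussian condition of Assumption \ref{assumption:distribution} alone suffices for your Step 1. In both routes $C_1$ grows only logarithmically in $\kappa^2/C_2$.
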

\begin{proof}
Observe that
\begin{equation}
    \frac{4\exp(y_t\Tr(\bM_t \bA\bA^\top))}{(\exp(y_t\Tr(\bM_t \bA\bA^\top))+1)^2}\ge\frac{1}{\exp(|\Tr(y_t\bM_t\bA\bA^\top)|)}.\label{eqn:coeff_simplify}
\end{equation}

First, we move to $\bA_\star$. We have
\begin{align}\nonumber
    &\left|\Tr(y_t\bM_t\bA\bA^\top)-\Tr(y_t\bM_t\bA_\star\bA_\star^\top)\right|\\\nonumber
    &=\underbrace{\left|\Tr(y_t\bM_t\bA(\bA-\bA_\star)^\top)\right|}_{\eta_1}+\underbrace{\left|\Tr(y_t\bM_t(\bA-\bA_\star)\bA_\star^\top)\right|}_{\eta_2}.
\end{align}

$\eta_1$ can be upper bounded as follows:
\begin{align*}
    \eta_1&\le \|\bM_t\|_\textnormal{F}\|\bA\|_\textnormal{F}\|\bA-\bA_\star\|_\textnormal{F}.
\end{align*}
By Lemma~\ref{lem:single_norm_mt} applied uniformly across all $t$, $\|\bM_t\|_\textnormal{F}\le C_1p\log n$ for some constant $C_1$, with probability at least $1-O(n^{-10})$. By assumption, $\|\bA-\bA_\star\|_\textnormal{F}\le C_2\sqrt{\frac{1}{n\log n}}$. This also implies $\|\bA\|_\textnormal{F}\le 2\|\bA_\star\|_\textnormal{F}$. Putting these together, we conclude
\[\eta_1\le C_3\sqrt{\frac{p^2\log n}{n}}\|\bA_\star\|_\textnormal{F}\] with probability at least $1-O(n^{-10})$.

The bound on $\eta_2$ proceeds similarly. Combining both bounds, we have
\begin{align}
    \left|\Tr(y_t\bM_t\bA\bA^\top)-\Tr(y_t\bM_t\bA_\star\bA_\star^\top)\right|&\le C_4r\sqrt{\frac{p^2\log n}{n}}\|\bA_\star\|_\textnormal{F}.\label{eqn:trmaa_to_trmastarastar}
\end{align} with probability at least $1-O(n^{-10})$. This bound is $o(1)$.

By Lemma \ref{lem:tr_mtaa_indicator},
\[\Pr\left[\frac{1}{|S|}\sum_{t\in S}\mathbb I\{|\Tr(\bM_t\bA_\star\bA_\star^\top)|\le C_{4}\|\bA_\star\|_{\textnormal{F}}^2\}\le 1-\frac{C_2\kappa^{-2}}{4}\right]\lesssim\exp(-n).\]

Combining this with the bound \eqref{eqn:trmaa_to_trmastarastar}, we conclude
\begin{equation}
    \Pr\left[\frac{1}{|S|}\sum_{t\in S}\mathbb I\{|\Tr(\bM_t\bA\bA^\top)|\le 2C_4\|\bA_\star\|_\textnormal{F}^2\}\le1-\frac{C_2\kappa^{-2}}{4}\right]\lesssim n^{-10}.
\end{equation}

Recalling (\ref{eqn:coeff_simplify}), we have 
\begin{align}\nonumber
    \Pr\left[\frac{1}{|S|}\sum_{t\in S}\mathbb I\left\{\frac{4\exp(y_t\Tr(\bM_t \bA\bA^\top))}{(\exp(y_t\Tr(\bM_t \bA\bA^\top))+1)^2}\ge \frac{1}{\exp\left(2C_4\|\bA_\star\|^2_\textnormal{F}\right)}\right\}\le1-\frac{C_2\kappa^{-2}}{4}\right]\lesssim n^{-10}.
\end{align}
\end{proof}

\begin{lem}\label{lem:tr_mtaa_indicator}
    For any constant $0<C_2<1$, there exists a constant $C_1$ such that
    \[\Pr\left[\frac{1}{|S|}\sum_{t\in S}\mathbb I\{|\Tr(\bM_t\bA_\star\bA_\star^\top)|\le C_{1}\|\bA_\star\|_{\textnormal{F}}^2\}\le 1-C_2\kappa^{-2}\right]\lesssim\exp(-n).\]
\end{lem}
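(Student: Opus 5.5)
The plan is the same three-layer scheme used for Lemma \ref{lem:lower_bound_a}: a single-triplet tail bound, then concentration of the $U$-statistic over $\oS$, then transfer from $\oS$ to the sampled set $S$. Write $g(\bx_i,\bx_j,\bx_k):=\mathbb I\{|\Tr(\bM_t\bA_\star\bA_\star^\top)|\le C_1\|\bA_\star\|_\textnormal{F}^2\}$. The first step is to show $\EE[g]\ge 1-\tfrac{C_2}{2}\kappa^{-2}$ for a suitable constant $C_1$.

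For the single-triplet bound I would use Lemma \ref{lem:expectation_tr2_upper_astar} with $\bV=\bA_\star$. It gives
\[
\EE[(\Tr(\bA_\star^\top\bM_t\bA_\star))^2]\le C\|\bA_\star\|^2\|\bA_\star\|_\textnormal{F}^2\le C\|\bA_\star\|_\textnormal{F}^4 .
\]
Chebyshev's inequality then yields
\[
\Pr\big[|\Tr(\bM_t\bA_\star\bA_\star^\top)|>C_1\|\bA_\star\|_\textnormal{F}^2\big]\le C/C_1^2 .
\]
Choosing $C_1^2\ge 2C\kappa^2/C_2$ makes this at most $\tfrac{C_2}{2}\kappa^{-2}$, which is the required $\EE[g]$ bound.

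If one prefers exponential tails, the subexponential bound of Lemma \ref{lem:trace_bound_nonsymmetric} (as used in Lemma \ref{lem:alpha_1}) gives $\Pr[\cdot]\lesssim\exp(-C_1)$, and $C_1\gtrsim\log(\kappa^2/C_2)$ suffices. Either route gives a constant $C_1$, since $\kappa$ and $C_2$ are constants. Note that $y_t$ does not enter, because only the absolute value appears.

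Next I apply Hoeffding's $U$-statistic bound \citep[Section 5a]{hoeffding} to the bounded kernel $g$, as in \eqref{eqn:u_stat_concentration}. This gives
\[
\Big|\tfrac{1}{|\oS|}\textstyle\sum_{t\in\oS}g-\EE[g]\Big|\le q
\]
with probability $1-O(\exp(-nq^2))$. Taking $q$ a small constant, say $q=\tfrac{C_2}{8}\kappa^{-2}$, gives failure probability $\exp(-cn)$. This matches the claimed $\exp(-n)$ up to absorbing constants; rescaling the exponent constant is acceptable under the paper's $\lesssim$ conventions.

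Finally I pass from $\oS$ to $S$ with Lemma \ref{lem:sbar_to_s_indicator}, exactly as in \eqref{eqn:sbar_to_s_indicator}. It gives a deviation of at most $1/n$ with probability $1-O(\exp(-n))$, and this deviation is below $\tfrac{C_2}{8}\kappa^{-2}$ for large $n$. Summing the three error budgets gives
\[
\tfrac{1}{|S|}\textstyle\sum_{t\in S}g\ge 1-\tfrac{C_2}{2}\kappa^{-2}-\tfrac{C_2}{4}\kappa^{-2}>1-C_2\kappa^{-2}
\]
outside an event of probability $O(\exp(-cn))$, which is the statement.

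The main obstacle is the probability exponent. Hoeffding's $U$-statistic inequality only gives $\exp(-\lfloor n/3\rfloor q^2/2)$, so reaching exactly $\exp(-n)$ requires either treating the exponent constant as absorbed or accepting $\exp(-cn)$. I would state the result with $\exp(-cn)$ and note that this is what the downstream Lemma \ref{lem:lower_bound_b} actually uses, since it only needs $n^{-10}$.
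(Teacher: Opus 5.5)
Your argument is correct, but it is organized differently from the paper's. The paper works at the level of individuals. Writing $\bA_\star\bA_\star^\top=\bU\bLambda\bU^\top$, it uses a norm-concentration tail to make each $\bx_i$ ``good'' (meaning $\|\bLambda^{1/2}\bU^\top\bx_i\|\le (C_4+\alpha)\|\bLambda^{1/2}\|_{\textnormal{F}}$) with probability at least $(1-C_2\kappa^{-2})^{1/5}$. It then applies ordinary Hoeffding to these independent indicators to get a $(1-C_2\kappa^{-2})^{1/4}$ fraction of good individuals, and counts that roughly a $(1-C_2\kappa^{-2})^{3/4}$ fraction of triplets consist of three good individuals. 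Finally it passes to $S$ by Hoeffding over the sampling, and on such triplets bounds the six terms of $\Tr(\bM_t\bA_\star\bA_\star^\top)$ deterministically by Cauchy--Schwarz. You instead bound the single-triplet failure probability directly (Chebyshev via Lemma \ref{lem:expectation_tr2_upper_astar}) and concentrate the degree-three $U$-statistic of the indicator kernel, mirroring Lemma \ref{lem:lower_bound_a}. Your route reuses an already-proved second-moment bound. It also skips the fractional-power bookkeeping and the good-triplet count, where the cube fraction is only right up to $O(1/n)$. The paper's route needs only concentration of independent Bernoulli variables rather than $U$-statistics. It also yields a reusable structural fact: one large set of individuals on which every triplet satisfies the bound. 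On the exponent, your caveat applies equally to the paper, whose Hoeffding step with constant gap $C_6$ also gives only $\exp(-cn)$. If you want $\exp(-n)$ literally, enlarge $C_1$ so that $\epsilon:=\Pr[g=0]$ is tiny and use the relative-entropy form of Hoeffding's bound, which \citet{hoeffding} extends to $U$-statistics with $n$ replaced by $\lfloor n/3\rfloor$. This gives failure probability $\exp(-\lfloor n/3\rfloor\,\mathrm{KL}(\delta\,\|\,\epsilon))$ with $\delta\asymp C_2\kappa^{-2}$, and this relative entropy tends to infinity as $\epsilon\to 0$.
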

\begin{proof}
Let $\bU\bLambda\bU^\top$ be the singular value decomposition of $\bA_\star\bA_\star^\top$. First, we'll show the following:
\[\Pr\left[\frac{1}{n}\sum_{i\in[n]}\mathbb I\left\{\left|\left\|\bLambda^{1/2}\bU^\top\bx_i\right\|-\|\bLambda^{1/2}\|_\textnormal{F}\right|\ge C_3\|\bLambda^{1/2}\|_\textnormal{F}\right\}\ge\left(1-C_2\kappa^{-2}r^{-1}\right)^{1/3}\right]\lesssim\exp(-n)\] for some constant $C_3$.

By \citet[Exercise 6.10]{vershynin2025highNEW}, we have

\[\Pr\left[\|\bLambda^{1/2}\bU^\top\bx\|\ge (C_4+q)\|\bLambda^{1/2}\|_{\textnormal{F}}\right]\le C_5\exp(-q^2).\] 

Setting \begin{equation}
    q=\left(-\log\left(\frac{1}{C_5}\left(1-\left(1-C_2\kappa^{-2}\right)^{1/5}\right)\right)\right)^{1/2}=:\alpha,\label{eqn:alpha_def}
\end{equation} we get
\[\Pr\left[\|\bLambda^{1/2}\bU^\top\bx\|\ge (C_4+\alpha)\|\bLambda^{1/2}\|_{\textnormal{F}}\right]\le1-\left(1-C_{2}\kappa^{-2}\right)^{1/5}).\]

Therefore
\[u_i:=\mathbb I\left\{\left\|\bLambda^{1/2}\bU^\top\bx_i\right\|\le (C_4+\alpha)\|\bLambda^{1/2}\|_\textnormal{F}\right\}\] is a Bernoulli random variable with \[\Pr[u_i=1]\ge\left(1-C_{2}\kappa^{-2}\right)^{1/5},\] and $\{u_i\}_i$ are independent. 

Since $1-C_2\kappa^{-2}$ is $\Theta(1)$ and is between $0$ and $1$, we have $\left|\left(1-C_2\kappa^{-2}\right)^{1/4}-\left(1-C_2\kappa^{-2}\right)^{1/5}\right|\ge C_6$ for some constant $C_6$. Therefore 
\[\Pr\left[\frac{1}{n}\sum_{i\in[n]}u_i\ge\left(1-C_2\kappa^{-2}\right)^{1/4}\right]\lesssim \exp(-n)\] by Hoeffding's inequality.

Since 
\[\left\|\bLambda^{1/2}\bU^\top\bx_i\right\|\le (C_4+\alpha)\|\bLambda^{1/2}\|_\textnormal{F}\] holds for a $\left(1-C_2\kappa^{-2}\right)^{1/4}$ fraction of the $\bx_i$'s, it also holds for all the $\bx$'s in 
a $\left(1-C_2\kappa^{-2}\right)^{3/4}$ fraction
of all possible triplets. 

Since $\left|\left(1-C_2\kappa^{-2}\right)^{3/4}-\left(1-C_2\kappa^{-2}\right)\right|\ge C_7$ for a constant $C_7$, the probability that the bound holds for all the $\bx$'s in a $1-C_2\kappa^{-2}$ fraction
of the triplets in $S$ is at least $1-O(\exp(-n))$ by Hoeffding's inequality.

Now we return to bounding
\[\Pr\left[\frac{1}{|S|}\sum_{t\in S}\mathbb I\{|\Tr(\bM_t\bA_\star\bA_\star^\top)|\le C_1\|\bA_\star\|_\textnormal{F}^2\}\le1-C_2\kappa^{-2}\right].\]
We can expand
\begin{align*}
    &|\Tr(\bM_t\bA_\star\bA_\star^\top)|\\
    &=|\bx_i^\top\bA_\star\bA_\star^\top\bx_k+\bx_k^\top\bA_\star\bA_\star^\top\bx_i-\bx_i^\top\bA_\star\bA_\star^\top\bx_j-\bx_j^\top\bA_\star\bA_\star^\top\bx_i+\bx_j^\top\bA_\star\bA_\star^\top\bx_j-\bx_k^\top\bA_\star\bA_\star^\top\bx_k|\\
    &\le2|\bx_i^\top\bA_\star\bA_\star^\top\bx_k|+2|\bx_i^\top\bA_\star\bA_\star^\top\bx_j|+|\bx_j^\top\bA_\star\bA_\star^\top\bx_j|+|\bx_k^\top\bA_\star\bA_\star^\top\bx_k|.
\end{align*}
Suppose we are in a triplet such that
\[\left\|\bLambda^{1/2}\bU^\top\bx_a\right\|\le (C_4+\alpha)\|\bLambda^{1/2}\|_\textnormal{F}\]
for all $a\in\{i,j,k\}$. Then, by Cauchy-Schwarz,
\begin{align*}
    |\bx_a^\top\bA_\star\bA_\star^\top\bx_b|
    &\le\|\bLambda^{1/2}\bU^\top\bx_a\|\cdot\|\bLambda^{1/2}\bU^\top\bx_b\|\\
    &\le (C_4+\alpha)^2\|\bLambda^{1/2}\|_\textnormal{F}^2\\
    &\le (C_4+\alpha)^2\|\bLambda\|_\textnormal{F}\\
    &=(C_4+\alpha)^2\|\bA_\star\bA_\star^\top\|_\textnormal{F}\\
    &\lesssim \|\bA_\star\bA_\star^\top\|_\textnormal{F}=\|\bA_\star\|_\textnormal{F}^2.
\end{align*}

Therefore, we conclude that 
\begin{equation*}
    \Pr\left[\frac{1}{|S|}\sum_{t\in S}\mathbb I\{|\Tr(\bM_t\bA_\star\bA_\star^\top)|\le C_1\|\bA_\star\|_\textnormal{F}^2\}\le1-C_2\kappa^{-2}\right]\lesssim\exp(-n)
\end{equation*}
for some constant $C_1$.
\end{proof}

\begin{lem}\label{lem:sbar_to_s_indicator}
    Let $\oS=\{t=(i,j,k)\mid i\neq j\neq k\in[n]\}$ and $S\subset\oS$ be selected by randomly including each triplet $t$ independently with probability $s$. Let $g(t)$ be a function of $t$ taking values in $[0,1]$. Then
    \begin{align}\nonumber
        \Pr\left[\left|\frac{1}{|S|}\sum_{t\in S}g(t)-\frac{1}{|\oS|}\sum_{t\in\oS}g(t)\right|\ge\frac{1}{n}\right]\lesssim \exp(-n)
    \end{align}
\end{lem}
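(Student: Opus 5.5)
I will follow the decomposition already used in Lemma \ref{lem:alpha_1} and Lemma \ref{lem:sbar_to_s_Astar}. Write $b_t=\mathbb I\{t\in S\}$, so that $b_t\sim\text{Bernoulli}(s)$ independently. Set $G:=\sum_{t\in\oS}g(t)$ and $G_S:=\sum_{t\in\oS}b_tg(t)$. Then
\[
\frac{G_S}{|S|}-\frac{G}{|\oS|}=\frac{G_S-sG}{|S|}+G\left(\frac{s}{|S|}-\frac{1}{|\oS|}\right).
\]
It therefore suffices to show that each of the two terms is at most $\frac{1}{2n}$ with probability $1-O(\exp(-n))$, and then take a union bound.

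\emph{First term.} The quantity $G_S-sG=\sum_{t\in\oS}(b_t-s)g(t)$ is a sum of $|\oS|=n(n-1)(n-2)$ independent, mean-zero terms, each bounded in $[-1,1]$ because $g\in[0,1]$. Hoeffding's inequality gives
\[
\Pr\big[|G_S-sG|\ge u\big]\le 2\exp(-2u^2/|\oS|).
\]
I take $u=\frac{s|\oS|}{4n}$. The exponent is then of order $s^2|\oS|/n^2\asymp n$, so this event has probability $O(\exp(-cn))$. Separately, applying Hoeffding to $|S|=\sum_t b_t$ shows that $|S|\ge s|\oS|/2$ with probability $1-O(\exp(-n^3))$. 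On the intersection of these two events, $|G_S-sG|/|S|\le \frac{1}{2n}$.

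\emph{Second term.} Since $0\le G\le|\oS|$, we have
\[
\left|G\left(\frac{s}{|S|}-\frac{1}{|\oS|}\right)\right|\le \frac{\big|s|\oS|-|S|\big|}{|S|}.
\]
The same Hoeffding bound for $|S|$, used with deviation $s|\oS|/(4n)$, shows this is at most $\frac{1}{2n}$ except with probability $O(\exp(-c s^2|\oS|/n^2))=O(\exp(-cn))$.

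The only delicate point is the constant in the exponent. Hoeffding yields $\exp(-c\,s^2 n)$ with $c$ depending on the fixed sampling rate $s$, rather than literally $\exp(-n)$. I would absorb this dependence into the $\lesssim$, as the paper does elsewhere with $s$ treated as a constant. If needed, one can instead use the slack in the deviation: the relevant variances are of order $n^3$ while the tolerated deviation is of order $n^2$, which gives an exponent of order $n$ with room to spare, so any fixed constant in front of $n$ can be arranged. Note also that $g$ is allowed to depend on the $\bx_i$. The argument conditions on $\bX$, which is legitimate because the $b_t$ are independent of $\bX$, and the resulting bound is uniform, so it carries over unconditionally.
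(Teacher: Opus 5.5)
Your argument is correct and uses the same split as the paper: $\eta_1=\frac{1}{|S|}\sum_{t\in\oS}(b_t-s)g(t)$ plus $\eta_2=\bigl(\frac{s}{|S|}-\frac{1}{|\oS|}\bigr)\sum_{t\in\oS}g(t)$. You control $\eta_2$ by concentration of $|S|$, just as the paper does. The real difference is in $\eta_1$. The paper bounds $|\eta_1|\le\frac{1}{|S|}\bigl(\max_t g(t)\bigr)\bigl|s|\oS|-\sum_t b_t\bigr|$ and then applies a Chernoff bound to $|S|$. That inequality is false, because the summands $(b_t-s)g(t)$ have mixed signs. For example, take $g=\mathbb{I}_A$ with $|A|=|\oS|/2$: the sum $|S\cap A|-s|A|$ is typically of order $n^{3/2}$ even when $|S|=s|\oS|$, where the paper's right-hand side is zero. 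Your direct Hoeffding bound on the weighted sum of independent, centered, range-$1$ terms is the correct way to do this step. It gives the tail the paper wanted, so your write-up actually repairs the paper's proof rather than merely rephrasing it.

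One claim in your last paragraph should be dropped: that slack in the deviation lets you arrange any fixed constant in front of $n$ in the exponent. The threshold $1/n$ is fixed by the statement. In the same example the difference equals $\frac{|S\cap A|-|S\setminus A|}{2|S|}$. By a moderate-deviation estimate, its probability of exceeding $1/n$ is $\exp\bigl(-(1+o(1))\frac{2s}{1-s}n\bigr)$, which is far larger than $\exp(-n)$ when $s<1/3$. So only the form $\exp(-c_s n)$ is provable; Hoeffding gives $c_s\asymp s^2$ and Bernstein gives $c_s\asymp s$. That is your first reading of the $\lesssim$. The paper's own argument likewise yields only about $\exp(-sn)$, with threshold $\frac{2}{sn}+\frac1n$ rather than $\frac1n$, and that weaker form suffices where the lemma is used.
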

\begin{proof}
For a triplet $t$, let $b_t$ be an indicator random variable that is 1 if $t\in S$. By assumption, $b_t\sim\text{Bernoulli}(s)$, independent of all other $b_{t'}$. Then we can write
\begin{align*}
    \frac{1}{|S|}\sum_{t\in S}g(t)
    &=\frac{1}{|S|}\sum_{t\in\oS}b_tg(t)\\
    &=\underbrace{\left(\frac{1}{|S|}\sum_{t\in\oS}(b_t-s)g(t)\right)}_{\eta_1}+\underbrace{\left(\frac{s}{|S|}\sum_{t\in\oS}g(t)-\frac{1}{|\oS|}\sum_{t\in\oS}g(t)\right)}_{\eta_2}+\frac{1}{|\oS|}\sum_{t\in\oS}g(t).
\end{align*}
Therefore,
\[\left|\frac{1}{|S|}\sum_{t\in S}g(t)-\frac{1}{|\oS|}\sum_{t\in \oS}g(t)\right|\le|\eta_1|+|\eta_2|.\]
To bound $\eta_1$, we can write
\begin{align*}
    |\eta_1|&\le\frac{1}{|S|}\left(\max_{t\in\oS}g(t)\right)\left|s|\oS|-\sum_{t\in\oS}b_t\right|\\
    &\le \frac{1}{|S|}\left|s|\oS|-\sum_{t\in\oS}b_t\right|
\end{align*}
    
Since $|S|$ follows a binomial distribution with mean $s|\oS|$, we have
\begin{equation}
    \Pr\left[\Big||S|-s|\oS|\Big|\ge q_1s|\oS|\right]\lesssim \exp(-q_1^2s|\oS|)=\exp(-q_1^2sn^3).\label{eqn:lemma_6_binomial_indicator}
\end{equation}
Therefore, with probability at least $1-O(\exp(-q_1^2n^3))$ we have
\begin{align}
    \left|\frac{1}{|S|}-\frac{1}{s|\oS|}\right|&=\left|\frac{s|\oS|-|S|}{|S|s|\oS|}\right|\le\frac{q_1}{|S|}.\label{eqn:concentration_alpha2_sizeS_indicator}
\end{align}

Using a Chernoff bound, we know
\begin{equation}\label{eqn:concentration_alpha2_chernoff_indicator}
    \Pr\left[\left|s|\oS|-\sum_{t\in\oS}b_t\right|\ge q_2 |\oS|\right]\lesssim \exp\left(-q_2^2|\oS|\right)=\exp(-q_2^2n^3).
\end{equation}

Using the union bound on the bounds (\ref{eqn:concentration_alpha2_sizeS_indicator}) and (\ref{eqn:concentration_alpha2_chernoff_indicator}), we have that with probability at least $1-O(\exp(-q_1^2n^3)+\exp(-q_2^2n^3))$
\begin{align*}
    \frac{1}{|S|}\left|s|\oS|-\sum_{t\in\oS}b_t\right|&\le \frac{1}{s|\oS|}\left|s|\oS|-\sum_{t\in\oS}b_t\right|+\left|\frac{1}{|S|}-\frac{1}{s|\oS|}\right|\left|s|\oS|-\sum_{t\in\oS}b_t\right|\\
    &\le \frac{q_2}{s}+q_1\frac{1}{|S|}\left|s|\oS|-\sum_{t\in\oS}b_t\right|\\
    (1-q_1)\frac{1}{|S|}\left|s|\oS|-\sum_{t\in\oS}b_t\right|&\le \frac{q_2}{s}\\
    \frac{1}{|S|}\left|s|\oS|-\sum_{t\in\oS}b_t\right|&\le \frac{q_2}{(1-q_1)s}\lesssim\frac{q_2}{(1-q_1)},
\end{align*}
so 
\begin{equation}
    \Pr\left[|\eta_1|\ge \frac{q_2}{(1-q_1)s}\right]\lesssim \exp(-q_1^2n^3)+\exp(-q_2^2n^3).
    \label{eqn:lemma_6_eta1_bound_indicator}
\end{equation}
To bound $|\eta_2|$, we have
\begin{align*}
    \eta_2&=\left(\frac{s}{|S|}-\frac{1}{|\oS|}\right)\sum_{t\in\oS}g(t)
    =\left(\frac{s|\oS|}{|S|}-1\right)\frac{1}{|\oS|}\sum_{t\in\oS}g(t).
\end{align*}
By Lemma \ref{lem:size_S_ratio}, we have $\Pr\left[\left|\frac{s|\oS|}{|S|}-1\right|\ge q_3\right]\lesssim \exp(-q_3^2sn^3)$. Finally, we know $\frac{1}{|\oS|}\sum_{t\in\oS}g(t)\le 1$. Therefore, we have
\begin{align}
    \Pr[|\eta_2|\ge q_3]\lesssim \exp(-q_3^2sn^3).\label{eqn:lemma_6_eta2_indicator}
\end{align}
Combining the bounds (\ref{eqn:lemma_6_eta1_bound_indicator}) and (\ref{eqn:lemma_6_eta2_indicator}), we see that, with probability at least
\[1-O(\exp(-q_1^2n^3)+\exp(-q_2^2n^3)+\exp(-q_3^2sn^3)),\]
we have
\begin{align}
    \left|\frac{1}{|S|}\sum_{t\in S}g(t)-\frac{1}{|\oS|}\sum_{t\in\oS}g(t)\right|
    &\le \frac{q_2}{(1-q_1)}+q_3.\label{eqn:lemma_6_bound_indicator}
\end{align}

Setting $q_1=1/2$ and $q_2=q_3=n^{-1}$, (\ref{eqn:lemma_6_bound_indicator}) becomes
\begin{align}\nonumber
    \Pr\left[\left|\frac{1}{|S|}\sum_{t\in S}g(t)-\frac{1}{|\oS|}\sum_{t\in\oS}g(t)\right|\ge\frac{1}{n}\right]\lesssim \exp(-n).
\end{align}
\end{proof}

\begin{lem}\label{lem:size_S_ratio}
    Let $\oS$ be of size $|\oS|\ge n^3/2$ and let $S$ be randomly chosen from $\oS$ by including each element randomly with probability $s$. Then,
    \begin{equation}
        \Pr\left[\left|\frac{s|\oS|}{|S|}-1\right|\le q\right]\lesssim \exp(-q^2sn^3)
    \end{equation}
    for $0\le q\le 1$.
\end{lem}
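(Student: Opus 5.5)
The inequality as displayed has its direction reversed: the event with small probability should be $\left|\frac{s|\oS|}{|S|}-1\right|\ge q$. This is also how the lemma is used in the proof of Lemma \ref{lem:sbar_to_s_indicator}. I will prove that version. The idea is that $|S|=\sum_{t\in\oS}b_t$ is a sum of $|\oS|$ independent $\text{Bernoulli}(s)$ indicators, so $|S|\sim\text{Binomial}(|\oS|,s)$ with mean $\mu:=s|\oS|$. The ratio $\mu/|S|$ is a smooth function of $|S|/\mu$ near $1$, so a deviation of the ratio forces a proportional deviation of $|S|$ from $\mu$. The multiplicative Chernoff bound then controls that deviation.

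\textbf{Step 1 (reduce to a deviation of $|S|$).} Set $\delta:=q/2\in[0,1/2]$. Suppose $\bigl||S|-\mu\bigr|<\delta\mu$, so that $|S|/\mu\in(1-\delta,1+\delta)$. Then $|S|>0$ and
\[
\frac{\mu}{|S|}-1\in\left(-\frac{\delta}{1+\delta},\,\frac{\delta}{1-\delta}\right).
\]
We have $\frac{\delta}{1+\delta}\le\delta\le q$. We also have $\frac{\delta}{1-\delta}=\frac{q/2}{1-q/2}\le q$, which holds exactly when $q\le 1$. This is where the hypothesis $0\le q\le 1$ is used. Consequently the event $\left|\frac{\mu}{|S|}-1\right|\ge q$ is contained in the event $\bigl||S|-\mu\bigr|\ge \frac{q}{2}\mu$. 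The event $|S|=0$ also lies in this second event and is handled the same way.

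\textbf{Step 2 (Chernoff).} The two-sided multiplicative Chernoff bound for a binomial, valid for $\delta\in(0,1)$, gives
\[
\Pr\bigl[\,\bigl||S|-\mu\bigr|\ge\delta\mu\,\bigr]\le 2\exp(-\delta^2\mu/3).
\]
With $\delta=q/2$ and $\mu=s|\oS|\ge sn^3/2$ this is at most $2\exp(-q^2 s n^3/24)$. That is $\lesssim\exp(-q^2sn^3)$ with the absolute constant absorbed into the exponent, in the same loose sense the paper uses $\lesssim$ for exponential tails elsewhere.

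\textbf{Main obstacle.} The only step needing care is the algebra in Step 1. The map $x\mapsto 1/x$ is asymmetric around $1$: its lower side contracts deviations but its upper side expands them. The proof therefore has to pick $\delta$ so that $\delta/(1-\delta)\le q$. The choice $\delta=q/2$ works uniformly on $[0,1]$ and costs only a constant factor in the exponent, so no further difficulty is expected.
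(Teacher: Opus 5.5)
Your proof is correct and follows essentially the same route as the paper: $|S|\sim\text{Binomial}(|\oS|,s)$, a deviation of $s|\oS|/|S|$ from $1$ forces a proportional deviation of $|S|$ from $s|\oS|$, and binomial concentration finishes. The only difference is in that conversion step: the paper first localizes $|S|/(s|\oS|)\in[1/2,3/2]$ (a Chernoff bound at deviation $1/2$), then uses $|1/x-1|\le 2|x-1|$ on that interval and a union bound, whereas your choice $\delta=q/2$ with the exact estimate $\delta/(1-\delta)\le q$ does it in one step. You are also right that the displayed inequality, and the last line of the paper's proof, has its direction reversed; the $\ge q$ version you prove is exactly what Lemma \ref{lem:sbar_to_s_indicator} uses.
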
	
\begin{proof}
Since $|S|$ follows a binomial distribution with mean $s|\oS|$, we have
\begin{equation}
    \Pr\left[\Big||S|-s|\oS|\Big|\ge qs|\oS|\right]\lesssim \exp(-q^2s|\oS|)\lesssim \exp(-q^2sn^3).\label{eqn:binom_size_concentration}
\end{equation}
Therefore,
\begin{align*}
    \Pr\left[\left||S|-s|\oS|\right|\ge s|\oS|/2\right]\lesssim \exp(-sn^3).
\end{align*}
This means that, with probability at least $1-O(\exp(-sn^3))$, we have
$|S|/(s|\oS|)\in\left[\frac{1}{2},\frac{3}{2}\right]$, implying $s|\oS|/|S|\in\left[\frac{2}{3},2\right]$.
If this holds,
\[\left|\frac{s|\oS|}{|S|}-1\right|\le2\left|\frac{|S|}{s|\oS|}-1\right|.\]
Therefore, assuming that $\frac{s|\oS|}{|S|}\in\left[\frac{2}{3},2\right]$, we have
\begin{align*}
    \left|\frac{s|\oS|}{|S|}-1\right|&\le 2\left|\frac{|S|}{s|\oS|}-1\right|=2\left|\frac{|S|-s|\oS|}{s|\oS|}\right|.
\end{align*}
Again applying (\ref{eqn:binom_size_concentration}), we have that 
\[\Pr\left[\left|\frac{|S|-s|\oS|}{s|\oS|}\right|\ge q\right]\lesssim \exp(-q^2sn^3).\].
Taking a union bound over the events $s|\oS|/|S|\in\left[\frac{2}{3},2\right]$ and $\left|\frac{|S|-s|\oS|}{s|\oS|}\right|\ge q$, we conclude that
\begin{equation}
    \Pr\left[\left|\frac{s|\oS|}{|S|}-1\right|\lesssim q\right]\lesssim \exp(-sn^3)+\exp(-q^2sn^3)\lesssim \exp(-q^2sn^3).
\end{equation}
Taking $q:=1/n$ completes the proof.
\end{proof}

\begin{lem}\label{lem:trace_bound_nonsymmetric}
    Let $\bM_t:=\bx_i\bx_k^\top+\bx_k\bx_i^\top-\bx_i\bx_j^\top-\bx_j\bx_i^\top+\bx_j\bx_j^\top-\bx_k\bx_k^\top$ and let $\bA$ be independent from $\bM_t$. Then
    \begin{align*}
    	\Pr[|\Tr(\bM_t\bV\bA^\top)|>q]&\lesssim \exp(-q/\|\bV\bA^\top\|_\textnormal{F}).
    \end{align*}
\end{lem}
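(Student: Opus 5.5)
Set $\boldsymbol B:=\bV\bA^\top$. Since $\bA$ is independent of $\bM_t$, I condition on $\bA$ (and on $\bV$) throughout and treat $\boldsymbol B$ as a fixed deterministic $p\times p$ matrix; the bound will hold uniformly in $\boldsymbol B$, so integrating out $\bA$ costs nothing. Expanding the definition of $\bM_t$,
\[
\Tr(\bM_t\boldsymbol B)=\bx_k^\top\boldsymbol B\bx_i+\bx_i^\top\boldsymbol B\bx_k-\bx_j^\top\boldsymbol B\bx_i-\bx_i^\top\boldsymbol B\bx_j+\bx_j^\top\boldsymbol B\bx_j-\bx_k^\top\boldsymbol B\bx_k ,
\]
so the trace is a signed sum of six bilinear or quadratic forms in the independent vectors $\bx_i,\bx_j,\bx_k$. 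The target is to show $\|\Tr(\bM_t\boldsymbol B)\|_{\psi_1}\lesssim\|\boldsymbol B\|_\textnormal{F}$. The stated tail bound $\Pr[|\Tr(\bM_t\boldsymbol B)|>q]\lesssim\exp(-q/\|\boldsymbol B\|_\textnormal{F})$ then follows from the standard characterization of the $\psi_1$ norm, absorbing the $\psi_1$ constant into $\lesssim$ (after rescaling $q$ by a constant, as is implicit in the notation).

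The four cross terms involve distinct indices. For these, item 4 of Assumption \ref{assumption:distribution}, applied with $\boldsymbol B_1=\boldsymbol B$ or $\boldsymbol B_1=\boldsymbol B^\top$, gives directly $\|\bx_a^\top\boldsymbol B\bx_b\|_{\psi_1}\le C_5\|\boldsymbol B\|_\textnormal{F}$.

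The two diagonal terms $\bx_a^\top\boldsymbol B\bx_a$ are not covered by item 4, which requires independent $\bx_i,\bx_j$. I would handle them through their difference, since only $\bx_j^\top\boldsymbol B\bx_j-\bx_k^\top\boldsymbol B\bx_k$ appears. Hanson--Wright (which needs independent coordinates, or else a subgaussian-vector version) would give $\|\bx^\top\boldsymbol B\bx-\EE\bx^\top\boldsymbol B\bx\|_{\psi_1}\lesssim\|\boldsymbol B\|_\textnormal{F}$. Because $\bx_j$ and $\bx_k$ are identically distributed, the means cancel in the difference. If coordinate independence is unavailable, I would instead use the subgaussian norm bound (item 2): each diagonal term is bounded by $|\bx^\top\boldsymbol B\bx|\le\|\bx\|^2\|\boldsymbol B\|\le\|\bx\|^2\|\boldsymbol B\|_\textnormal{F}$, and $\|\bx\|^2$ is subexponential with $\psi_1$ norm $O(p)=O(1)$ since $p$ is treated as fixed.

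Combining the pieces, the triangle inequality for $\|\cdot\|_{\psi_1}$ over the six terms gives $\|\Tr(\bM_t\boldsymbol B)\|_{\psi_1}\lesssim\|\boldsymbol B\|_\textnormal{F}$, which yields the tail bound as described above.

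I expect the main obstacle to be the diagonal quadratic terms. Assumption \ref{assumption:distribution} only controls bilinear forms across independent copies, so these terms require either a Hanson--Wright-type argument or the cruder route through $\|\bx\|^2$ together with fixed $p$. I would use the latter, since it keeps constants dimension-dependent but still $O(1)$ under the paper's convention.
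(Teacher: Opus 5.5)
Your proof is correct and follows the paper's own route: expand $\Tr(\bM_t\bV\bA^\top)$ into its six bilinear and quadratic terms, bound each in $\psi_1$ norm by $\lesssim\|\bV\bA^\top\|_\textnormal{F}$, combine with the triangle inequality, and convert to a tail bound via the $\psi_1$ characterization. The only difference is in the two diagonal terms $\bx_a^\top\bV\bA^\top\bx_a$: the paper simply cites Assumption~\ref{assumption:distribution} for all six terms, whereas you note that item 4 is stated for independent copies and bound these terms separately using $|\bx^\top\boldsymbol B\bx|\le\|\bx\|^2\|\boldsymbol B\|_\textnormal{F}$ together with $\|\bx\|^2$ having $\psi_1$ norm $O(p)$, which is valid under the paper's fixed-$p$ convention.
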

\begin{proof}

By definition,
\begin{align*}\nonumber
    \Tr(\bM_t\bV\bA^\top)&=\bx_i^\top\bV\bA^\top\bx_k+\bx_k^\top\bV\bA^\top\bx_i-\bx_i^\top\bV\bA^\top\bx_j-\bx_j^\top\bV\bA^\top\bx_i+\bx_j^\top\bV\bA^\top\bx_j-\bx_k^\top\bV\bA^\top\bx_k.\label{eqn:nonsymmetric_trace_six_terms}
\end{align*}
By Assumption \ref{assumption:distribution}, each term is subexponential with subexponential norm $\lesssim\|\bV\bA^\top\|_\textnormal{F}$. Therefore, $\Tr(\bM_t\bV\bA^\top)$ is also subexponential and $\|\Tr(\bM_t\bV\bA^\top)\|_{\psi_1}\lesssim\|\bV\bA^\top\|_\textnormal{F}$. Therefore, by \citet[Proposition 2.7.1]{vershynin2020high},
\begin{align*}
	\Pr\left[\left|\Tr(\bM_t\bV\bA^\top)\right|>q\right]
    &\le \exp(-q/\|\bV\bA^\top\|_\textnormal{F}).
\end{align*}

\end{proof}
\begin{lem}\label{lem:norm}
    Let $T\subset [n]$ have size $\Omega(n)$. Then,
    \begin{align*}
        \Pr\left[\left|\frac{1}{|T|}\sum_{a\in T}\|\bx_a\|-\EE[\|\bx_a\|]\right|\ge q\EE[\|\bx_a\|]\right]\lesssim \exp(-nq^2).
    \end{align*}
\end{lem}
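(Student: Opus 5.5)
The lemma concerns $\|\bx_a\|$ for $a\in T$. These are i.i.d. draws of the scalar $\|\bx\|$ with $\bx\sim\mathcal D$. The plan is to apply a scalar Hoeffding (subgaussian) inequality to their average. By item 3 of Assumption \ref{assumption:distribution}, the centered variable $\|\bx_a\|-\EE[\|\bx_a\|]$ has $\psi_2$ norm at most $C_4$. Since the $\bx_a$ are independent, the centered norms are independent mean-zero subgaussian variables. General Hoeffding (e.g.\ \citet[Theorem 2.6.2]{vershynin2020high}) then gives, for every $u>0$,
\[
\Pr\left[\left|\frac{1}{|T|}\sum_{a\in T}\bigl(\|\bx_a\|-\EE[\|\bx_a\|]\bigr)\right|\ge u\right]\le 2\exp\left(-\frac{c\,|T|\,u^2}{C_4^2}\right).
\]

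Next we set $u=q\,\EE[\|\bx_a\|]$, so that the exponent becomes $c|T|q^2(\EE\|\bx_a\|)^2/C_4^2$. Because $|T|=\Omega(n)$, we have $|T|\ge c'n$. It therefore suffices to show that $\EE[\|\bx_a\|]$ is bounded below by a positive constant; the bound then reads $\lesssim\exp(-nq^2)$, with constants absorbed into $\lesssim$. The fixed dimension $p$ is also treated as a constant here.

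The main obstacle is this lower bound on $\EE\|\bx\|$, since the assumption only directly supplies upper moment bounds. I would obtain it from item 5 (anti-concentration). Take any fixed rank-$r$ symmetric $\bB_2$ with $\|\bB_2\|_{\textnormal F}=1$. The expectation in item 5 is then bounded above by $\lesssim \EE\|\bx\|^2$, because each quadratic form is at most $\|\bB_2\|\,\|\bx_i-\bx_j\|^2$. Hence $\EE\|\bx\|^2\gtrsim C_6>0$.

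To pass from the second moment to the first, note that $\|\bx\|$ is concentrated around its mean: $\EE\|\bx\|^2=(\EE\|\bx\|)^2+\Var(\|\bx\|)$, and item 3 gives $\Var(\|\bx\|)\lesssim C_4^2$. This alone does not force $\EE\|\bx\|$ to be large, so instead I would use Paley--Zygmund together with the bounded fourth moment (item 1). This gives $\Pr[\|\bx\|^2\ge \tfrac12\EE\|\bx\|^2]\ge (\EE\|\bx\|^2)^2/(4\EE\|\bx\|^4)\gtrsim 1$, and therefore $\EE\|\bx\|\ge \sqrt{\EE\|\bx\|^2/2}\cdot\Pr[\cdots]\gtrsim 1$.

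If this route proves awkward, there is a fallback. One can state the bound with deviation $q$ in absolute terms, which follows directly from Hoeffding with no lower bound needed, and then note that every downstream use of the lemma (e.g.\ Case (iii) of Lemma \ref{lem:concentration_tr2_astar}) only needs $\frac1{|T|}\sum\|\bx_a\|\lesssim 1$. For that, the absolute version with $q$ a constant suffices.
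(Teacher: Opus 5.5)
Your proof is correct and follows the paper's route: the paper uses item 3 of Assumption \ref{assumption:distribution} to get a constant $\psi_2$ bound on $\|\bx_a\|-\EE[\|\bx_a\|]$ and then applies general Hoeffding, exactly as in your first paragraph. Your extra argument that $\EE\|\bx\|\gtrsim 1$ is also valid: you use item 5 to lower-bound $\EE\|\bx\|^2$, then Paley--Zygmund with $\EE\|\bx\|^4\le p^2C_2=O(1)$ for fixed $p$, and this supplies a step the paper uses only implicitly when it absorbs $(\EE\|\bx_a\|)^2$ into the exponent.
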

\begin{proof}
    By Assumption \ref{assumption:distribution}, we have $\left\|\|\bx_a\|-\EE[\|\bx_a\|]\right\|_{\psi_2}\lesssim 1$. Then, the lemma follows from \citet[Theorem 2.7.3]{vershynin2025highNEW}.
\end{proof}

\begin{lem}\label{lem:bernstein}
    Let $T\subset [n]$ have size $\Omega(n)$. If $\boldsymbol{B}$ is a matrix, we have
    \begin{align*}
        \Pr\left[\left|\frac{1}{|T|}\sum_{a\in T}\bx_a^\top\boldsymbol{B}\bx_a-\EE[\bx_a^\top\boldsymbol{B}\bx_a]\right|\ge q\|\boldsymbol{B}\|_\textnormal{F}\right]\lesssim \exp(-n\min\{q,q^2\}).
    \end{align*}
\end{lem}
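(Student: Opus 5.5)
The plan is to show that each summand $\xi_a:=\bx_a^\top\boldsymbol B\bx_a-\EE[\bx_a^\top\boldsymbol B\bx_a]$ is a centered sub-exponential random variable with $\|\xi_a\|_{\psi_1}\lesssim\|\boldsymbol B\|_\textnormal{F}$, and then apply Bernstein's inequality for sums of independent sub-exponential variables \citep[Theorem 2.8.1]{vershynin2020high}. The variables $\{\xi_a\}_{a\in T}$ are independent because the $\bx_a$ are i.i.d.\ under Assumption \ref{assumption:distribution}. Bernstein then gives
\[\Pr\left[\left|\frac{1}{|T|}\sum_{a\in T}\xi_a\right|\ge q\|\boldsymbol B\|_\textnormal{F}\right]\le 2\exp\left(-c|T|\min\left\{\frac{q^2\|\boldsymbol B\|_\textnormal{F}^2}{K^2},\frac{q\|\boldsymbol B\|_\textnormal{F}}{K}\right\}\right),\]
where $K=\max_a\|\xi_a\|_{\psi_1}$. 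Once $K\lesssim\|\boldsymbol B\|_\textnormal{F}$ is in hand, the bound becomes $\exp(-c'|T|\min\{q,q^2\})$. Since $|T|=\Omega(n)$, this is $\lesssim\exp(-n\min\{q,q^2\})$ after absorbing constants, which is the claim.

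The key step is the $\psi_1$ bound on the diagonal quadratic form $\bx_a^\top\boldsymbol B\bx_a$. Item 4 of Assumption \ref{assumption:distribution} only controls $\bx_i^\top\boldsymbol B_1\bx_j$ for \emph{independent} $\bx_i,\bx_j$, so I would not rely on it. Instead I would use the Hanson--Wright inequality. The subgaussian norm bound (item 2) gives the tail
\[\Pr[|\bx^\top\boldsymbol B\bx-\EE\bx^\top\boldsymbol B\bx|>u]\le 2\exp\left(-c\min\left\{\frac{u^2}{\|\boldsymbol B\|_\textnormal{F}^2},\frac{u}{\|\boldsymbol B\|}\right\}\right),\]
and since $\|\boldsymbol B\|\le\|\boldsymbol B\|_\textnormal{F}$, this tail is sub-exponential at scale $\|\boldsymbol B\|_\textnormal{F}$. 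That yields $\|\xi_a\|_{\psi_1}\lesssim\|\boldsymbol B\|_\textnormal{F}$, with the centering only costing a constant factor.

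The main obstacle is this Hanson--Wright step. The standard statement requires independent subgaussian coordinates, whereas the assumption only gives a bounded vector subgaussian norm. I would handle it as follows. Because $p$ is treated as fixed, write $\bx^\top\boldsymbol B\bx=\sum_{\ell,m}B_{\ell m}x_\ell x_m$. Each product $x_\ell x_m$ of subgaussian coordinates has $\|x_\ell x_m\|_{\psi_1}\le\|x_\ell\|_{\psi_2}\|x_m\|_{\psi_2}\lesssim1$. The triangle inequality for $\|\cdot\|_{\psi_1}$ together with Cauchy--Schwarz then gives
\[\|\bx^\top\boldsymbol B\bx\|_{\psi_1}\lesssim\sum_{\ell,m}|B_{\ell m}|\le p\|\boldsymbol B\|_\textnormal{F}.\]
This is $\lesssim\|\boldsymbol B\|_\textnormal{F}$ for fixed $p$, consistent with the paper's convention, which suffices without needing independent coordinates. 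If dimension-free dependence were required, I would instead cite a Hanson--Wright variant for subgaussian random vectors, which holds under the convex concentration property.
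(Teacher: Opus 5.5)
Your argument is correct and follows the same route as the paper: a bound $\|\bx_a^\top\boldsymbol B\bx_a-\EE[\bx_a^\top\boldsymbol B\bx_a]\|_{\psi_1}\lesssim\|\boldsymbol B\|_\textnormal{F}$ on each independent summand, then Bernstein's inequality \citep[Theorem 2.8.1]{vershynin2020high}, then $|T|\gtrsim n$. The only difference is where the $\psi_1$ bound comes from. The paper cites Assumption~\ref{assumption:distribution}, in effect item 4 with $i=j$, which, as you correctly note, is stated only for independent pairs. You instead derive it from the subgaussian condition alone via $\|x_\ell x_m\|_{\psi_1}\le\|x_\ell\|_{\psi_2}\|x_m\|_{\psi_2}$, at the cost of a factor $p$ that is harmless under the paper's fixed-$p$ convention.
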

\begin{proof}

     By Assumption \ref{assumption:distribution}, $\|\bx_a^\top\boldsymbol{B}\bx_a\|_{\psi_1}\lesssim\|\boldsymbol{B}\|_\textnormal{F}$. Then, we can apply the Bernstein inequality \citep[Theorem 2.8.1]{vershynin2020high} to get 
    \begin{align*}
        \Pr\left[\left|\frac{1}{|T|}\sum_{a\in T}\bx_a^\top\boldsymbol{B}\bx_a-\EE[\bx_a^\top\boldsymbol{B}\bx_a]\right|\ge q\|\boldsymbol{B}\|_\textnormal{F}\right]\lesssim \exp(-n\min\{q,q^2\}).
    \end{align*}
\end{proof}

\begin{lem}\label{lem:bernstein_diff}
    If $\boldsymbol{B}$ has rank $r$, we have
    \begin{align*}
        \Pr\left[\left|\frac{1}{n(n-1)}\sum_{i\neq j\in[n]}\bx_i^\top\boldsymbol{B}\bx_j\right|\ge\frac{r\|\boldsymbol{B}\|}{n}\right]\lesssim \exp(-n).
    \end{align*}
\end{lem}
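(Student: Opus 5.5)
We prove Lemma \ref{lem:bernstein_diff} by decoupling the off-diagonal sum into a product of empirical means, then controlling that product with a vector concentration bound. Write $\boldsymbol B=\bU_r\bLambda\bW_r^\top$ for a compact SVD, with $\|\bLambda\|=\|\boldsymbol B\|$. The starting identity is
\[
\sum_{i\neq j}\bx_i^\top\boldsymbol B\bx_j=\Big(\sum_i\bx_i\Big)^\top\boldsymbol B\Big(\sum_j\bx_j\Big)-\sum_i\bx_i^\top\boldsymbol B\bx_i .
\]
Dividing by $n(n-1)$, the quantity of interest becomes
\[
\frac{n}{n-1}\,\bar\bx^\top\boldsymbol B\bar\bx-\frac{1}{n-1}\cdot\frac1n\sum_i\bx_i^\top\boldsymbol B\bx_i ,
\qquad \bar\bx:=\frac1n\sum_i\bx_i .
\]
We bound the two terms separately.

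\textbf{First term.} Bound it by $\|\bU_r^\top\bar\bx\|\,\|\boldsymbol B\|\,\|\bW_r^\top\bar\bx\|$. The vectors $\bU_r^\top\bx_i$ are i.i.d., $r$-dimensional, and subgaussian with norm $\lesssim 1$ by Assumption \ref{assumption:distribution}. They must also be mean zero; this is where centering of the data enters, since all downstream uses are with centered features. A subgaussian vector concentration bound (the same one underlying Lemma \ref{lem:sum_of_random_vectors}) then gives
\[
\Pr\Big[\|\bU_r^\top\bar\bx\|\ge C\sqrt{r/n}\Big]\lesssim \exp(-n)
\]
for a suitable constant $C$. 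The same bound holds for $\bW_r^\top\bar\bx$. Consequently $|\bar\bx^\top\boldsymbol B\bar\bx|\lesssim r\|\boldsymbol B\|/n$ with probability $1-O(e^{-n})$, which is exactly the claimed rate.

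\textbf{Second term.} The diagonal average $\frac1n\sum_i\bx_i^\top\boldsymbol B\bx_i$ has mean $\Tr(\boldsymbol B\bSigma)\le r\|\boldsymbol B\|\|\bSigma\|\lesssim r\|\boldsymbol B\|$. Apply Lemma \ref{lem:bernstein} with $q=1$, using $\|\boldsymbol B\|_\textnormal{F}\le\sqrt r\|\boldsymbol B\|$: the deviation from the mean is at most $\sqrt r\|\boldsymbol B\|$ with probability $1-O(e^{-n})$. After the prefactor $1/(n-1)$, this term is therefore $\lesssim r\|\boldsymbol B\|/n$.

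\textbf{Combining.} A union bound gives the statement, with the constant absorbed into the $\lesssim$ convention (the constant $C$ can be folded in by rescaling).

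\textbf{Main obstacle.} The delicate step is the mean-zero requirement on $\bx_i$. If $\EE\bx\neq0$, then $\bar\bx^\top\boldsymbol B\bar\bx$ converges to $\EE\bx^\top\boldsymbol B\EE\bx$, which is not $O(1/n)$, and the lemma fails as stated. I would therefore either invoke the centering performed in the algorithm, or read the statement as applying after subtracting the mean, making that implicit assumption explicit. Everything else is routine.
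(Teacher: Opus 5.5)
Your decomposition is the paper's: write the off-diagonal average as $\frac{n}{n-1}\bar{\bx}^\top\bB\bar{\bx}$ minus $\frac{1}{n-1}$ times the diagonal average, use Cauchy--Schwarz plus a concentration bound for $\bU_r^\top\bar{\bx}$ on the first piece, and Lemma \ref{lem:bernstein} on the second. You are more careful than the written proof about the sign of the diagonal term, its mean $\Tr(\bB\bSigma)$, and the need for $\EE\bx=0$.

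\textbf{The gap.} The step that carries everything is false: there is no constant $C$ with $\Pr[\|\bU_r^\top\bar{\bx}\|\ge C\sqrt{r/n}]\lesssim e^{-n}$.
\begin{itemize}
\item For Gaussian data, $\sqrt n\,\bU_r^\top\bar{\bx}\sim\mathcal N(0,\bU_r^\top\bSigma\bU_r)$ exactly, so this probability is a positive constant independent of $n$.
\item In general, Hoeffding gives $\Pr[|\frac1n\sum_a\langle\bx_a,\bv\rangle|\ge s]\le 2e^{-cns^2}$ for a fixed unit $\bv$. So an $e^{-n}$ tail is available only at scale $s\asymp 1$, not $\sqrt{r/n}$.
\item The proof of Lemma \ref{lem:sum_of_random_vectors}, which you are leaning on, writes $\exp(-q^2n^2)$ at exactly this point. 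The paper's own argument therefore breaks in the same place.
\end{itemize}

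Two smaller points. First, you cannot fold a constant into $\lesssim$: the threshold $r\|\bB\|/n$ carries no constant, and $\lesssim$ acts only on the probability. Second, the centering in Algorithm \ref{alg:init} cannot supply $\EE\bx=0$. That centering is empirical, and this lemma is used (through Lemma \ref{lem:concentration_tr2_astar}) on the raw features in the gradient-descent analysis, so $\EE\bx=0$ has to be assumed outright.

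\textbf{The statement itself is false.} Even with $\EE\bx=0$, no argument can succeed. Take $p=r=1$, $\bB=1$, $x_i\sim\mathcal N(0,1)$ i.i.d.\ (allowed by Lemma \ref{lem:gaussian_satisfies_assumption}), and set $Z:=n^{-1/2}\sum_i x_i\sim\mathcal N(0,1)$. Then
\[
\frac{1}{n(n-1)}\sum_{i\neq j}x_ix_j=\frac{1}{n-1}\Big(Z^2-\frac1n\sum_i x_i^2\Big).
\]
Since $\frac1n\sum_i x_i^2\to1$, the probability that this is at least $1/n$ in absolute value tends to $\Pr[|Z|\ge\sqrt2]\approx0.157$, not $O(e^{-n})$.

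\textbf{What your decomposition does prove.} The vector $\sqrt n\,\bU_r^\top\bar{\bx}$ is mean-zero subgaussian in $\R^r$ with $\psi_2$-norm $\lesssim1$. Hence $\|\bU_r^\top\bar{\bx}\|,\|\bW_r^\top\bar{\bx}\|\lesssim(\sqrt r+\tau)/\sqrt n$ with probability $1-O(e^{-\tau^2})$. Combined with your (correct) diagonal bound, the off-diagonal average is $\lesssim(r+\tau^2)\|\bB\|/n$. Consequently:
\begin{itemize}
\item a $1-O(e^{-n})$ guarantee yields only $O(\|\bB\|)$;
\item taking $\tau^2=n^{1/4}$ gives $O(n^{-3/4}\|\bB\|)$ with probability $1-O(\exp(-n^{1/4}))$.
\end{itemize}
A statement of this second form is what the lemma should assert. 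It is also the kind of bound the downstream target \eqref{eqn:concentration_condition} (scale $n^{-1/4}$ at probability $1-O(\exp(-n^{1/8}))$) can absorb.
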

\begin{proof}
We will rewrite as follows:
\begin{align*}
    \frac{1}{n(n-1)}\sum_{i\neq j\in[n]}\bx_i^\top\boldsymbol{B}\bx_j&=\frac{n}{n-1}\underbrace{\left(\frac{1}{n^2}\sum_{i,j\in[n]}\bx_i^\top\boldsymbol{B}\bx_j\right)}_{\beta_1}+\frac{1}{n-1}\underbrace{\left(\frac{1}{n}\sum_{i\in[n]}\bx_i^\top\boldsymbol{B}\bx_i\right).}_{\beta_2}
\end{align*}
Writing the compact singular value decomposition of $\boldsymbol{B}$ as $\bU_r\bLambda\bW_r^\top$, we can write \[\beta_1=\left(\frac{1}{n}\sum_{i\in[n]}\bx_i^\top\bU_r\right)\bLambda\left(\frac{1}{n}\sum_{i\in[n]}\bx_i^\top\bU_r\right)^\top.\] Applying Cauchy-Schwarz and using Lemma \ref{lem:sum_of_random_vectors} with $q:=\sqrt{q_1}$, this is bounded by $q_1r\|\boldsymbol{B}\|/n$ with probability at least $1-O(\exp(-nq_1))$.

For $\beta_2$, by Lemma \ref{lem:bernstein}, this is bounded by $q_2\|\boldsymbol{B}\|$ with probability at least $1-O(\exp(-n\min\{q_2,q_2^2\}))$.

Therefore we have
\begin{align*}
    \left|\frac{1}{n(n-1)}\sum_{i\neq j\in[n]}\bx_i^\top\boldsymbol{B}\bx_j\right|&\le\frac{n}{n-1}|\beta_1|-\frac{1}{n-1}|\beta_2|\\
    &\le\frac{1}{n-1}q_1r\|\boldsymbol{B}\|-\frac{1}{n-1}q_2\|\boldsymbol{B}\|.
\end{align*}
Setting $q_1=q_2=1/2$, we achieve the claimed bound with probability at least $1-O(\exp(-n))$.
\end{proof}

\begin{lem}\label{lem:exp_xVAx}
    For any matrices $\bV,\bA\in\R^{p\times r}$,
    \(\EE[\bx^\top\bV\bA^\top\bx]\le\|\bV\bA^\top\|.\)
\end{lem}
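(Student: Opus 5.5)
The plan is to reduce the expectation to a trace and then control that trace with the second-moment bound in Assumption \ref{assumption:distribution}. Write $\boldsymbol B:=\bV\bA^\top\in\R^{p\times p}$, which has rank at most $r$. Since $\bx^\top\boldsymbol B\bx$ is a scalar, it equals its own trace, so by cyclicity and linearity of expectation
\[
\EE[\bx^\top\boldsymbol B\bx]=\EE[\Tr(\boldsymbol B\bx\bx^\top)]=\Tr(\boldsymbol B\,\EE[\bx\bx^\top])=\Tr(\boldsymbol B\bSigma).
\]
This reduction is exact and requires only that the second moment exist, which Assumption \ref{assumption:distribution} guarantees.

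Next I would bound $\Tr(\boldsymbol B\bSigma)$ with von Neumann's trace inequality. Since $\bSigma$ is positive semidefinite, this gives $|\Tr(\boldsymbol B\bSigma)|\le\sum_{\ell}\sigma_\ell(\boldsymbol B)\sigma_\ell(\bSigma)\le\|\bSigma\|\sum_{\ell\le r}\sigma_\ell(\boldsymbol B)$. Two facts then combine. First, $\|\bSigma\|\le C_1$ by the moment assumption. Second, because $\boldsymbol B$ has rank at most $r$, its nuclear norm satisfies $\sum_{\ell\le r}\sigma_\ell(\boldsymbol B)\le r\|\boldsymbol B\|$. Together these yield $\EE[\bx^\top\bV\bA^\top\bx]\le C_1 r\|\bV\bA^\top\|$.

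The final step is to read this in the form the lemma states. Throughout the paper $r=O(1)$ and $C_1$ is a fixed distributional constant, so $C_1 r$ is an absolute constant of exactly the kind the paper suppresses. The lemma is used only in Case (x) of Lemma \ref{lem:concentration_tr2_astar}, where it is invoked as $\EE[\bx_a^\top\bV\bA_\star^\top\bx_a]\lesssim\|\bV\bA_\star^\top\|$. I would therefore present the inequality as stated, with the factor $C_1 r$ absorbed under the convention fixed in Section \ref{sec:set_up}. If a literal constant of one is wanted, it follows under the normalization $\|\bSigma\|\le 1/r$, which amounts to rescaling the features and changes none of the downstream results.

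The only step needing care is this last one: the bound is sharp up to the factor $C_1 r$. For example, $\boldsymbol B=\bI_r\oplus 0$ with isotropic $\bx$ gives $\EE[\bx^\top\boldsymbol B\bx]=r$ while $\|\boldsymbol B\|=1$. So a literal constant of one cannot follow from the moment bound alone, and the proof must rely explicitly on the constant-absorption convention or on the normalization just described, rather than on some sharper estimate.
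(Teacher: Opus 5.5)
Your proposal is correct and follows the paper's proof. Both reduce $\EE[\bx^\top\bV\bA^\top\bx]$ to $\Tr(\bV\bA^\top\bSigma)$ via cyclicity and linearity of expectation, then bound this by $\|\bV\bA^\top\|$ up to a constant; the paper itself concludes only with $\lesssim$, which matches your reading. Your von Neumann/nuclear-norm step, with its explicit factor $C_1 r$, spells out the justification the paper leaves implicit, and your example correctly shows that a literal constant of one cannot hold in general.
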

\begin{proof}
    \begin{align*}
        \EE[\bx^\top\bV\bA^\top\bx]&=\EE[\Tr(\bx^\top\bV\bA^\top\bx)]\\
        &=\EE[\Tr(\bV\bA^\top\bx\bx^\top)]\\
        &=\Tr(\bV\bA^\top\EE[\bx\bx^\top])\\
        &=\Tr(\bV\bA^\top\bSigma)\\
        &\lesssim\|\bV\bA^\top\|.
    \end{align*}
\end{proof}

\begin{lem}\label{lem:covariance_estimation}
    Let $\bU_r,\bW_r$ be $p\times r$ semi-orthogonal matrices and let $T\subset[n]$ be of size $|T|\gtrsim n$. Then, \[\left\|\bW_r^\top\left(\frac{1}{|T|}\sum_{b\in T}\bx_b\bx_b^\top-\bSigma\right)\bU_r\right\|\lesssim n^{-1/2}\] with probability at least $1-O(\exp(-n^{1/2}))$.
\end{lem}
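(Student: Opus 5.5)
The plan is to reduce the statement to a uniform bound over a finite net of bilinear forms, control each form with Bernstein's inequality for sub-exponential variables, and then take a union bound.

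\textbf{Step 1 (reduction to a fixed dimension).} Set $\boldsymbol{E}:=\frac{1}{|T|}\sum_{b\in T}\bx_b\bx_b^\top-\bSigma$. The matrix $\bW_r^\top\boldsymbol{E}\bU_r$ is $r\times r$, so
\[\|\bW_r^\top\boldsymbol{E}\bU_r\|=\sup_{\bu,\bv\in S^{r-1}}\bu^\top\bW_r^\top\boldsymbol{E}\bU_r\bv .\]
Take $1/4$-nets $\mathcal N_1,\mathcal N_2$ of $S^{r-1}$, each of size at most $9^r$. The standard net argument gives
\[\|\bW_r^\top\boldsymbol{E}\bU_r\|\le 2\max_{\bu\in\mathcal N_1,\bv\in\mathcal N_2}\bu^\top\bW_r^\top\boldsymbol{E}\bU_r\bv .\]
Since $r=O(1)$, this is a maximum over a constant number of scalar averages.

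\textbf{Step 2 (one fixed pair).} Fix $\bu,\bv$ and put $\boldsymbol a:=\bW_r\bu$ and $\boldsymbol c:=\bU_r\bv$. These are unit vectors because $\bW_r,\bU_r$ are semi-orthogonal. The scalar of interest is
\[\frac{1}{|T|}\sum_{b\in T}\Big(\langle\bx_b,\boldsymbol a\rangle\langle\bx_b,\boldsymbol c\rangle-\boldsymbol a^\top\bSigma\boldsymbol c\Big).\]
By the subgaussian-norm bound in Assumption \ref{assumption:distribution}, each $\langle\bx_b,\boldsymbol a\rangle$ and $\langle\bx_b,\boldsymbol c\rangle$ has $\psi_2$-norm at most $C_3$. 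Their product therefore has $\psi_1$-norm $\lesssim 1$, and centering preserves this up to a constant. One could also invoke the quadratic-form item of the assumption with $\boldsymbol B_1=\boldsymbol c\boldsymbol a^\top$, which has $\|\boldsymbol B_1\|_{\textnormal F}=1$. Bernstein's inequality \citep[Theorem 2.8.1]{vershynin2020high}, as used in Lemma \ref{lem:bernstein}, then gives, for each fixed pair,
\[\Pr\big[|\cdot|\ge t\big]\le 2\exp\big(-c|T|\min\{t,t^2\}\big).\]
Here $|T|\gtrsim n$.

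\textbf{Step 3 (union bound and choice of $t$).} A union bound over the at most $81^r=O(1)$ pairs gives
\[\|\bW_r^\top\boldsymbol{E}\bU_r\|\le 2t\quad\text{with probability at least } 1-2\cdot 81^r\exp(-c'n\min\{t,t^2\}).\]
Writing $t=Kn^{-1/2}$ with $K\le\sqrt n$, the failure probability is $O(e^{-c'K^2})$.

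\textbf{Main obstacle.} The difficulty is matching the rate $n^{-1/2}$ with the failure probability $\exp(-n^{1/2})$ at the same time. A failure probability of $\exp(-n^{1/2})$ requires $K^2\gtrsim n^{1/2}$, that is, $K\asymp n^{1/4}$, which yields a deviation of order $n^{-1/4}$. Conversely, a deviation of order $n^{-1/2}$ with an absolute constant $K$ only yields failure probability $e^{-cK^2}$, which is constant rather than $\exp(-n^{1/2})$.

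Under sub-exponential tails alone, Bernstein is sharp here, since for non-degenerate $\bx$ the centered average is asymptotically Gaussian at scale $n^{-1/2}$. So the literal statement, with $\lesssim$ hiding an absolute constant, cannot follow from Steps 1--3, and I do not expect any sharper concentration inequality to close the gap. The proof I would write establishes two correct versions and identifies the statement with whichever one the paper's later uses need.

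The first version is the rate $\lesssim n^{-1/2}$ holding with probability $1-O(e^{-cK^2})$ for any fixed $K$, with the hidden constant depending on $K$. The second version takes $K=n^{1/4}$ and gives the deviation bound $\lesssim n^{-1/4}$ with probability $1-O(\exp(-n^{1/2}))$.

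The uses of this lemma in Lemma \ref{lem:concentration_tr2_astar} tolerate a deviation of order $n^{-1/4}$, since that lemma's target error is $n^{-1/4}$. I would therefore flag the first version as the reading of "$\lesssim n^{-1/2}$" and carry the second version downstream.
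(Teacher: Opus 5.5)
Your diagnosis is correct, and it identifies the actual flaw in the paper's own argument. The paper uses the same skeleton you do: a net on $\mathbb S^{r-1}$, a bound for each fixed pair, and a union bound over the net. The difference is that, for a fixed pair, it bounds the bilinear form by the full operator norm $\|\frac{1}{|T|}\sum_{b\in T}\bx_b\bx_b^\top-\bSigma\|$. It then cites \citet[Exercise 4.7.3]{vershynin2020high} to claim this is at most $C_2n^{-1/2}$ with probability $1-O(\exp(-n^{1/2}))$. That exercise actually gives a deviation $\lesssim\sqrt{(p+u)/|T|}+(p+u)/|T|$ with probability $1-2e^{-u}$. Taking $u=n^{1/2}$ therefore yields $n^{-1/4}$, which is exactly the rate/probability tradeoff you found. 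The statement as written is in fact false under Assumption~\ref{assumption:distribution}. Take $\bx_b\sim\mathcal N(0,\bI_p)$ (admissible by Lemma~\ref{lem:gaussian_satisfies_assumption}), $r=1$ and $\bU_r=\bW_r=\boldsymbol e_1$. The quantity is then $|\frac{1}{|T|}\sum_{b\in T}(x_{b,1}^2-1)|$, which by the central limit theorem exceeds $Cn^{-1/2}$ with probability bounded away from zero for every fixed $C$.

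Two smaller points favor your version.
\begin{enumerate}
\item Your net step controls the perturbation terms by the compressed norm $\|\bW_r^\top\boldsymbol E\bU_r\|$ itself. The paper's net step instead passes through the identity $\|\boldsymbol E\|=\|\bW_r^\top\boldsymbol E\bU_r\|$, which fails in general (only $\ge$ holds).
\item Your per-pair scalar Bernstein bound depends only on $r$. Routing through the full $p\times p$ covariance error picks up a $\sqrt{p/|T|}$ term. That is harmless for fixed $p$, but not in the growing-$p$ regime mentioned in the footnote.
\end{enumerate}
Carrying the $n^{-1/4}$ version, with probability $1-O(\exp(-n^{1/2}))$, into Lemma~\ref{lem:concentration_tr2_astar} is consistent with how the lemma is used there. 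The target rate in that lemma is $n^{-1/4}$, and its union bounds over $O(n^3)$ index tuples still leave failure probability $O(\exp(-cn^{1/2}))$.
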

\begin{proof}
    Define $\mathbb S^{r-1}:=\{\bv\in\R^r\mid\|\bv\|=1\}$. Then,
    \begin{align*}
        \left\|\bW^\top_r\left(\frac{1}{|T|}\sum_{b\in T}\bx_b\bx_b^\top-\bSigma\right)\bU_r\right\|&=\sup_{\bv_1,\bv_2\in\mathbb S^{r-1}}(\bW_r\bv_1)^\top\left(\frac{1}{|T|}\sum_{b\in T}\bx_b\bx_b^\top-\bSigma\right)(\bU_r\bv_2).
    \end{align*}
    We will discretize all possible $\bv_1,\bv_2$ in an appropriate way. Let $\mathcal{E}_{1/3}$ be a $1/3$-net for $\mathbb S^{r-1}$. We can take this net such that $|\mathcal{E}_{1/3}|\le C_1^r$ for some constant $C_1$. 
    Since $\mathbb S^{r-1}$ is compact, we can define the vectors $\bv_1^\star,\bv_2^\star$ where the supremum
    \begin{align*}
        \sup_{\bv_1,\bv_2\in\mathbb S^{r-1}}(\bW_r\bv_1)^\top\left(\frac{1}{|T|}\sum_{b\in T}\bx_b\bx_b^\top-\bSigma\right)(\bU_r\bv_2)
    \end{align*}
    is attained. By the definition of a $1/3$-net, there exist points $\hat\bv_1,\hat\bv_2\in\mathcal E_{1/3}$ such that $\|\hat\bv_1-\bv_1^\star\|\le 1/3$ and $\|\hat\bv_2-\bv_2^\star\|\le 1/3$. 
    Then, 
    \begin{align*}
        &\left\|\bW^\top_r\left(\frac{1}{|T|}\sum_{b\in T}\bx_b\bx_b^\top-\bSigma\right)\bU_r\right\|\\
        &=\sup_{\bv_1,\bv_2\in\mathbb S^{r-1}}(\bW_r\bv_1)^\top\left(\frac{1}{|T|}\sum_{b\in T}\bx_b\bx_b^\top-\bSigma\right)(\bU_r\bv_2)\\
        &=(\bW_r\bv_1^\star)^\top\left(\frac{1}{|T|}\sum_{b\in T}\bx_b\bx_b^\top-\bSigma\right)(\bU_r\bv_2^\star)\\
        &=(\bW_r\hat\bv_1)^\top\left(\frac{1}{|T|}\sum_{b\in T}\bx_b\bx_b^\top-\bSigma\right)(\bU_r\hat\bv_2)\\
        &\hspace{10mm}+(\bW_r(\bv_1^\star-\hat\bv_1))^\top\left(\frac{1}{|T|}\sum_{b\in T}\bx_b\bx_b^\top-\bSigma\right)(\bU_r\hat\bv_2)+(\bW_r\bv_1^\star)^\top\left(\frac{1}{|T|}\sum_{b\in T}\bx_b\bx_b^\top-\bSigma\right)(\bU_r(\bv_2^\star-\hat\bv_2))\\
        &\le(\bW_r\hat\bv_1)^\top\left(\frac{1}{|T|}\sum_{b\in T}\bx_b\bx_b^\top-\bSigma\right)(\bU_r\hat\bv_2)+\frac{1}{3}\left\|\frac{1}{|T|}\sum_{b\in T}\bx_b\bx_b^\top-\bSigma\right\|+\frac{1}{3}\left\|\frac{1}{|T|}\sum_{b\in T}\bx_b\bx_b^\top-\bSigma\right\|.
    \end{align*}
    Since $\left\|\frac{1}{|T|}\sum_{b\in T}\bx_b\bx_b^\top-\bSigma\right\|=\left\|\bW^\top_r\left(\frac{1}{|T|}\sum_{b\in T}\bx_b\bx_b^\top-\bSigma\right)\bU_r\right\|$, this implies
    \begin{align*}
        \frac{1}{3}\left\|\bW^\top_r\left(\frac{1}{|T|}\sum_{b\in T}\bx_b\bx_b^\top-\bSigma\right)\bU_r\right\|&\le(\bW_r\hat\bv_1)^\top\left(\frac{1}{|T|}\sum_{b\in T}\bx_b\bx_b^\top-\bSigma\right)(\bU_r\hat\bv_2)\\
        &\le\max_{\bv_1,\bv_2\in\mathcal E_{1/3}}(\bW_r\bv_1)^\top\left(\frac{1}{|T|}\sum_{b\in T}\bx_b\bx_b^\top-\bSigma\right)(\bU_r\bv_2)\\
        \left\|\bW^\top_r\left(\frac{1}{|T|}\sum_{b\in T}\bx_b\bx_b^\top-\bSigma\right)\bU_r\right\|&=3\max_{\bv_1,\bv_2\in\mathcal E_{1/3}}(\bW_r\bv_1)^\top\left(\frac{1}{|T|}\sum_{b\in T}\bx_b\bx_b^\top-\bSigma\right)(\bU_r\bv_2).
    \end{align*}
    For fixed $\bv_1,\bv_2$, we have
    \begin{align*}
        (\bW_r\bv_1)^\top\left(\frac{1}{|T|}\sum_{b\in T}\bx_b\bx_b^\top-\bSigma\right)(\bU_r\bv_2)&\le\left\|\frac{1}{|T|}\sum_{b\in T}\bx_b\bx_b^\top-\bSigma\right\|.
    \end{align*}
    By \citet[Exercise 4.7.3]{vershynin2020high} with probability at least $1-O(\exp(-n^{1/2}))$, this is bounded by $C_2n^{-1/2}\|\bSigma\|\lesssim C_2n^{-1/2}$ (by Assumption \ref{assumption:distribution}).
    
    We can apply the union bound over $\mathcal E_{1/3}$ to say that
    \begin{align*}\Pr\left[\exists\bv_1,\bv_2\in\mathcal E_{1/3},\>\>(\bW_r\bv_1)^\top\left(\frac{1}{|T|}\sum_{b\in T}\bx_b\bx_b^\top-\bSigma\right)(\bU_r\bv_2)\ge C_2n^{-1/2}\right]&\lesssim C_1^{r}\exp(-n^{1/2})\\&\lesssim \exp(-n^{1/2}).
    \end{align*}
    Therefore
    \[\sup_{\bv_1,\bv_2\in\mathcal E_{1/3}}(\bW_r\bv_1)^\top\left(\frac{1}{|T|}\sum_{b\in T}\bx_b\bx_b^\top-\bSigma\right)(\bU_r\bv_2)\lesssim n^{-1/2}\] with probability at least $1-O(\exp(-n^{1/2}))$, implying that
    \[\left\|\bW_r^\top\left(\frac{1}{|T|}\sum_{b\in T}\bx_b\bx_b^\top-\bSigma\right)\bU_r\right\|\lesssim n^{-1/2}\] with probability at least $1-O(\exp(-n^{1/2}))$.
\end{proof}

\begin{lem}\label{lem:sum_of_random_vectors}
    Let $T\subset [n]$ be of size $O(n)$. Let $\bU_r$ be a $p\times r$ unitary matrix. Then
    \[\Pr\left[\left\|\frac{1}{n}\sum_{a\in[n]}\bx_a^\top\bU_r\right\|\ge q\sqrt{\frac{r}{n}}\right]\lesssim \exp(-nq^2).\]
\end{lem}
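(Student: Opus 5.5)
The plan is to show that $\bz:=\frac{1}{n}\sum_{a\in[n]}\bU_r^\top\bx_a\in\R^r$ is a subgaussian random vector with subgaussian norm $\lesssim n^{-1/2}$. A net argument then turns this into a tail bound on $\|\bz\|$. Throughout I use that the $\bx_a$ are (or have been made) mean zero, as in the centering step of Algorithm \ref{alg:init}. The lemma is invoked only in this centered setting, and without centering $\bz$ would concentrate around $\bU_r^\top\EE[\bx_a]$ rather than $0$. The set $T$ in the hypothesis plays no role beyond $|T|\asymp n$, so I work with $[n]$.

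First, fix a unit vector $\bv\in\mathbb S^{r-1}$. Since $\bU_r$ has orthonormal columns, $\bU_r\bv$ is a unit vector in $\R^p$. By Assumption \ref{assumption:distribution} (bounded subgaussian norm), each $\langle\bx_a,\bU_r\bv\rangle$ is then mean zero with $\psi_2$-norm at most $C_3$. These are independent across $a$, so the general Hoeffding inequality \citep[Theorem 2.7.3]{vershynin2025highNEW} gives
\[
\Pr\bigl[|\langle\bz,\bv\rangle|\ge t\bigr]\le 2\exp(-c\,n t^2/C_3^2).
\]

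Second, take a $1/2$-net $\mathcal E$ of $\mathbb S^{r-1}$ with $|\mathcal E|\le 5^r$. The standard argument (as in Lemma \ref{lem:covariance_estimation}) gives $\|\bz\|\le 2\max_{\bv\in\mathcal E}\langle\bz,\bv\rangle$. A union bound then yields
\[
\Pr[\|\bz\|\ge t]\le 2\cdot 5^r\exp(-c\,n t^2/4).
\]

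Third, set $t=q\sqrt{r/n}$. This gives the bound $2\exp\bigl(r\log 5-c\,rq^2/4\bigr)$, which is $\lesssim\exp(-c'rq^2)$ once $q$ exceeds a constant. This is the right scaling: $\EE\|\bz\|^2\asymp r/n$, so $\sqrt{r/n}$ is the natural threshold.

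The main obstacle is reconciling this with the exponent written in the statement. At threshold $q\sqrt{r/n}$, the best available tail is $\exp(-c\,rq^2)$ rather than $\exp(-nq^2)$. Indeed, for fixed $q$ the event $\|\bz\|\ge q\sqrt{r/n}$ has probability bounded below by a constant (for example, for Gaussian $\bx_a$), so no argument can deliver an $n$ in the exponent at that threshold. What I would therefore establish is the equivalent rescaled form
\[
\Pr[\|\bz\|\ge q]\lesssim \exp(-c\,nq^2/r),
\]
obtained from the second step with $t=q$. Since $r=O(1)$, this is $\exp(-\Theta(nq^2))$.

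This rescaled form is what the applications actually need. In Case (iii) of Lemma \ref{lem:concentration_tr2_astar} and in Lemma \ref{lem:bernstein_diff}, it yields the required $n^{-1/2}$-order bounds with exponentially small failure probability, after adjusting $q$ by a $\sqrt{r/n}$ factor. I would state the proof in this form, noting that it matches the lemma's displayed bound up to this reparametrization of $q$.
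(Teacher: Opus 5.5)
Your route is the paper's: a $1/2$-net of $\mathbb S^{r-1}$, Hoeffding in each fixed direction $\bU_r\bv$, and a union bound. Your diagnosis that the displayed bound is false is correct. The paper reaches $\exp(-nq^2)$ only through a slip. It writes the Hoeffding tail of the average $\frac1n\sum_a\bx_a^\top\bU_r\bv$ as $\exp(-q^2n^2)$ rather than $\exp(-cnq^2)$. It then turns the $C^r$ union-bound factor into an extra factor $r$ in the exponent. With the correct exponent the lemma fails as you say: for $\bx_a\sim\mathcal N(0,\bI_p)$ one has $n\|\bz\|^2\sim\chi^2_r$ exactly, so the probability in the statement equals $\Pr[\chi^2_r\ge rq^2]$, which does not depend on $n$. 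The bound you actually prove, $\Pr[\|\bz\|\ge t]\le 2\cdot 5^r\exp(-cnt^2)$, is what the paper's proof gives once repaired, and it is the right replacement.

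Three claims in your write-up need correcting.

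\emph{Not equivalent.} Your bound is not ``equivalent'' to the displayed one, and it does not match it ``up to reparametrization.'' With $t=q\sqrt{r/n}$ the displayed bound reads $\Pr[\|\bz\|\ge t]\lesssim\exp(-n^2t^2/r)$, so yours loses a factor of $n$ in the exponent. State it as a corrected lemma.

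\emph{The applications.} Your remark that the corrected bound yields $n^{-1/2}$-order bounds with exponentially small failure probability is false. By the same Gaussian computation, $\|\bz\|\lesssim n^{-1/2}$ holds only with constant probability. What the corrected lemma does give is, for example, $\|\bz\|\le n^{-1/4}$ with failure probability $\exp(-cn^{1/2})$. After a union bound over $j$, that is enough for the $\beta_1$ term in Case (iii) of Lemma \ref{lem:concentration_tr2_astar}, whose target \eqref{eqn:concentration_condition} is at scale $n^{-1/4}$. It cannot rescue Lemma \ref{lem:bernstein_diff}, whose claim is itself false. For Gaussian data and $\bB$ a rank-$r$ projection, $n$ times that U-statistic converges in law to $\chi^2_r-r$. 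Hence the deviation $r\|\bB\|/n$ is exceeded with probability bounded below by a constant, not $\exp(-n)$.

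\emph{Mean zero.} Mean zero does not come from the centering step of Algorithm \ref{alg:init}. This lemma is used only in the Hessian analysis behind Theorem \ref{thm:hessian}, where no centering occurs. Empirical centering would in any case make $\bz\equiv 0$ and destroy independence. Population mean zero is an unstated assumption of the paper, which its own appeal to Hoeffding also needs, and you should list it as such.
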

\begin{proof}
    Define $\mathbb{S}^{r-1}:=\{\bv\in\R^r\mid\|\bv\|=1\}$. Then,
    \begin{align*}
        \left\|\frac{1}{n}\sum_{a\in[n]}\bx_a^\top\bU_r\right\|&=\sup_{\bv\in\mathbb{S}^{r-1}}\frac{1}{n}\sum_{a\in[n]}\bx_a^\top\bU_r\bv.
    \end{align*}
    Let $\mathcal E_{1/2}$ be a $1/2$-net for $\mathbb S^{r-1}$. We can take this net such that $|\mathcal E_{1/2}|\le C^r$ for some constant $C$. Since $\mathbb S^{r-1}$ is compact, we can choose a vector $\bv^\star$ where the supremum
    $\sup_{\bv\in\mathbb{S}^{r-1}}\frac{1}{n}\sum_{a\in[n]}\bx_a\bU_r\bv$ is attained. By the definition of a $1/2$-net, there exists a point $\hat\bv\in\mathcal E_{1/2}$ such that $\|\hat\bv-\bv^\star\|\le 1/2$. Then,
    \begin{align*}
        \left\|\frac{1}{n}\sum_{a\in[n]}\bx_a^\top\bU_r\right\|&=\sup_{\bv\in\mathbb{S}^{r-1}}\frac{1}{n}\sum_{a\in[n]}\bx_a^\top\bU_r\bv\\
        &=\frac{1}{n}\sum_{a\in[n]}\bx_a^\top\bU_r\bv^\star\\
        &=\frac{1}{n}\sum_{a\in[n]}\bx_a^\top\bU_r\hat\bv+\frac{1}{n}\sum_{a\in[n]}\bx_a^\top\bU_r(\bv^\star-\hat\bv)\\
        &\le\frac{1}{n}\sum_{a\in[n]}\bx_a^\top\bU_r\hat\bv+\frac{1}{2} \left\|\frac{1}{n}\sum_{a\in[n]}\bx_a^\top\bU_r\right\|\\
        \frac{1}{2}\left\|\frac{1}{n}\sum_{a\in[n]}\bx_a^\top\bU_r\right\|&\le\frac{1}{n}\sum_{a\in[n]}\bx_a^\top\bU_r\hat\bv\\
        &=\max_{\bv\in\mathcal E_{1/2}}\frac{1}{n}\sum_{a\in[n]}\bx_a^\top\bU_r\bv\\
        \left\|\frac{1}{n}\sum_{a\in[n]}\bx_a^\top\bU_r\right\|&=2\max_{\bv\in\mathcal E_{1/2}}\frac{1}{n}\sum_{a\in[n]}\bx_a^\top\bU_r\bv.
    \end{align*}
    For a fixed $\bv$, $\bx_a^\top\bU_r\bv$ is a subgaussian random variable with $\|\bx_a^\top\bU_r\bv\|_{\psi_2}\lesssim\|\bU\bv\|=1$. Then we can apply \citet[Theorem 2.6.2]{vershynin2020high} to conclude
    \[\Pr\left[\left|\frac{1}{n}\sum_{a\in[n]}\bx_a^\top\bU_r\bv\right|\ge q\right]\lesssim \exp(-q^2n^2).\]
    We can apply the union bound over $\mathcal E_{1/2}$ to conclude
    \begin{align*}
        \Pr\left[\exists\bv\in\mathcal E_{1/2},\>\left|\frac{1}{n}\sum_{a\in[n]}\bx_a^\top\bU_r\bv\right|\ge q\right]&\lesssim 2C^r\exp(-q^2n^2)\\
        &\lesssim \exp(-rn^2q^2).
    \end{align*}
    Therefore
    \begin{align*}
        \Pr\left[\left\|\frac{1}{n}\sum_{a\in[n]}\bx_a^\top\bU_r\right\|\ge q\sqrt{\frac{r}{n}}\right]\lesssim\exp(-nq^2).
    \end{align*}
\end{proof}

\begin{lem}\label{lem:normal_third_power}
    Let $\bB\in\R^{p\times p}$ be matrix. Then, for a set $T=\{a_1,\ldots,a_{|T|}\}\subset[n]$ of size $\Omega(n)$,
    \[\Pr\left[\left\|\frac{1}{|T|}\sum_{a\in T}\bx_a\bx_a^\top\bB\bx_a-\EE\left[\bx_a\bx_a^\top\bB\bx_a\right]\right\|\ge Cn^{-1/4}\|\bB\|_\textnormal{F}\right]\lesssim \exp(-n^{1/4})\] for some constant $C$.
\end{lem}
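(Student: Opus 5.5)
The vector $\bx_a\bx_a^\top\bB\bx_a$ is a cubic polynomial in a subgaussian vector. Its coordinates are therefore sub-Weibull with tail parameter $2/3$, not subexponential, so the plan is to combine a truncation with Bernstein's inequality, handled coordinatewise. Since $p$ is treated as fixed, bounding each of the $p$ coordinates and applying a union bound suffices, at the cost of a factor $\sqrt{p}$ in the norm.

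Fix a coordinate $\ell\in[p]$ and set $Z_a:=x_{a,\ell}\,\bx_a^\top\bB\bx_a$. By the subgaussian norm bound in Assumption \ref{assumption:distribution}, $\|x_{a,\ell}\|_{\psi_2}\lesssim 1$. By the quadratic-form part of the assumption, $\|\bx_a^\top\bB\bx_a\|_{\psi_1}\lesssim\|\bB\|_\textnormal{F}$ (after centering, with the mean controlled by Lemma \ref{lem:exp_xVAx}). The product of a $\psi_2$ variable and a $\psi_1$ variable has $\|Z_a\|_{\psi_{2/3}}\lesssim\|\bB\|_\textnormal{F}$. Hence $\Pr[|Z_a|>u\|\bB\|_\textnormal{F}]\lesssim\exp(-cu^{2/3})$.

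Next, truncate at level $\tau:=n^{3/8}\|\bB\|_\textnormal{F}$ and write $Z_a=Z_a\mathbb I\{|Z_a|\le\tau\}+Z_a\mathbb I\{|Z_a|>\tau\}$.
\begin{enumerate}
\item \emph{Truncated part.} The summands are bounded by $\tau$ and have variance $\lesssim\|\bB\|_\textnormal{F}^2$, using the fourth-moment bound together with the $\psi_{2/3}$ tail. Bernstein's inequality for the average over $|T|=\Omega(n)$ terms gives deviation $\gtrsim n^{-1/4}\|\bB\|_\textnormal{F}$ with probability at most
\[\exp\Big(-c\min\big\{n\cdot n^{-1/2},\; n\cdot n^{-1/4}/n^{3/8}\big\}\Big)=\exp(-c\,n^{3/8}),\]
which is $\lesssim\exp(-n^{1/4})$.
\item \emph{Truncation event.} A union bound over $a\in T$ shows that $\max_a|Z_a|\le\tau$ fails with probability at most $n\exp(-c\,n^{1/4})\lesssim\exp(-n^{1/4})$, after adjusting constants. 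On this event the tail part of the sum vanishes.
\item \emph{Bias from truncation.} It remains to handle $|\EE[Z_a\mathbb I\{|Z_a|>\tau\}]|$. Integrating the $\psi_{2/3}$ tail bounds it by $\|\bB\|_\textnormal{F}\exp(-c\,n^{1/4})\operatorname{poly}(n)$, which is negligible compared with $n^{-1/4}\|\bB\|_\textnormal{F}$.
\end{enumerate}

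Combining the three pieces, each coordinate's empirical average deviates from its mean by at most $Cn^{-1/4}\|\bB\|_\textnormal{F}$ with probability $1-O(\exp(-n^{1/4}))$. A union bound over the $p$ coordinates, together with $\|\cdot\|\le\sqrt p\,\|\cdot\|_\infty$, gives the claim with $C$ depending on $p$.

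The main obstacle is establishing the sub-Weibull tail of the cubic, and in particular avoiding a Gaussian-specific argument. If the product-of-Orlicz-norms bound is problematic, the fallback is Hölder: bound $\Pr[|x_{a,\ell}|\,|\bx_a^\top\bB\bx_a|>u]$ by $\Pr[|x_{a,\ell}|>u^{1/3}]+\Pr[|\bx_a^\top\bB\bx_a|>u^{2/3}]$. This directly gives the $\exp(-cu^{2/3})$ tail needed for the truncation step. The choice of exponents $3/8$ and $1/4$ is tuned so that both the Bernstein exponent and the union-bound exponent dominate $n^{1/4}$.
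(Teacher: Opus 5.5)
Your proof is correct. Its skeleton is the paper's: work coordinatewise, show that $x_{a,\ell}\,\bx_a^\top\bB\bx_a$ has $\psi_{2/3}$-norm $\lesssim\|\bB\|_\textnormal{F}$ as a product of a $\psi_2$ and a $\psi_1$ variable, concentrate each coordinate average, then take a union bound over the $p$ coordinates and pay a factor $\sqrt p$.

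The difference is the concentration step.

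\textbf{The paper's route.} It does not truncate. It invokes a ready-made inequality for sums of independent sub-Weibull variables (Theorem 3.1 with Proposition A.3 of \citet{subweibullbound}). This gives deviation $\lesssim\|\bB\|_\textnormal{F}\bigl(\sqrt{q/|T|}+q^{3/2}/|T|\bigr)$ with probability at most $2e^{-q}$. Taking $q\asymp n^{1/2}$ yields the $n^{-1/4}$ deviation with a tail of order $e^{-n^{1/2}}$, which is more than the lemma needs.

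\textbf{Your route.} Truncation plus classical Bernstein is elementary and self-contained. Its exponent, however, is capped by the union bound over the truncation event. With $\tau\asymp n^{3/8}\|\bB\|_\textnormal{F}$ you land exactly on $e^{-cn^{1/4}}$, and within this truncation scheme no choice of $\tau$ does better than roughly $e^{-cn^{3/10}}$. So you have no slack. In particular, ``adjusting constants'' must mean enlarging the truncation level to $\tau=Kn^{3/8}\|\bB\|_\textnormal{F}$ with $K$ large enough that $cK^{2/3}>1$. This is needed because $n e^{-cn^{1/4}}$ is not $\lesssim e^{-n^{1/4}}$ when $c\le 1$. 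The enlargement only weakens the Bernstein exponent to $n^{3/8}/K$, so the argument goes through.

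A small side benefit of your approach is that it handles the centering explicitly. The paper silently equates the $\psi_{2/3}$-norms of the centered and uncentered variables, which is harmless up to a constant.
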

\begin{proof}
    We will use the following definitions, following \citet{subweibullbound}:
    
    Define the norm
    \begin{equation*}
        \|X\|_{\phi}:=\inf\{\eta:\EE[\phi(|X|/\eta)]\le 1\}
    \end{equation*}
    Using the function $\psi_\alpha(t):=\exp(t^\alpha)-1$, we get the Orlicz norm $\|\cdot\|_{\psi_\alpha}$.
    
    If we define the function $\Psi_{\alpha,L}$, where $\alpha>0$ and $L\ge 0$, by defining its inverse function
    \[\Psi_{\alpha,L}^{-1}(x):=\sqrt{\log(1+x)}+L(\log(1+x))^{1/\alpha},\] we get the generalized Bernstein-Orlicz norm $\|\cdot\|_{\Psi_{\alpha,L}}$.
    
    Now we will apply \citet[Theorem 3.1]{subweibullbound} to the sum
    \[\sum_{a\in T}\frac{1}{|T|}\left(\left(\bx_a\bx_a^\top\bB\bx_a\right)_i-\EE\left[\left(\bx_a\bx_a^\top\bB\bx_a\right)_i\right]\right)\] for each coordinate $i\in[p]$, with $\alpha=2/3$ in the theorem.
    
    First, we have 
    \begin{align*}          
        \left\|\left(\bx_a\bx_a^\top\bB\bx_a\right)_i-\EE\left[\left(\bx_a\bx_a^\top\bB\bx_a\right)_i\right]\right\|_{\psi_{2/3}}&=\left\|\left(\bx_a\bx_a^\top\bB\bx_a\right)_i\right\|_{\psi_{2/3}}\\
        &=\left\|x_{a,i}\left(\bx_a^\top\bB\bx_a\right)\right\|_{\psi_{2/3}}\\
        &\le\|x_{a,i}\|_{\psi_2}\cdot\left\|\bx_a^\top\bB\bx_a\right\|_{\psi_1}\\
        &\le C\|\bB\|_\textnormal{F}
    \end{align*}
    for all $a\in T,i\in[r]$, where the penultimate line is from \citet[Proposition D.2]{subweibullbound} and the last line is from Assumption \ref{assumption:distribution}.
    
    Therefore, by \citet[Theorem 3.1]{subweibullbound}, we have
    \begin{align}\nonumber
        &\left\|\sum_{a\in T}\frac{1}{|T|}\left(\left(\bx_a\bx_a^\top\bB\bx_a\right)_i-\EE\left[\left(\bx_a\bx_a^\top\bB\bx_a\right)_i\right]\right)\right\|_{\Psi_{2/3,L_n(2/3)}}\\\nonumber
        &\le C\left\|\left(\frac{1}{|T|}\left\|\left(\bx_{a_1}\bx_{a_1}^\top\bB\bx_{a_1}\right)_i\right\|_{\psi_{2/3}},\ldots,\frac{1}{|T|}\left\|\left(\bx_{a_{|T|}}\bx_{a_{|T|}}^\top\bB\bx_{a_{|T|}}\right)_i\right\|_{\psi_{2/3}}\right)\right\|\\
        &\le\frac{C'\|\bV\bA^\top\|_\textnormal{F}}{\sqrt{|T|}}.\label{eqn:gbo_bound_23}
    \end{align}
    Now, we apply \citet[Proposition A.3]{subweibullbound} to the bound \ref{eqn:gbo_bound_23} to conclude that
    \[\Pr\left[\frac{1}{|T|}\left|\sum_{a\in T}\left(\bx_a\bx_a^\top\bB\bx_a\right)_i-\EE\left[\left(\bx_a\bx_a^\top\bB\bx_a\right)_i\right]\right|\ge C'\|\bB\|_\textnormal{F}\left(\frac{\sqrt{q}}{\sqrt{|T|}}+\frac{q^{3/2}}{|T|}\right)\right]\le2\exp(-q).\]
    Making the substitution $q=q'n^{1/2}$, we get 
    \[\Pr\left[\frac{1}{|T|}\left|\sum_{a\in T}\left(\bx_a\bx_a^\top\bB\bx_a\right)_i-\EE\left[\left(\bx_a\bx_a^\top\bB\bx_a\right)_i\right]\right|\ge C'\|\bB\|_\textnormal{F}\left(\frac{\sqrt{q'}}{|T|^{1/4}}+\frac{{q'}^{3/4}}{|T|^{1/4}}\right)\right]\le2\exp(-q'n^{1/2}).\]
    Applying the union bound across all indicies $i\in[p]$, we get
    \[\Pr\left[\left\|\frac{1}{|T|}\sum_{a\in T}\bx_a\bx_a^\top\bB\bx_a-\EE\left[\bx_a\bx_a^\top\bB\bx_a\right]\right\|_{\infty}\ge C'\|\bB\|_\textnormal{F}\left(\frac{\sqrt{q'}}{|T|^{1/4}}+\frac{{q'}^{3/4}}{|T|^{1/4}}\right)\right]\le 2p\exp(-q'|T|^{1/2}),\]
    which implies
    \[\Pr\left[\left\|\frac{1}{|T|}\sum_{a\in T}\bx_a\bx_a^\top\bB\bx_a-\EE\left[\bx_a\bx_a^\top\bB\bx_a\right]\right\|\ge C'\sqrt{p}\|\bB\|_\textnormal{F}\left(\frac{\sqrt{q'}}{|T|^{1/4}}+\frac{{q'}^{3/4}}{|T|^{1/4}}\right)\right]\le 2p\exp(-q'|T|^{1/2}).\]
    Finally, setting $q':=1$, we get
    \[\Pr\left[\left\|\frac{1}{|T|}\sum_{a=1}^n\bx_a\bx_a^\top\bB\bx_a-\EE\left[\bx_a\bx_a^\top\bB\bx_a\right]\right\|\ge C''|T|^{-1/4}\|\bB\|_\textnormal{F}\right]\lesssim \exp(-|T|^{1/4}).\]
    Recalling that $|T|=\Omega(n)$ completes the proof.
\end{proof}

\begin{lem}\label{lem:sub_weibull}
    Let $\bB\in\R^{p\times p}$ be matrix. Then
    \begin{align*}
        \Pr\left[\left|\frac{1}{n}\sum_{a=1}^n \left((\bx_a^\top\bB\bx_a)^2-\EE[(\bx_a^\top\bB\bx_a)^2]\right)\right|\ge Cn^{-3/8}\|\bB\|_\textnormal{F}^2\right]\lesssim \exp(-n^{1/4})
    \end{align*}
    for some constant $C$.
\end{lem}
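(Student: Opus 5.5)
The statement to prove is Lemma~\ref{lem:sub_weibull}: a concentration bound for the empirical mean of $(\bx_a^\top\bB\bx_a)^2$. I would treat it exactly as Lemma~\ref{lem:normal_third_power} treats its cubic analogue. The plan is to identify the Orlicz class of each summand, apply the generalized Bernstein--Orlicz bound of \citet[Theorem 3.1]{subweibullbound}, and convert to a tail bound via \citet[Proposition A.3]{subweibullbound}.

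\textbf{Tail class of the summands.} Set $Z_a:=(\bx_a^\top\bB\bx_a)^2$. By item 4 of Assumption~\ref{assumption:distribution}, applied as in Lemma~\ref{lem:bernstein}, we have $\|\bx_a^\top\bB\bx_a\|_{\psi_1}\lesssim\|\bB\|_\textnormal{F}$. The product rule for Orlicz norms \citep[Proposition D.2]{subweibullbound}, with both factors equal, then gives
\[
\|Z_a\|_{\psi_{1/2}}\lesssim\|\bB\|_\textnormal{F}^2 .
\]
Centering changes this only by a constant, since $\EE[Z_a]\lesssim\|\bB\|_\textnormal{F}^2$ follows from the same bound. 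The variables $Z_a-\EE Z_a$ are independent across $a$.

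\textbf{Sum bound.} Applying \citet[Theorem 3.1]{subweibullbound} with $\alpha=1/2$ and weights $1/n$ gives
\[
\Bigl\|\tfrac1n\textstyle\sum_a (Z_a-\EE Z_a)\Bigr\|_{\Psi_{1/2,L_n}}\lesssim \|\bB\|_\textnormal{F}^2/\sqrt n .
\]
Here $L_n\asymp n^{-1/2}$ up to constants, since the weight vector has $\ell_\infty/\ell_2$ ratio of order $n^{-1/2}$. Proposition~A.3 of that paper then yields, for all $q>0$,
\[
\Pr\Bigl[\Bigl|\tfrac1n\textstyle\sum_a (Z_a-\EE Z_a)\Bigr|\ge C\|\bB\|_\textnormal{F}^2\bigl(\sqrt{q/n}+q^{2}/n\bigr)\Bigr]\le 2e^{-q}.
\]

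\textbf{Choice of $q$.} Take $q=n^{1/4}$. The first term becomes $n^{-3/8}$ and the second becomes $n^{1/2}/n=n^{-1/2}\le n^{-3/8}$. This gives deviation $\lesssim n^{-3/8}\|\bB\|_\textnormal{F}^2$ with probability at least $1-2\exp(-n^{1/4})$, which is exactly the claim.

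\textbf{Main obstacle.} The only delicate point is bookkeeping in the Bernstein--Orlicz tail: the heavy-tail term is $L_n q^{1/\alpha}$ times the norm bound, and one must confirm it stays below $n^{-3/8}$ at $q=n^{1/4}$. If it is instead of order $q^2/\sqrt n$, that choice fails. In that case I would take $q=n^{1/8}$ and accept a weaker exponent, or fall back on a truncation argument: truncate $|\bx_a^\top\bB\bx_a|$ at $n^{1/8}\|\bB\|_\textnormal{F}$, which fails with probability at most $n e^{-cn^{1/8}}$ by Lemma~\ref{lem:trace_bound_nonsymmetric}-type tails, and apply Hoeffding to the bounded parts. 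The bias introduced by truncation is negligible by the $\psi_1$ tail.
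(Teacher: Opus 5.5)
Your proposal is correct and follows the paper's proof step for step: it bounds $\|(\bx_a^\top\bB\bx_a)^2\|_{\psi_{1/2}}\le\|\bx_a^\top\bB\bx_a\|_{\psi_1}^2\lesssim\|\bB\|_\textnormal{F}^2$, applies \citet[Theorem 3.1]{subweibullbound} with $\alpha=1/2$, uses \citet[Proposition A.3]{subweibullbound} to get the $\sqrt{q/n}+q^2/n$ tail, and sets $q=n^{1/4}$. Since you already computed $L_n\asymp n^{-1/2}$, the heavy-tail term really is $q^2/n$ (as in the paper), so the contingency in your last paragraph never arises. That is fortunate, because those fallbacks would only give failure probabilities of order $e^{-cn^{1/8}}$, which is weaker than the stated $\exp(-n^{1/4})$.
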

\begin{proof}
    We use the definitions of the Orlicz norm and the generalized Bernstein-Orlicz norm written in Lemma \ref{lem:normal_third_power}.
    
    We will apply \citet[Theorem 3.1]{subweibullbound} to the sum
    \[\sum_{a=1}^n \frac{1}{n}\left((\bx_a^\top\bB\bx_a)^2-\EE[(\bx_a^\top\bB\bx_a)^2]\right)\] with $\alpha=1/2$.
    
    First, we have \[\left\|(\bx_a^\top\bB\bx_a)^2-\EE[(\bx_a^\top\bB\bx_a)^2]\right\|_{\psi_{1/2}}=\|(\bx_a^\top\bB\bx_a)^2\|_{\psi_{1/2}}\le\|\bx_a^\top\bB\bx_a\|_{\psi_1}^2\le C\|\bB\|_\textnormal{F}^2\] for some constant $C$.
    
    Therefore, \citet[Theorem 3.1]{subweibullbound} says that
    \begin{align}\nonumber
        &\left\|\sum_{a=1}^n \frac{1}{n}\left((\bx_a^\top\bB\bx_a)^2-\EE[(\bx_a^\top\bB\bx_a)^2]\right)\right\|_{\Psi_{1/2,L_n(1/2)}}\\\nonumber
        &\le C\left\|
        \left(\frac{1}{n}\|(\bx_1^\top\bB\bx_1)^2\|_{\psi_{1/2}},\ldots,\frac{1}{n}\|(\bx_n^\top\bB\bx_n)^2\|_{\psi_{1/2}}\right)\right\|\\
        &\le\frac{C'\|\bB\|_\textnormal{F}^2}{\sqrt{n}}.\label{eqn:gbo_bound}
    \end{align}
    
    Now, we apply \citet[Proposition A.3]{subweibullbound} to the bound (\ref{eqn:gbo_bound}) to conclude that
    \begin{align*}
        \Pr\left[\frac{1}{n}\left|\sum_{a=1}^n \left((\bx_a^\top\bB\bx_a)^2-\EE[(\bx_a^\top\bB\bx_a)^2]\right)\right|\ge C\|\bB\|_\textnormal{F}^2\left(\frac{\sqrt{q}}{\sqrt{n}}+\frac{q^2}{n}\right)\right]\le 2\exp(-q).
    \end{align*}
    Setting $q:=n^{1/4}$, this gives the bound
    \begin{align*}
        \Pr\left[\frac{1}{n}\left|\sum_{a=1}^n \left((\bx_a^\top\bB\bx_a)^2-\EE[(\bx_a^\top\bB\bx_a)^2]\right)\right|\ge Cn^{-3/8}\|\bB\|_\textnormal{F}^2\right]\le 2\exp(-
        n^{1/4}).
    \end{align*}
\end{proof}

\begin{lem}\label{lem:lambda_tilde_to_lambda_hat}
    Let $\widetilde{\lambda}_1,\ldots,\widetilde{\lambda}_r$ be the $r$ largest generalized eigenvalues of $(\frac{1}{{n'}^2}\bX^\top\boldsymbol H\bX,\frac{1}{n'}\bPsi)$ and let $\widehat{\lambda}_1,\ldots,\widehat{\lambda}_r$ be the $r$ largest generalized eigenvalues of $(\frac{1}{{n'}^2}\bX^\top\widehat{\boldsymbol H}\bX,\frac{1}{n'}\bPsi)$, where
    \[\boldsymbol H=\bX\bA_\star\bA_\star^\top\bX^\top,\hspace{10mm}\bPsi=\bX\bX^\top\] and
    \[\frac{1}{n'}\|\boldsymbol H-\widehat{\boldsymbol H}\|\le C_1\sqrt{\frac{1}{n'\log n'}}.\] Then, with probability at least $1-O(\exp(-n))$,
    \[|\widehat{ \lambda}_i-\widetilde{\lambda}_i|\le C_2\sqrt{\frac{1}{n'\log n'}}\] for all $i$.
\end{lem}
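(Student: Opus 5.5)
The plan is to reduce the generalized eigenproblem to a standard symmetric one and then apply Weyl's inequality. Write $\bB:=\frac{1}{n'}\bPsi=\frac{1}{n'}\bX^\top\bX$; note this is the $p\times p$ Gram matrix, which is what makes the pair well-defined. Since $\bB$ is positive definite with high probability, the generalized eigenvalues of a pair $(\bGamma,\bB)$ with $\bGamma$ symmetric coincide with the ordinary eigenvalues of $\bB^{-1/2}\bGamma\bB^{-1/2}$. Set $\bGamma:=\frac{1}{{n'}^2}\bX^\top\boldsymbol H\bX$ and $\widehat\bGamma:=\frac{1}{{n'}^2}\bX^\top\widehat{\boldsymbol H}\bX$. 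We may take $\widehat{\boldsymbol H}$ symmetric; if not, we replace it by its symmetric part, which does not increase the norm of the error. Then $\widetilde\lambda_i=\lambda_i(\bB^{-1/2}\bGamma\bB^{-1/2})$ and $\widehat\lambda_i=\lambda_i(\bB^{-1/2}\widehat\bGamma\bB^{-1/2})$. By Weyl's inequality, for every $i$,
\[
|\widehat\lambda_i-\widetilde\lambda_i|\le\|\bB^{-1/2}(\widehat\bGamma-\bGamma)\bB^{-1/2}\|\le\lambda_{\min}(\bB)^{-1}\,\|\widehat\bGamma-\bGamma\|.
\]

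\textbf{Bounding $\|\widehat\bGamma-\bGamma\|$.} We have
\[
\|\widehat\bGamma-\bGamma\|\le\frac{\|\bX\|^2}{n'}\cdot\frac{1}{n'}\|\widehat{\boldsymbol H}-\boldsymbol H\|=\|\bB\|\cdot\frac{1}{n'}\|\widehat{\boldsymbol H}-\boldsymbol H\|\le C_1\|\bB\|\sqrt{\frac{1}{n'\log n'}}.
\]
Combining the two displays gives
\[
|\widehat\lambda_i-\widetilde\lambda_i|\le C_1\,\frac{\lambda_{\max}(\bB)}{\lambda_{\min}(\bB)}\sqrt{\frac{1}{n'\log n'}},
\]
so it only remains to show that the condition number of $\bB$ is $O(1)$ with probability $1-O(\exp(-n))$.

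\textbf{Conditioning of $\bB$; the main obstacle.} This is the step I expect to be hardest. The rows of $\bX'$ are not i.i.d.\ draws from $\mathcal D$: they come from a truncated subset (the 90th-percentile norm filter) and are then recentered by the empirical mean. For the upper bound, the truncated vectors have norm $O(1)$, as shown in the proof of Theorem~\ref{thm:init}, so $\lambda_{\max}(\bB)\lesssim1$ deterministically on that event. For the lower bound, I would argue $\bB\succeq\frac{1}{n'}\sum_{a\in T}(\bx_a-\bar\bx)(\bx_a-\bar\bx)^\top$, where $T$ is the retained index set, and compare this with the full empirical covariance. The full empirical covariance concentrates around $\bSigma$ at rate $n^{-1/2}$ with probability $1-O(\exp(-n^{1/2}))$; this is the same covariance estimation used in Lemma~\ref{lem:covariance_estimation}, with $p$ fixed. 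Discarding a $0.1$ fraction of points removes at most the contribution of the high-norm tail. Using the subgaussian tail in Assumption~\ref{assumption:distribution}, that tail's contribution in any direction is bounded by a constant strictly less than its total share once the threshold is a large enough constant multiple of $\EE\|\bx\|$; if needed, I would prove this direction-by-direction with an $\epsilon$-net over $\mathbb S^{p-1}$, which is fine since $p$ is fixed. Subtracting the mean $\bar\bx$ costs only $\|\bar\bx\|^2=O(1/n)$ by Lemma~\ref{lem:sum_of_random_vectors}. This requires $\lambda_{\min}(\bSigma)\gtrsim1$, which I would extract from item 5 of Assumption~\ref{assumption:distribution}: if $\bSigma$ had a null direction $\bv$, taking $\boldsymbol B_2=\bv\bv^\top$ there would make the left side vanish, contradicting the assumed lower bound.

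\textbf{Probability.} All the events used above hold with probability $1-O(\exp(-n^{c}))$. To reach the stated $1-O(\exp(-n))$, I would replace the covariance step with a Bernstein bound at a constant deviation $q$, which gives probability $1-O(\exp(-cn))$. On the intersection of these events, the display above yields the lemma with $C_2=C_1\kappa(\bB)$.
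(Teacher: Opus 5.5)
Your argument is the paper's argument: a Weyl-type bound for the pencil with $\frac{1}{n'}\bPsi$, followed by operator-norm bounds on $\bX$. You carry it out correctly where the paper does not.

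\textbf{Where your version fixes the paper.} The paper bounds the Rayleigh quotient over $\{\bv:\frac{1}{n'}\bv^\top\bPsi\bv=1\}$ by $\frac{1}{n'^2}\|\bX^\top(\boldsymbol H-\widehat{\boldsymbol H})\bX\|\cdot\frac{1}{n'}\|\bPsi\|$. That constraint only gives $\|\bv\|^2\le\lambda_{\min}(\frac{1}{n'}\bPsi)^{-1}$, so the factor actually needed is your $\lambda_{\max}(\bB)/\lambda_{\min}(\bB)$. The paper's proof of this lemma never supplies a lower bound on $\lambda_{\min}(\bB)$. Its Rayleigh-quotient step also silently uses only the symmetric part of $\widehat{\boldsymbol H}$, which you make explicit.

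\textbf{The conditioning step under the paper's reading.} For conditioning, the paper (here and in Lemma~\ref{lem:u_tilde_to_u_hat}) simply treats the rows of $\bX$ as i.i.d.\ subgaussian draws from $\mathcal D$. Under that reading your proof is complete, and your mean-zero step is unnecessary. For unit $\bv$, taking $\boldsymbol B_2=\bv\bv^\top$ in item~5 of Assumption~\ref{assumption:distribution} gives $C_6\le\EE\langle\bv,\bx_i-\bx_j\rangle^2+\EE\langle\bv,\bx_i-\bx_k\rangle^2=4\bv^\top\Cov(\bx)\bv$. This is a quantitative lower bound on the covariance itself, not just the non-degeneracy you extract. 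Constant-deviation covariance concentration (with $p$ fixed) then gives $\kappa(\bB)=O(1)$ with probability $1-O(e^{-cn})$.

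\textbf{The truncation step fails as sketched.} Handling the truncation goes beyond the paper, and your sketch of it does not work. The discard fraction is fixed at $0.1$, so the cutoff is pinned at the empirical $90$th percentile of $\|\bx\|$. It is not a large multiple of $\EE\|\bx\|$ that you get to choose. Cauchy--Schwarz bounds the discarded tail's share of $\EE\langle\bv,\bx\rangle^2$ only by a constant of order $C_3^2\sqrt{0.1}$, which need not be below $C_6/4$.

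In fact the claim is false under Assumption~\ref{assumption:distribution} alone. In $p=2$, let $\bx=(g,0)$ with $g\sim\mathrm{Unif}[-1,1]$ with probability $0.95$, and $\bx=(0,\pm5)$ with probability $0.025$ each.
\begin{itemize}
\item Bounded support gives items 1--4.
\item Item 5 holds for some $C_6>0$ by compactness: its left side is continuous and homogeneous of degree one in $\boldsymbol B_2$, and vanishes only at $\boldsymbol B_2=0$.
\item With probability $1-O(e^{-cn})$, fewer than $0.1n$ points have norm $5$, so all of them are discarded.
\item Every retained, recentered row then lies on the first axis, so $\bB$ is singular.
\end{itemize}
Similarly, Lemma~\ref{lem:sum_of_random_vectors} does not apply to the retained mean, which is not an average of i.i.d.\ centered draws.

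The truncated case therefore needs an extra hypothesis. Examples are a covariance lower bound for $\mathcal D$ restricted to the retained norm range, or a discard fraction small relative to $C_6/C_3^2$. This is a gap in the paper's initialization analysis, not something your Weyl argument can repair.
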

\begin{proof}

First, we make the bound
\[\left|\lambda_i(\frac{1}{{n'}^2}\bX^\top{\boldsymbol H}\bX,\frac{1}{n'}\bPsi)-\lambda_i(\frac{1}{{n'}^2}\bX^\top\widehat{\boldsymbol H}\bX,\frac{1}{n'}\bPsi)\right|\le\lambda_{\max}(\frac{1}{{n'}^2}\bX^\top({\boldsymbol H}-\widehat{\boldsymbol H})\bX,\frac{1}{n'}\bPsi).\]
We can bound the error term as follows
\begin{align*}
    \lambda_{\max}(\frac{1}{{n'}^2}\bX^\top({\boldsymbol H}-\widehat{\boldsymbol H})\bX,\frac{1}{n'}\bPsi)&=\max_{\bv:\frac{1}{n'}\bv^\top\bPsi\bv=1}\frac{1}{{n'}^2}\bv^\top\bX^\top(\boldsymbol H-\widehat{\boldsymbol H})\bX\bv\\
    &\le\max_{\bv:\frac{1}{n'}\bv^\top\bPsi\bv=1}\frac{1}{{n'}^2}\|\bX^\top(\boldsymbol H-\widehat{\boldsymbol H})\bX\|\cdot\frac{1}{n'}\|\bPsi\|\\
    &\le \frac{1}{{n'}^2}\|\bX^\top(\boldsymbol H-\widehat{\boldsymbol H})\bX\|\cdot\frac{1}{n'}\|\bPsi\|\\
    &\le \frac{1}{{n'}^2}\|\bX\|^2\|\bH-\widehat{\bH}\|\cdot\frac{1}{n'}\|\bX\|^2\\
    &\lesssim\|\bH-\widehat{\bH}\|\\
    &\lesssim\sqrt{\frac{1}{{n'}\log n'}},
\end{align*}
where the second-to-last line holds with probability $1-O(\exp(-n))$ because $\bX$ is a random $n\times p$ matrix with independent subgaussian rows, and $p=o(n)$, so $\|\bX\|\lesssim \sqrt{n'}$ with probability at least $1-O(\exp(-n))$ by \citet[Exercise 4.43]{vershynin2025highNEW}.

Taking the union bound over all $i\in[r]$, we have that each eigenvalue is within $C_2\sqrt{\frac{1}{{n'}\log n'}}$ with probability at least $1-O(\exp(-n))$. 

\end{proof}

\begin{lem}\label{lem:u_tilde_to_u_hat}
    Let $\widetilde{\bU}$ be the $r$ largest generalized eigenvectors of $(\frac{1}{{n'}^2}\bX^\top\boldsymbol H\bX,\frac{1}{n'}\bPsi)$ and let $\widehat{\bU}$ be the $r$ largest generalized eigenvalues of $(\frac{1}{{n'}^2}\bX^\top\widehat{\boldsymbol H}\bX,\frac{1}{n'}\bPsi)$, where
    \[\boldsymbol H=\bX\bA_\star\bA_\star^\top\bX^\top,\hspace{10mm}\bPsi=\bX\bX^\top\] and
    \[\frac{1}{n'}\|\boldsymbol H-\widehat{\boldsymbol H}\|\lesssim \sqrt{\frac{1}{n'\log n'}}.\]
    Then, 
    \begin{align*}
        \inf_{\boldsymbol{O}\in\mathcal O_{r\times r}}\|\widetilde{\bU}\boldsymbol{O}-\widehat{\bU}\|\lesssim\sqrt{\frac{1}{n'\log n'}}.
    \end{align*}
\end{lem}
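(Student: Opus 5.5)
Since $\frac{1}{n'}\bPsi$ is positive definite with high probability, I will reduce the generalized eigenproblem to an ordinary symmetric one and then apply Davis--Kahan. Write $\bB:=\frac{1}{n'}\bPsi$. By \citet[Exercise 4.43]{vershynin2025highNEW} and Assumption \ref{assumption:distribution}, with probability $1-O(\exp(-n))$ we have $\|\bB-\bSigma\|=o(1)$, so $\lambda_{\min}(\bB)\gtrsim 1$ and $\|\bB\|\lesssim 1$ (assuming $\bSigma$ is well conditioned, as is implicit in the use of $\bPsi$ in Algorithm \ref{alg:init}). Set $\bGamma:=\frac{1}{{n'}^2}\bX^\top\boldsymbol H\bX$ and $\widehat{\bGamma}:=\frac{1}{{n'}^2}\bX^\top\widehat{\boldsymbol H}\bX$. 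The generalized eigenvectors of $(\bGamma,\bB)$ are $\bB^{-1/2}$ times the ordinary eigenvectors of $\bB^{-1/2}\bGamma\bB^{-1/2}$, normalized so that $\widetilde\bU^\top\bB\widetilde\bU=\bI_r$. The same holds for $\widehat{\bGamma}$. It therefore suffices to compare the top-$r$ eigenspaces of $\bGamma_0:=\bB^{-1/2}\bGamma\bB^{-1/2}$ and $\widehat{\bGamma}_0:=\bB^{-1/2}\widehat{\bGamma}\bB^{-1/2}$, then map back through $\bB^{-1/2}$, which has bounded norm.

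The first step bounds the perturbation. As in Lemma \ref{lem:lambda_tilde_to_lambda_hat}, $\|\widehat{\bGamma}_0-\bGamma_0\|\le\|\bB^{-1}\|\cdot\frac{1}{{n'}^2}\|\bX\|^2\|\boldsymbol H-\widehat{\boldsymbol H}\|\lesssim\frac{1}{n'}\|\boldsymbol H-\widehat{\boldsymbol H}\|\lesssim\sqrt{1/(n'\log n')}$, using $\|\bX\|\lesssim\sqrt{n'}$.

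The second step, which I expect to be the main obstacle, is the eigengap. Since $\boldsymbol H=\bX\bA_\star\bA_\star^\top\bX^\top$, we have $\bGamma_0=\bB^{1/2}\bA_\star\bA_\star^\top\bB^{1/2}$. This matrix has rank exactly $r$, and its $r$-th eigenvalue is at least $\lambda_{\min}(\bB)\,\sigma_{\min}(\bA_\star)^2\gtrsim\kappa^{-1}\|\bK_\star\|$, which is of constant order, while its $(r+1)$-th eigenvalue is $0$. So the gap is $\Theta(1)$ and does not shrink with $n$; the care lies only in confirming that $\bB$ stays uniformly well conditioned on the high-probability event. Davis--Kahan (the $\sin\Theta$ theorem) then gives an orthogonal $\boldsymbol O_1$ with $\|\widehat{\bV}-\bV\boldsymbol O_1\|\lesssim\|\widehat{\bGamma}_0-\bGamma_0\|/\text{gap}\lesssim\sqrt{1/(n'\log n')}$, where $\bV$ and $\widehat{\bV}$ are the orthonormal top-$r$ eigenvectors.

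The final step maps back. From $\widetilde\bU=\bB^{-1/2}\bV$ and $\widehat\bU=\bB^{-1/2}\widehat\bV$ we get $\|\widetilde\bU\boldsymbol O_1-\widehat\bU\|\le\|\bB^{-1/2}\|\,\|\bV\boldsymbol O_1-\widehat\bV\|\lesssim\sqrt{1/(n'\log n')}$. Taking the infimum over $\boldsymbol O$ completes the proof. If the eigenvalues inside the top-$r$ block are repeated, the rotation $\boldsymbol O$ is exactly what absorbs the resulting ambiguity, so no within-block gap is needed.
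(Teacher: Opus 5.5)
Your proposal is correct and is essentially the paper's argument. The paper applies a Davis--Kahan theorem for generalized eigenproblems directly to the pencil $(\frac{1}{{n'}^2}\bX^\top\boldsymbol H\bX,\frac{1}{n'}\bPsi)$, using the same ingredients you use: the perturbation bound via $\|\bX\|\lesssim\sqrt{n'}$, bounded conditioning of $\frac{1}{n'}\bPsi$, and a constant gap (your $\lambda_r(\bB^{1/2}\bA_\star\bA_\star^\top\bB^{1/2})$ equals the paper's $\nu_r=\lambda_r(\frac{1}{n'}\boldsymbol H)$, and $\nu_{r+1}=0$). Your whitening by $\bB^{-1/2}$ simply re-derives that generalized bound from the ordinary $\sin\Theta$ theorem, and it even reproduces the same $\lambda_{\min}(\frac{1}{n'}\bPsi)^{-3/2}$ factor.

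The lower bound on $\lambda_{\min}(\bSigma)$ that you flag is also used tacitly in the paper's proof. It in fact follows from item 5 of Assumption \ref{assumption:distribution}: take $\boldsymbol B_2\approx\bu\bu^\top$ for a bottom eigenvector $\bu$ of $\bSigma$. The left-hand side is then at most $4\lambda_{\min}(\bSigma)$, which forces $\lambda_{\min}(\bSigma)\ge C_6/4$.
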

\begin{proof}
Our goal will be to apply the Davis-Kahan Theorem for the generalized eigenvalue problem.

We know 
\begin{align}\nonumber
    \left|\frac{1}{{n'}^2}\bX^\top\widehat{\boldsymbol H}\bX-\frac{1}{{n'}^2}\bX^\top\boldsymbol H\bX\right|&=\left|\frac{1}{{n'}^2}\bX^\top(\boldsymbol H-\widehat{\boldsymbol H})\bX\right|\\\nonumber
    &\lesssim \sqrt{\frac{1}{{n'}\log n'}}=:\epsilon\label{eqn:xhx_to_xhhat_x}
\end{align}
with probability at least $1-O(\exp(-n))$. Note that $\epsilon=o(1)$. 

We also show that $\lambda_{\max}(\frac{1}{n'}\bPsi)/\lambda_{\min}(\frac{1}{n'}\bPsi)$ is bounded by a constant, with high probability. By \citet[Theorem 4.6.1]{vershynin2020high}, 
\begin{equation}\label{eqn:lambda_min_sigma}
    \Pr[\lambda_{\min}(\bPsi)\le\frac{1}{4}n']\le \Pr[\sigma_{\min}(\bX)\le\frac{1}{2}\sqrt{n'}]\lesssim\exp(-n)
\end{equation} and \[\Pr[\lambda_{\max}(\bPsi)\ge4n']=\Pr[\sigma_{\max}(\bX)\ge2\sqrt{n'}]\lesssim\exp(-n)\] (the theorem requires the rows of $\bX$ to be isotropic; we apply it to $\bSigma^{-1/2}\bX$ and use the fact that $\|\bSigma\|\lesssim 1$).

Therefore, with probability at least $1-O(\exp(-n))$, $\lambda_{\max}(\frac{1}{n'}\bPsi)/\lambda_{\min}(\frac{1}{n'}\bPsi)\le 16$. 

Now, we can apply the Davis-Kahan theorem for the generalized eigenvalue problem \citep{generalized_dk}. We have
\begin{align}\label{eqn:generalized_dk}
    \inf_{\boldsymbol{O}\in\mathcal O_{r\times r}}\|\widetilde{\bU}\boldsymbol{O}-\widehat{\bU}\|\le\frac{1}{\lambda_{\min}^{3/2}(\frac{1}{n'}\bPsi)}\frac{7\sqrt{2}\sqrt{r-s+1}}{\min\{\nu_s-\nu_{s-1}, \nu_r-\nu_{r+1}\}}\left\|\frac{1}{{n'}^2}\bX(\widehat{\bH}-\bH)\bX^\top\right\|
\end{align}
for $1\le s\le r$, where $\nu_i:=\lambda_i(\frac{1}{n'}\bH)$. Letting $s=1$, we have $\min\{\nu_s-\nu_{s-1}, \nu_r-\nu_{r+1}\}=\nu_r-\nu_{r+1}$. First, since $\bA_\star$ is rank $r$, we know $\nu_{r+1}=0$. 

Note $\lambda_r(\frac{1}{n'}\bX\bA_\star\bA_\star^\top\bX^\top)\ge\frac{1}{n'}\sigma_{\min}^2(\bX)\sigma_r^2(\bA_\star)$. By \citet[Theorem 4.6.1]{vershynin2020high}, $\sigma_{\min}(\bX)\le\sqrt{n'}/2$ with probability at least $1-O(\exp(-n))$. Therefore, $\nu_r\ge \sigma_{r}^2(\bA_\star)/2$ with the same probability.

Recalling that $\lambda_{\min}(\frac{1}{n'}\bPsi)\ge\frac{1}{2}$ by \eqref{eqn:lambda_min_sigma} and plugging in to the Davis-Kahan theorem stated in \eqref{eqn:generalized_dk}, we get
\begin{align*}
    \inf_{\boldsymbol{O}\in\mathcal O_{r\times r}}\|\widetilde{\bU}\boldsymbol{O}-\widehat{\bU}\|\le\frac{28\sqrt{2}\sqrt{r}}{\sigma_{r}^2(\bA_\star)}\epsilon\lesssim\sqrt{\frac{1}{n'\log n'}}
\end{align*}

\end{proof}

\begin{lem}\label{lem:utilde_lambdatilde_utilde}
Let $\widetilde\bU\in\R^{p\times r}$ be the $r$ largest generalized eigenvectors of the pair $(\frac{1}{{n'}^2}\bX^\top\boldsymbol H\bX,\frac{1}{{n'}}\bPsi)$, where $\bPsi=\bX\bX^\top$ and $\boldsymbol H=\bX\bA_\star\bA_\star^\top\bX^\top$, and let $\widetilde{\bLambda}$ be the corresponding eigenvalues. Then,
\[\bA_\star\bA^\top_\star=\widetilde{\bU}\widetilde{\bLambda}\widetilde{\bU}^\top.\]
\end{lem}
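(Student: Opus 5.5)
We reduce to an ordinary symmetric eigenproblem by whitening with the positive definite matrix $\frac{1}{n'}\bPsi$. Here $\bPsi$ should be read as ${\bX}^\top\bX\in\R^{p\times p}$, as in Algorithm \ref{alg:init} and the proof of Theorem \ref{thm:init}. By the computation in that proof,
\[
\frac{1}{{n'}^2}\bX^\top\boldsymbol H\bX=\Big(\tfrac{1}{n'}\bPsi\Big)\bA_\star\bA_\star^\top\Big(\tfrac{1}{n'}\bPsi\Big).
\]
Write $\bB:=\frac{1}{n'}\bPsi$. It is invertible with probability $1-O(\exp(-n))$ by \eqref{eqn:lambda_min_sigma}. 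The generalized eigenproblem $\bB\bA_\star\bA_\star^\top\bB\bu=\lambda\bB\bu$ is then equivalent, after substituting $\bv=\bB^{1/2}\bu$, to the standard symmetric eigenproblem
\[
\bB^{1/2}\bA_\star\bA_\star^\top\bB^{1/2}\bv=\lambda\bv .
\]

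We use the normalization conventional for generalized eigenvectors, namely $\widetilde\bU^\top\bB\widetilde\bU=\bI_r$, so that $\widetilde\bU=\bB^{-1/2}\bV_r$. Here $\bV_r$ has orthonormal columns spanning the top-$r$ eigenspace of $\bG:=\bB^{1/2}\bA_\star\bA_\star^\top\bB^{1/2}$, and $\widetilde\bLambda$ holds the corresponding eigenvalues.

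The matrix $\bG$ is positive semidefinite of rank exactly $r$, since $\bB^{1/2}$ is invertible and $\bA_\star$ has full column rank. Its top $r$ eigenvalues are therefore strictly positive and the rest are zero, so the spectral decomposition is exactly
\[
\bG=\bV_r\widetilde\bLambda\bV_r^\top .
\]
Undoing the whitening gives
\[
\widetilde\bU\widetilde\bLambda\widetilde\bU^\top=\bB^{-1/2}\bV_r\widetilde\bLambda\bV_r^\top\bB^{-1/2}=\bB^{-1/2}\bG\bB^{-1/2}=\bA_\star\bA_\star^\top ,
\]
which is the claim.

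We should note explicitly where the conditions are used. First, the identity depends on the normalization $\widetilde\bU^\top\bB\widetilde\bU=\bI_r$; with Euclidean-normalized eigenvectors it would fail. Second, it depends on the rank-$r$ structure: selecting the $r$ largest eigenvalues captures all of $\bG$ with no truncation error. We also remark that $\widetilde\bLambda^{1/2}$ is well defined, which is what Algorithm \ref{alg:init} needs for $\bA_0$.

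The only real obstacle is the measure-zero or low-probability event that $\bPsi$ is singular. We handle it by working on the event of \eqref{eqn:lambda_min_sigma}, so the lemma holds with probability $1-O(\exp(-n))$, which is how it is used in Theorem \ref{thm:init}.
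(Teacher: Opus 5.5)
Your proof is correct and follows essentially the same whitening argument as the paper. The paper sets $\bV=\frac{1}{\sqrt{n'}}\bPsi^{1/2}\widetilde\bU$ (your $\bB^{1/2}\widetilde\bU$) and undoes the whitening using $\bX^\top\boldsymbol H\bX=\bPsi\bA_\star\bA_\star^\top\bPsi$. You also make explicit three points the paper uses tacitly: $\bPsi$ must be read as $\bX^\top\bX$; $\bV^\top\bV=\bI_r$ holds only under the $\bB$-orthonormal normalization of $\widetilde\bU$; and the step from $\boldsymbol G\bV=\bV\widetilde\bLambda$ to $\boldsymbol G=\bV\widetilde\bLambda\bV^\top$ requires $\boldsymbol G$ to have rank exactly $r$.
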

\begin{proof}
First, note that since $n>p$, $\bPsi$ is positive definite with probability 1, so it is invertible and has an invertible square root.

Let $\bV:=\frac{1}{\sqrt{n'}}\bPsi^{1/2}\widetilde\bU$. This implies $\bV^\top\bV=\bI_{r}.$ Now, by the generalized eigenvalue property of $\widetilde\bU$, we have
\begin{align*}
    \frac{1}{{n'}^2}\bX^\top\boldsymbol H\bX\widetilde\bU&=\frac{1}{n'}\bPsi\widetilde\bU\widetilde{\bLambda}\\
    \frac{1}{{n'}^{3/2}}\bX^\top\boldsymbol H\bX\bPsi^{-1/2}\bV&=\frac{1}{\sqrt{n'}}\bPsi^{1/2}\bV\widetilde{\bLambda}\\
    \frac{1}{{n'}}\bPsi^{-1/2}\bX^\top\boldsymbol H\bX\bPsi^{-1/2}\bV&=\bV\widetilde{\bLambda}\\
    \frac{1}{{n'}}\bPsi^{-1/2}\bX^\top\boldsymbol H\bX\bPsi^{-1/2}&=\bV\widetilde{\bLambda}\bV^\top\\
    &=\frac{1}{n'}\bPsi^{1/2}\widetilde\bU\widetilde{\bLambda}\widetilde\bU^\top\bPsi^{1/2}\\
    \bPsi^{-1}\bX^\top\boldsymbol H\bX\bPsi^{-1}&=\widetilde\bU\widetilde{\bLambda}\widetilde\bU^\top\\
    \bA_\star\bA_\star^\top&=\widetilde\bU\widetilde{\bLambda}\widetilde\bU^\top.
\end{align*}

\end{proof}

\begin{lem}\label{lem:approx_aat}
    Let $\widehat{\bU}$ and $\widehat{\bLambda}$ be the generalized eigenvectors and eigenvalues of $(\frac{1}{{n'}^2}\bX^\top\widehat{\bH}\bX,\frac{1}{n'}\bPsi)$, where $\frac{1}{n'}\|\widehat{\bH}-\bX\bA_\star\bA_\star^\top\bX^\top\|\lesssim\sqrt{\frac{1}{n'\log n'}}$. Then there exists an orthogonal matrix $\boldsymbol O$ such that
    \[\|\widehat{\bU}\widehat{\bLambda}^{1/2}-\bA_\star\boldsymbol O\|\lesssim\sqrt{\frac{1}{n'\log n'}}.\]
\end{lem}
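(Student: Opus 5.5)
We combine Lemma \ref{lem:utilde_lambdatilde_utilde}, Lemma \ref{lem:lambda_tilde_to_lambda_hat} and Lemma \ref{lem:u_tilde_to_u_hat}. Write $\epsilon:=\sqrt{1/(n'\log n')}$. By Lemma \ref{lem:utilde_lambdatilde_utilde}, $\bA_\star\bA_\star^\top=\widetilde\bU\widetilde\bLambda\widetilde\bU^\top$. Since $\widetilde\bLambda$ is an $r\times r$ positive diagonal matrix, the matrix $\widetilde\bU\widetilde\bLambda^{1/2}\in\R^{p\times r}$ is a rank-$r$ factor of $\bK_\star$. Any two full-column-rank factors of the same PSD matrix differ by a right orthogonal factor, so there is $\boldsymbol O_1\in\mathcal O_{r\times r}$ with
\[\widetilde\bU\widetilde\bLambda^{1/2}=\bA_\star\boldsymbol O_1.\]
It therefore suffices to show that $\widehat\bU\widehat\bLambda^{1/2}$ is within $O(\epsilon)$ of $\widetilde\bU\widetilde\bLambda^{1/2}\boldsymbol O_2$ for some orthogonal $\boldsymbol O_2$. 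We then take $\boldsymbol O:=\boldsymbol O_1\boldsymbol O_2$.

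\textbf{Eigenvalue and eigenvector steps.} First, note that the nonzero generalized eigenvalues $\widetilde\lambda_i$ are bounded above and below by constants with probability $1-O(e^{-n})$. They are the eigenvalues of $\bPsi^{1/2}\bA_\star\bA_\star^\top\bPsi^{1/2}/n'$, and $\lambda(\bPsi/n')\in[1/4,4]$ by the bounds used in Lemma \ref{lem:u_tilde_to_u_hat}, so $\widetilde\lambda_r\gtrsim\sigma_r^2(\bA_\star)$ and $\widetilde\lambda_1\lesssim\|\bA_\star\|^2$. Lemma \ref{lem:lambda_tilde_to_lambda_hat} gives $|\widehat\lambda_i-\widetilde\lambda_i|\lesssim\epsilon$. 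Because $\widetilde\lambda_i$ is bounded away from zero and $x\mapsto\sqrt x$ is Lipschitz on $[c,\infty)$, this yields $\|\widehat\bLambda^{1/2}-\widetilde\bLambda^{1/2}\|\lesssim\epsilon$. Next, Lemma \ref{lem:u_tilde_to_u_hat} gives $\boldsymbol O_2$ with $\|\widetilde\bU\boldsymbol O_2-\widehat\bU\|\lesssim\epsilon$. The columns of $\widetilde\bU$ are $\bPsi/n'$-orthonormal, so $\|\widetilde\bU\|\lesssim 1$, and hence $\|\widehat\bU\|\lesssim1$.

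\textbf{Combining.} Decompose
\[\widehat\bU\widehat\bLambda^{1/2}-\widetilde\bU\boldsymbol O_2\widetilde\bLambda^{1/2}=(\widehat\bU-\widetilde\bU\boldsymbol O_2)\widehat\bLambda^{1/2}+\widetilde\bU\boldsymbol O_2(\widehat\bLambda^{1/2}-\widetilde\bLambda^{1/2}).\]
Each term is $O(\epsilon)$ by the two steps above.

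\textbf{Main obstacle.} The target is $\widetilde\bU\widetilde\bLambda^{1/2}\boldsymbol O_2$, not $\widetilde\bU\boldsymbol O_2\widetilde\bLambda^{1/2}$, so we must commute $\boldsymbol O_2$ past $\widetilde\bLambda^{1/2}$. If the top $r$ generalized eigenvalues are distinct with gaps of constant order, then Davis--Kahan applied to each eigenvector separately (taking $s=i$ in \eqref{eqn:generalized_dk}) lets us choose $\boldsymbol O_2$ diagonal with $\pm1$ entries. A diagonal $\boldsymbol O_2$ commutes with $\widetilde\bLambda^{1/2}$ and the argument closes. Without such gaps, we would instead use
\[\|\boldsymbol O_2\widetilde\bLambda^{1/2}-\widetilde\bLambda^{1/2}\boldsymbol O_2\|\lesssim\|\boldsymbol O_2\widetilde\bLambda-\widetilde\bLambda\boldsymbol O_2\|/\sqrt{\widetilde\lambda_r}.\]
This holds because the matrix square root is operator-Lipschitz on matrices with spectrum in $[\widetilde\lambda_r,\infty)$. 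It remains to bound $\|\boldsymbol O_2\widetilde\bLambda-\widetilde\bLambda\boldsymbol O_2\|\lesssim\epsilon$. For this we compare $\widehat\bU^\top\bX^\top\widehat\bH\bX\widehat\bU/{n'}^2=\widehat\bLambda$ with the corresponding expression built from $\widetilde\bU\boldsymbol O_2$, which equals $\boldsymbol O_2^\top\widetilde\bLambda\boldsymbol O_2$; the two differ by $O(\epsilon)$ using \eqref{eqn:xhx_to_xhhat_x} and the eigenvector bound. Combined with $\widehat\bLambda\approx\widetilde\bLambda$, this gives $\boldsymbol O_2^\top\widetilde\bLambda\boldsymbol O_2\approx\widetilde\bLambda$, which is the required commutator bound. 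Since $r$ is constant, all norms are interchangeable up to constants.
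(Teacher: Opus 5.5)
Your proof is correct, and its skeleton matches the paper's. Lemma \ref{lem:utilde_lambdatilde_utilde} gives $\widetilde\bU\widetilde\bLambda^{1/2}=\bA_\star\cdot(\text{orthogonal})$. Lemmas \ref{lem:lambda_tilde_to_lambda_hat} and \ref{lem:u_tilde_to_u_hat}, together with Lipschitzness of $\sqrt{\cdot}$ away from $0$, control the eigenvalues and eigenvectors.

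The genuine difference is the step you identify as the main obstacle. The paper regroups the $\widetilde\lambda_i$ by multiplicity and asserts that the Davis--Kahan rotation $\boldsymbol O$ maps each equal-eigenvalue block to itself. It also treats the $\widehat\lambda_i$ as constant on those blocks. Together these let each scalar $\sqrt{\widetilde\mu_i}$ commute with $\boldsymbol O$ blockwise. Lemma \ref{lem:u_tilde_to_u_hat} does not deliver this. It only aligns the full top-$r$ subspaces, and when two distinct $\widetilde\lambda_i$ are within $O(\epsilon)$ of each other, the minimizing rotation can mix them substantially.

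Your commutator estimate handles this uniformly and needs no eigengap inside the top $r$:
\[
\boldsymbol O_2^\top\widetilde\bLambda\boldsymbol O_2\approx\widehat\bU^\top\bigl(\tfrac{1}{{n'}^2}\bX^\top\widehat\bH\bX\bigr)\widehat\bU=\widehat\bLambda\approx\widetilde\bLambda,
\]
followed by passage to square roots using $\widetilde\lambda_r\gtrsim 1$. The last step is even simpler entrywise, since
\[
(\boldsymbol O_2\widetilde\bLambda^{1/2}-\widetilde\bLambda^{1/2}\boldsymbol O_2)_{ij}=(\boldsymbol O_2)_{ij}(\widetilde\lambda_j-\widetilde\lambda_i)/(\sqrt{\widetilde\lambda_i}+\sqrt{\widetilde\lambda_j}).
\]
So the paper's route is shorter but rests on an unjustified block-diagonality claim. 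Yours costs one extra perturbation bound, using only $\widehat\bU^\top\frac{1}{n'}\bPsi\widehat\bU=\bI_r$ and bounds already established, and is complete.

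You can drop your case (a). It assumes constant gaps among the top $r$ eigenvalues, which the paper never assumes. Also, \eqref{eqn:generalized_dk} with $s=i$ controls the span of eigenvectors $i,\dots,r$, not a single eigenvector. Your general argument (b) covers every case anyway.
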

\begin{proof}
By Lemma \ref{lem:utilde_lambdatilde_utilde}, we have $\bA_\star\bA_\star^\top=\widetilde{\bU}\widetilde{\bLambda}\widetilde{\bU}^\top$. Therefore, there exists an orthogonal matrix $\boldsymbol P$ such that $\bA_\star\boldsymbol P=\widetilde{\bU}\widetilde{\bLambda}^{1/2}$. 

Equivalently, letting $\widetilde{\bu}_1,\ldots,\widetilde{\bu}_r$  be the columns of $\widetilde{\bU}$, letting $\widetilde{\lambda}_1,\ldots,\widetilde{\lambda}_r$ be the entries of $\widetilde{\bLambda}$, and letting $\boldsymbol e_i\in\R^r$ be the $i$-th standard basis vector, we have
\[\bA_\star\boldsymbol P=\sum_{i=1}^r\sqrt{\widetilde{\lambda}_i}\widetilde{\bu}_i\boldsymbol{e}_i^\top.\]

Now we will reindex the eigenvalues to account for multiplicity. Let the unique eigenvalues sorted in nonincreasing order be $\widetilde{\mu}_1,\ldots,\widetilde{\mu}_s$, each with multiplicity $m_1,\ldots,m_s$. Let the eigenvectors corresponding to eigenvalue $\widetilde{\mu}_i$ be $\widetilde{\bv}_{i,1},\ldots,\widetilde{\bv}_{i,m_i}$. Use the same reindexing function that takes $\widetilde\bu_i\mapsto \widetilde\bv_{i,j}$ to take $\boldsymbol{e}_i\mapsto\boldsymbol{e}'_{i,j}$.

Then, we can write
\[\bA_\star\boldsymbol P=\sum_{i=1}^s \sqrt{\widetilde{\lambda}_i}\sum_{j=1}^{m_i}\widetilde{\bv}_{i,j}{\boldsymbol{e}'}_{i,j}^\top.\]

We will show that if we find the $\widehat{\boldsymbol U}$, the generalized eigenvectors of the pair $(\frac{1}{{n'}^2}\bX^\top\widehat{\boldsymbol H}\bX,\frac{1}{n'}\bPsi)$, we get approximately $\widetilde\bU$, up to rotation. Lemma \ref{lem:u_tilde_to_u_hat} says that \[\inf_{\boldsymbol O\in\mathcal O_{r\times r}}\|\widehat\bU\boldsymbol{O}-\widetilde\bU\|\lesssim\sqrt{\frac{1}{n'\log n'}}.\] Since $\mathcal O_{r\times r}$ is compact, we can define $\boldsymbol O$ as the matrix that achieves this bound. Note that this also means that $\|(\widehat{\bU}\boldsymbol{O})_{I}-\widetilde{\bU}_I\|\lesssim\sqrt{\frac{1}{n'\log n'}}$ where $I$ selects any subset of the columns.

Let $\{\widehat{\bv}_{i,j}\}_{i,j}$ be the reindexed versions of $\widehat{\bu}_1,\ldots,\widehat{\bu}_p$ according to the same reindexing function that takes $\widetilde\bu_i\mapsto \widetilde\bv_{i,j}$.

$\boldsymbol O$ is an orthogonal transformation of eigenvectors corresponding to the same eigenvalue. For an eigenvalue $\widetilde{\mu}_i$, let $I_i$ select the columns of $\widetilde{\bU}$ corresponding to $\widetilde{\mu}_i$. Then, for each unique eigenvalue $\widetilde{\mu}_i$, we have $(\widehat{\bU}\boldsymbol O)_{I_i}=\widehat{\bU}_{I_i}\boldsymbol O$. Then, we can write
\begin{align*}
    \left|\widehat{\bU}_{I_i}\boldsymbol O-\sum_{j=1}^{m_i}\widetilde{\bU}_{I_i}\right|&\lesssim\sqrt{\frac{1}{n'\log n'}}\\
    \left|\left(\sum_{j=1}^{m_i}\widehat{\bv}_{i,j}{\boldsymbol{e}'}_{i,j}^\top\right)\boldsymbol O-\sum_{j=1}^{m_i}\widetilde{\bv}_{i,j}{\boldsymbol{e}'}_{i,j}^\top\right|&\lesssim\sqrt{\frac{1}{n'\log n'}}
\end{align*}

Now, we turn to the eigenvalues. By Lemma \ref{lem:lambda_tilde_to_lambda_hat}, we have $|\widehat{\mu}_i-\widetilde{\mu}_i|\lesssim\sqrt{\frac{1}{n'\log n'}}$ for all $i$. Since the smallest eigenvalue is lower bounded by $\sigma_{\min}(\bA_\star)$, which we assume to be $\Omega(1)$, the square root function is Lipschitz continuous on the eigenvalues. Therefore $|\sqrt{\widehat{\mu}_i}-\sqrt{\widetilde{\mu}_i}|\lesssim\sqrt{\frac{1}{n'\log n'}}$

Therefore
\begin{align*}
    \left\|\sqrt{\widetilde{\mu}_i}\sum_{j=1}^{m_i}\widetilde{\bv}_{i,j}{\boldsymbol{e}'}_{i,j}^\top-\sqrt{\widehat{\mu}_i}\left(\sum_{j=1}^{m_i}\widehat{\bv}_{i,j}{\boldsymbol{e}'}_{i,j}^\top\right)\boldsymbol O\right\|
    &\le \left\|\sqrt{\widetilde{\mu}_i}\sum_{j=1}^{m_i}\widetilde{\bv}_{i,j}{\boldsymbol{e}'}_{i,j}^\top-\sqrt{\widetilde{\mu}_i}\left(\sum_{j=1}^{m_i}\widehat{\bv}_{i,j}{\boldsymbol{e}'}_{i,j}^\top\right)\boldsymbol O\right\|\\
    &\hspace{5mm}+\left\|\sqrt{\widetilde{\mu}_i}\left(\sum_{j=1}^{m_i}\widehat{\bv}_{i,j}{\boldsymbol{e}'}_{i,j}^\top\right)\boldsymbol O-\sqrt{\widehat{\mu}_i}\left(\sum_{j=1}^{m_i}\widehat{\bv}_{i,j}{\boldsymbol{e}'}_{i,j}^\top\right)\boldsymbol O\right\|\\
    &\lesssim |\sqrt{\widetilde{\mu}_i}|\sqrt{\frac{1}{n'\log n'}}+\sqrt{\frac{1}{n'\log n'}}\lesssim \|\bA_\star\|\sqrt{\frac{1}{n'\log n'}}
\end{align*}
Since this holds for each $i$, we have
\begin{align*}
    \|\bA_\star\boldsymbol P\boldsymbol O^\top-\widetilde{\bU}\widetilde{\bLambda}^{1/2}\|&=\left\|\left(\sum_{i=1}^s \widetilde{\mu}_i\sum_{j=1}^{m_i}\widetilde{\bv}_{i,j}{\boldsymbol{e}'}_{i,j}^\top\right)\boldsymbol O^\top-\sum_{i=1}^s \widehat{\mu}_i\sum_{j=1}^{m_i}\widehat{\bv}_{i,j}{\boldsymbol{e}'}_{i,j}^\top\right\|\\
    &=\left\|\sum_{i=1}^s \widetilde{\mu}_i\sum_{j=1}^{m_i}\widetilde{\bv}_{i,j}{\boldsymbol{e}'}_{i,j}^\top-\left(\sum_{i=1}^s \widehat{\mu}_i\sum_{j=1}^{m_i}\widehat{\bv}_{i,j}{\boldsymbol{e}'}_{i,j}^\top\right)\boldsymbol O\right\|\\
    &=\left\|\sum_{i=1}^s \widetilde{\mu}_i\sum_{j=1}^{m_i}\widetilde{\bv}_{i,j}{\boldsymbol{e}'}_{i,j}^\top-\sum_{i=1}^s \widehat{\mu}_i\left(\sum_{j=1}^{m_i}\widehat{\bv}_{i,j}{\boldsymbol{e}'}_{i,j}^\top\right)\boldsymbol O\right\|\\
    &\lesssim r\|\bA_\star\|\sqrt{\frac{1}{n'\log n'}}\lesssim \sqrt{\frac{1}{n'\log n'}}.
\end{align*}
$\boldsymbol P\boldsymbol O^\top$ is an orthogonal matrix, giving the claimed result.
\end{proof}

\begin{lem}\label{lem:gaussian_satisfies_assumption}
    If $\bSigma\in\R^{p\times p}$ satisfies $\|\bSigma\|\lesssim 1$, then $\mathcal N(0,\bSigma)$ satisfies Assumption \ref{assumption:distribution}.
\end{lem}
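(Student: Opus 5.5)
We verify the five items of Assumption~\ref{assumption:distribution} one at a time for $\bx=\bSigma^{1/2}\bg$ with $\bg\sim\mathcal N(0,\bI_p)$, using $\|\bSigma\|\lesssim1$ and $p$ fixed. Throughout, every bound is transferred from $\bg$ to $\bx$ by the substitution $\bB\mapsto\bSigma^{1/2}\bB\bSigma^{1/2}$. This substitution costs at most a factor $\|\bSigma\|$ in Frobenius norm.

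\textbf{Moments and subgaussian norm.} Item 1 holds because $\EE[\bx\bx^\top]=\bSigma$. For the fourth moments, Isserlis' formula gives
\[
\EE[\vc(\bx\bx^\top)\vc(\bx\bx^\top)^\top]=\vc(\bSigma)\vc(\bSigma)^\top+(\bI+\bK_{pp})(\bSigma\otimes\bSigma),
\]
where $\bK_{pp}$ is the commutation matrix. Its operator norm is at most $\|\bSigma\|_{\textnormal F}^2+2\|\bSigma\|^2\lesssim p\lesssim1$.

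Item 2 holds because $\langle\bx,\bv\rangle\sim\mathcal N(0,\bv^\top\bSigma\bv)$ has $\psi_2$ norm $\lesssim\|\bSigma\|^{1/2}$.

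Item 3 follows from Gaussian concentration of Lipschitz functions. The map $\bg\mapsto\|\bSigma^{1/2}\bg\|$ is $\|\bSigma\|^{1/2}$-Lipschitz, so $\|\bx\|-\EE\|\bx\|$ is subgaussian with norm $\lesssim1$.

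\textbf{Quadratic forms.} For item 4, with $\bx_i,\bx_j$ independent, write $\bx_i^\top\bB_1\bx_j=\bg_i^\top\bC\bg_j$, where $\bC=\bSigma^{1/2}\bB_1\bSigma^{1/2}$. Condition on $\bg_j$: the quantity is then $\mathcal N(0,\|\bC\bg_j\|^2)$. Hence it is a product of a standard normal and $\|\bC\bg_j\|$. The factor $\|\bC\bg_j\|$ is subgaussian with norm $\lesssim\|\bC\|_{\textnormal F}$, since its mean is at most $\|\bC\|_{\textnormal F}$ and it is $\|\bC\|$-Lipschitz in $\bg_j$. A product of two subgaussians is subexponential, so the $\psi_1$ norm is $\lesssim\|\bC\|_{\textnormal F}\le\|\bSigma\|\|\bB_1\|_{\textnormal F}$.

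\textbf{Anti-concentration (main obstacle).} Item 5 is the main obstacle. I read its right-hand side as $\|\bB_2\|_{\textnormal F}$; moreover it needs $\bSigma$ nondegenerate, $\lambda_{\min}(\bSigma)\gtrsim1$, which I would add as an implicit hypothesis. Expanding as in Lemma~\ref{lem:expected_abs_value_trace_astar}, the quantity equals
\[
Q:=2\bx_i^\top\bB_2(\bx_k-\bx_j)+\bx_j^\top\bB_2\bx_j-\bx_k^\top\bB_2\bx_k .
\]
The plan is to condition on $\bx_j,\bx_k$. Then $Q$ is a Gaussian with variance $4(\bx_k-\bx_j)^\top\bB_2\bSigma\bB_2(\bx_k-\bx_j)$ plus a constant shift. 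For a Gaussian, $\EE|N(\mu,\sigma^2)|\ge\sigma\sqrt{2/\pi}$ regardless of $\mu$. Hence, by Jensen's inequality in the form $\EE\sqrt{Y}\ge(\EE Y)^{3/2}/(\EE Y^2)^{1/2}$,
\[
\EE|Q|\gtrsim\EE\bigl\|\bSigma^{1/2}\bB_2(\bx_k-\bx_j)\bigr\|\gtrsim\sqrt{\EE\bigl\|\bSigma^{1/2}\bB_2(\bx_k-\bx_j)\bigr\|^2}=\sqrt{2\Tr(\bB_2\bSigma\bB_2\bSigma)} .
\]
The last expression is $\gtrsim\lambda_{\min}(\bSigma)\|\bB_2\|_{\textnormal F}$. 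The hypercontractivity step is valid because $\bB_2(\bx_k-\bx_j)$ is a Gaussian vector, so its second and fourth moments are comparable with absolute constants. This gives item 5 with $C_6\asymp\lambda_{\min}(\bSigma)$.
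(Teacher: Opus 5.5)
Your argument is correct, and it follows the paper on items 1--2 but takes a different route on items 3--5.

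\textbf{The added hypothesis.} The assumption $\lambda_{\min}(\bSigma)\gtrsim 1$ you add is genuinely necessary: for $\bSigma=\textnormal{diag}(1,0)$ and $\bB_2=\textnormal{diag}(0,1)$ the left side of item 5 is $0$. The paper uses this hypothesis silently in its final step $\|\bLambda^{1/2}\bU^\top\bB\bU\bLambda^{1/2}\|_{\textnormal F}\gtrsim\|\bB\|_{\textnormal F}$.

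\textbf{Items 1--2.} These match the paper (cumulant/Isserlis formula, Gaussian marginals). You are more accurate on item 1: the rank-one term has norm $\|\bSigma\|_{\textnormal F}^2$, not $\|\bSigma\|^2$ as the paper writes, so $C_2$ genuinely depends on $p$.

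\textbf{Item 3.} The paper mimics the standard norm-concentration proof: Bernstein for $\|\bLambda^{1/2}\bv\|^2$, then passing from squares to norms. Your Gaussian Lipschitz concentration for $\boldsymbol{g}\mapsto\|\bSigma^{1/2}\boldsymbol{g}\|$ is shorter and handles anisotropy directly.

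\textbf{Item 4.} The paper diagonalizes the whitened matrix and sums $\psi_1$ norms of $\gamma_\ell v_{1,\ell}v_{2,\ell}$, paying $\sum_\ell|\gamma_\ell|\le\sqrt{\textnormal{rank}}\,\|\bGamma\|_{\textnormal F}$. Your conditioning plus $\|XY\|_{\psi_1}\le\|X\|_{\psi_2}\|Y\|_{\psi_2}$ gives a bound with no rank factor.

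\textbf{Item 5.} Both proofs follow the pattern ``condition until the expression is Gaussian, then use $\EE|N(\mu,\sigma^2)|\gtrsim\sigma$ for every $\mu$.'' The paper's execution does not go through as written:
\begin{enumerate}
\item it conditions on $\bz_j$ in the eigenbasis;
\item it treats the $w_{\ell,2}$ as one common Gaussian factor;
\item it ends with $|\sum_\ell\gamma_\ell|\ge\|\bGamma\|_{\textnormal F}$, which fails when the $\gamma_\ell$ have mixed signs (e.g.\ $\bSigma=\bI_p$ and a traceless $\bB_2$).
\end{enumerate}
The mixed-sign case matters, because the matrices $\widetilde\bZ\bA_\star^\top+\bA_\star\widetilde\bZ^\top$ to which item 5 is applied downstream are generally indefinite. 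Your conditioning on $\bx_j,\bx_k$ leaves $\bx_i$ as the Gaussian direction. It produces $\Tr(\bB_2\bSigma\bB_2\bSigma)=\|\bSigma^{1/2}\bB_2\bSigma^{1/2}\|_{\textnormal F}^2$ directly, so your version actually establishes the item.

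\textbf{A naming slip.} The inequality $\EE\sqrt{Y}\ge(\EE Y)^{3/2}/(\EE Y^2)^{1/2}$ is H\"older's inequality, not Jensen's (Jensen gives $\EE\sqrt{Y}\le\sqrt{\EE Y}$). Combined with $\EE Y^2\le 3(\EE Y)^2$ for a Gaussian quadratic form, the step is correct.
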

\begin{proof}
We will show each part of Assumption \ref{assumption:distribution}.

\underline{Condition 1:} $\EE[\bx_i\bx_i^\top]=\bSigma$, so $\|\EE[\bx_i\bx_i^\top]\|\lesssim 1$ by assumption. By \citet[Corollary 3]{OULDBABAcumulant}, 
\[\EE[\vc(\bx_i\bx_i^\top)\vc(\bx_i\bx_i^\top)^\top]=(\boldsymbol{I}_{p^2}+\bK_{p,p})\bSigma\otimes\bSigma+\vc(\bSigma)\vc(\bSigma)^\top,\]
where $\bK_{p,p}$ is the $p^2\times p^2$commutation matrix. We upper bound the norm of both parts. First, $\|(\boldsymbol{I}_{p^2}+\bK_{p,p})\bSigma\otimes\bSigma\|\le\|\bSigma\otimes\bSigma\|+\|\bK_{p,p}\bSigma\otimes\bSigma\|\le 2\|\bSigma\|^2\lesssim 1$. Second, $\|\vc(\bSigma)\vc(\bSigma)^\top\|\le\|\bSigma\|^2\lesssim 1$. This concludes the proof of condition 1.

\underline{Condition 2:} We will show that all marginals $\langle \bx_i,\bv\rangle$ have constant subgaussian norm. $\langle \bx_i,\bv\rangle\sim \mathcal N(0,\|\bSigma^{1/2}\bv\|_2^2)$. Since $\|\bSigma^{1/2}\bv\|_2^2\lesssim 1$, this has bounded subgaussian norm.

\underline{Condition 3:} We will show the concentration of the norm using a proof technique similar to the one used to prove \citet[Theorem 3.1.1]{vershynin2025highNEW}. Let $\bU\bLambda\bU^\top$ be the SVD of $\bSigma$. We know $\bx_i$ is equal in distribution to $\bSigma^{1/2}\bv$ where $\bv\sim\mathcal N(0,\bI_p)$. Therefore $\|\bx_i\|$ is equal in distribution to $\|\bSigma^{1/2}\bv\|=\|\bU\bLambda^{1/2}\bv\|=\|\bLambda^{1/2}\bv\|$. By the equivalence of bounds on the subgaussian norm and concentration bounds (\citep[Proposition 2.6.6]{vershynin2025highNEW}), it suffices to show $\Pr[|\|\bLambda^{1/2}\bv\|-\EE[\|\bLambda^{1/2}\bv\|]|\ge q]\le 2\exp(-Cq^2)$. Consider $\|\bLambda^{1/2}\bv\|^2-\|\bSigma\|_\textnormal{F}^2=\sum_{\ell=1}^p(\lambda_\ell(v_\ell^2-1)$. Each term has bounded subexponential norm. Now we apply the Bernstein inequality \citep[Corollary 2.9.2]{vershynin2025highNEW} to conclude 
\[\Pr[|\|\bLambda^{1/2}\bv\|^2-\|\bLambda\|_\textnormal{F}^2|\ge q]\le 2\exp\left(-C\min\left\{\frac{q^2}{\|\bLambda\|_\textnormal{F}^2},\frac{q}{\|\bLambda\|_\textnormal{F}}\right\}\right)\le 2\exp(-C'\min\{q^2,q\}).\]
Observing that $|z-1|\ge\delta$ implies $|z^2-1|\ge\max\{\delta,\delta^2\}$ for $z,\delta\ge 0$, we conclude $\Pr[|\|\bLambda^{1/2}\bv\|-\EE[\|\bLambda^{1/2}\bv\|]|\ge \delta]\le 2\exp(C'\delta^2)$ as desired.

\underline{Condition 5:} Let the SVD of $\bLambda^{1/2}\bU^\top\bB_2\bU\bLambda^{1/2}$ be $\bV\bGamma\boldsymbol{W}^\top$, where $\bGamma=\text{diag}(\gamma_1,\ldots,\gamma_r,0,\ldots,0)$.

We can write
\begin{align*}
    \bx_i^\top\boldsymbol{B}_2\bx_j&=\bv_i^\top\bV\bGamma\boldsymbol{W}^\top\bv_j,
\end{align*}
where $\bv_a,\bv_b\sim\mathcal N(0,\boldsymbol I_p)$ (not necessarily independent). $\bv_i^\top\bV\bGamma\boldsymbol{W}^\top\bv_j$ is equal in distribution to $\bv_i^\top\bGamma\bv_j$. Then
\begin{align*}
    \bv_i^\top\bGamma\bv_j&=\sum_{\ell=1}^{p} \gamma_\ell v_{1,\ell} v_{2,\ell}.
\end{align*}
We know that $\lambda_\ell v_{1,\ell}v_{2,\ell}$ has mean 0 and is subexponential with subexponential norm $\|\lambda_\ell x_{i,\ell}x_{j,\ell}\|_{\psi_1}\lesssim |\lambda_\ell|$. Therefore $\|\bv_i^\top\bGamma\bv_j\|_{\psi_1}\lesssim \sum_{\ell=1}^r|\gamma_\ell|\le\sqrt r\|\bLambda^{1/2}\bU^\top\bB_2\bU\bLambda^{1/2}\|_\textnormal{F}\lesssim\|\bB_2\|_\textnormal{F}$.

\underline{Condition 5:} As in the proof of the last condition, $\bx_a$ is equal in distribution to $\bU\bLambda^{1/2}\bv_a$ where $\bv_a\sim\mathcal N(0,\mathcal I_p)$. Let the SVD of $\bLambda^{1/2}\bU^\top\bB_1\bU\bLambda^{1/2}$ be $\bV\bGamma\bV^\top$, where $\bGamma=\text{diag}(\gamma_1,\ldots,\gamma_r,0,\ldots,0)$. Then, define $\bz_a:=\bV^\top\bv_a$ for $a\in\{i,j,k\}$. Note $\bx_a^\top\bB_1\bx_b=\bz_a^\top\bGamma\bz_b$. Then, we write

\begin{align}\nonumber
    &\hspace{-10mm}|(\bx_i-\bx_j)^\top\bB_1(\bx_i-\bx_j)-(\bx_i-\bx_k)^\top\bB_1(\bx_i-\bx_k)|\\\nonumber
    &=|(2\bx_i^\top\bB_1\bx_j+\bx_j^\top\bB_1\bx_j)-(2\bx_i^\top\bB_1\bx_k+\bx_k^\top\bB_1\bx_k)|\\\nonumber
    &=|(2\bz_i^\top\bGamma\bz_j+\bz_j^\top\bGamma\bz_j)-(2\bz_i^\top\bGamma\bz_k+\bz_k^\top\bGamma\bz_k)|\\\nonumber
    &=\left|\sum_{\ell=1}^r\left(2\gamma_\ell z_{i,\ell}z_{j,\ell}+\gamma_\ell z_{j,\ell}^2\right)-\sum_{\ell=1}^r\left(2\gamma_\ell z_{i,\ell}z_{k,\ell}+\gamma_\ell z_{k,\ell}^2\right)\right|\\\nonumber
    &=\left|\sum_{\ell=1}^r\left(2\gamma_\ell z_{i,\ell}(z_{j,\ell}-z_{k,\ell})+\gamma_\ell(z_{j,\ell}^2-z_{k,\ell}^2)\right)\right|\\
    &=\left|\sum_{\ell=1}^r\left(2\gamma_\ell z_{i,\ell}(z_{j,\ell}-z_{k,\ell})+\gamma_\ell(z_{j,\ell}+z_{k,\ell})(z_{j,\ell}-z_{k,\ell})\right)\right|.\label{eqn:abs_val_trace_as_sum}
\end{align}

Defining $w_{\ell,1}:=z_{j,\ell}+z_{k,\ell}$ and $w_{\ell,2}:=z_{j,\ell}-z_{k,\ell}$, we note that $w_{\ell,1}$ and $w_{\ell,2}$ are independent conditional on $z_{j,\ell}$ and both are distributed according to $\mathcal N(z_{j,\ell},1)$. 
Then, we can rewrite (\ref{eqn:abs_val_trace_as_sum}) as
\begin{align*}
    \left|\sum_{\ell=1}^r\left(\gamma_\ell w_{\ell,1}-2\gamma_\ell z_{i,\ell}\right)w_{\ell,2}\right|.
\end{align*}
We will finish the lower bound by conditioning on $\{z_{j,\ell}\}_{\ell\in[r]}$, as follows.
\begin{align}\nonumber
    \EE\left[\left|\sum_{\ell=1}^r\left(\gamma_\ell w_{\ell,1}-2\gamma_\ell z_{i,\ell}\right)w_{\ell,2}\right|\right]
    &=\EE\left[\EE\left[\left|\sum_{\ell=1}^r\left(\gamma_\ell w_{\ell,1}-2\gamma_\ell z_{i,\ell}\right)w_{\ell,2}\right|\mid\{z_{j,\ell}\}_{\ell\in[r]}\right]\right]\\\nonumber
    &\ge\frac{1}{2}\EE\left[\left|\sum_{\ell=1}^r\left(\gamma_\ell w_{\ell,1}-2\gamma_\ell z_{i,\ell}\right)\right|\right]
\end{align}
Where we used the fact that if $a\sim\mathcal N(0,\sigma^2)$, then $\EE[|a|]\ge\sigma/2$. Now, $w_{\ell,1}-z_{i,\ell}=z_{j,\ell}+z_{k,\ell}-2z_{i,\ell}$, which is distributed according to $\mathcal N(0,6)$. So,

\begin{align}
    \frac{1}{2}\EE\left[\left|\sum_{\ell=1}^r\left(\gamma_\ell w_{\ell,1}-2\gamma_\ell z_{i,\ell}\right)\right|\right]&\ge\left|\sum_{\ell=1}^r\gamma_\ell\right|\ge\|\bLambda^{1/2}\bU^\top\bB_1\bU\bLambda^{1/2}\|_\textnormal{F}\gtrsim\|\bB_1\|_\textnormal{F}.
\end{align}

\end{proof}

\section{More Experiments}\label{sec:more_experiments}

\subsection{Synthetic Data}\label{sec:more_synth_data}
We conduct three synthetic data experiments, each with $n=120$ data points, each with $p=20$ features. In the first set of experiments, we draw the feature vectors $\bx_i$ from $\mathcal N(0, \Sigma_{\chi})$, where $\Sigma_{\chi}$ is a diagonal matrix where a random subset of half of the entries are $\chi$ and the other entries are $1/\chi$. We use $\chi=1$, $\chi=5$, and $\chi=20$ to model various levels of non-isotropy.

In the second set of experiments, we generate the feature vectors $\bx_i$ by drawing a random subset of half of the coordinates from $\text{Bern}(1-\chi)$ and the other half of the coordinates from $\text{Bern}(1-1/\chi)$. We use $\chi=2$, $\chi=5$, and $\chi=20$.

In the third set of experiments, we generate feature vectors with correlated coordinates, with correlation decaying according to an autoregressive (AR) process. Specifically, for a specified $\rho$, we generate the matrix $\Sigma_\rho=(\rho^{|i-j|})_{i,j}$, and draw $\bx_i\sim\mathcal N(0,\Sigma_\rho)$. We use $\rho=0.5$, $\rho=0.8$, and $\rho=0.95$.

For synthetic data settings, we generate labels synthetically as described in the body of the paper in Subsection \ref{sec:data_sources}.

\subsection{Real Data}\label{sec:more_real_data}

We describe the real datasets here.

\paragraph*{ACS Employment}
We also use the ACS Employment dataset from Folktables. This dataset is also based on ACS PUMS. The target is whether the person is employed. The features are age, educational attainment, sex, disability status, employment status of parents, mobility status, citizenship status, military service, ancestry, nativity, relationship to reference person, hearing difficulty, vision difficulty, cognitive difficulty, race, and grandparents living with grandchildren. The full documentation of this dataset appears in \citet{folktables}. We convert categorical features (marital status, employment status of parents, mobility status, citizenship, military service, ancestry, race, and relationship to reference person) to one-hot vectors. Then, we take the top $p=20$ principle components of the data, normalized to have mean 0 and standard deviation 1, as our features. 

\paragraph*{ACS Mobility}
We use the ACS Mobility dataset through the Folktables \citep{folktables} Python library. This dataset is based on the American Community Survey (ACS) Public Use Microdata Sample (PUMS) provided by the US Census Bureau. The target label is whether the person moved in the last year. The features are age, educational attainment, marital status, sex, disability status, employment status of parents, citizenship status, military service, ancestry, nativity, relationship to reference person, hearing difficulty, vision difficulty, cognitive difficulty, race, grandparents living with grandchildren, class of worker, employment status, hours worked per week, travel time to work, and income. The full documentation of this dataset appears in \citet{folktables}. We convert categorical features (marital status, employment status of parents, citizenship, military service, ancestry, race, class of worker, employment status, and relationship to reference person) to one-hot vectors. Then, we take the top $p=20$ principle components of the data, normalized to have mean 0 and standard deviation 1, as our features.

\paragraph*{Credit Card Default}
We use the Default of Credit Card Clients dataset from the UCI Machine Learning Repository \citep{default_of_credit_card_clients_350}. This dataset is composed of customer data from Taiwan. The target label is whether a person defaulted on their credit card. The features are amount of credit, gender, education, marital status, age, history of past repayments (6 features), amount of past bill statements (6 features), and amount of previous payments (6 features), for a total of 23 features. We convert marital status from a category to a one-hot vector. Then, we take the top $p=20$ principle components of the data, normalized to have mean 0 and standard deviation 1, as our features. 

\paragraph*{Community Crime}
We use the Communities and Crime dataset from the UCI Machine Learning Repository \citep{communities_and_crime_183}. This dataset ``combines socio-economic data from the 1990 US Census, law enforcement data from the 1990 US LEMAS survey, and crime data from the 1995 FBI UCR.'' The target label is number of violent crimes per capita. We remove the identifying features: state, county, community, community name, and fold. The remaining features describe aspects of the community and can be found in full at \citet{communities_and_crime_183}. We take the top $p=20$ principle components of the data, normalized to have mean 0 and standard deviation 1, as our features. 

\paragraph*{CDC Diabetes}
We use the CDC Diabetes Health Indicators Dataset from the UCI Machine Learning Repository \citep{cdc_diabetes}. This dataset is based on data released by the Center for Disease Control (CDC). The target label is whether a person has diabetes. The features are presence of high blood pressure, presence of high cholesterol, presence of a cholesterol check, body mass index, smoking status, history of stroke, presence of heart disease or heart attack, presence of physical activity, fruit consumption, vegetable consumption, heavy alcohol consumption, presence of healthcare, presence of financial barriers to healthcare, self-reported general health, self-reported mental health, self-reported physical health, difficulty walking, sex, age, education, and income. We take the top $p=20$ principle components of the data, normalized to have mean 0 and standard deviation 1, as our features. 
\subsection{Fairness Measurements}\label{sec:fairness_measurements}
Definition \ref{def:indiv_fairness} defines individual fairness with a parameter $l$. To measure the empirical fairness parameter $l$ of a classifier $\mathcal A$ with respect to fairness metric $d_{\bK}$, we form the ratios \[\xi_{i,j}:=\frac{D(\mathcal A(\bx_i),\mathcal A(\bx_j))}{d_{\bK}(\bx_i,\bx_j)}\] for each pair $i,j$. Then, $L=\max_{i,j}\xi_{i,j}$. In Table \ref{tab:fairness_experiment} we report $\max_{i,j}\widehat{\xi}_{i,j}$, the value of $L$ when $\bK=\widehat{\bK}$, and $\max_{i,j}\xi_{i,j}^\star$, the value of $L$ when $\bK=\bK_\star$.

In practice, there are issues with these measurements. The empirical $L$ can vary quite widely according to random variations in $\mathcal A$ or $d_{\bK}$ since it is a maximum of random variables (and because $d_{\bK}$ appears in the denominator so small additive variation in $d_{\bK}$ can lead to large variation in $l$. This has two effects: first, even when measuring $L$ with respect to the estimated fairness metric used to train the classifier, we sometimes observe large $l$ values. Second, when comparing the $l$ value computed with respect to the true fairness metric to the $l$ value computed with respect to the estimate fairness metric, we sometimes see large relative differences. 

Because of these practical issues, we also report a fairness measurement proposed by \citet{maity2021statistical}. For an algorithm $\mathcal A:\bX\to\boldsymbol Y$ and a loss function $\ell:\bX\times\boldsymbol Y\to\mathbb R_+$, \citet{maity2021statistical} gives an algorithm $\Phi_d:\bX\times\boldsymbol{Y}\to\bX$ that takes a pair $(\bx,y)$ to a point $\Phi_d(\bx,y)$ that is close to $\bx$ with respect to a fairness metric $d$, but which fares worse under $\mathcal A$ relative to $\ell$. \citet{maity2021statistical} proposes measuring the statistic
\begin{equation}\label{eqn:loss_ratio}\zeta:=\EE\left[\frac{\ell(\mathcal A(\Phi_d(\bx,y)), y)}{\ell(\mathcal A(\bx), y)}\right].\end{equation} This statistic is motivated by a modified definition of individual fairness called distributionally robust fairness, proposed by \citet{Yurochkin2019TrainingIF}. \citet{maity2021statistical} suggests that \eqref{eqn:loss_ratio} is an interpretable summary of the fairness of a model, considering a loss function that measures the impact of the model. In our experiments, we define the loss function $\ell$ as the $\ell_1$ loss. In Table \ref{tab:fairness_experiment} we report $\widehat{\zeta}$, the value of \eqref{eqn:loss_ratio} when $d=d_{\widehat{\bK}}$, and $\zeta^\star$, the value of \eqref{eqn:loss_ratio} when $d=d_{\bK_\star}$. When using this fairness measurement, we observe both that the model appears to be fair with respect to the estimated fairness metric, and that the fairness with respect to the true fairness metric is very similar.

\subsection{Results for the Additional Data Sources}\label{sec:more_results}

Here, we present results for the additional data sources here. We follow the experimental procedure described in Subsection \ref{sec:procedure}.

Gradient descent convergence is shown in Figure \ref{fig:gd}. The table of fairness results appears as Table \ref{tab:fairness_experiment}.

\begin{figure}
    \centering
    \includegraphics[width=0.5\linewidth]{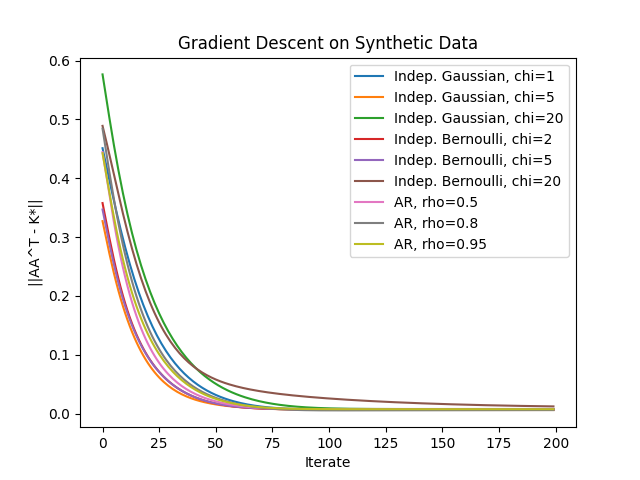}\includegraphics[width=0.5\linewidth]{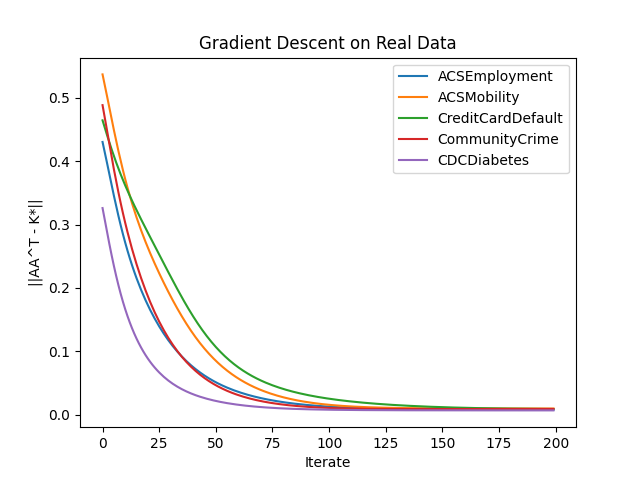}
    \caption{Gradient descent performance on synthetic and real data.}
    \label{fig:gd}
\end{figure}

\begin{table}
    \centering
\begin{tabular}{lcccccc}\toprule
Dataset & $\max_{i,j}\widehat{\xi}_{i,j}$ & $\max_{i,j}\xi^\star_{i,j}$ & Difference & $\widehat{\zeta}$ & $\zeta^\star$ & Difference \\\midrule
Gaussian, $\chi=1$ & 14.9823 & 12.874 & 16.38\% & 1.0028 & 0.9998 & 0.30\% \\
Gaussian, $\chi=5$ & 19.9851 & 14.2388 & 40.36\% & 1.0066 & 1.0032 & 0.34\% \\
Gaussian, $\chi=20$ & 25.0564 & 26.6872 & 6.11\% & 1.0019 & 1.0000 & 0.19\% \\
Binomial, $\chi=1$ & 13.7352 & 19.393 & 29.17\% & 1.0033 & 0.9983 & 0.50\% \\
Binomial, $\chi=5$ & 2.3325 & 2.461 & 5.22\% & 0.9992 & 1.0043 & 0.51\% \\
Binomial, $\chi=20$ & 2.238 & 2.2103 & 1.25\% & 0.9972 & 0.9998 & 0.26\% \\
AR, $\rho=0.5$ & 13.6159 & 26.1403 & 47.91\% & 0.9986 & 1.0017 & 0.31\% \\
AR, $\rho=0.8$ & 24.5866 & 21.7825 & 12.87\% & 1.0022 & 0.9962 & 0.60\% \\
AR, $\rho=0.95$ & 17.4981 & 14.6528 & 19.42\% & 1.0131 & 1.0109 & 0.22\% \\
ACSEmployment & 7.3757 & 6.4485 & 14.38\% & 0.9999 & 0.998 & 0.19\% \\
ACSMobility & 1.3863 & 1.5378 & 9.85\% & 0.9998 & 0.9997 & 0.01\% \\
CreditCardDefault & 2.4675 & 0.9473 & 160.48\% & 0.9982 & 1.0028 & 0.46\% \\
CDCDiabetes & 2.4154 & 2.5213 & 4.20\% & 1.0009 & 0.9995 & 0.14\% \\
CommunityCrime & 6.1965 & 5.8934 & 5.14\% & 1.1096 & 1.1829 & 6.20\% \\\bottomrule\\
\end{tabular}
    \caption{Comparing fairness violations with respect to the estimated and true metrics.}
    \label{tab:fairness_experiment}
\end{table}

\end{document}